\documentclass{article}
\usepackage{mathrsfs}
\usepackage{amsthm}
\usepackage{main,times}

\usepackage{amsmath,amsfonts,bm}

\def\eqref#1{equation~\ref{#1}}

\def\1{\bm{1}}

\DeclareMathAlphabet{\mathsfit}{\encodingdefault}{\sfdefault}{m}{sl}
\SetMathAlphabet{\mathsfit}{bold}{\encodingdefault}{\sfdefault}{bx}{n}

\newcommand{\E}{\mathbb{E}}

\newcommand{\R}{\mathbb{R}}

\newcommand{\Var}{\mathrm{Var}}

\newcommand{\Cov}{\mathrm{Cov}}

\usepackage{url}
\usepackage{amssymb}
\usepackage{enumitem}
\usepackage{float}
\usepackage{graphicx}
\graphicspath{{figures/}}
\usepackage{hyperref}
\newcommand{\Prob}{\mathbb{P}}
\newcommand{\Unif}{\operatorname{Unif}}

\newtheorem{theorem}{Theorem}[section]
\newtheorem{lemma}[theorem]{Lemma}
\newtheorem{proposition}[theorem]{Proposition}
\newtheorem{corollary}[theorem]{Corollary}
\newtheorem{remark}[theorem]{Remark}
\newtheorem{assumption}[theorem]{Assumption}

\title{Learning Decision-Stump Thresholds\\in Context: Dynamics of Softmax Attention}

\author{
Hong Ha Le$^{1}$ \quad
Jackie Lok$^{2}$ \quad
Atsushi Nitanda$^{3}$ \quad
Yan Shuo Tan$^{1}$\thanks{Corresponding author: \texttt{yanshuo@nus.edu.sg}.}\\
\normalfont
$^{1}$National University of Singapore\\
$^{2}$Princeton University\\
$^{3}$Nanyang Technological University
}

\iclrfinalcopy 
\begin{document}

\maketitle
\lhead{}
\renewcommand{\headrulewidth}{0pt}

\begin{abstract}
Estimating a decision threshold requires locating observations near an unknown boundary. We study how gradient-based pretraining learns this statistical rule in a two-parameter softmax-attention model with a fixed feature and inequality direction. Pretraining uses labeled contexts and their true thresholds; a fresh threshold must be inferred from context alone. Under a large-resolution initialization, constant-step gradient descent on $m$ tasks with $n$ examples each produces a frozen estimator with error $\widetilde O((m\wedge n)^{-1}+N^{-1})$ for each fixed interior threshold and every fresh-context size $N$. The two terms separate finite-pretraining accuracy from fresh-context localization. The mechanism is coordinated parameter divergence: population training calibrates the relative label and feature scores, then increases the attention scale as $t^{1/4}$, giving population threshold error $O(t^{-1/4})$. To transfer this mechanism to a fixed finite corpus, we control gradient errors relative to the shrinking directions of progress at successive parameter scales. This certifies a growing training interval without requiring long-time tracking of the population trajectory. We also identify the boundary limitation of the one-head model and explain statistically what a reflected symmetrization could achieve.
\end{abstract}

\section{Introduction}
In-context learning (ICL) is the ability of a pretrained model to perform a new task from examples supplied in its input, without updating its parameters \citep{brown2020language}.
\citet{garg2022can} cast ICL as a statistical learning problem: pretraining across many datasets produces a reusable prediction \emph{procedure}, which is applied to a new dataset through the context alone.
In their formulation, as in much of the subsequent theory, each task is a supervised-prediction problem: the context contains labeled examples, and the model predicts the response at a new query point.
A growing body of work explains how gradient-based pretraining acquires such procedures, characterizing convergence, phases of learning, and the evolution of the in-context predictor for linear regression and, more recently, induction and sequence tasks \citep{zhang2024trained,chen2024training,zhang2025dynamics,chen2024unveiling,musat2025emergence}.

The same template applies to estimating a parameter of the data-generating process.
In \emph{amortized estimation}, one repeatedly draws a parameter from a prior and a dataset given that parameter, trains a single network to map datasets to parameter estimates, and applies the frozen network to new datasets \citep{zammitmangion2025neural}.
Under squared loss, the population objective is the Bayes risk, so the network approximates the posterior mean.
Neural Bayes estimators use permutation-invariant architectures for this purpose \citep{sainsburydale2024likelihood}; prior-fitted networks apply the same idea to predictive distributions and underlie tabular foundation models \citep{muller2025priorfitted,hollmann2025accurate}; and recent work estimates dataset-level quantities such as mutual information with permutation-invariant neural models \citep{peyrard2025meta}, or uses TabPFN for semi-supervised parameter estimation \citep{zhang2025tabpfn}.
A trained amortized estimator is an in-context estimator: it infers a latent parameter from its input without parameter updates.

Recent theory for amortized estimators primarily studies the statistical properties of the trained estimator rather than its optimization dynamics. \citet{roedder2025theoretical} decompose neural-estimator risk into Bayes, approximation, generalization, and algorithm-dependent terms, while leaving the algorithm-dependent term outside their analysis; related work studies the statistical accuracy of neural posterior and likelihood estimators \citep{frazier2024statistical}. The ICL training-dynamics works discussed above instead concern supervised prediction. \emph{How gradient-based pretraining constructs an amortized estimator, and how the number and size of the simulated datasets limit its accuracy, have, to our knowledge, not been analyzed.}

We study this question for decision thresholds.
Thresholds define the splits of decision trees, which are used both as interpretable models and as building blocks of accurate tabular ensembles \citep{tan2022cautionary,erickson2025tabarena}.
Tabular foundation models (TFMs) are now competitive with tree ensembles on standard tabular benchmarks, particularly on smaller datasets \citep{erickson2025tabarena,hollmann2025accurate}, yet their synthetic pretraining priors themselves contain decision trees and tree ensembles \citep{hollmann2025accurate,qu2025tabicl,zhang2025mitra}.
This motivates isolating computations that locate boundaries induced by latent splits.
We isolate one such computation: estimating a single threshold when the split feature and inequality direction are known.

Specifically, we consider the following model.
Given a threshold $\alpha \in (0, 1)$, a task supplies a context $D_{N,\alpha}=\{(X_i,Y_i)\}_{i=1}^N$ with $N$ independent features $X_i\sim\Unif(0,1)$ and labels $Y_i=\mathbf 1\{X_i\le\alpha\}$.
Write $M_N^+=\max\{X_i:Y_i=1\}$, with value zero if no label is positive. For $0\le\varepsilon<\alpha$,
$\Prob(\alpha-M_N^+>\varepsilon)=(1-\varepsilon)^N\le e^{-N\varepsilon}$, and hence, $N(\alpha-M_N^+)\Rightarrow\operatorname{Exp}(1)$.
Thus, the estimator $M_N^+$ localizes the threshold at the nonregular $N^{-1}$ scale, faster than the $N^{-1/2}$ scale typical of regular smooth parametric models \citep{hansen2000sample}.
Because the benchmark estimator is explicit, the setting lets us compare what gradient training learns against a known statistical rule.
We approximate this estimator with a single softmax-attention head:
\begin{equation}
\label{eq:intro-estimator}
F_N(\lambda,\mu;D_{N,\alpha})
    =\frac{\sum_{i=1}^N X_i e^{\lambda Y_i+\mu X_i}}{\sum_{i=1}^N e^{\lambda Y_i+\mu X_i}}.
\end{equation}
The parameters $\lambda$ and $\mu$ favor positive labels and larger feature values, respectively, and will be trained by gradient descent on simulated training tasks. Once the relative scale $\lambda / \mu$ is calibrated, increasing their common scale concentrates attention on the largest positive observation.

Implementing a sharp statistical rule with smooth attention requires increasing parameter scale. In the infinite-context limit, within-class concentration leaves $O(\mu^{-1})$ threshold bias; in finite contexts, the output must also approximate $M_N^+$.
Thus, improving resolution requires parameter growth even as the gradient driving it weakens, creating a tension between statistical accuracy and optimization with finite pretraining data.
In our model, it leads to two concrete questions:
(1) Does gradient descent coordinate the relative scores while continuing to increase the attention scale?
(2) As the useful gradient signal vanishes, how far can this sharpening process remain reliable when the population gradient is replaced by an empirical one computed from only $m$ tasks with $n$ observations each?

Our contributions answer both questions:
\begin{itemize}
\item \textbf{A learned amortized estimator with an explicit budget.}
Constant-step gradient descent on $m$ simulated tasks with $n$ observations each learns a frozen estimator with error $\widetilde O((m \wedge n)^{-1}+N^{-1})$ for every fresh-context size $N$ and each fixed threshold separated from the lower endpoint of the task range (Theorem~\ref{thm:learned-estimation}).
The first error term is the finite-pretraining error, and the second is the statistical error of the fresh context.
The bound attains the $N^{-1}$ benchmark, up to logarithmic factors, when $m\wedge n=\widetilde\Omega(N)$; in particular, the certified accuracy is limited by the simulated context size $n$, however large the fresh context.

\item \textbf{Balance, then sharpen.}
Population gradient flow first moves the ratio $\lambda/\mu$ to a boundary-aligned value, then increases $\mu$ as $t^{1/4}$.
The estimator converges uniformly at rate $O(t^{-1/4})$ while both parameters $\lambda$ and $\mu$ diverge (Theorem~\ref{thm:main-population}).
The radial drift $\dot{\mu}(t)$ scales as the product of the $\mu^{-1}$ bias and the $\mu^{-2}$ tilted variance, which explains why sharpening never stops but steadily slows.
This mechanism remains stable under discretization, and therefore transfers to population gradient descent with sufficiently small step size.

\item \textbf{Finite-corpus control along a diverging trajectory.}
Normalizing gradient errors by the shrinking progress margins at each parameter scale certifies continued sharpening for empirical gradient descent up to scale $(m \wedge n)/\log(mn)$ (Corollary~\ref{cor:finite-summary-v3}).
The argument follows the accuracy-relevant properties while avoiding full population-trajectory tracking over the growing horizon.
\end{itemize}

The paper is structured as follows.
The model is precisely introduced in Section~\ref{sec:model}.
Next, we describe our main results: the statistical results of the pretrained model are presented in Section~\ref{sec:main-results}, and Sections~\ref{sec:parameter-statistics} and~\ref{sec:training-v3} establish how it is learned by gradient training dynamics.

Numerical experiments are provided in Section~\ref{sec:experiments} to test the predictions of the theory.
A further discussion of related works is given in Section~\ref{sec:related-work}, before we conclude with a discussion of extensions in Section~\ref{sec:discussion}.
The technical details and proofs are deferred to the appendices.

\section{Problem and attention estimator}
\label{sec:model}
\textbf{Tasks and training data.}
Fix $\rho\in(0,1/2)$, and write $\ell=1-\rho$ and $\Delta=1-2\rho$. Each pretraining task draws a threshold $\alpha\sim\Unif([\rho,\ell])$, followed by $n$ independent features $X_i\sim\Unif(0,1)$ with corresponding labels $Y_i=\mathbf1\{X_i\le\alpha\}$. We observe $m$ independent blocks $Z_j=(\alpha_j,D_{n,\alpha_j})$.
A fresh task supplies an independent context of size $N$, but its threshold is withheld. Thus $m$ counts pretraining tasks, $n$ counts observations per pretraining task, and $N$ is the fresh-context size.

\textbf{Statistical benchmark.}
Let $M_N^+=\max\{X_i:Y_i=1\}$, with value zero if no label is positive. For $0\le\varepsilon<\alpha$, $\Prob(\alpha-M_N^+>\varepsilon)=(1-\varepsilon)^N\le e^{-N\varepsilon}$. The natural localization scale is therefore $N^{-1}$. This rule can be specified without pretraining; our question is whether the stated gradient procedure learns an attention estimator with comparable fresh-context accuracy. Section~\ref{subsec:bayes-comparison} compares the resulting rate with the Bayes risk, and also discusses a symmetrized estimator.

\textbf{One-head architecture.}
Introduce context tokens $z_i=(X_i,Y_i,0)^\top$ and a readout token $z_{\rm r}=(0,0,1)^\top$. A self-attention head with $Q=(0,0,1)$, $K_\theta=(\mu,\lambda,0)$, $V=(1,0,0)$, and $O=1$ computes \eqref{eq:intro-estimator} when the readout attends only to context tokens. Only $\theta=(\lambda,\mu)$ is trained. Equivalently, writing $\mathbf D_{n,\alpha}=[(X_i,Y_i)^\top]_{i=1}^n$,
\begin{equation}
\label{eq:main-heads}
 F_n(\theta; D_{n,\alpha})=\operatorname{softmax}\!\left([\mu\ \lambda]\mathbf D_{n,\alpha}\right)
 \mathbf D_{n,\alpha}^{\top}(1,0)^\top.
\end{equation}
For $\mu>0$, the ratio $k=\lambda/\mu$ measures the relative preference for labels and features, and $\mu$ measures concentration.

\textbf{Objective and optimization.}
We use the empirical squared-error objective and its infinite-task, infinite-context counterpart, given by, respectively,
\begin{equation}
 \widehat{\mathcal R}_{m,n}(\theta)=\frac1{2m}\sum_{j=1}^m
 (F_n(\theta;D_{n,\alpha_j})-\alpha_j)^2,
 \qquad
 \mathcal R_\rho(\theta)=\frac12\E_\alpha(F(\theta;\alpha)-\alpha)^2,
\end{equation}
where $F$ is the infinite-context limit of $F_n$. Empirical training is ordinary full-batch gradient descent with a constant step size $\gamma > 0$:
\begin{equation}
 \widehat\theta_{s+1} = \widehat\theta_s-\gamma\nabla\widehat{\mathcal R}_{m,n}(\widehat\theta_s).
\end{equation}
In our analysis, population gradient flow provides the reference dynamics.

\begin{assumption}[Large-resolution initialization]
\label{ass:initialization}
Fix $\kappa>0$. The deterministic initialization satisfies $\mu_0\ge\mu_{\rm init}$ and $k_0=\lambda_0/\mu_0\ge\kappa$, where $\mu_{\rm init}$ depends only on $\rho$, $\kappa$, and fixed model constants. It is independent of the pretraining corpus and is shared by population and empirical training.
\end{assumption}
This assumption requires a large initial concentration while allowing the initial score ratio to be inaccurate. Training must still learn the appropriate balance of scores.

\section{Learned estimation and transfer}
\label{sec:main-results}
\label{subsec:main-transfer}
We begin by stating the statistical outcome of the pretrained model. The next sections will explain which parameters give an accurate estimator and how gradient training reaches them.
Define $A_{m,n}:=(m\wedge n)/{\log(C_{\rm stat}mn)},$
where $C_{\rm stat}>1$ is a fixed constant. All constants below may depend on the fixed initialization, model constants, and explicitly subscripted fixed quantities such as $\eta$, but not on $m,n,N$.

\begin{theorem}[Learned threshold estimation]
\label{thm:learned-estimation}
Suppose Assumption~\ref{ass:initialization} holds, and fix $\eta\in(0,\Delta)$. For every sufficiently small fixed step size $\gamma>0$ and all sufficiently large $A_{m,n}$, there exist a nonempty deterministic interval of gradient-descent iteration indices $\mathcal I_{m,n,\gamma}$ with both endpoints are of order $\Theta(A_{m,n}^4/\gamma)$, and a pretraining event $\mathcal E_{\rm train}$ with $\Prob_{\rm pre}(\mathcal E_{\rm train})\ge1-Ce^{-cm}-C(mn)^{-c}$. Select any $\mathcal F_{\rm pre}$-measurable $\widehat s\in\mathcal I_{m,n,\gamma}$ and freeze $\widehat\theta:=\widehat\theta_{\widehat s}^\gamma$. On $\mathcal E_{\rm train}$, for each fixed $\alpha^{\rm fr}\in[\rho+\eta,\ell]$, every $N\ge1$, and every $u\ge1$,
\begin{equation}
\label{eq:learned-estimation-v3}
 \Prob_{\rm fr}\!\left(
 |F_N(\widehat\theta;D_N^{\rm fr})-\alpha^{\rm fr}|
 \le C_\eta\left[\frac{1+\log(1+CA_{m,n}u)}{A_{m,n}}+\frac{u}{N}\right]
 \,\middle|\,\mathcal F_{\rm pre}\right)\ge1-4e^{-u},
\end{equation}
where $\mathcal F_{\rm pre}$ is the $\sigma$-field generated by the pretraining corpus; thus $\widehat s$ is selected before the fresh threshold, context size, confidence level, or observations are specified.
\end{theorem}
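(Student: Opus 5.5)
The proof splits into an optimization half and a statistical half, joined by a deterministic ``good parameter region''. For $\mu$ large, write $k=\lambda/\mu$ and, with constants $0<c_1<c_2$, $c_k>0$ and $k^\star$ the boundary-aligned ratio of the population dynamics, set
\[
\Theta_{\rm good}(M)=\Bigl\{\theta=(k\mu,\mu):\ c_1M\le\mu\le c_2M,\ |k-k^\star|\le c_k/M\Bigr\}.
\]
The statement then follows from two claims. \textbf{(A)} On $\mathcal E_{\rm train}$, every iterate $\widehat\theta^\gamma_s$ with $s\in\mathcal I_{m,n,\gamma}$ lies in $\Theta_{\rm good}(A_{m,n})$. \textbf{(B)} For every deterministic $\theta\in\Theta_{\rm good}(M)$, every $\alpha\in[\rho+\eta,\ell]$, every $N$ and every $u\ge1$, the bound \eqref{eq:learned-estimation-v3} holds with $A_{m,n}$ replaced by $M$ and probability at least $1-4e^{-u}$. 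Since $\widehat s$ is $\mathcal F_{\rm pre}$-measurable and the fresh context is independent of the corpus, conditioning on $\mathcal F_{\rm pre}$ freezes $\widehat\theta$ as a deterministic point of $\Theta_{\rm good}(A_{m,n})$, and (B) applies directly.

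For (A) I would combine the population result (Theorem~\ref{thm:main-population}) with the finite-corpus control (Corollary~\ref{cor:finite-summary-v3}). The population flow first balances $k$ to within $O(\mu^{-1})$ of $k^\star$, and then $\mu(t)\asymp t^{1/4}$. For gradient descent with small constant $\gamma$, the radial coordinate therefore reaches scale $A$ after $\Theta(A^4/\gamma)$ iterations. Corollary~\ref{cor:finite-summary-v3} certifies that on an event of probability $1-Ce^{-cm}-C(mn)^{-c}$ empirical descent keeps sharpening while $\mu\lesssim A_{m,n}$: the empirical gradient errors are of order $\sqrt{\log(mn)/m}+1/n$, and they stay below the progress margins, which shrink like powers of $\mu^{-1}$. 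I would define $\mathcal I_{m,n,\gamma}$ as the set of indices at which the deterministic comparison scale $\mu$ lies in $[c_1A_{m,n},c_2A_{m,n}]$; since $\mu$ grows monotonically like $(\gamma s)^{1/4}$, both endpoints are $\Theta(A^4/\gamma)$. The ratio control $|k-k^\star|\lesssim 1/\mu$ should be an invariant carried along by the same margin argument. The main obstacle is here: one must show the certified region is absorbing up to scale $A_{m,n}$ uniformly over a window whose length grows polynomially, and not merely at a single time. I would try first to build a trapping-region argument that tracks $(k,\mu)$ directly, with an error budget normalized by the drift at each dyadic scale of $\mu$, rather than tracking the whole population trajectory.

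For (B), I would decompose $F_N(\theta;D)-\alpha$ into three terms, using the softmax weights:
\begin{itemize}
\item \emph{Leakage to negative labels.} With the calibrated $k$, the tilt $\lambda Y+\mu X$ makes every positive point near $\alpha$ outweigh all negatives by a factor $e^{-\mu\cdot\Omega(1/M)}$. The resulting contribution is $O(\log(1+Mu)/M)$ with probability $1-e^{-u}$, after a union bound over the at most $N$ negatives, handled via the gap to $\alpha$.
\item \emph{Spread of attention among positives.} On the positive class, the weights behave like a Gibbs measure $\propto e^{\mu X}$ on $[0,M_N^+]$. It sits within $O((1+\log(\cdot))/\mu)$ of $M_N^+$, which I would bound by comparing with exponential spacings.
\item \emph{Gap to the threshold.} By the benchmark tail $e^{-N\varepsilon}$, $\alpha-M_N^+\le u/N$ with probability $1-e^{-u}$.
\end{itemize}
When there are few positives, the condition $\alpha\ge\rho+\eta$ guarantees a positive fraction of positives with high probability, which explains why $C_\eta$ depends on $\eta$. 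Collecting the four failure events gives the factor $4e^{-u}$.
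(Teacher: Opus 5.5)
Your overall architecture matches the paper's: a deterministic good region reached on $\mathcal E_{\rm train}$, a fixed-parameter fresh-context bound on that region, and conditioning on $\mathcal F_{\rm pre}$ to freeze $\widehat\theta$. The conditioning step is fine. Both halves, however, have genuine gaps as you state them.

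\textbf{The region in (A) is wrong.} Take $k^\star=\ell$, the limit of $\lambda(t)/\mu(t)$. The trained iterates do not satisfy $|k-k^\star|\le c_k/M$. They lie in the logarithmic tube $\mathcal G$, where $(\log\mu+c_0)/\mu\le k-\ell\le C_{\rm band}\log\mu/\mu$, so $k-\ell\asymp\log\mu/\mu\gg 1/\mu$. On $\mathcal E_{\rm train}$ the iterates are therefore outside your $\Theta_{\rm good}(A_{m,n})$.

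This overshoot is forced. By the odds identity, $q_\theta(\rho)\approx(1+e^{\mu(k-\ell)})^{-1}$, so $O(1/\mu)$ accuracy at the lower task endpoint requires $\mu(k-\ell)\ge\log\mu-O(1)$. Moreover, throughout the shoulder $\ell-\beta\log\mu/\mu\le k<b_{c_{\rm sh,L}}(\mu)$ the flow has $\dot k>0$ (Lemma~\ref{lem:endpoint-shoulder-estimates}). Hence the edge $k=\ell+c_k/M$ is crossed outward, and no margin argument can make your band invariant.

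The correct input is Theorem~\ref{thm:main-empirical}: on the window, $\widehat\theta\in\mathcal G$ and $c_AA_{m,n}\le\widehat\mu\le C_AA_{m,n}$. For $\alpha\in[\rho+\eta,\ell]$, the only ratio information (B) then needs is $k\ge\ell$.

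\textbf{The margin in (B) has the wrong source.} Suppression of the negative class does not come from calibration at scale $1/M$. A normalized score margin of order $1/M$ gives a per-negative factor $e^{-\Omega(1)}$, which is no suppression at all. The actual margin is
$\Gamma_+=k+M_1-M_0=(k-\ell)+(\alpha-\rho)+(1-M_0)-(\alpha-M_1)\ge\eta/2$
on the event $\{\alpha-M_1\le u/N\le\eta/2\}$. This, and not a ``positive fraction of positives'' (which holds for every $\alpha\ge\rho$), is why $\alpha^{\rm fr}\ge\rho+\eta$ is needed.

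\textbf{(B) also does not yield an $N$-free resolution term.} Comparing the best positive with each negative gives leakage $\min\{1,Ne^{-\mu\eta/2}\}$. The within-class softmax error is $\log N/\mu$ by the entropy bound. Both are absorbed into $(1+\log(1+C\mu u))/\mu$ only when $N\lesssim\mu u$; this is exactly where the $\log(1+CA_{m,n}u)$ in the statement comes from.

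For $N\ge C_{\rm fr}\mu u$, and especially $N\ge e^{c\mu}$, leakage is small only because both class masses grow proportionally to $N$. A union bound against a single positive cannot see this, and exponential spacings would at best repair the within-positive term.

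The paper handles this regime separately (Proposition~\ref{prop:one-head-large-context}). It writes $F_N-F(\theta;\alpha)=P_N\phi_F/P_NW_{\theta,\alpha}$ and uses the envelopes $\|W_{\theta,\alpha}\|_\infty+P_\alpha(W_{\theta,\alpha}-1)^2\lesssim\mu$ and $P_\alpha\phi_F^2\lesssim\mu^{-1}$. The self-normalized Bernstein bound (Lemma~\ref{lem:self-normalized-ratio-high-probability}) then gives $O(\sqrt{u/(\mu N)}+u/N)$. Adding $|F(\theta;\alpha)-\alpha|\le C/\mu$ on $\mathcal G$ completes that regime, and the two Bernstein events account for $4e^{-u}$.

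Proposition~\ref{prop:fixed-estimator-v3} combines the two regimes. With it in place of (B) and Theorem~\ref{thm:main-empirical} in place of (A), the theorem follows by substituting $\widehat\mu\asymp A_{m,n}$.
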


The first term measures the finite attention resolution certified by pretraining, and the second measures localization from a fresh context. Removing the conditioning gives total failure probability at most $Ce^{-cm}+C(mn)^{-c}+4e^{-u}$. In particular, the error is $\widetilde O((m\wedge n)^{-1}+N^{-1})$, with high probability.
Note that the same trained parameters can be applied to any fresh task, and the probability guarantee is pointwise in both $\alpha^{\rm fr}$ and $N$.

Theorem~\ref{thm:learned-estimation} is the statistical consequence of the detailed continuation and transfer statements, Theorems~\ref{thm:main-empirical} and~\ref{thm:main-transfer}, in Appendix~\ref{app:statements-framework}, where the iteration interval is defined precisely. Its proof is assembled in Appendix~\ref{app:proof-learned-estimation}. We remark that the constants that appear are theoretical; the result does not supply a calibrated practical stopping rule or assert that accuracy deteriorates beyond the certified interval.

\textbf{Fresh-query classification.}
For an independent $X_q\sim\Unif(0,1)$, predict $\widehat Y_q=\mathbf1\{X_q\le F_N\}$. Since $F_N\in[0,1]$, conditional misclassification probability is exactly $|F_N-\alpha^{\rm fr}|$. The estimation guarantee therefore gives the same rate for fresh-query classification; see Corollary~\ref{cor:main-query-classification}.

\section{From parameters to statistical accuracy}
\label{sec:parameter-statistics}
Two parameter properties suffice for the accuracy of the estimator: the positive class must receive enough weight, and attention within that class must concentrate near the threshold. We first quantify these properties without assuming that the parameters were learned.

\textbf{Class separation and concentration.}
Let $p_\theta(\alpha)$ be the population attention mass on $[0,\alpha]$, and let $q_\theta(\alpha):=1-p_\theta(\alpha)$. Exact integration gives
\begin{equation}
 \frac{p_\theta(\alpha)}{q_\theta(\alpha)}
 =e^{\mu(k+\alpha-1)}\frac{1-e^{-\mu\alpha}}{1-e^{-\mu(1-\alpha)}},
 \qquad
 F(\theta;\alpha)-\alpha
 =q_\theta(\alpha)(1-\alpha)-\mu^{-1}+O(e^{-\rho\mu}).
\end{equation}
The error has two sources: mass assigned to the negative class and finite concentration within the positive class. The first is controlled by $k$ and the second by $\mu$. Across tasks, the two classes exchange dominance near $\alpha=1-k$; this is a transition in task space, not the unknown threshold within a particular context.

\textbf{A sufficient progress region.}
For suitable fixed $c_0$, $C_{\rm band}>1$, and sufficiently large $\mu_*$, define
\begin{equation}
\label{eq:main-progress-region}
 \mathcal G:=\left\{(\lambda,\mu):\mu\ge\mu_*,\quad
 \ell+\frac{\log\mu+c_0}{\mu}\le\frac\lambda\mu
 \le\ell+C_{\rm band}\frac{\log\mu}{\mu}\right\}.
\end{equation}
Throughout this region, negative-class mass is small enough that
$\sup_{\alpha\in[\rho,\ell]}|F(\theta;\alpha)-\alpha|\le C/\mu$.
The logarithmic correction supplies sufficient class separation even at the lower endpoint $\rho$. The upper boundary facilitates the training analysis, but accurate prediction may also occur above it.

\begin{proposition}[Accuracy of a fixed attention estimator]
\label{prop:fixed-estimator-v3}
Fix $\eta\in(0,\Delta)$. For $\theta\in\mathcal G$ with sufficiently large $\mu$, each fixed $\alpha\in[\rho+\eta,\ell]$, every $N\ge1$, and every $u\ge1$,
\begin{equation}
\label{eq:fixed-estimator-v3}
 \Prob_{\rm fr}\!\left(
 |F_N(\theta;D_{N,\alpha})-\alpha|
 >C_\eta\left[\frac{1+\log(1+C_{\rm fr}\mu u)}{\mu}+\frac uN\right]
 \right)\le4e^{-u}.
\end{equation}
\end{proposition}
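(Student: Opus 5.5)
Throughout, fix $\theta=(\lambda,\mu)\in\mathcal G$ and $\alpha\in[\rho+\eta,\ell]$. Write $w_i=e^{\lambda Y_i+\mu X_i}$, so that $F_N=\sum_i X_iw_i/\sum_i w_i$. We split the context into the positive indices $P=\{i:X_i\le\alpha\}$ and the negative indices. The error then decomposes as
\[
F_N-\alpha=\frac{S_P}{S_P+S_Q}\bigl(\bar X_P-\alpha\bigr)+\frac{S_Q}{S_P+S_Q}\bigl(\bar X_Q-\alpha\bigr),
\]
where $S_P=\sum_{i\in P}w_i$, $S_Q=\sum_{i\notin P}w_i$, and $\bar X_P,\bar X_Q$ are the corresponding attention-weighted means. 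The second coefficient is at most $S_Q/S_P$, and $|\bar X_Q-\alpha|\le1$. So it suffices to show two things, each with probability at least $1-2e^{-u}$:
\begin{itemize}
\item $S_Q/S_P\lesssim (1+\log(1+C\mu u))/\mu+u/N$;
\item $|\bar X_P-\alpha|\lesssim (1+\log(1+C\mu u))/\mu+u/N$.
\end{itemize}

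\textbf{Within-class concentration.} Condition on the event $E_1=\{\alpha-M_N^+\le u/N\}$. Since $\alpha\ge\rho+\eta$, we have $\Prob(E_1^c)=(1-u/N)^N\le e^{-u}$ when $u/N<\alpha$; otherwise the bound is trivial because $u/N\gtrsim\eta$ and $C_\eta$ absorbs it. On $E_1$, every positive weight is at most $e^{\lambda+\mu\alpha}$, and the point $M_N^+$ carries weight at least $e^{\lambda+\mu\alpha-u\mu/N}$. A point with $X_i\le\alpha-t$ receives relative weight at most $e^{-\mu t}$ compared with a point at $\alpha$. So the total positive weight below $M_N^+-t$ is at most $\#\{i:X_i\le\alpha-t\}\,e^{-\mu(t-(\alpha-M_N^+))}$.

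Choose $t=(u/N)+\log(1+C\mu u)/\mu$ and bound $\#P$ crudely by $N$. This fails when $N\gg\mu u$, which is the main obstacle. The fix is to compare against the whole near-boundary mass instead of the single max point. When $N\ge\mu$, the number of positives in $[\alpha-1/\mu,\alpha]$ is Binomial with mean $N/\mu$, and Bernstein shows it is at least $N/(2\mu)$ except with probability $e^{-cN/\mu}$ (absorbed into $e^{-u}$ or into the $u/N$ term when $N/\mu\lesssim u$). Meanwhile, the weighted count of points far below $\alpha$ has expectation $\lesssim (N/\mu)e^{-\mu t}$. A Markov/Chernoff bound on $\sum_{i\in P}e^{-\mu(\alpha-X_i)}\mathbf 1\{X_i<\alpha-t\}$, at level $e^{u}$, makes the far mass a fraction $\lesssim u e^{-\mu t}$ of the near mass. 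Taking $t\asymp(1+\log(1+\mu u))/\mu$ therefore gives $|\bar X_P-\alpha|\le t+u/N$ up to the additive fraction, which is bounded by $1/\mu$. When $N<\mu$, the single-max argument with $\#P\le N\le\mu$ already gives $t=\log(1+\mu u)/\mu$.

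\textbf{Class separation.} Every negative weight is at most $e^{\mu}$. The positive mass is at least the near-boundary mass $\gtrsim e^{\lambda+\mu\alpha}\max(1,N/\mu)$ on the good events above. Hence
\[
\frac{S_Q}{S_P}\lesssim \#Q\cdot e^{\mu(1-\alpha)-\lambda}\Big/\max(1,N/\mu)\le \mu\, e^{-\mu(k-\ell)}\cdot e^{\mu(\ell-\alpha)}.
\]
Using $k\ge\ell+(\log\mu+c_0)/\mu$ from the definition of $\mathcal G$ gives $S_Q/S_P\lesssim e^{-c_0}e^{\mu(\ell-\alpha)}$. This is too weak for $\alpha<\ell$, so the estimate must be sharpened. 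The negative weights decay geometrically as well: the negative mass concentrates near $1$ and has size at most $\#Q\,e^{\mu}$ with $\#Q\le N$. The real comparison is therefore $N e^{\mu}$ against $(N/\mu)e^{\lambda+\mu\alpha}$. This is where the population identity for $p_\theta/q_\theta$ from Section~\ref{sec:parameter-statistics}, with its $C/\mu$ bound on negative mass, is used.

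The empirical ratio is taken to the population ratio by the same Chernoff/Bernstein control on the empirical sums of $e^{\mu X_i}$ restricted to each class. This yields $S_Q/S_P\lesssim q_\theta(\alpha)\cdot(1+u)$ plus the $u/N$ term for small $N$, and that is $\le C/\mu$ up to the factor $u$, which the logarithm absorbs after rescaling. Combining the two bounds by a union bound gives \eqref{eq:fixed-estimator-v3} with failure probability at most $4e^{-u}$.
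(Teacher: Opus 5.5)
Your overall architecture matches the paper's: the order-statistic event $\{\alpha-M_N^+\le u/N\}$ with failure probability $(1-u/N)^N\le e^{-u}$, the softmax-to-maximum bound $\log(\#P)/\mu$ inside the positive class, a separate argument once $N\gtrsim\mu u$ so that $\log N$ never appears, and a bound on the wrong-class mass. One minor point in the large-$N$ within-class step: Markov at level $e^u$ gives a far-mass fraction $e^{u}e^{-\mu t}$, not $ue^{-\mu t}$, which would force $t\gtrsim u/\mu$. You need Bernstein there, or the paper's exact representation $F_N-F=P_N\phi_F/P_NW_{\theta,\alpha}$ with Lemma~\ref{lem:self-normalized-ratio-high-probability}. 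The genuine gap is the class-separation step. That is exactly where the hypothesis $\alpha\ge\rho+\eta$ must enter, and your argument never uses it.

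\textbf{Sign error.} In fact $\mu(1-\alpha)-\lambda=-\mu(k-\ell)-\mu(\alpha-\rho)$. So the correct right-hand side of your crude bound is $\mu e^{-\mu(k-\ell)}e^{-\mu(\alpha-\rho)}\le e^{-c_0}e^{-\eta\mu}$, not $\mu e^{-\mu(k-\ell)}e^{\mu(\ell-\alpha)}$. The crude bound is already exponentially small, and no sharpening is needed.

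\textbf{False lower bound on $S_P$.} The bound $S_P\gtrsim e^{\lambda+\mu\alpha}$ fails when $N\lesssim\mu u$. On $E_1$ you only have $S_P\ge e^{\lambda+\mu\alpha-\mu u/N}$, and the loss $e^{\mu u/N}$ can be enormous. The paper pays for this loss with the margin. When $u/N\le\eta/2$, the score gap satisfies $\Gamma_+=k+M_1-M_0\ge(\alpha-\rho)-u/N\ge\eta/2$. The negative mass is then at most $\min\{1,Ne^{-\mu\eta/2}\}\lesssim_\eta(1+\log(N+1))/\mu$ by Lemmas~\ref{lem:one-head-class-margin} and~\ref{lem:one-head-wrong-class-absorption}. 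The case $u/N>\eta/2$ is trivial.

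\textbf{Your replacement argument fails.} Comparing empirical class sums to $p_\theta,q_\theta$ by Bernstein is unavailable when $N\lesssim\mu u$, because the normalized weights have sup-norm of order $\mu$. The ``$u/N$ term for small $N$'' is asserted without proof. A factor $u$ also cannot be absorbed by the logarithm: with only $q_\theta\le C/\mu$, the quantity $u/\mu$ is not $\lesssim\log(1+\mu u)/\mu+u/N$ (take $u=\mu$, $N=\mu^3$). More fundamentally, an argument that uses only $q_\theta\le C/\mu$ would apply verbatim at $\alpha=\rho$. There, Remark~\ref{rem:one-head-interior-scope} indicates heuristically that the all-$N$ statement fails for $1\ll N\ll\mu/\log\mu$, because the negative class can win the maximum-score competition. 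For $N\ge C\mu u$ your route can be repaired: Bernstein gives $S_Q/S_P\lesssim 1/\mu+u/N$, as in Proposition~\ref{prop:one-head-large-context}, even on $[\rho,\ell]$. For $N<C\mu u$, however, you need the $\eta$-margin score gap.
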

Appendix~\ref{app:proof-fixed-estimator} proves Proposition~\ref{prop:fixed-estimator-v3}. For shorter context sizes $N$, the proof compares softmax pooling with the largest positive observation. For $N$ large enough to resolve the concentration scale, it controls the weighted empirical average directly around its population limit. Combining the two arguments avoids a logarithmic dependence on $N$ in the pretraining term.

Proposition~\ref{prop:fixed-estimator-v3} is the interface between statistics and optimization. Thus, to obtain Theorem~\ref{thm:learned-estimation}, we show that training produces parameters in $\mathcal G$ with $\mu\asymp A_{m,n}$. Once a checkpoint is frozen, conditioning on its pretraining corpus makes the fixed-parameter bound in this result applicable.

\section{From pretraining to parameter guarantees}
\label{sec:training-v3}
Our proof separates deterministic optimization geometry from finite-corpus perturbations. We first use population gradient flow to identify a scale-dependent funnel in which the concentration parameter increases and the induced estimator becomes more accurate. Then, we transfer this geometry to population gradient descent and use scale-adapted empirical-gradient estimates to preserve the radial progress, inward boundary motion, and step-length control needed by empirical gradient descent. This avoids pointwise tracking of the empirical trajectory over a training horizon that grows with the pretraining sample size.

\subsection{Population capture and increasing concentration}
\label{subsec:main-population}

We study population gradient flow $\dot\theta(t)=-\nabla\mathcal R_\rho(\theta(t))$.
For population gradient descent with fixed step size $\gamma > 0$, write $\theta_{s+1}^\gamma=\theta_s^\gamma-\gamma\nabla\mathcal R_\rho(\theta_s^\gamma)$ with initialization $\theta_0^\gamma=\theta_0$, and let $\mu_s^\gamma$ denote its second coordinate.
For empirical gradient descent, write $\widehat\theta_{s+1}^\gamma=\widehat\theta_s^\gamma-\gamma\nabla\widehat{\mathcal R}_{m,n}(\widehat\theta_s^\gamma)$ with $\widehat\theta_0^\gamma=\theta_0$, and let $\widehat\mu_s^\gamma$ denote its second coordinate.

Population training must first learn the balance of scores and then increase their scale. The gradient identities make this coupling explicit. Here $\Cov_\theta$ and $\Var_\theta$ refer to the distribution tilted by the attention weights.

\begin{lemma}[Gradient identities]
\label{lem:main-gradient-identities}
For every $\theta\in\mathbb R^2$ and $\alpha\in[\rho,\ell]$, $\partial_\lambda F(\theta;\alpha)=\Cov_\theta(X,Y\mid\alpha)$ and $\partial_\mu F(\theta;\alpha)=\Var_\theta(X\mid\alpha)$. Consequently, $\nabla\mathcal R_\rho(\theta)=\frac1\Delta\int_\rho^\ell\{F(\theta;\alpha)-\alpha\}\bigl(\Cov_\theta(X,Y\mid\alpha),\Var_\theta(X\mid\alpha)\bigr)^\top d\alpha$.
\end{lemma}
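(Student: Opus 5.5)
The plan is to write the infinite-context functional $F(\theta;\alpha)$ explicitly as a ratio of integrals against the uniform law of $X$ on $[0,1]$ and to differentiate it directly. Set $w_\theta(x)=e^{\lambda\mathbf 1\{x\le\alpha\}+\mu x}$. Then $F(\theta;\alpha)=\int_0^1 x\,w_\theta(x)\,dx\big/\int_0^1 w_\theta(x)\,dx$, which is the limit of $F_n$ by the law of large numbers applied to numerator and denominator. The tilted law $P_\theta(\cdot\mid\alpha)$ has density proportional to $w_\theta$, and $Y=\mathbf 1\{X\le\alpha\}$ under it. The integrands are bounded and smooth in $\theta$, with $\theta$-derivatives $Y w_\theta$ and $X w_\theta$ dominated uniformly on compact $\theta$-sets. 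So differentiation under the integral sign is justified, and $\theta\mapsto F(\theta;\alpha)$ is $C^\infty$ on all of $\mathbb R^2$.

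Next I apply the quotient rule. Write $N(\theta)=\int x w_\theta$ and $D(\theta)=\int w_\theta$. Since $\partial_\lambda w_\theta=Y w_\theta$, we get $\partial_\lambda F=\int xY w_\theta/D-(N/D)(\int Y w_\theta/D)=\E_\theta[XY]-\E_\theta[X]\E_\theta[Y]=\Cov_\theta(X,Y\mid\alpha)$. Likewise $\partial_\mu w_\theta=X w_\theta$ gives $\partial_\mu F=\E_\theta[X^2]-\E_\theta[X]^2=\Var_\theta(X\mid\alpha)$. This is the standard exponential-family identity: the derivative of the mean of the sufficient statistic $X$ with respect to the natural parameter of statistic $T$ is $\Cov(X,T)$.

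For the risk, $\alpha\sim\Unif([\rho,\ell])$ has density $1/\Delta$. So $\mathcal R_\rho(\theta)=\frac1{2\Delta}\int_\rho^\ell(F(\theta;\alpha)-\alpha)^2\,d\alpha$. Differentiating under the $\alpha$-integral is justified because $F\in[0,1]$ and $|\nabla_\theta F|\le 1$: covariances and variances of $[0,1]$-valued variables are bounded by $1$, uniformly in $\theta$ and $\alpha$. The chain rule then gives $\nabla\mathcal R_\rho=\frac1\Delta\int_\rho^\ell(F-\alpha)\nabla_\theta F\,d\alpha$, which is the claimed formula.

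The only point needing care is the domination argument, and it is mild because everything lives on the compact interval $[0,1]$. The argument handles the discontinuity of $Y$ in $x$ at $x=\alpha$ by integrating in $x$ for each fixed $\alpha$, so no derivative in $\alpha$ is ever taken.
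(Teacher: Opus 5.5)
Your proposal is correct and follows essentially the same route as the paper: both view $F(\theta;\alpha)$ as the mean of $X$ under the density on $[0,1]$ proportional to $e^{\lambda Y(x)+\mu x}$, read off $\partial_\lambda F=\Cov_\theta(X,Y\mid\alpha)$ and $\partial_\mu F=\Var_\theta(X\mid\alpha)$ from exponential-family differentiation, and use boundedness of $X,Y\in[0,1]$ to differentiate $\frac{1}{2\Delta}\int_\rho^\ell(F-\alpha)^2\,d\alpha$ under the integral. Your explicit quotient-rule computation and domination argument simply spell out the steps the paper states in one line.
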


\begin{theorem}[Population capture and sharpening]
\label{thm:main-population}
Under Assumption~\ref{ass:initialization}, population gradient flow enters $\mathcal G$ in finite time and remains there. Writing $\tau_{\mathcal G}:=\inf\{t\ge0:\theta(t)\in\mathcal G\}$, there exists $c_{\rm rad}\in(0,1)$ such that $\inf_{0\le t\le\tau_{\mathcal G}} \mu(t) \ge c_{\rm rad}\mu(0)$.
Moreover, there exist constants $0<c<C<\infty$ such that, for every $t\ge\tau_{\mathcal G}$,
\begin{equation}
\label{eq:main-population-sharpening}
    \frac{c}{\mu(t)^3}
    \le
    \dot\mu(t)
    \le
    \frac{C}{\mu(t)^3},
    \qquad
    \sup_{\alpha\in[\rho,\ell]}
    |F(\theta(t);\alpha)-\alpha|
    \le
    \frac{C}{\mu(t)},
    \qquad
    \mathcal R_\rho(\theta(t))
    \le
    \frac{C}{\mu(t)^2}.
\end{equation}
Writing $\mu_{\rm ent}:=\mu(\tau_{\mathcal G})$, it follows that
\begin{equation}
\label{eq:main-population-growth}
    \mu_{\rm ent}^4+c(t-\tau_{\mathcal G})
    \le
    \mu(t)^4
    \le
    \mu_{\rm ent}^4+C(t-\tau_{\mathcal G}).
\end{equation}
Finally,
\begin{equation}
\label{eq:main-ratio-convergence}
    \frac{\lambda(t)}{\mu(t)}
    =
    \ell+
    O\!\left(
        \frac{\log\mu(t)}{\mu(t)}
    \right).
\end{equation}
Hence, $\lambda(t),\mu(t)\to\infty$, while the induced estimator converges uniformly:
\begin{equation}
\label{eq:main-population-time-rates}
    \sup_{\alpha\in[\rho,\ell]}
    |F(\theta(t);\alpha)-\alpha|
    =
    O(t^{-1/4}),
    \qquad
    \mathcal R_\rho(\theta(t))
    =
    O(t^{-1/2}).
\end{equation}
\end{theorem}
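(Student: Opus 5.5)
We will work in the coordinates $(k,\mu)$ with $k=\lambda/\mu$, where $\dot\mu=-\partial_\mu\mathcal R_\rho$ and $\dot k=\mu^{-1}(\dot\lambda-k\dot\mu)$. The plan has three steps: asymptotic formulas for the integrands of Lemma~\ref{lem:main-gradient-identities}, a capture phase that brings the flow into $\mathcal G$, and an invariance-plus-sharpening phase inside $\mathcal G$.

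\textbf{Step 1: asymptotics.} For large $\mu$ we will compute, from the exact integration formula for $p_\theta/q_\theta$, expansions of $F-\alpha$, $\Cov_\theta(X,Y\mid\alpha)$ and $\Var_\theta(X\mid\alpha)$, uniformly in $\alpha$ and with exponentially small remainders $O(e^{-\rho\mu})$.
\begin{itemize}
\item Within the positive class, attention is an exponential tilt truncated at $\alpha$. So the conditional mean is $\alpha-\mu^{-1}$ and the conditional variance is $\mu^{-2}$, up to exponentially small terms.
\item Writing $q=q_\theta(\alpha)$, we expect
\[
F-\alpha\approx q(1-\alpha)-\mu^{-1},\qquad \Var_\theta(X\mid\alpha)\approx\mu^{-2}+q(1-q)(1-\alpha)^2,\qquad \Cov_\theta(X,Y\mid\alpha)\approx -q(1-q)(1-\alpha).
\]
\item On the negative class the tilt concentrates near $1$, contributing mean $1-\mu^{-1}$.
\item The mass $q$ satisfies $q\approx(1+\mu^{-1}\cdots e^{\mu(k+\alpha-1)})^{-1}$. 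It is a logistic switch located at $\alpha\approx 1-k+O(\log\mu/\mu)$.
\end{itemize}

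\textbf{Step 2: capture.} We will show that from $k_0\ge\kappa$ and $\mu_0\ge\mu_{\rm init}$ the ratio $k$ moves monotonically toward the band defining $\mathcal G$.
\begin{itemize}
\item \emph{If $k<\ell+(\log\mu+c_0)/\mu$:} tasks with $\alpha$ near $\rho$, or below $1-k$, have $q$ of order one. They give $F-\alpha>0$ and $\Cov<0$, hence $\dot\lambda>0$. A sign computation of $\dot k=\mu^{-1}(\dot\lambda-k\dot\mu)$ should then give $\dot k>0$, bounded below by an explicit amount.
\item \emph{If $k$ is above the band:} $q$ is tiny for all $\alpha$, so $F-\alpha\approx-\mu^{-1}$. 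Then $\dot\mu<0$ slightly and $\dot\lambda\approx 0$, so $k$ increases slowly while $\mu$ decreases. This is why the upper constraint only matters when it is imposed at entry.
\end{itemize}
I expect the initial ratio can be made $\le\ell+C_{\rm band}\log\mu_0/\mu_0$ up to constants, or else handled by choosing $C_{\rm band}$ large, but this is uncertain. The radial lower bound $\mu\ge c_{\rm rad}\mu(0)$ should come from integrating $|\dot\mu|\lesssim\int|F-\alpha|\Var$ over the transient. The transient length is $O(\mu_0)$-controlled, because $\dot k\gtrsim\mu^{-1}\cdot(\text{mass of misclassified tasks})$.

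\textbf{Step 3: invariance and sharpening.} Inside $\mathcal G$ we have $q\le C/\mu^{2}$ or smaller on $[\rho,\ell]$, apart from the exponentially small boundary layer near $\alpha=\ell$.
\begin{itemize}
\item This gives $F-\alpha=-\mu^{-1}+O(\mu^{-2})$ and $\Var=\mu^{-2}(1+o(1))$. Hence $\dot\mu=-\partial_\mu\mathcal R_\rho\asymp\mu^{-3}$, with sign positive since $F<\alpha$.
\item The bounds $\sup_\alpha|F-\alpha|\le C/\mu$ and $\mathcal R_\rho\le C/\mu^2$ follow immediately.
\item Integrating $\frac{d}{dt}\mu^4\in[4c,4C]$ yields the bounds \eqref{eq:main-population-growth}. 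The time rates follow from these.
\end{itemize}

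\textbf{Main obstacle: invariance of $\mathcal G$.} We must check that $\dot k$ points inward on the two curved boundaries $k=\ell+(\log\mu+c_0)/\mu$ and $k=\ell+C_{\rm band}\log\mu/\mu$. These boundaries themselves move at rate $\frac{d}{dt}(\log\mu/\mu)\approx-\log\mu\,\mu^{-5}$.
\begin{itemize}
\item \emph{Lower boundary:} the term $\Cov\cdot(F-\alpha)$ from the tasks with $\alpha$ near $\ell$ contributes to $\dot\lambda$ a quantity of order $\mu^{-1}\int q\,d\alpha$. With $c_0$ large this must dominate $k\dot\mu\sim\mu^{-3}$. This requires a careful Laplace-type evaluation of $\int_\rho^\ell q(1-q)\,d\alpha\approx\mu^{-1}\log(1+e^{-\mu(k-\ell)}\cdots)$.
\item \emph{Upper boundary:} $k$ must decrease, or increase more slowly than the boundary moves. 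Here I would compare $\dot\lambda/\dot\mu$ with $k$ directly, using that $\dot\lambda/\dot\mu\to\ell$-ish corrections.
\end{itemize}
I would first attempt this by comparing $\dot\lambda-k\dot\mu$ with $\mu\,\frac{d}{dt}(\text{boundary})$ through explicit leading-order constants. Adjusting $c_0$ and $C_{\rm band}$ then closes both inequalities, and the ratio statement \eqref{eq:main-ratio-convergence} is just membership in $\mathcal G$.
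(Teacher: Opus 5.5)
\textbf{Capture phase.} Your upper-region signs are reversed.

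For $k\ge U_{\rm band}(\mu)$ one has $q\lesssim\mu^{-C_{\rm band}}$ and $F-\alpha\approx-\mu^{-1}$. Hence $\dot\mu=-\Delta^{-1}\int_\rho^\ell(F-\alpha)\Var_\theta(X\mid\alpha)\,d\alpha\approx+\mu^{-3}>0$, while $\dot\lambda\le0$ is of lower order. So $\dot k=(\dot\lambda-k\dot\mu)/\mu\approx-k\mu^{-4}<0$: the ratio falls while the radius grows, and $\dot\lambda/\dot\mu\to0$ there, not ``$\ell$-ish''. Your own Step 3 uses the correct implication $F<\alpha\Rightarrow\dot\mu>0$, so Steps 2 and 3 contradict each other.

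This is essential, not cosmetic. Assumption~\ref{ass:initialization} imposes only $k_0\ge\kappa$, so you cannot assume $k_0$ lies in the band, and initial ratios far above it must be captured. It is exactly the bound $\dot k\le-c\mu^{-4}$ that does this. Compared with the rail speed $|U_{\rm band}'(\mu)\dot\mu|=O(\mu^{-5}\log\mu)$, and combined with $\mu^4$ growing at most linearly, it gives both finite-time entry from above and inwardness of the upper rail.

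Separately, the pre-capture bound $\mu\ge c_{\rm rad}\mu(0)$ cannot come from integrating $|\dot\mu|$ over a transient of length $O(\mu_0)$. In the interior strip $|\dot\mu|\asymp\mu^{-1}$ while $\dot k\asymp\mu^{-2}$, so the transit takes time $\asymp\mu^2$ and the crude loss bound is of the same order as $\mu$ itself. The same happens below the band, with $|\dot\mu|\asymp\mu^{-2}$ against $\dot k\asymp\mu^{-3}$. What closes is a relative estimate $(-\dot\mu)_+/\mu\le C\dot k$ on every pre-capture region, integrated in $k$ (which moves by $O(1)$ before capture). This gives $\mu(t)\ge\mu(0)e^{-C(k(t)-k(0))}$.

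\textbf{Lower rail and the choice of $c_0$.} Your description of $\mathcal G$ hides the main difficulty.

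The switch sits at $1-k\approx\rho$, so the active layer is at $\alpha=\rho$, not $\ell$. There $q$ is of order $\mu^{-1}$, not $\mu^{-2}$. With $k=\ell+(\log\mu+c)/\mu$ and $\alpha=\rho+u/\mu$, one has $\mu q\to e^{-c-u}$ and $\mu(F-\alpha)\to\ell e^{-c-u}-1$. All competing terms are then of the same order, and a rescaled dominated-convergence computation gives $\mu^4\dot k\to-d_k(c)$ and $\mu^3\dot\mu\to d_{\rm rad}(c)$. Here $d_k$ and $d_{\rm rad}$ are downward-opening quadratics in $x=e^{-c}$.

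Taking $c_0$ large, as you suggest, sends $d_k(c_0)\to\ell>0$, so the lower rail points outward. Inwardness instead needs $e^{-c_0}>x_k^\star$. That forces $\ell e^{-c_0}>2$, hence $F>\alpha$ on the tasks near $\rho$. So ``$\dot\mu>0$ since $F<\alpha$'' fails precisely on the lower rail. Positivity of the radial drift there is a separate requirement, $e^{-c_0}<x_\mu^\star$.

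The missing idea is this two-sided constraint on $c_0$, together with the verification $x_k^\star<x_\mu^\star$ (compare the radicands of the two positive roots). Without it, ``adjusting $c_0$'' has no content. The same rescaled profiles are also what give $\dot k>0$ just below the rail: one needs $d_k(c)<0$ for all $c\le c_0$, plus a separate shoulder estimate as $c\to-\infty$. In that region your claim that some tasks have $q$ of order one is false.
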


This theorem shows that population training reaches the progress region $\mathcal G$ required by Proposition~\ref{prop:fixed-estimator-v3}. The concentration parameter may decrease before capture, but it can only shrink by at most a fixed multiplicative factor beforehand. Once this region is reached, the radial drift $\dot\mu(t)$ becomes positive and optimization acts primarily by sharpening the learned softmax rule. Appendix~\ref{sec:population-dynamics} gives the proof.

This mechanism is stable under discretization. For every sufficiently small fixed step size $\gamma$, population gradient descent reaches the interior of the same scale-dependent funnel at a deterministic robust-entry index $s_0=s_0(\gamma)$, and thereafter satisfies
\begin{equation}
\label{eq:main-population-gd-summary}
\begin{aligned}
    (\mu_{s_0}^\gamma)^4+c\gamma(s-s_0)
    &\le (\mu_s^\gamma)^4
    \le (\mu_{s_0}^\gamma)^4+C\gamma(s-s_0),
    \\
    \sup_{\alpha\in[\rho,\ell]}
    |F(\theta_s^\gamma;\alpha)-\alpha|
    &\le
    \frac{C}{\{(\mu_{s_0}^\gamma)^4+c\gamma(s-s_0)\}^{1/4}}.
\end{aligned}
\end{equation}
Before robust entry, the discrete trajectory shadows the flow for only a fixed physical time. After entry, a one-step barrier argument replaces pointwise trajectory tracking. This post-entry formulation is the one that survives finite pretraining. Proposition~\ref{prop:population-gd-reference} in Appendix~\ref{sec:finite-step-gd} proves \eqref{eq:main-population-gd-summary}.

\subsection{Finite pretraining and the shrinking gradient signal}
\label{subsec:main-finite-pretraining}
Population training can increase $\mu$ indefinitely. A finite corpus perturbs the gradient, and the permissible perturbation becomes smaller as the population gradient weakens. In $\mathcal G$, radial progress has size $\mu^{-3}$ and the inward velocities at the two boundaries have size at least a constant times $\mu^{-4}$. Determining whether these directions remain reliable therefore requires relative gradient control.

On a parameter range $\mu\asymp R$, we therefore normalize the radial error by $\mu^3$ and the boundary-normal errors by $\mu^4$. Let $\mathcal E(R)$ denote the maximum of these normalized errors and the normalized speed error (defined precisely in Appendix~\ref{app:empirical-technical-estimates-auxiliary}, \eqref{eq:empirical-gd-shell-errors}). The empirical-process argument gives
\begin{equation}
\label{eq:main-relative-concentration}
 \mathcal E(R)\le C\left\{\sqrt{\frac{RL}{m}}+\frac{RL}{m}+\frac Rn\right\},
 \qquad L=\log\frac{C(1+R)mn}{\delta},
\end{equation}
with probability at least $1-\delta$, provided $n\ge CRL$. The terms distinguish sampling across tasks from finite-context bias. Their scale dependence comes from moment bounds adapted to concentrated attention; uniform bounds on individual task gradients would obscure this dependence.

Keeping \eqref{eq:main-relative-concentration} below the fixed normalized progress margins permits $R$ up to a sufficiently small constant times $A_{m,n}$. This yields the following result:
\begin{corollary}[Certified concentration from finite pretraining]
\label{cor:finite-summary-v3}
Under Assumption~\ref{ass:initialization}, for sufficiently small fixed $\gamma$ and sufficiently large $A_{m,n}$, with probability at least $1-Ce^{-cm}-C(mn)^{-c}$, empirical gradient descent enters $\mathcal G$ and remains there throughout a deterministic training interval. Every checkpoint in its late subinterval satisfies
\[
 c_A A_{m,n}\le\widehat\mu\le C_A A_{m,n},
 \qquad \sup_{\alpha\in[\rho,\ell]}|F(\widehat\theta;\alpha)-\alpha|\le C/A_{m,n}.
\]
The endpoints of the late subinterval occur after $\Theta(A_{m,n}^4/\gamma)$ post-entry gradient steps.
\end{corollary}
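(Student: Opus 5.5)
The plan is a perturbation argument on the shell decomposition of the parameter range. The population discrete-time picture \eqref{eq:main-population-gd-summary} is treated as the reference, and the empirical gradient is controlled only through the normalized errors $\mathcal E(R)$ in \eqref{eq:main-relative-concentration}, never by pointwise tracking of the trajectory.

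\textbf{Pre-entry phase.} Before robust entry the horizon is a fixed physical time $T_0$, i.e.\ $s_0(\gamma)=O(1/\gamma)$ steps, and $\mu$ stays of order $\mu_0$ (by the $c_{\rm rad}$ bound of Theorem~\ref{thm:main-population}).
\begin{itemize}
\item On this bounded parameter region, \eqref{eq:main-relative-concentration} with $R\asymp\mu_0$ gives an absolute gradient error of order $\sqrt{L/m}+1/n$.
\item A discrete Gronwall argument, using that $\nabla\mathcal R_\rho$ is Lipschitz on the compact region, keeps $\widehat\theta_s^\gamma$ within $o(1)$ of $\theta_s^\gamma$ up to step $s_0$.
\item Since the population iterate lies in the interior of $\mathcal G$ at $s_0$ with a fixed margin, the empirical iterate enters $\mathcal G$ at the same index once $A_{m,n}$ is large.
\end{itemize}

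\textbf{Post-entry phase.} Partition $[\mu_{\rm ent},c_AA_{m,n}]$ into dyadic shells $\mu\in[R,2R]$. Take a union bound over the $O(\log A_{m,n})$ shells with $\delta=(mn)^{-c}/\log(mn)$; together with an $e^{-cm}$ event handling the $\sqrt{\cdot}$ term, this produces $\mathcal E_{\rm train}$. On $\mathcal E_{\rm train}$, every shell with $R\le c'A_{m,n}$ has $\mathcal E(R)$ below a small constant $\epsilon_0$, because $RL/m$ and $R/n$ are at most $c'\log(\cdot)/\log(C_{\rm stat}mn)$ times constants; the condition $n\ge CRL$ holds likewise. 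Then run the one-step barrier argument from the population GD proof, with the gradient replaced by the perturbed one.
\begin{itemize}
\item \emph{Radial progress.} The $\mu$-component of $-\nabla\widehat{\mathcal R}$ is at least $(c-\epsilon_0)\mu^{-3}$ and at most $(C+\epsilon_0)\mu^{-3}$.
\item \emph{Boundary invariance.} On the two boundary curves of $\mathcal G$, the normal component points inward with magnitude at least $(c_b-\epsilon_0)\mu^{-4}$.
\item \emph{Step-length control.} The step has length $\lesssim\gamma\mu^{-3}$, so for small $\gamma$ a step can cross the boundary layer only in the inward direction. Here I use the second-order Taylor bound $\gamma^2\mu^{-6}\ll\gamma\mu^{-4}$, which needs a curvature estimate of the boundary of order $\mu^{-2}\log\mu$.
\end{itemize}
Consequently the iterates remain in $\mathcal G$, and $\widehat\mu^4$ increases by between $c\gamma$ and $C\gamma$ per step, by the same telescoping as for the flow.

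\textbf{Certified interval and accuracy.} The growth law $(\widehat\mu_{s_0}^\gamma)^4+c\gamma(s-s_0)\le(\widehat\mu_s^\gamma)^4\le(\widehat\mu_{s_0}^\gamma)^4+C\gamma(s-s_0)$ holds as long as $\widehat\mu\le c'A_{m,n}$. Define the late subinterval deterministically: take $s\in[s_0+a A_{m,n}^4/\gamma,\;s_0+bA_{m,n}^4/\gamma]$, with $b$ chosen so that $Cb\le (c')^4/2$ and $a>0$. Then every checkpoint in it satisfies $(ca)^{1/4}A_{m,n}\lesssim\widehat\mu\le c'A_{m,n}$, which gives $c_A,C_A$. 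The bound $\sup_\alpha|F-\alpha|\le C/\widehat\mu\le C/A_{m,n}$ then follows from the accuracy statement on $\mathcal G$ in Section~\ref{sec:parameter-statistics}.

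\textbf{Main obstacle.} I expect the hardest step to be making the boundary-invariance argument work with errors measured relative to $\mu^{-4}$: the normal margin is much smaller than the radial drift, and constants must be uniform over shells. I would first try to verify that the errors in \eqref{eq:main-relative-concentration} are already normalized by $\mu^4$ for the boundary-normal directions, exactly as stated, so that the only remaining issue is the deterministic discretization term controlled by small $\gamma$.
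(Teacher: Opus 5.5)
Your plan follows the paper's route. The corollary is read off from Theorem~\ref{thm:main-empirical}, whose proof is the same sequence of steps: compact Gr\"onwall tracking of population GD up to the robust-entry index $s_0(\gamma)$; a union over deterministic dyadic shells on which the $\mu^3$- and $\mu^4$-normalized errors sit below the fixed margins; the one-step no-overshoot barrier; and a deterministic late window cut out by fourth-power envelopes. Your uniform per-shell choice $\delta=(mn)^{-c}/\log(mn)$ and your explicit window $[s_0+aA_{m,n}^4/\gamma,\,s_0+bA_{m,n}^4/\gamma]$ with $Cb\le(c')^4/2$ are simpler bookkeeping than the paper's $p_j$ and $M_\pm^{\rm gd}$ envelopes, and both check out.

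\textbf{Pre-entry step: wrong tool.} Before entry the iterates lie outside $\mathcal G$ by definition. However, \eqref{eq:main-relative-concentration} bounds $\mathcal E(R)$, which is a supremum over $\widetilde{\mathcal S}_R\subset\mathcal G$. Its proof also relies on the tube-specific weight bounds of Lemma~\ref{lem:normalized-weight-geometry}. So it gives no control on the pre-entry path. The paper instead uses a separate uniform approximation on a fixed compact set $K_0\subset[0,R_{\rm cmp}]^2$ (Proposition~\ref{prop:compact-vector-field-approximation}). Its crude constants, such as $e^{2R_{\rm cmp}}$, are harmless because $R_{\rm cmp}$ is fixed and only an $o(1)$ error is needed. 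That compact event is also where the paper's $Ce^{-cm}$ term comes from; it is not from the shell $\sqrt{\cdot}$ term.

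\textbf{Your anticipated obstacle.} The errors $\mathcal E_\pm(R)$ are indeed $\mu^4$-normalized by definition, but only on the rails $h_\pm=0$. The step that could cross starts off the rail, so you must transfer the margin there. This requires $\theta\mapsto\mu^4\langle\nabla h_\pm(\theta),\widehat v_{m,n}(\theta)\rangle$ to be $O(R^3)$-Lipschitz. That follows from the data-free global Lipschitz bound on one-task gradients (Lemma~\ref{lem:task-gradient-regularity}). Over a step of length $O(\gamma R^{-3})$ the transfer then costs only $O(\gamma)$ in normalized units, which small $\gamma$ absorbs.
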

This is a summary of Theorem~\ref{thm:main-empirical}, with details given in Appendix~\ref{app:proof-finite-summary}.

\textbf{Why continuation does not require trajectory tracking.}
Before entry, a compact approximation argument compares empirical and population training over a fixed time interval. After entry, the proof uses overlapping parameter ranges. On each range, the normalized estimates preserve positive radial drift, inward boundary motion, and a bound on the step length. A one-step barrier argument prevents exit from the useful region, including on the step entering the next range. Intersecting the events for all deterministic ranges makes this argument valid for the data-dependent trajectory on the same reused corpus.

Thus the proof follows the properties sufficient for accurate prediction over the growing time horizon. The abstract formulation appears in Appendix~\ref{app:statements-framework}.
The certified upper scale only marks the reach of the present estimates, but the achievable accuracy may improve with further empirical training.

\section{Experiments}
\label{sec:experiments}

We examine the accuracy of a frozen estimator and the dynamics producing it. The transfer experiment uses one checkpoint per pretraining corpus across all fresh-context sizes. Population experiments examine capture and increasing concentration. Appendix~\ref{sec:experimental-details} gives reproducibility details, initialization checks, and exploratory departures from the model assumptions.

\textbf{Finite pretraining and frozen transfer.}

We use two complementary corpus sweeps to probe the predictions of Corollary~\ref{cor:finite-summary-v3} and Theorem~\ref{thm:learned-estimation}. Let $\mu_s^{\rm pop}$ be the deterministic population-GD reference path and set $A_{m,n}^{\rm proxy}:=(m\wedge n)/\log(mn)$. The theory-guided stopping diagnostic fixes $s_q:=\min\{s:\mu_s^{\rm pop}\ge qA_{m,n}^{\rm proxy}\}$ for $q\in\{0.5,0.75,1\}$ before any corpus is sampled. For each of seven balanced and axis-swapped configurations, we run full-batch empirical GD with $\gamma=400$ on 30 corpora. From the shared post-entry start through $s_1$, all 210 empirical radii remain within $[0.7,1.3]$ times their population references, and 209 of 210 runs have population prediction error at most $3/A_{m,n}^{\rm proxy}$ at every $s_q$; full results appear in Appendix~\ref{sec:experimental-details}. These predetermined checkpoints provide a controlled diagnostic of the radius and prediction-error bounds in Corollary~\ref{cor:finite-summary-v3}. The theoretical step-size threshold and comparison constants remain existential, so practical stopping-rule calibration and failure-boundary estimation lie outside the scope of this experiment.

\begin{figure}[t]
    \centering
    \includegraphics[width=\textwidth]{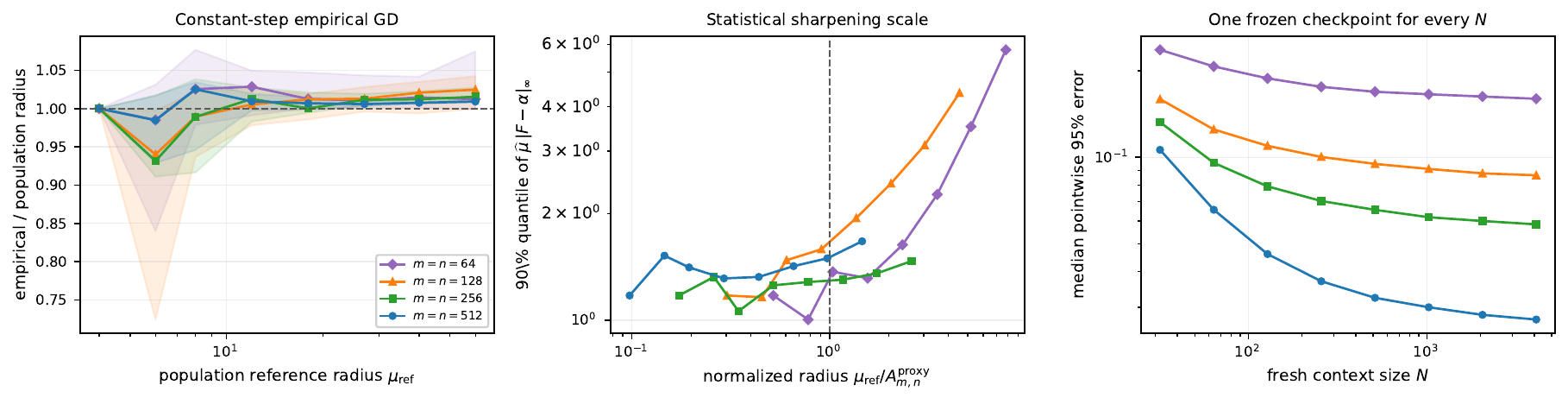}
    \caption{Finite pretraining followed by frozen transfer. The first two panels test the radius and prediction-error controls of Corollary~\ref{cor:finite-summary-v3}; the right panel tests the two-term error prediction of Theorem~\ref{thm:learned-estimation} using one checkpoint per pretraining corpus, frozen and reused for every $N$. The proxy indicates the order of the theoretical stopping scale; its comparison constants are not calibrated.}
    \label{fig:main-pretraining-transfer}
\end{figure}

For transfer, on the fixed reference-radius grid, we choose the largest $r\le A_{m,n}^{\rm proxy}$ and freeze each empirical iterate at the precomputed iteration index $\min\{s:\mu_s^{\rm pop}\ge r\}$. The grid and population path are fixed in advance, so checkpoint selection precedes observation of the empirical corpus. We reuse the resulting parameters for every $N\in\{32,64,\ldots,4096\}$ and test the two additive terms in \eqref{eq:learned-estimation-v3}. Figure~\ref{fig:main-pretraining-transfer} reports, for each corpus, the maximum over the threshold grid of the 95th-percentile absolute error, followed by the median of this quantity across corpora. The curves decay to a pretraining-limited floor. At $N=4096$, the error falls from $0.160$ to $0.0272$ as balanced pretraining grows from $m=n=64$ to $m=n=512$, with a log--log slope of $-1.02$ against $A_{m,n}^{\rm proxy}$. The supplementary axis swaps give nearly identical errors. These results are consistent with Theorem~\ref{thm:learned-estimation} at the prescribed checkpoints.

\begin{figure}[t]
    \centering
    \includegraphics[width=\textwidth]{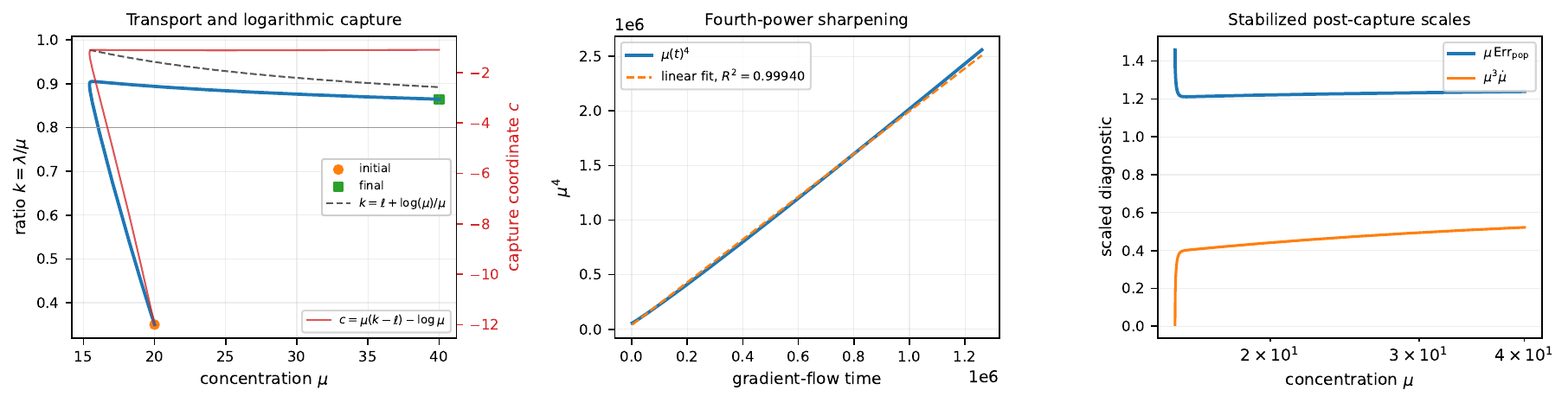}
    \caption{Exact one-head population dynamics. The ratio and rescaled endpoint coordinate $c=\mu(k-\ell)-\log\mu$ display capture into the logarithmic tube (left), $\mu^4$ is nearly affine after entry (center), and both $\mu\sup_\alpha|F-\alpha|$ and $\mu^3\dot\mu$ stabilize at constant scale (right).}
    \label{fig:main-one-head-trajectory}
\end{figure}

\textbf{Population dynamics and finite-step robustness.}
Theorem~\ref{thm:main-population} predicts capture into a logarithmic tube, $\mu(t)^4=\Theta(t)$ after capture, and uniform error $O(\mu(t)^{-1})$. We set $\rho=0.2$ and integrate the exact population vector field from $(k_0,\mu_0)=(0.35,20)$. Figure~\ref{fig:main-one-head-trajectory} follows the complete transient: the radius first falls from $20$ to $15.42$ as the ratio crosses the task range, then grows in the endpoint layer. At $\mu=40$, the rescaled endpoint coordinate $\mu(k-\ell)-\log\mu$ is $-1.10$, while $\mu^3\dot\mu=0.522$ and $\mu\sup_\alpha|F(\theta;\alpha)-\alpha|=1.239$. After capture, a linear fit of $\mu^4$ against flow time has $R^2=0.9994$, and the tested error is at most $1.51/\mu(t)$. For the discrete-time prediction \eqref{eq:main-population-gd-summary}, we initialize population GD in the post-entry regime and test $\gamma\in\{50,100,200,400\}$. Over the plotted range $\mu\ge14$, the relative radial increment is below $0.4\%$, and $(\mu_{s+1}^4-\mu_s^4)/\gamma$ is nearly independent of $\gamma$. For additional details, see Figure~\ref{fig:app-finite-pretraining} in Appendix~\ref{sec:experimental-details}.

\section{Related Work}
\label{sec:related-work}

\textbf{Algorithmic and statistical accounts of ICL.}
Prior work connects transformer predictions to gradient-based and closed-form regression algorithms \citep{akyurek2022learning,pmlr-v202-von-oswald23a}, establishes algorithm selection and generalization for constrained pretraining ERM \citep{bai2023transformers}, interprets ICL as latent-concept inference \citep{xie2021explanation}, and constructs attention-based solvers for posterior predictive quantities \citep{kang2026posterior}. These characterize computations or statistical solutions; gradient training of transformer weights requires separate analysis.

\textbf{Learned statistical estimators and tabular priors.}
Transformer-based statistical ICL covers posterior predictive distributions, density estimation, and parameter estimation \citep{kang2026posterior,hollmann2025accurate,zhang2025tabpfn}. Meta-statistical learning similarly treats an entire dataset as input to a permutation-invariant network trained across datasets \citep{peyrard2025meta}. We analyze the optimization and finite-corpus transfer of such an estimator for a threshold. This choice reflects tree-based generators in TFM priors: TabICL includes XGBoost tasks, while MITRA combines tree and ensemble priors with structural causal models \citep{qu2025tabicl,zhang2025mitra}. Full query-label objectives and multidimensional structure remain outside our scope.

\textbf{Training dynamics and invariant structures.}
Convergence analyses use Polyak--\L{}ojasiewicz (PL) or related gradient-dominance arguments for linear, looped, and representation-based models \citep{zhang2024trained,gatmiry2024looped,yang2024representations}, and also give classification guarantees \citep{shen2024training}. Trajectory analyses track competing attention weights, spectral dynamics, and learning phases \citep{pmlr-v235-huang24d,chen2024training,zhang2025dynamics,he2025context}; finite-step and mini-batch effects can create moving stability boundaries in linear self-attention \citep{balasubramanian2026large}. For induction, \citet{chen2024unveiling} study staged feature learning and attention-scale growth against finite-context errors. Related work analyzes dynamics in a symmetry-motivated low-dimensional reduction \citep{musat2025emergence} and establishes invariant population-dynamics subspaces \citep{musat2026invariant}. Our framework combines regional capture, inward drift, increasing resolution, and scale-adapted estimates that preserve these properties on a finite corpus, thereby linking gradient training to threshold-estimation guarantees.

\textbf{Boundary estimation and attention resolution.}
Nonlinear and nonparametric analyses study learned low-dimensional structure and local prediction rules \citep{oko2024pretrained,collins2024lipschitz,li2024one}; experiments with discrete functions include thresholds \citep{bhattamishra2024discrete}. Directly observed threshold models exhibit inverse-sample-size localization \citep{hansen2000sample}, whereas indirect or high-dimensional boundary-estimation rates depend on ill-posedness, smoothness, and sparsity \citep{goldenshluger2006optimal,feng2022nonregular}. Parameter divergence with functional convergence appears in implicit bias and max-margin attention selection \citep{soudry2018implicit,tarzanagh2023maxmargin,vasudeva2025implicit}, and low-entropy attention is studied by \citet{varre2026polarizes}. Pretraining guarantees distinguish task and context counts \citep{wu2024many,oko2024pretrained}; related work treats finite-token approximation \citep{bohbot2026token} and finite-data softmax optimization with preconditioning and regularization \citep{goel2026training}. We instead connect ordinary constant-step descent to boundary estimation through relative gradient control along a diverging trajectory.

\section{Discussion}
\label{sec:discussion}
The analysis connects a statistical rule to a training mechanism. Calibrating the relative label and feature scores suppresses the wrong label class, and increasing concentration localizes the boundary within the correct class. Population training acquires this calibration and continues to increase resolution. Relative gradient estimates show how much of that progress remains certified from a fixed finite pretraining corpus.

\subsection{Bayes benchmark and symmetrization}
\label{subsec:bayes-comparison}
Let $M_N^-:=\min\{X_i:Y_i=0\}$, with value one if no label is negative, and define the posterior bracket
\[
 L_N:=\max\{\rho,M_N^+\},
 \qquad
 U_N:=\min\{\ell,M_N^-\}.
\]
Under the uniform prior on $[\rho,\ell]$, the posterior distribution of $\alpha$ given the context is $\Unif([L_N,U_N])$. Consequently, its midpoint $B_N:=(L_N+U_N)/2$ is the exact Bayes estimator under both absolute and squared loss. The corresponding prior-averaged Bayes risks satisfy
\[
 R_{N,1}^{\rm Bayes}
 =\frac14\E (U_N-L_N)
 =\frac{1+o(1)}{2N},
 \qquad
 R_{N,2}^{\rm Bayes}
 =\frac1{12}\E (U_N-L_N)^2
 =\frac{1+o(1)}{2N^2}.
\]
These constants follow from the local spacing limit $N(\alpha-L_N,U_N-\alpha)\Rightarrow(E_L,E_R)$ away from the prior endpoints, where $E_L,E_R$ are independent $\operatorname{Exp}(1)$ variables; the endpoint layers have vanishing prior mass. In its hard-attention limit, the trained one-head rule approaches the left endpoint $M_N^+$. For fixed interior $\alpha$, $\E|M_N^+-\alpha|=(1+o(1))N^{-1}$ and $\E(M_N^+-\alpha)^2=(2+o(1))N^{-2}$. Prior-averaging these fixed-interior asymptotics gives the same leading constants because the prior-endpoint layers contribute only lower-order terms. Thus, the one-head rule attains the Bayes-optimal rates, while its asymptotic absolute-risk and squared-risk constants are, respectively, two and four times the Bayes constants. The learned softmax estimator inherits these rates whenever its finite-pretraining resolution term is no larger than the fresh-context term.

For comparison, define two oppositely oriented heads with tied parameters $(k,\mu)$:
\[
 F_{\pm,N}(k,\mu;D_N)
 :=\frac{\sum_{i=1}^N X_i\exp\{\pm\mu(kY_i+X_i)\}}
          {\sum_{i=1}^N \exp\{\pm\mu(kY_i+X_i)\}},
 \qquad
 G_N:=\frac{F_{+,N}+F_{-,N}}2.
\]
When both label classes are resolved and the two softmax endpoint errors are $o(N^{-1})$, $F_{+,N}$ approaches $M_N^+$ and $F_{-,N}$ approaches $M_N^-$ closely enough that $G_N$ attains the leading Bayes constant. This statement concerns statistical representation. The optimization and finite-pretraining theorems concern the one-head objective, while training under the symmetrized loss is left for future work.

\subsection{Boundary coverage}
\label{subsec:main-boundary}

On the training interval $[\rho,\ell]$, the one-head population predictor is uniformly accurate all the way to both endpoints. The all-$N$ transfer theorem above imposes the fixed margin $\alpha^{\rm fr}\ge\rho+\eta$ only because short fresh contexts may not resolve the weak score separation near the lower endpoint. Once the fresh context is large enough relative to the frozen concentration scale, the self-normalized transfer argument applies on the full closed interval $[\rho,\ell]$; see Corollary~\ref{cor:one-head-full-boundary-large-N}.

A genuine architectural obstruction appears when the physical task family is extended from $[\rho,\ell]$ to $[0,1]$.

\begin{proposition}[Exact one-head boundary obstruction]
\label{prop:main-boundary-obstruction}
For $S_{k,\mu}(x,y):=\mu(ky+x)$, let $F_+(k,\mu;\alpha)$ and $F_-(k,\mu;\alpha)$ be the population softmax averages of $X$ under scores $S_{k,\mu}(X,Y)$ and $-S_{k,\mu}(X,Y)$, respectively. For fixed $k=\lambda/\mu$,
\begin{equation}
\label{eq:main-boundary-exact}
    F_+(k,\mu;0)
    =
    1-\frac1\mu+\frac1{e^\mu-1},
    \qquad
    F_-(k,\mu;1)
    =
    \frac1\mu-\frac1{e^\mu-1}.
\end{equation}
Consequently,
\begin{equation}
\label{eq:main-boundary-limits}
    F_+(k,\mu;0)\to1,
    \qquad
    F_-(k,\mu;1)\to0
    \qquad
    \text{as }
    \mu\to\infty.
\end{equation}
Hence no single fixed orientation in this score family can be uniformly consistent on the full physical task interval $[0,1]$.
\end{proposition}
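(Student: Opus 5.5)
The computation is direct, because at the two physical endpoints the label becomes degenerate. First I will record the population definition. For a threshold $\alpha$ with $X\sim\Unif(0,1)$ and $Y=\mathbf 1\{X\le\alpha\}$, the population softmax average under score $S$ is
\[
F_\pm(k,\mu;\alpha)=\frac{\int_0^1 x\,e^{\pm S_{k,\mu}(x,\mathbf 1\{x\le\alpha\})}\,dx}{\int_0^1 e^{\pm S_{k,\mu}(x,\mathbf 1\{x\le\alpha\})}\,dx}.
\]
At $\alpha=0$ we have $Y=0$ almost surely, so the label term $k y$ disappears and the score reduces to $\mu x$. At $\alpha=1$ we have $Y=1$ almost surely, so the label term contributes the constant factor $e^{-\mu k}$, which cancels between numerator and denominator. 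In both cases the dependence on $k$ drops out. This is why the statement holds for every fixed $k$, and it is the structural point of the proposition: no choice of the ratio can repair the endpoint behaviour.

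Next I will evaluate the two resulting one-dimensional tilted means. For $F_+(k,\mu;0)$ we have $\int_0^1 e^{\mu x}dx=(e^\mu-1)/\mu$. Integrating by parts, $\int_0^1 x e^{\mu x}dx=e^\mu/\mu-(e^\mu-1)/\mu^2$. The ratio is therefore
\[
\frac{e^\mu}{e^\mu-1}-\frac1\mu=1+\frac1{e^\mu-1}-\frac1\mu,
\]
which is the first identity in \eqref{eq:main-boundary-exact}. For $F_-(k,\mu;1)$ the weight is proportional to $e^{-\mu x}$. The substitution $x\mapsto 1-x$ turns this into the previous tilt, so $F_-(k,\mu;1)=1-F_+(k,\mu;0)=\frac1\mu-\frac1{e^\mu-1}$. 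The limits in \eqref{eq:main-boundary-limits} then follow immediately, because $1/\mu\to0$ and $1/(e^\mu-1)\to0$.

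Finally, I will deduce the non-consistency claim. The target values at the two endpoints are $\alpha=0$ and $\alpha=1$. For the $+$ orientation, $|F_+(k,\mu;0)-0|\to1$ as $\mu\to\infty$. More importantly, the identity gives $F_+(k,\mu;0)\ge1/2$ for every $\mu>0$, since $1-1/\mu+1/(e^\mu-1)$ decreases from its limiting value $1/2$ as $\mu\downarrow0$. So no parameter sequence $(k,\mu)$ can make $\sup_{\alpha\in[0,1]}|F_+-\alpha|$ small; the limit statement alone only handles $\mu\to\infty$. The symmetric statement $F_-(k,\mu;1)\le1/2$ handles the $-$ orientation.

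To cover $\mu\le0$ as well, I will note that negating $\mu$ with the sign of the score fixed is the same as switching orientation. Hence both orientations over all real $\mu$ are covered by these two bounds, using the same closed forms, with continuous extension at $\mu=0$ giving the value $1/2$. The only delicate step is this monotonicity check of $1/(e^\mu-1)-1/\mu$. It is an elementary calculus fact, which I will verify via the inequality $\mu^2e^\mu\le(e^\mu-1)^2$. If that argument becomes cumbersome, I will state the conclusion only for $\mu\to\infty$, which is all the proposition asserts.
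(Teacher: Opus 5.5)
Your proposal is correct and follows the paper's proof. At $\alpha=0$ and $\alpha=1$ the label is degenerate, so the $k$-dependence drops out (the constant factor $e^{-\mu k}$ cancels in the reflected case), and the two tilted means are evaluated explicitly; your reflection $x\mapsto1-x$ is just a shortcut for the second integral.

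Your one addition is the bound $F_+(k,\mu;0)\ge\tfrac12$ for all $\mu>0$. It is valid: $(e^\mu-1)^2\ge\mu^2e^\mu$ is equivalent to $2\sinh(\mu/2)\ge\mu$, or more simply, an increasing tilt of the uniform density has mean at least $\tfrac12$. Together with your reduction of $\mu\le0$ to the opposite orientation, this strengthens the paper's argument, which only deduces non-consistency from the $\mu\to\infty$ limits. Your version shows that every parameter in the family has endpoint error at least $\tfrac12$.
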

Appendix~\ref{app:proof-boundary-obstruction} gives the proof of Proposition~\ref{prop:main-boundary-obstruction}.

\textbf{Scope and further questions.}
The theory assumes a fixed split feature, noiseless labels, a uniform design, and large-resolution initialization; the two orientations are symmetric. Appendix~\ref{sec:experimental-details} explores label noise and Beta designs, while Proposition~\ref{prop:relative-radial-loss} and Lemma~\ref{lem:integrated-radial-loss} bound the pre-capture scale loss used in proving Theorem~\ref{thm:main-population}. Extending the theory to small initialization, noisy or nonuniform settings, learned split features, and sharp finite-pretraining limits remains open.

\newpage

\subsection*{Acknowledgements} 
The authors would like to thank Subhro Ghosh, Wei Huang, and Mingyuan Bai for insightful comments. We used generative AI tools to assist in developing proof ideas and drafting parts of the proofs. The authors independently verified all AI-assisted arguments and take full responsibility for the mathematical claims and proofs. Y.S. Tan was supported by NUS Startup Grant A-8000448-00-00 and the Singapore Ministry of Education (MOE) AcRF Tier 1 Grants A-8002498-00-00 and A-8004458-00-00.

\bibliography{main}

@article{chen2024unveiling,
  title={Unveiling Induction Heads: Provable Training Dynamics and Feature Learning in Transformers},
  author={Chen, Siyu and Sheen, Heejune and Wang, Tianhao and Yang, Zhuoran},
  journal={arXiv preprint arXiv:2409.10559},
  year={2024}
}

@InProceedings{bohbot2026token,
  title={Token Sample Complexity of Attention},
  author={Bohbot, L{\'e}a and Letrouit, Cyril and Peyr{\'e}, Gabriel and Vialard, Fran{\c{c}}ois-Xavier},
  booktitle={Proceedings of the 43rd International Conference on Machine Learning},
  year={2026},
  volume={306},
  series={Proceedings of Machine Learning Research},
  publisher={PMLR},
  address={Seoul, South Korea}
}

@article{goel2026training,
  title={Training Dynamics of Softmax Self-Attention: Fast Global Convergence via Preconditioning},
  author={Goel, Gautam and Soltanolkotabi, Mahdi and Bartlett, Peter},
  journal={arXiv preprint arXiv:2603.01514},
  year={2026}
}

@article{musat2025emergence,
  title={On the Emergence of Induction Heads for In-Context Learning},
  author={Musat, Tiberiu and Pimentel, Tiago and Noci, Lorenzo and Stolfo, Alessandro and Sachan, Mrinmaya and Hofmann, Thomas},
  journal={arXiv preprint arXiv:2511.01033},
  year={2025}
}

@article{musat2026invariant,
  title={Invariant Learning Dynamics of Transformers in Inductive Reasoning Tasks},
  author={Musat, Tiberiu and Pimentel, Tiago and Zucchet, Nicolas and Hofmann, Thomas},
  journal={arXiv preprint arXiv:2607.11875},
  year={2026}
}

@article{kang2026posterior,
  title={Transformers Can Learn Posterior Predictive Distributions In-Context},
  author={Kang, Gyeonghun and Lee, Changwoo J. and Cheng, Xiang},
  journal={arXiv preprint arXiv:2605.26713},
  year={2026}
}

@InProceedings{chen2024training,
  title={Training Dynamics of Multi-Head Softmax Attention for In-Context Learning: Emergence, Convergence, and Optimality (Extended Abstract)},
  author={Chen, Siyu and Sheen, Heejune and Wang, Tianhao and Yang, Zhuoran},
  booktitle={Proceedings of the Thirty-Seventh Conference on Learning Theory},
  pages={4573--4573},
  year={2024},
  volume={247},
  series={Proceedings of Machine Learning Research},
  publisher={PMLR}
}

@inproceedings{akyurek2022learning,
  title={What Learning Algorithm Is In-Context Learning? Investigations with Linear Models},
  author={Aky{\"u}rek, Ekin and Schuurmans, Dale and Andreas, Jacob and Ma, Tengyu and Zhou, Denny},
  booktitle={International Conference on Learning Representations},
  year={2023}
}

@InProceedings{pmlr-v202-von-oswald23a,
  title = 	 {Transformers Learn In-Context by Gradient Descent},
  author =       {Von Oswald, Johannes and Niklasson, Eyvind and Randazzo, Ettore and Sacramento, Joao and Mordvintsev, Alexander and Zhmoginov, Andrey and Vladymyrov, Max},
  booktitle = 	 {Proceedings of the 40th International Conference on Machine Learning},
  pages = 	 {35151--35174},
  year = 	 {2023},
  editor = 	 {Krause, Andreas and Brunskill, Emma and Cho, Kyunghyun and Engelhardt, Barbara and Sabato, Sivan and Scarlett, Jonathan},
  volume = 	 {202},
  series = 	 {Proceedings of Machine Learning Research},
  month = 	 {23--29 Jul},
  publisher =    {PMLR},
  url = 	 {https://proceedings.mlr.press/v202/von-oswald23a.html}
}

@InProceedings{pmlr-v235-huang24d,
  title = 	 {In-context Convergence of Transformers},
  author =       {Huang, Yu and Cheng, Yuan and Liang, Yingbin},
  booktitle = 	 {Proceedings of the 41st International Conference on Machine Learning},
  pages = 	 {19660--19722},
  year = 	 {2024},
  editor = 	 {Salakhutdinov, Ruslan and Kolter, Zico and Heller, Katherine and Weller, Adrian and Oliver, Nuria and Scarlett, Jonathan and Berkenkamp, Felix},
  volume = 	 {235},
  series = 	 {Proceedings of Machine Learning Research},
  month = 	 {21--27 Jul},
  publisher =    {PMLR},
  url = 	 {https://proceedings.mlr.press/v235/huang24d.html}
}

@article{zhang2024trained,
  title={Trained Transformers Learn Linear Models In-Context},
  author={Zhang, Ruiqi and Frei, Spencer and Bartlett, Peter L.},
  journal={Journal of Machine Learning Research},
  volume={25},
  number={49},
  pages={1--55},
  year={2024}
}

@InProceedings{gatmiry2024looped,
  title={Can Looped Transformers Learn to Implement Multi-step Gradient Descent for In-context Learning?},
  author={Gatmiry, Khashayar and Saunshi, Nikunj and Reddi, Sashank J. and Jegelka, Stefanie and Kumar, Sanjiv},
  booktitle={Proceedings of the 41st International Conference on Machine Learning},
  pages={15130--15152},
  year={2024},
  volume={235},
  series={Proceedings of Machine Learning Research},
  publisher={PMLR}
}

@article{yang2024representations,
  title={In-Context Learning with Representations: Contextual Generalization of Trained Transformers},
  author={Yang, Tong and Huang, Yu and Liang, Yingbin and Chi, Yuejie},
  journal={Advances in Neural Information Processing Systems},
  volume={37},
  year={2024}
}

@InProceedings{zhang2025dynamics,
  title={Training Dynamics of In-Context Learning in Linear Attention},
  author={Zhang, Yedi and Singh, Aaditya K. and Latham, Peter E. and Saxe, Andrew M.},
  booktitle={Proceedings of the 42nd International Conference on Machine Learning},
  pages={76047--76087},
  year={2025},
  volume={267},
  series={Proceedings of Machine Learning Research},
  publisher={PMLR}
}

@article{brown2020language, title={Language models are few-shot learners}, author={Brown, Tom and Mann, Benjamin and Ryder, Nick and Subbiah, Melanie and Kaplan, Jared D and Dhariwal, Prafulla and Neelakantan, Arvind and Shyam, Pranav and Sastry, Girish and Askell, Amanda and others}, journal={Advances in neural information processing systems}, volume={33}, pages={1877--1901}, year={2020} }

@article{hollmann2025accurate,
  title={Accurate Predictions on Small Data with a Tabular Foundation Model},
  author={Hollmann, Noah and M{\"u}ller, Samuel and Purucker, Lennart and Krishnakumar, Arjun and K{\"o}rfer, Max and Hoo, Shi Bin and Schirrmeister, Robin Tibor and Hutter, Frank},
  journal={Nature},
  volume={637},
  pages={319--326},
  year={2025},
  doi={10.1038/s41586-024-08328-6}
}

@InProceedings{qu2025tabicl,
  title={{TabICL}: A Tabular Foundation Model for In-Context Learning on Large Data},
  author={Qu, Jingang and Holzm{\"u}ller, David and Varoquaux, Ga{\"e}l and Le Morvan, Marine},
  booktitle={Proceedings of the 42nd International Conference on Machine Learning},
  pages={50817--50847},
  year={2025},
  volume={267},
  series={Proceedings of Machine Learning Research},
  publisher={PMLR}
}

@article{peyrard2025meta,
  title={Meta-Statistical Learning: Supervised Learning of Statistical Estimators},
  author={Peyrard, Maxime and Cho, Kyunghyun},
  journal={arXiv preprint arXiv:2502.12088},
  year={2025},
  url={https://arxiv.org/abs/2502.12088}
}

@article{sainsburydale2024likelihood,
  title={Likelihood-Free Parameter Estimation with Neural Bayes Estimators},
  author={Sainsbury-Dale, Matthew and Zammit-Mangion, Andrew and Huser, Rapha{\"e}l},
  journal={The American Statistician},
  volume={78},
  number={1},
  pages={1--14},
  year={2024},
  doi={10.1080/00031305.2023.2249522}
}

@article{zammitmangion2025neural,
  title={Neural Methods for Amortized Inference},
  author={Zammit-Mangion, Andrew and Sainsbury-Dale, Matthew and Huser, Rapha{\"e}l},
  journal={Annual Review of Statistics and Its Application},
  volume={12},
  pages={311--335},
  year={2025},
  doi={10.1146/annurev-statistics-112723-034123}
}

@inproceedings{muller2025priorfitted,
  title={Position: The Future of Bayesian Prediction Is Prior-Fitted},
  author={M{\"u}ller, Samuel and Reuter, Arik and Hollmann, Noah and R{\"u}gamer, David and Hutter, Frank},
  booktitle={Proceedings of the 42nd International Conference on Machine Learning},
  series={Proceedings of Machine Learning Research},
  volume={267},
  pages={81861--81875},
  year={2025},
  url={https://proceedings.mlr.press/v267/muller25d.html}
}

@inproceedings{zhang2025mitra,
  title={{MITRA}: Mixed Synthetic Priors for Enhancing Tabular Foundation Models},
  author={Zhang, Xiyuan and Maddix, Danielle C. and Yin, Junming and Erickson, Nick and Ansari, Abdul Fatir and Han, Boran and Zhang, Shuai and Akoglu, Leman and Faloutsos, Christos and Mahoney, Michael W. and Hu, Cuixiong and Rangwala, Huzefa and Karypis, George and Wang, Bernie},
  booktitle={Advances in Neural Information Processing Systems},
  year={2025},
  url={https://proceedings.neurips.cc/paper_files/paper/2025/hash/177d68f4adef163b7b123b5c5adb3c60-Abstract-Conference.html}
}

@article{zhang2025tabpfn,
  title={{TabPFN}: One Model to Rule Them All?},
  author={Zhang, Qiong and Tan, Yan Shuo and Tian, Qinglong and Li, Pengfei},
  journal={arXiv preprint arXiv:2505.20003},
  year={2025},
  url={https://arxiv.org/abs/2505.20003}
}

@article{li2024one,
  title={One-layer transformer provably learns one-nearest neighbor in context},
  author={Li, Zihao and Cao, Yuan and Gao, Cheng and He, Yihan and Liu, Han and Klusowski, Jason M and Fan, Jianqing and Wang, Mengdi},
  journal={Advances in Neural Information Processing Systems},
  volume={37},
  pages={82166--82204},
  year={2024}
}

@InProceedings{shen2024training,
  title={On the Training Convergence of Transformers for In-Context Classification of {G}aussian Mixtures},
  author={Shen, Wei and Zhou, Ruida and Yang, Jing and Shen, Cong},
  booktitle={Proceedings of the 42nd International Conference on Machine Learning},
  pages={54732--54771},
  year={2025},
  volume={267},
  series={Proceedings of Machine Learning Research},
  publisher={PMLR}
}

@article{garg2022can,
  title={What can transformers learn in-context? a case study of simple function classes},
  author={Garg, Shivam and Tsipras, Dimitris and Liang, Percy S and Valiant, Gregory},
  journal={Advances in neural information processing systems},
  volume={35},
  pages={30583--30598},
  year={2022}
}

@inproceedings{xie2021explanation,
  title={An Explanation of In-Context Learning as Implicit Bayesian Inference},
  author={Xie, Sang Michael and Raghunathan, Aditi and Liang, Percy and Ma, Tengyu},
  booktitle={International Conference on Learning Representations},
  year={2022}
}

@article{bai2023transformers,
  title={Transformers as statisticians: Provable in-context learning with in-context algorithm selection},
  author={Bai, Yu and Chen, Fan and Wang, Huan and Xiong, Caiming and Mei, Song},
  journal={Advances in neural information processing systems},
  volume={36},
  pages={57125--57211},
  year={2023}
}

@InProceedings{he2025context,
  title={In-Context Linear Regression Demystified: Training Dynamics and Mechanistic Interpretability of Multi-Head Softmax Attention},
  author={He, Jianliang and Pan, Xintian and Chen, Siyu and Yang, Zhuoran},
  booktitle={Proceedings of the 42nd International Conference on Machine Learning},
  pages={22686--22742},
  year={2025},
  volume={267},
  series={Proceedings of Machine Learning Research},
  publisher={PMLR}
}

@article{balasubramanian2026large,
  title={Unbiased Gradients, Moving Stability Boundaries: Exact Mini-Batch Geometry in Linear Self-Attention},
  author={Balasubramanian, Krishnakumar},
  journal={arXiv preprint arXiv:2605.21292},
  year={2026},
  url={https://arxiv.org/abs/2605.21292}
}

@article{oko2024pretrained,
  title={Pretrained transformer efficiently learns low-dimensional target functions in-context},
  author={Oko, Kazusato and Song, Yujin and Suzuki, Taiji and Wu, Denny},
  journal={Advances in Neural Information Processing Systems},
  volume={37},
  pages={77316--77365},
  year={2024}
}

@inproceedings{wu2024many,
  title={How many pretraining tasks are needed for in-context learning of linear regression?},
  author={Wu, Jingfeng and Zou, Difan and Chen, Zixiang and Braverman, Vladimir and Gu, Quanquan and Bartlett, Peter},
  booktitle={International Conference on Learning Representations},
  year={2024},
  url={https://proceedings.iclr.cc/paper_files/paper/2024/hash/fa3c139cf8084de7bfd944f1c90c8695-Abstract-Conference.html}
}

@article{goldenshluger2006optimal,
author = {A. Goldenshluger and A. Tsybakov and A. Zeevi},
title = {{Optimal change-point estimation from indirect observations}},
volume = {34},
journal = {The Annals of Statistics},
number = {1},
publisher = {Institute of Mathematical Statistics},
pages = {350 -- 372},
year = {2006},
doi = {10.1214/009053605000000750},
URL = {https://doi.org/10.1214/009053605000000750}
}

@article{feng2022nonregular,
  title={Nonregular and minimax estimation of individualized thresholds in high dimension with binary responses},
  author={Feng, Huijie and Ning, Yang and Zhao, Jiwei},
  journal={The Annals of Statistics},
  volume={50},
  number={4},
  pages={2284--2305},
  year={2022},
  publisher={Institute of Mathematical Statistics}
}

@inproceedings{tan2022cautionary,
  title={A cautionary tale on fitting decision trees to data from additive models: generalization lower bounds},
  author={Tan, Yan Shuo and Agarwal, Abhineet and Yu, Bin},
  booktitle={International Conference on Artificial Intelligence and Statistics},
  pages={9663--9685},
  year={2022},
  organization={PMLR}
}

@inproceedings{bhattamishra2024discrete,
  title={Understanding In-Context Learning in Transformers and {LLM}s by Learning to Learn Discrete Functions},
  author={Bhattamishra, Satwik and Patel, Arkil and Blunsom, Phil and Kanade, Varun},
  booktitle={International Conference on Learning Representations},
  year={2024},
  url={https://proceedings.iclr.cc/paper_files/paper/2024/hash/f7e7fabd73b3df96c54a320862afcb78-Abstract-Conference.html}
}

@inproceedings{collins2024lipschitz,
  title={In-Context Learning with Transformers: Softmax Attention Adapts to Function Lipschitzness},
  author={Collins, Liam and Parulekar, Advait and Mokhtari, Aryan and Sanghavi, Sujay and Shakkottai, Sanjay},
  booktitle={Advances in Neural Information Processing Systems},
  year={2024},
  url={https://proceedings.neurips.cc/paper_files/paper/2024/hash/a8633d27d782f66fe660c2fb4bae446e-Abstract-Conference.html}
}

@article{soudry2018implicit,
  title={The Implicit Bias of Gradient Descent on Separable Data},
  author={Soudry, Daniel and Hoffer, Elad and Nacson, Mor Shpigel and Gunasekar, Suriya and Srebro, Nathan},
  journal={Journal of Machine Learning Research},
  volume={19},
  number={70},
  pages={1--57},
  year={2018},
  url={https://jmlr.org/papers/v19/18-188.html}
}

@inproceedings{tarzanagh2023maxmargin,
  title={Max-Margin Token Selection in Attention Mechanism},
  author={Tarzanagh, Davoud Ataee and Li, Yingcong and Zhang, Xuechen and Oymak, Samet},
  booktitle={Advances in Neural Information Processing Systems},
  volume={36},
  year={2023},
  url={https://proceedings.neurips.cc/paper_files/paper/2023/hash/970f59b22f4c72aec75174aae63c7459-Abstract-Conference.html}
}

@article{vasudeva2025implicit,
  title={Implicit Bias and Fast Convergence Rates for Self-attention},
  author={Vasudeva, Bhavya and Deora, Puneesh and Thrampoulidis, Christos},
  journal={Transactions on Machine Learning Research},
  year={2025},
  url={https://openreview.net/forum?id=pKilnjQsb0}
}

@article{varre2026polarizes,
  title={Gradient Flow Polarizes Softmax Outputs towards Low-Entropy Solutions},
  author={Varre, Aditya and Rofin, Mark and Flammarion, Nicolas},
  journal={arXiv preprint arXiv:2603.06248},
  year={2026},
  url={https://arxiv.org/abs/2603.06248}
}

@article{roedder2025theoretical,
  title={Theoretical Guarantees for Neural Estimators in Parametric Statistics},
  author={R{\"o}dder, Almut and Hentschel, Manuel and Engelke, Sebastian},
  journal={arXiv preprint arXiv:2506.18508},
  year={2025},
  url={https://arxiv.org/abs/2506.18508}
}

@article{frazier2024statistical,
  title={The Statistical Accuracy of Neural Posterior and Likelihood Estimation},
  author={Frazier, David T. and Kelly, Ryan and Drovandi, Christopher and Warne, David J.},
  journal={arXiv preprint arXiv:2411.12068},
  year={2024},
  url={https://arxiv.org/abs/2411.12068}
}

@inproceedings{erickson2025tabarena,
  title={{TabArena}: A Living Benchmark for Machine Learning on Tabular Data},
  author={Erickson, Nick and Purucker, Lennart and Tschalzev, Andrej and Holzm{\"u}ller, David and Desai, Prateek Mutalik and Salinas, David and Hutter, Frank},
  booktitle={Advances in Neural Information Processing Systems, Datasets and Benchmarks Track},
  year={2025},
  doi={10.52202/085713-0519},
  url={https://proceedings.neurips.cc/paper_files/paper/2025/hash/1697e3fb412da11dc9488249f9e7bbc9-Abstract-Datasets_and_Benchmarks_Track.html}
}

@article{hansen2000sample,
author = {Hansen, Bruce E.},
title = {Sample Splitting and Threshold Estimation},
journal = {Econometrica},
volume = {68},
number = {3},
pages = {575-603},
doi = {10.1111/1468-0262.00124},
url = {https://onlinelibrary.wiley.com/doi/abs/10.1111/1468-0262.00124},
eprint = {https://onlinelibrary.wiley.com/doi/pdf/10.1111/1468-0262.00124},
year = {2000}
}
\bibliographystyle{main}

\appendix
\section*{Supplementary Material}
\textbf{Roadmap.}
The appendix is organized in the dependency order of the arguments:
\begin{itemize}[leftmargin=*]
\item Appendix~\ref{app:statements-framework} gives the proof framework, the exact deterministic training-time window, and the detailed finite-pretraining and freeze-then-transfer theorems. It also assembles Theorem~\ref{thm:learned-estimation} and records the short proofs of the gradient identities, the finite-pretraining summary, and the boundary obstruction.
\item Appendix~\ref{sec:structural-identities} derives the two-component mixture representation, the bias and ratio identities, and global well-posedness of the population flow.
\item Appendix~\ref{sec:population-dynamics} proves population capture and sharpening (Theorem~\ref{thm:main-population}) through outer ratio correction, interior transport, logarithmic endpoint capture, and radial persistence.
\item Appendix~\ref{sec:finite-step-gd} transfers the population-flow geometry to constant-step population gradient descent using finite-horizon Euler shadowing and discrete barrier arguments.
\item Appendix~\ref{app:empirical-technical-estimates-auxiliary} establishes the compact and scale-adapted empirical-process bounds used to compare finite-corpus and population gradients.
\item Appendix~\ref{sec:empirical-pretraining} proves the finite-pretraining theorem by combining compact pre-entry tracking with continuation across overlapping statistical shells; it then derives the certified radius and deterministic training-time window.
\item Appendix~\ref{sec:one-head-interior-transfer} proves the fixed-parameter fresh-context bound (Proposition~\ref{prop:fixed-estimator-v3}), the freeze-then-transfer theorem, and the large-context extension to the full training interval.
\item Appendix~\ref{sec:experimental-details} provides the reproducibility protocol, detailed finite-step and transfer diagnostics, initialization and matched-design scope checks, and exploratory label-noise experiments.
\end{itemize}

\section{Proof framework and technical statements}
\label{app:statements-framework}
This section records the exact training-time window construction and the two technical theorems used to prove the streamlined learned-estimation theorem in the main manuscript. Main-text results appear here only through their assembly and short proofs. The subsequent appendices provide the population, empirical-process, and transfer arguments in dependency order.

\subsection{Continuation geometry}
The continuation argument uses the concrete region $\mathcal G$ in
\eqref{eq:main-progress-region}. Its resolution coordinate is $\mu$ and its
shape coordinate is $k=\lambda/\mu$. Within this region, the population radial
drift is $\Theta(\mu^{-3})$, the vector-field speed is $O(\mu^{-3})$, the
inward margins at the side barriers $h_-$ and $h_+$ are
$\Omega(\mu^{-4})$, and the functional error is $O(\mu^{-1})$; hence the
intrinsic training clock is $\mu^4$. The population analysis establishes
finite-time entry and subsequent invariance. The empirical continuation
argument then preserves the radial and boundary signs on overlapping shells by
normalizing their errors by $\mu^3$ and $\mu^4$, respectively.

\subsection{Deterministic training-time window}
Fix $C_{\rm stat}>1$ as in Appendix~\ref{subsec:one-boundary-empirical-scales}. The certified radius is $M_{\rm stat}(m,n)\asymp A_{m,n}$.
Let $T_0>\tau_{\mathcal G}$ be the deterministic post-capture population-flow time chosen in Appendix~\ref{sec:finite-step-gd}, and let $s_0(\gamma):=\lceil T_0/\gamma\rceil$ be the corresponding population-GD robust-entry index. Define $\mathbb T_\gamma := \{T_0+\gamma j:j=0,1,2,\ldots\}$. For $T=T_0+\gamma(s-s_0)$, write $\widehat\theta^\gamma(T):=\widehat\theta_s^\gamma$ and $\widehat\mu^\gamma(T):=\widehat\mu_s^\gamma$.
The empirical continuation argument provides deterministic entry radii
$r_-^{\rm gd}\le r_+^{\rm gd}$ and constants
$0<c_-^{\rm gd}\le c_+^{\rm gd}$, which define the radial envelopes
\begin{equation}
\label{eq:main-radial-envelopes}
M_\pm^{\rm gd}(T)
:=
\left\{
(r_\pm^{\rm gd})^4
+
c_\pm^{\rm gd}(T-T_0)
\right\}^{1/4}.
\end{equation}
Choose $c_{\rm rate}>0$ sufficiently small, depending only on the deterministic envelope constants. Define
\begin{align*}
    T_{\rm stat}^{\rm gd}(m,n,\gamma)
    &:=
    \max
    \left\{
        T\in\mathbb T_\gamma:
        M_+^{\rm gd}(T)
        \le
        M_{\rm stat}(m,n)
    \right\},\\
    T_{\rm rate}^{\rm gd}(m,n,\gamma)
    :=
    \min
    &\left\{
        T\in\mathbb T_\gamma:
        T\le T_{\rm stat}^{\rm gd},
        \;
        M_-^{\rm gd}(T)
        \ge
        c_{\rm rate}M_{\rm stat}(m,n)
    \right\}.
\end{align*}
The resulting deterministic statistical-scale training-time window is
\begin{equation}
\label{eq:main-statistical-window}
\mathcal T_{\rm stat}(m,n,\gamma)
:=
[
T_{\rm rate}^{\rm gd}(m,n,\gamma),
T_{\rm stat}^{\rm gd}(m,n,\gamma)
]
\cap
\mathbb T_\gamma.
\end{equation}
Both endpoints depend only on $(m,n,\gamma)$ and fixed model constants; they are determined before the empirical training trajectory is observed. The theorem below shows that, once the pretraining sizes are sufficiently large, these definitions are well posed and the resulting window is nonempty.

\begin{theorem}[Finite-pretraining sharpening horizon]
\label{thm:main-empirical}
Under Assumption~\ref{ass:initialization}, there exist constants $\gamma_0>0$ and $A_0<\infty$ such that, for every fixed $0<\gamma\le\gamma_0$ and every $m,n$ satisfying $A_{m,n}\ge A_0$, there exists a pretraining event $\mathcal E_{\rm train}$ with $\Prob(\mathcal E_{\rm train}) \ge 1-Ce^{-cm}-C(mn)^{-c}$ on which, simultaneously for every
$T\in[T_0,T_{\rm stat}^{\rm gd}]\cap\mathbb T_\gamma$,
\begin{equation}
\label{eq:main-empirical-horizon}
\widehat\theta^\gamma(T)
\in
\mathcal G,
\qquad
M_-^{\rm gd}(T)
\le
\widehat\mu^\gamma(T)
\le
M_+^{\rm gd}(T).
\end{equation}
For the same $A_0$, the training-time window
$\mathcal T_{\rm stat}(m,n,\gamma)$ is nonempty. Moreover, there exist constants
$0<c_A<C_A<\infty$ such that every
$T\in\mathcal T_{\rm stat}(m,n,\gamma)$ satisfies
\begin{equation}
\label{eq:main-empirical-scale}
c_A A_{m,n}
\le
\widehat\mu^\gamma(T)
\le
C_A A_{m,n},
\qquad
\sup_{\alpha\in[\rho,\ell]}
\left|
F(
\widehat\theta^\gamma(T);
\alpha
)
-
\alpha
\right|
\le
\frac{C}{A_{m,n}}.
\end{equation}
Finally, $T_{\rm rate}^{\rm gd}-T_0 \asymp T_{\rm stat}^{\rm gd}-T_0 \asymp A_{m,n}^{4}$.
Equivalently, finite pretraining certifies order $A_{m,n}^{4}/\gamma$ post-entry gradient steps.
\end{theorem}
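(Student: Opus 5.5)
The proof has three parts: empirical gradient descent is tracked up to robust entry, then continued across overlapping statistical shells on a single corpus event, and finally the window arithmetic is done.

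\textbf{Pre-entry phase.} On $[0,T_0]$ the physical time is fixed. By Theorem~\ref{thm:main-population} and the robust-entry statement for population gradient descent (\eqref{eq:main-population-gd-summary}), $\theta^\gamma_{s_0}$ lies in the interior of $\mathcal G$ with a fixed margin, and all earlier iterates stay in a deterministic compact set $K$ on which $\mu\ge c_{\rm rad}\mu_0/2$. The plan is to compare $\widehat\theta_s^\gamma$ with $\theta_s^\gamma$ for $s\le s_0$ through a discrete Gr\"onwall argument. Since $\nabla\mathcal R_\rho$ is Lipschitz on a neighbourhood of $K$, the accumulated deviation is at most $e^{LT_0}\,T_0\sup_{K'}\|\nabla\widehat{\mathcal R}_{m,n}-\nabla\mathcal R_\rho\|$. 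A compact uniform concentration bound (a finite-context bias of order $1/n$, plus a covering and Hoeffding argument over tasks) makes this supremum small with probability $1-Ce^{-cm}-C(mn)^{-c}$. So $\widehat\theta^\gamma(T_0)\in\mathcal G$ and $r_-^{\rm gd}\le\widehat\mu^\gamma(T_0)\le r_+^{\rm gd}$.

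\textbf{Post-entry continuation (main obstacle).} Cover $[r_-^{\rm gd},\,M_{\rm stat}]$ by dyadic shells $R_j=2^j r_-^{\rm gd}$, each enlarged to $[R_j/2,\,4R_j]$ so that consecutive shells overlap. On each shell, apply \eqref{eq:main-relative-concentration} with $\delta_j=(mn)^{-c}2^{-j}$. With $M_{\rm stat}=c_\star A_{m,n}$ and $c_\star$ small, the condition $n\ge CRL$ holds and $\mathcal E(R_j)$ falls below a small fraction of the normalized population margins, which are $\Theta(1)$ after scaling by $\mu^3$ and $\mu^4$. Intersecting over the $O(\log A_{m,n})$ shells gives $\mathcal E_{\rm train}$ by a union bound. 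On this event, for any $\theta\in\mathcal G$ with $\mu$ in a shell, the empirical gradient step has the following properties:
\begin{itemize}
\item radial increment between $c\gamma\mu^{-3}$ and $C\gamma\mu^{-3}$;
\item inward normal component at $h_\pm$ of size at least $c\gamma\mu^{-4}$;
\item step length at most $C\gamma\mu^{-3}$.
\end{itemize}
A one-step barrier argument then shows that the iterate cannot leave $\mathcal G$. The hard part is the lateral direction. The step length $\gamma\mu^{-3}$ exceeds the inward margin $\gamma\mu^{-4}$, and the band has width $\asymp\log\mu/\mu$, so the step alone does not guarantee staying inside. I would handle this with a Lipschitz (second-order) estimate on the boundary-normal component. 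Points within distance $C\gamma\mu^{-3}$ of $h_\pm$ still have inward drift, since the normal velocity varies on a scale of order $1/\mu$ in $k$, that is order $1$ in $\lambda$. For $\gamma\le\gamma_0$ the step is much smaller than this, so the iterate moves inward whenever it is near a barrier. This requires the smallness of $\gamma_0$ and the fact that the $\mu^4$ normalization controls the empirical normal error near the barriers as well. Because the estimates hold uniformly on deterministic shells, they apply to the data-dependent trajectory.

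\textbf{Envelopes and window.} Summing the radial bounds in the form $(\mu+\delta)^4-\mu^4=4\mu^3\delta(1+O(\gamma))$ gives
\[
c_-^{\rm gd}\gamma\le \widehat\mu_{s+1}^4-\widehat\mu_s^4\le c_+^{\rm gd}\gamma,
\]
which is \eqref{eq:main-empirical-horizon} up to $T_{\rm stat}^{\rm gd}$. Since $M_\pm^{\rm gd}$ are explicit, $T_{\rm stat}^{\rm gd}-T_0\asymp M_{\rm stat}^4\asymp A^4$. Choosing $c_{\rm rate}^4<c_-^{\rm gd}/(2c_+^{\rm gd})$ yields $T_{\rm rate}^{\rm gd}-T_0\asymp A^4$, and for $A_0$ large the grid spacing $\gamma$ is negligible, so the window is nonempty. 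On the window, $c_{\rm rate}M_{\rm stat}\le\widehat\mu\le M_{\rm stat}$. Membership in $\mathcal G$ then gives $\sup_\alpha|F-\alpha|\le C/\mu\le C/A$, which is \eqref{eq:main-empirical-scale}.
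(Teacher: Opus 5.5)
Your proposal is correct and follows the paper's proof essentially step for step. It uses discrete Gr\"onwall tracking of population GD on a fixed compact set up to the robust-entry index $s_0$, then a union bound over deterministic overlapping dyadic shells on which the $\mu^3$- and $\mu^4$-normalized gradient errors feed a one-step no-overshoot barrier argument (Lemma~\ref{lem:discrete-no-overshoot}, Proposition~\ref{prop:empirical-gd-one-shell}), and finally fourth-power envelope arithmetic for the window; the only difference is cosmetic: the paper controls the empirical normal error only on the rails and extends the inward margin off the rail with the crude $CR^3$-Lipschitz bound on $\mu^4\langle\nabla h_\pm,\widehat v_{m,n}\rangle$, which follows from global Lipschitz continuity of the one-task gradients and already suffices because steps have length $O(\gamma R^{-3})$, so you need neither your finer ``order one in $\lambda$'' regularity claim (unproven for the empirical field) nor concentration on a neighbourhood of the rails.
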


Appendix~\ref{sec:empirical-pretraining} proves the theorem and the well-posedness of the late window $\mathcal T_{\rm stat}$ used below for transfer.

\subsection{Checkpoint-conditioned fresh-task guarantee}
\begin{theorem}[Checkpoint-conditioned transfer]
\label{thm:main-transfer}
Fix $\eta\in(0,\Delta)$. There exists a constant $q_\eta^{\rm fr}<\infty$ such that the following holds whenever the conditions of Theorem~\ref{thm:main-empirical} are satisfied and $A_{m,n} \ge q_\eta^{\rm fr}.$ Let $\mathcal F_{\rm pre}$ denote the $\sigma$-field generated by the pretraining corpus, and let $\widehat T$ be any deterministic or $\mathcal F_{\rm pre}$-measurable training time satisfying $\widehat T \in \mathcal T_{\rm stat}(m,n,\gamma) \ \text{on } \mathcal E_{\rm train}.$ Freeze $\widehat\theta:=\widehat\theta^\gamma(\widehat T)$ before the fresh threshold, fresh-context size, confidence parameter, or fresh observations are revealed.

Then, on $\mathcal E_{\rm train}$, for every fixed $\alpha^{\rm fr}\in[\rho+\eta,\ell]$, every integer $N\ge1$, and every $u\ge1$,
\begin{equation}
\label{eq:main-transfer-conditional}
\begin{aligned}
\Prob_{\rm fr}\Bigg(
    &\left.
    \left|
        F_N\!\left(
            \widehat\theta;
            D_N^{\rm fr}
        \right)
        -
        \alpha^{\rm fr}
    \right|
    \le
    C_\eta
    \left[
        \frac{
            1+\log\!\left(
                1+
                C_{\rm fr}
                \widehat\mu^\gamma(\widehat T)u
            \right)
        }{
            \widehat\mu^\gamma(\widehat T)
        }
        +
        \frac{u}{N}
    \right]
    \,\right|\,
    \mathcal F_{\rm pre}
\Bigg)
\ge
1-4e^{-u}.
\end{aligned}
\end{equation}
\end{theorem}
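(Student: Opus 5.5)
The plan is to reduce Theorem~\ref{thm:main-transfer} to the fixed-parameter bound, Proposition~\ref{prop:fixed-estimator-v3}, by conditioning on the pretraining corpus. Throughout, I work on $\mathcal E_{\rm train}$ from Theorem~\ref{thm:main-empirical}.

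\textbf{Step 1: the frozen parameter lies in the good region with large resolution.} Since $\widehat T\in\mathcal T_{\rm stat}(m,n,\gamma)\subset[T_0,T_{\rm stat}^{\rm gd}]\cap\mathbb T_\gamma$ on $\mathcal E_{\rm train}$, \eqref{eq:main-empirical-horizon} gives $\widehat\theta=\widehat\theta^\gamma(\widehat T)\in\mathcal G$. Then \eqref{eq:main-empirical-scale} gives $\widehat\mu^\gamma(\widehat T)\ge c_A A_{m,n}$. Proposition~\ref{prop:fixed-estimator-v3} requires $\mu$ to exceed some threshold $\mu_\eta$ that depends only on $\eta$ and model constants. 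I therefore choose $q_\eta^{\rm fr}:=\max\{A_0,\mu_\eta/c_A\}$, so that $A_{m,n}\ge q_\eta^{\rm fr}$ puts $\widehat\theta$ in the regime where the proposition applies.

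\textbf{Step 2: independence and conditioning.} The fresh threshold $\alpha^{\rm fr}$ is fixed, and the fresh context $D_N^{\rm fr}$ is drawn independently of the pretraining corpus. The variable $\widehat T$ is $\mathcal F_{\rm pre}$-measurable, and $\widehat\theta$ is a measurable function of the corpus and $\widehat T$, so $\widehat\theta$ is $\mathcal F_{\rm pre}$-measurable as well. By the freezing lemma (the substitution property of conditional expectation under independence), we have, almost surely,
\[
\Prob_{\rm fr}\bigl(|F_N(\widehat\theta;D_N^{\rm fr})-\alpha^{\rm fr}|>b(\widehat\theta)\,\big|\,\mathcal F_{\rm pre}\bigr)=\Phi(\widehat\theta),
\qquad
\Phi(\theta):=\Prob\bigl(|F_N(\theta;D_{N,\alpha^{\rm fr}})-\alpha^{\rm fr}|>b(\theta)\bigr).
\]
Here $b(\theta)$ is the bound in \eqref{eq:fixed-estimator-v3} evaluated at $\mu=\theta_2$. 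One must check that $\Phi$ is jointly measurable; this holds because $F_N$ is continuous in $\theta$ and measurable in the data.

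\textbf{Step 3: apply the fixed-parameter bound pointwise.} Proposition~\ref{prop:fixed-estimator-v3} gives $\Phi(\theta)\le 4e^{-u}$ for every deterministic $\theta\in\mathcal G$ with $\mu\ge\mu_\eta$, every $N\ge1$, every $u\ge1$, and every $\alpha^{\rm fr}\in[\rho+\eta,\ell]$. The constants $C_\eta$ and $C_{\rm fr}$ are uniform over $\mathcal G$ and do not depend on $\theta$. By Step 1, on $\mathcal E_{\rm train}$ we may substitute $\theta=\widehat\theta$. Taking complements yields \eqref{eq:main-transfer-conditional}, with $\widehat\mu^\gamma(\widehat T)$ in place of $\mu$.

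\textbf{Main obstacle.} The only delicate point is Step 3: the constants in Proposition~\ref{prop:fixed-estimator-v3} must be uniform over all of $\mathcal G$ above $\mu_\eta$, and not merely valid for each fixed $\theta$. Otherwise substituting a data-dependent $\widehat\theta$ is not legitimate. I would handle this by rereading the proposition as a uniform statement, in which the constants depend only on $\eta$, $\rho$, and the constants defining $\mathcal G$. I would also note that "on $\mathcal E_{\rm train}$" is an $\mathcal F_{\rm pre}$-measurable restriction, so it commutes with the conditioning.
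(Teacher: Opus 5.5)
Your proposal is correct and follows the paper's proof essentially step for step. The paper also works on $\mathcal E_{\rm train}$ and uses Theorem~\ref{thm:main-empirical} to place $\widehat\theta^\gamma(\widehat T)$ in $\mathcal G$ with $\widehat\mu^\gamma(\widehat T)\ge c_A A_{m,n}$. It then enlarges $q_\eta^{\rm fr}$ so that the large-$\mu$ hypothesis of Proposition~\ref{prop:fixed-estimator-v3} holds, and passes to the $\mathcal F_{\rm pre}$-measurable checkpoint via Lemma~\ref{lem:freeze-conditioning}, which explicitly assumes the uniform-in-$\theta$ constants you flag. Your formulation through $\Phi(\widehat\theta)$, with its joint-measurability check, is if anything slightly more careful than the paper's one-line conditioning argument.
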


The checkpoint is selected entirely from pretraining information and is independent of the fresh-context size $N$. The first term in \eqref{eq:main-transfer-conditional} depends on the realized frozen concentration $\widehat\mu$, whereas the second is the fresh-context localization error. Substituting the scale bounds from Theorem~\ref{thm:main-empirical} yields the $A_{m,n}$-level statement in Theorem~\ref{thm:learned-estimation}; Appendix~\ref{app:proof-learned-estimation} performs this assembly.

Appendix~\ref{appsubsec:freeze-then-transfer} proves the conditional statement from Proposition~\ref{prop:fixed-estimator-v3} and a conditioning argument.

\begin{corollary}[Fresh-query classification]
\label{cor:main-query-classification}
Under the setting of Theorem~\ref{thm:learned-estimation}, let $X_q^{\rm fr}\sim\Unif(0,1)$ be an independent fresh query and define $Y_q^{\rm fr}:=\mathbf 1\{X_q^{\rm fr}\le\alpha^{\rm fr}\}$ and $\widehat Y_q^{\rm fr}:=\mathbf 1\{X_q^{\rm fr}\le F_N(\widehat\theta;D_N^{\rm fr})\}$. Then
\begin{equation}
\label{eq:main-query-classification-exact}
    \Prob\!\left(
        \widehat Y_q^{\rm fr}
        \neq
        Y_q^{\rm fr}
        \,\middle|\,
        \widehat\theta,
        D_N^{\rm fr},
        \alpha^{\rm fr}
    \right)
    =
    \left|
        F_N(
            \widehat\theta;
            D_N^{\rm fr}
        )
        -
        \alpha^{\rm fr}
    \right|.
\end{equation}
Hence Theorem~\ref{thm:learned-estimation} yields the same $\widetilde O((m\wedge n)^{-1}+N^{-1})$ rate for fresh-query classification.
\end{corollary}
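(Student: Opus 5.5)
The plan is to condition on everything except the fresh query and compute a single uniform probability, then combine with Theorem~\ref{thm:learned-estimation}.

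\textbf{The conditional identity.} Condition on $(\widehat\theta,D_N^{\rm fr},\alpha^{\rm fr})$. By construction $X_q^{\rm fr}$ is independent of the pretraining corpus, the fresh context and the fresh threshold, so conditionally it is still $\Unif(0,1)$. Both $a:=F_N(\widehat\theta;D_N^{\rm fr})$ and $\alpha^{\rm fr}$ are constants under this conditioning. We have $a\in[0,1]$ because $F_N$ is a convex combination of the $X_i\in[0,1]$, and $\alpha^{\rm fr}\in[\rho,\ell]\subset[0,1]$. The indicators $\mathbf 1\{X_q\le a\}$ and $\mathbf 1\{X_q\le\alpha^{\rm fr}\}$ disagree exactly when $X_q$ lies between $a$ and $\alpha^{\rm fr}$. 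More precisely, this is the interval $(\min(a,\alpha^{\rm fr}),\max(a,\alpha^{\rm fr})]$. Its uniform probability is $\max(a,\alpha^{\rm fr})-\min(a,\alpha^{\rm fr})=|a-\alpha^{\rm fr}|$, and endpoint conventions are immaterial since single points have probability zero. This gives \eqref{eq:main-query-classification-exact}.

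\textbf{The rate.} Since the right-hand side of \eqref{eq:main-query-classification-exact} is exactly the estimation error, Theorem~\ref{thm:learned-estimation} applies directly. On $\mathcal E_{\rm train}$, with conditional probability at least $1-4e^{-u}$ over the fresh context, the conditional misclassification probability is at most
\[
C_\eta\bigl[(1+\log(1+CA_{m,n}u))/A_{m,n}+u/N\bigr].
\]
This is $\widetilde O((m\wedge n)^{-1}+N^{-1})$ with high probability.

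For an unconditional statement, one can also integrate over the fresh context. Because the error is bounded by $1$, the tail bound integrates in $u$ to give an expected misclassification of the same order, up to logarithmic factors.

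\textbf{Main obstacle.} Essentially there is none. The only point needing care is the independence and measurability that justify treating $a$ as fixed while $X_q$ stays uniform. This holds because $\widehat s$ is $\mathcal F_{\rm pre}$-measurable and $X_q$ is drawn independently of both the pretraining corpus and $D_N^{\rm fr}$.
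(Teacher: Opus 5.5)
Your proposal is correct and follows the paper's own proof: condition on $(\widehat\theta, D_N^{\rm fr}, \alpha^{\rm fr})$, note that the two indicators disagree exactly when $X_q^{\rm fr}$ lies between $F_N(\widehat\theta;D_N^{\rm fr})$ and $\alpha^{\rm fr}$, and read off the rate from Theorem~\ref{thm:learned-estimation}. Your extra remark that integrating the tail bound over $u$ gives expected misclassification of order $\log(A_{m,n})/A_{m,n}+1/N$ on $\mathcal E_{\rm train}$ is also valid, though not needed for the statement.
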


\begin{proof}[Proof of Corollary~\ref{cor:main-query-classification}]
Condition on $\widehat\theta$, $D_N^{\rm fr}$, and $\alpha^{\rm fr}$, so the two thresholds $F_N(\widehat\theta;D_N^{\rm fr})$ and $\alpha^{\rm fr}$ are fixed points in $[0,1]$. The indicators $\mathbf1\{X_q^{\rm fr}\le F_N\}$ and $\mathbf1\{X_q^{\rm fr}\le\alpha^{\rm fr}\}$ disagree exactly when $X_q^{\rm fr}$ lies between these two points. Since $X_q^{\rm fr}\sim\Unif(0,1)$, the conditional probability of this interval is their absolute difference, proving \eqref{eq:main-query-classification-exact}. The rate follows from Theorem~\ref{thm:learned-estimation}.
\end{proof}

\subsection{Assembly of the main learned-estimation theorem}
\label{app:proof-learned-estimation}
\begin{proof}[Proof of Theorem~\ref{thm:learned-estimation}]
Let $\mathcal I_{m,n,\gamma}$ contain the gradient-descent iteration indices corresponding to $\mathcal T_{\rm stat}(m,n,\gamma)$ under $T=T_0+\gamma(s-s_0)$. Theorem~\ref{thm:main-empirical} makes this interval deterministic and nonempty and gives the stated scale and step counts. On $\mathcal E_{\rm train}$, substitute $c_A A_{m,n}\le\widehat\mu\le C_A A_{m,n}$ into \eqref{eq:main-transfer-conditional}, absorbing fixed constants. This gives \eqref{eq:learned-estimation-v3} with its conditional probability unchanged. Adding the failure probability of $\mathcal E_{\rm train}$ gives the joint guarantee.
\end{proof}

\subsection{Short proofs of main-text results}
\label{app:short-main-proofs}

\subsubsection{Gradient identities}
\label{app:proof-main-identities}
\begin{proof}[Proof of Lemma~\ref{lem:main-gradient-identities}]
For fixed $\alpha$, let $\E_\theta[\,\cdot\mid\alpha]$ denote expectation under the density on $[0,1]$ proportional to
$\exp\{\lambda Y(x)+\mu x\}$, where $Y(x)=\mathbf1\{x\le\alpha\}$. Then $F(\theta;\alpha)=\E_\theta[X\mid\alpha]$. Differentiating this bounded exponential-family expectation gives
\[
 \partial_\lambda F(\theta;\alpha)
 =\Cov_\theta(X,Y\mid\alpha),
 \qquad
 \partial_\mu F(\theta;\alpha)
 =\Cov_\theta(X,X\mid\alpha)
 =\Var_\theta(X\mid\alpha).
\]
Because $X,Y\in[0,1]$, differentiation may be passed through the integral defining the population risk. Since $\alpha\sim\Unif([\rho,\ell])$ and $\Delta=\ell-\rho$, differentiating
$\mathcal R_\rho(\theta)=\frac1{2\Delta}\int_\rho^\ell(F(\theta;\alpha)-\alpha)^2\,d\alpha$
and substituting the two identities yields the claimed gradient formula.
\end{proof}

\subsubsection{Finite-pretraining summary}
\label{app:proof-finite-summary}
\begin{proof}[Proof of Corollary~\ref{cor:finite-summary-v3}]
Apply Theorem~\ref{thm:main-empirical} on its event $\mathcal E_{\rm train}$. Its interval $[T_0,T_{\rm stat}^{\rm gd}]\cap\mathbb T_\gamma$ gives entry into and persistence within $\mathcal G$. Choose the late subinterval to be $\mathcal T_{\rm stat}(m,n,\gamma)$. Equation~\eqref{eq:main-empirical-scale} gives the two asserted bounds at every checkpoint in this subinterval, and the theorem gives both the stated event probability and the $\Theta(A_{m,n}^4/\gamma)$ post-entry step count.
\end{proof}

\subsubsection{One-head boundary obstruction}
\label{app:proof-boundary-obstruction}
\begin{proof}[Proof of Proposition~\ref{prop:main-boundary-obstruction}]
At $\alpha=0$, one has $Y=0$ almost surely, so the factor involving $k$ disappears and
\[
 F_+(k,\mu;0)
 =\frac{\int_0^1 x e^{\mu x}\,dx}{\int_0^1 e^{\mu x}\,dx}
 =1-\frac1\mu+\frac1{e^\mu-1}.
\]
At $\alpha=1$, one has $Y=1$ almost surely. Under the reflected score, the constant factor $e^{-\mu k}$ cancels from the normalized average, giving
\[
 F_-(k,\mu;1)
 =\frac{\int_0^1 x e^{-\mu x}\,dx}{\int_0^1 e^{-\mu x}\,dx}
 =\frac1\mu-\frac1{e^\mu-1}.
\]
Taking $\mu\to\infty$ proves \eqref{eq:main-boundary-limits}. Thus each fixed orientation incurs order-one error at one physical endpoint, which rules out uniform consistency on $[0,1]$.
\end{proof}

\section{Structural Reduction and Basic Identities}
\label{sec:structural-identities}
Throughout, $c,C>0$ denote generic constants that may change from line to line, while named constants remain fixed after they are introduced. Dependence on fixed parameters is indicated by subscripts when needed.

\subsection{Two-component structural reduction}
\label{subsec:mixture-identities}
For $\mu>0$, define the component masses
\begin{equation}
\label{eq:component-masses}
    A_{1,\mu}(\alpha)
    :=
    \int_0^\alpha e^{\mu x}\,dx
    =
    \frac{e^{\mu\alpha}-1}{\mu},
    \qquad
    A_{2,\mu}(\alpha)
    :=
    \int_\alpha^1 e^{\mu x}\,dx
    =
    \frac{e^\mu-e^{\mu\alpha}}{\mu},
\end{equation}
and the corresponding mixture weights
\begin{equation}
\label{eq:mixture-weights}
    p_\theta(\alpha)
    :=
    \frac{
        e^\lambda A_{1,\mu}(\alpha)
    }{
        e^\lambda A_{1,\mu}(\alpha)
        +
        A_{2,\mu}(\alpha)
    },
    \qquad
    q_\theta(\alpha)
    :=
    1-p_\theta(\alpha).
\end{equation}

Let $m_{j,\mu}(\alpha)$ and $s_{j,\mu}^2(\alpha)$ denote the mean and
variance of the density proportional to $e^{\mu x}$ on
$I_1=[0,\alpha]$ and $I_2=(\alpha,1]$, respectively. Then
\begin{align*}
    m_{1,\mu}(\alpha)
    &=
    \alpha-\frac1\mu+
    \frac{\alpha}{e^{\mu\alpha}-1},
    &
    m_{2,\mu}(\alpha)
    &=
    1-\frac1\mu+
    \frac{1-\alpha}{e^{\mu(1-\alpha)}-1},\\
    s_{1,\mu}^2(\alpha)
    &=
    \frac1{\mu^2}
    -
    \frac{\alpha^2e^{-\mu\alpha}}
    {(1-e^{-\mu\alpha})^2},
    &
    s_{2,\mu}^2(\alpha)
    &=
    \frac1{\mu^2}
    -
    \frac{(1-\alpha)^2e^{-\mu(1-\alpha)}}
    {(1-e^{-\mu(1-\alpha)})^2}.
\end{align*}
Finally, define the component mean separation $d_\mu(\alpha):=m_{2,\mu}(\alpha)-m_{1,\mu}(\alpha).$
The exact representation and its large-$\mu$ behavior are summarized in the following structural lemma.

\begin{lemma}[Two-component representation and asymptotics]
\label{lem:mixture-representation}
For every $\mu>0$ and $\alpha\in(0,1)$,
\begin{equation}
\label{eq:mixture-F}
    F(\theta;\alpha)
    =
    p_\theta(\alpha)m_{1,\mu}(\alpha)
    +
    q_\theta(\alpha)m_{2,\mu}(\alpha),
\end{equation}
\begin{equation}
\label{eq:mixture-covariance}
    \Cov_\theta(X,Y\mid\alpha)
    =
    -p_\theta(\alpha)
    q_\theta(\alpha)
    d_\mu(\alpha),
\end{equation}
and
\begin{align}
\label{eq:mixture-variance}
    \Var_\theta(X\mid\alpha)
    &=
    p_\theta(\alpha)s_{1,\mu}^2(\alpha)
    +
    q_\theta(\alpha)s_{2,\mu}^2(\alpha)
    +
    p_\theta(\alpha)q_\theta(\alpha)
    d_\mu(\alpha)^2.
\end{align}
Moreover, whenever $k=\lambda/\mu$,
\begin{equation}
\label{eq:mixture-odds}
    \frac{p_\theta(\alpha)}{q_\theta(\alpha)}
    =
    e^{\mu(k+\alpha-1)}
    \frac{
        1-e^{-\mu\alpha}
    }{
        1-e^{-\mu(1-\alpha)}
    }.
\end{equation}

Uniformly over $\alpha\in[\rho,\ell]$, as $\mu\to\infty$,
\begin{equation}
\label{eq:uniform-mean-asymptotics}
    m_{1,\mu}(\alpha)
    =
    \alpha-\frac1\mu+O(e^{-\rho\mu}),
    \qquad
    m_{2,\mu}(\alpha)
    =
    1-\frac1\mu+O(e^{-\rho\mu}),
\end{equation}
\begin{equation}
\label{eq:uniform-variance-asymptotics}
    s_{1,\mu}^2(\alpha)
    =
    \frac1{\mu^2}+O(e^{-\rho\mu}),
    \qquad
    s_{2,\mu}^2(\alpha)
    =
    \frac1{\mu^2}+O(e^{-\rho\mu}),
\end{equation}
and
\begin{equation}
\label{eq:uniform-separation-asymptotics}
    d_\mu(\alpha)
    =
    1-\alpha+O(e^{-\rho\mu}).
\end{equation}
In particular, there exist constants $0<c_s<C_s<\infty$ and $\mu_s>0$, depending only on $\rho$, such that
\begin{equation}
\label{eq:component-variance-bounds}
    \frac{c_s}{\mu^2}
    \le
    s_{j,\mu}^2(\alpha)
    \le
    \frac{C_s}{\mu^2},
    \qquad
    j\in\{1,2\},
\end{equation}
for every $\mu\ge\mu_s$, uniformly over $\alpha\in[\rho,\ell]$.
\end{lemma}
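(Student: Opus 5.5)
The plan is to condition on which side of $\alpha$ the feature falls. The population estimator is $F(\theta;\alpha)=\E_\theta[X\mid\alpha]$ under the density on $[0,1]$ proportional to $\exp\{\lambda\mathbf 1\{x\le\alpha\}+\mu x\}$. This density is piecewise of the form $e^{\mu x}$, with the constant factor $e^\lambda$ on $I_1=[0,\alpha]$ and factor $1$ on $I_2=(\alpha,1]$. The tilted law is therefore exactly a two-component mixture. Its weights are proportional to $e^\lambda A_{1,\mu}(\alpha)$ and $A_{2,\mu}(\alpha)$, i.e. $p_\theta$ and $q_\theta$ of \eqref{eq:mixture-weights}, and the components are the normalized $e^{\mu x}$ densities on $I_1$ and $I_2$. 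The iterated-expectation identity $\E[X]=p\,m_1+q\,m_2$ then gives \eqref{eq:mixture-F}.

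For the covariance, $Y$ is the indicator of component 1. Hence $\Cov(X,Y)=\E[XY]-\E X\,\E Y=p m_1-p(pm_1+qm_2)=pq(m_1-m_2)=-pq\,d_\mu$. For the variance, the law of total variance (within-component plus between-component, with the between-component variance of a two-point mean $pq(m_2-m_1)^2$) gives \eqref{eq:mixture-variance}. For the odds \eqref{eq:mixture-odds}, I would substitute \eqref{eq:component-masses}:
\[
\frac pq=\frac{e^\lambda(e^{\mu\alpha}-1)}{e^\mu-e^{\mu\alpha}}.
\]
Factoring $e^{\mu\alpha}$ from the numerator and $e^{\mu}$ from the denominator, and writing $\lambda=\mu k$, yields the stated form.

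Next I would verify the explicit moment formulas. On $[0,a]$ with density $\propto e^{\mu x}$, integration by parts gives mean $a-\mu^{-1}+a/(e^{\mu a}-1)$. Translation invariance of the tilted law on $(\alpha,1]$ (shift by $\alpha$, length $1-\alpha$) gives $m_{2,\mu}=\alpha+(1-\alpha)-\mu^{-1}+(1-\alpha)/(e^{\mu(1-\alpha)}-1)$. The variance is the derivative in $\mu$ of the mean, which is the cumulant identity for the exponential family. Differentiating $a-\mu^{-1}+a/(e^{\mu a}-1)$ gives $\mu^{-2}-a^2e^{\mu a}/(e^{\mu a}-1)^2$, which matches the stated $s^2$ after multiplying through by $e^{-2\mu a}$.

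The asymptotics follow because $\alpha\in[\rho,\ell]$ forces both interval lengths $\alpha$ and $1-\alpha$ to be at least $\rho$. Hence $a/(e^{\mu a}-1)\le 2e^{-\rho\mu}$ once $e^{\mu\rho}\ge2$, and $a^2e^{-\mu a}/(1-e^{-\mu a})^2\le4e^{-\rho\mu}$, uniformly in $\alpha$. This gives \eqref{eq:uniform-mean-asymptotics} and \eqref{eq:uniform-variance-asymptotics}, and subtracting the two means gives \eqref{eq:uniform-separation-asymptotics}.

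For \eqref{eq:component-variance-bounds}, the upper bound $s^2\le\mu^{-2}$ is immediate since the correction term is nonnegative. For the lower bound, choose $\mu_s$ with $4e^{-\rho\mu}\le\frac12\mu^{-2}$ for all $\mu\ge\mu_s$, which gives $c_s=1/2$ and $C_s=1$.

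The only mildly delicate point is making the exponentially small remainders uniform and explicitly dominated by $\mu^{-2}$ in the variance lower bound. This is handled by the length-$\ge\rho$ observation and a choice of $\mu_s$ depending only on $\rho$. Everything else is exact algebra.
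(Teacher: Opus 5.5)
Your proposal is correct and follows essentially the same route as the paper. Both arguments use the two-component mixture decomposition of the tilted law, compute the covariance with $Y$ as the component indicator, apply the law of total variance, derive the odds from the component masses, and get uniform exponential remainders from the fact that both interval lengths $\alpha$ and $1-\alpha$ are at least $\rho$. Your only additions are an explicit check of the truncated-exponential moment formulas, which the paper takes as given, and the concrete constants $c_s=1/2$, $C_s=1$.
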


\begin{proof}
The total tilted mass on $[0,\alpha]$ is $e^\lambda A_{1,\mu}(\alpha)$, while the total tilted mass on $[\alpha,1]$ is $A_{2,\mu}(\alpha)$. Hence $p_\theta(\alpha)$ and $q_\theta(\alpha)$ are the probabilities of the left and right components, respectively, which proves \eqref{eq:mixture-F}.

For the covariance identity, suppress the common dependence on $\alpha$. Since $Y=1$ on the left component and $Y=0$ on the right component,
\begin{align*}
    \Cov_\theta(X,Y\mid\alpha)
    &=
    \E_\theta[XY\mid\alpha]
    -
    \E_\theta[X\mid\alpha]\E_\theta[Y\mid\alpha]\\
    &=
    p_\theta m_{1,\mu}
    -
    p_\theta
    \left(
        p_\theta m_{1,\mu}
        +
        q_\theta m_{2,\mu}
    \right)\\
    &=
    p_\theta q_\theta
    (m_{1,\mu}-m_{2,\mu})\\
    &=
    -p_\theta q_\theta{d_\mu}.
\end{align*}
This proves \eqref{eq:mixture-covariance}. The variance formula \eqref{eq:mixture-variance} follows from the law of total variance. Finally,
\begin{align*}
    \dfrac{p_\theta(\alpha)}{q_\theta(\alpha)}
    &=
    e^\lambda
    \dfrac{A_{1,\mu}(\alpha)}
    {A_{2,\mu}(\alpha)}
    =
    e^{\lambda+\mu\alpha-\mu}
    \dfrac{1-e^{-\mu\alpha}}
    {1-e^{-\mu(1-\alpha)}}.
\end{align*}
Using $\lambda=k\mu$ gives \eqref{eq:mixture-odds}.

Since $\alpha\ge\rho$ and $1-\alpha\ge\rho$ on $[\rho,\ell]$, the correction terms in $m_{1,\mu}(\alpha)$ and $m_{2,\mu}(\alpha)$ are uniformly $O(e^{-\rho\mu})$. This proves the mean expansions. The variance expansions follow similarly from $s^2_{1,\mu}(\alpha)$ and $s^2_{2,\mu}(\alpha)$. Subtracting the two mean expansions gives the expansion for ${d_\mu}$. The two-sided variance bounds follow after increasing $\mu_s$.
\end{proof}

The representation immediately yields the bias expansion used throughout the subsequent dynamics.

\begin{corollary}[Uniform bias expansion]
\label{cor:bias-expansion}
Uniformly over $\alpha\in[\rho,\ell]$, as $\mu\to\infty$,
\begin{equation}
\label{eq:uniform-bias-expansion}
    F(\theta;\alpha)-\alpha
    =
    q_\theta(\alpha)(1-\alpha)
    -
    \frac1\mu
    +
    O(e^{-\rho\mu}).
\end{equation}
The remainder is uniform over $\lambda\in\R$.
\end{corollary}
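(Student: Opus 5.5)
We will prove Corollary~\ref{cor:bias-expansion} directly from the mixture representation \eqref{eq:mixture-F} in Lemma~\ref{lem:mixture-representation}, together with the uniform mean asymptotics \eqref{eq:uniform-mean-asymptotics}.

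The first step is to rewrite the predictor using $p_\theta=1-q_\theta$:
\[
F(\theta;\alpha)=m_{1,\mu}(\alpha)+q_\theta(\alpha)\bigl(m_{2,\mu}(\alpha)-m_{1,\mu}(\alpha)\bigr)=m_{1,\mu}(\alpha)+q_\theta(\alpha)d_\mu(\alpha).
\]
Subtracting $\alpha$ then gives
\[
F(\theta;\alpha)-\alpha=\bigl(m_{1,\mu}(\alpha)-\alpha\bigr)+q_\theta(\alpha)\,d_\mu(\alpha).
\]

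The second step substitutes the two expansions already available. By \eqref{eq:uniform-mean-asymptotics},
\[
m_{1,\mu}(\alpha)-\alpha=-\mu^{-1}+O(e^{-\rho\mu}).
\]
By \eqref{eq:uniform-separation-asymptotics},
\[
q_\theta(\alpha)\,d_\mu(\alpha)=q_\theta(\alpha)(1-\alpha)+q_\theta(\alpha)\,O(e^{-\rho\mu}).
\]

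The only point needing care is uniformity in $\lambda$. The $\lambda$-dependence enters solely through $q_\theta(\alpha)$. Since $q_\theta(\alpha)\in[0,1]$ for every $\lambda\in\R$, the product $q_\theta\,O(e^{-\rho\mu})$ is bounded by the same constant times $e^{-\rho\mu}$, independently of $\lambda$. The other error terms depend only on $(\mu,\alpha)$ and are uniform over $\alpha\in[\rho,\ell]$ by the lemma. Combining the pieces yields \eqref{eq:uniform-bias-expansion} with a remainder uniform in both $\alpha\in[\rho,\ell]$ and $\lambda\in\R$.

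No step is a genuine obstacle, since the argument is a two-line algebraic consequence of the lemma. The only check is that the $O(e^{-\rho\mu})$ constants in \eqref{eq:uniform-mean-asymptotics} and \eqref{eq:uniform-separation-asymptotics} do not depend on $\lambda$. This holds because $m_{j,\mu}$ and $d_\mu$ are defined purely in terms of $\mu$ and $\alpha$.
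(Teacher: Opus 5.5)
Your proposal is correct and is essentially the paper's argument. The paper substitutes both mean expansions into $F=p_\theta m_{1,\mu}+q_\theta m_{2,\mu}$ and then uses $p_\theta+q_\theta=1$; you instead write $F=m_{1,\mu}+q_\theta d_\mu$ and use the separation expansion, which is the same computation, with uniformity in $\lambda$ coming from $0\le q_\theta\le 1$ in both cases.
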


\begin{proof}
Using \eqref{eq:mixture-F} and \eqref{eq:uniform-mean-asymptotics},
\begin{align*}
    F(\theta;\alpha)
    &=
    p_\theta(\alpha)
    \left(
        \alpha-\dfrac1\mu
    \right)
    +
    q_\theta(\alpha)
    \left(
        1-\dfrac1\mu
    \right)
    +
    O(e^{-\rho\mu})\\
    &=
    p_\theta(\alpha)\alpha
    +
    q_\theta(\alpha)
    -
    \dfrac1\mu
    +
    O(e^{-\rho\mu}).
\end{align*}
Since $p_\theta(\alpha)+q_\theta(\alpha)=1$, subtracting $\alpha$ gives the claim. Uniformity in $\lambda$ follows from $0\le p_\theta(\alpha),q_\theta(\alpha)\le1$.
\end{proof}

The odds identity \eqref{eq:mixture-odds} gives, uniformly over $\alpha\in[\rho,\ell]$,
\begin{equation}
\label{eq:mixture-odds-asymptotic}
    \frac{p_\theta(\alpha)}{q_\theta(\alpha)}
    =
    e^{\mu(k+\alpha-1)}
    \left\{
        1+O(e^{-\rho\mu})
    \right\}.
\end{equation}
Hence the two components exchange dominance near $\alpha_c:=1-k.$ Thus $k$ determines the location of the attention transition across the task family, while $\mu$ determines its sharpness. This is the structural reduction underlying the population dynamics.

\subsection{Effective coordinates and ratio dynamics}
\label{subsec:basic-dynamics}

The preceding representation shows that $(k,\mu)$ are the natural coordinates for the population flow: $k$ moves the transition location $\alpha_c$, while $\mu$ controls concentration.

\begin{lemma}[Ratio dynamics and sign structure]
\label{lem:ratio-identity}
Along the population gradient flow, whenever $\mu(t)>0$,
\begin{equation}
\label{eq:ratio-identity}
    \dot k(t)
    =
    -
    \frac{
        S(\lambda(t),\mu(t))
    }{
        \mu(t)^2
    },
\end{equation}
where $S(\lambda,\mu) := \mu\partial_\lambda\mathcal R_\rho(\lambda,\mu) - \lambda\partial_\mu\mathcal R_\rho(\lambda,\mu).$ Equivalently,
\begin{equation}
\label{eq:S-integral}
    S(\lambda,\mu)
    =
    \frac1\Delta
    \int_\rho^\ell
    \bigl(F(\theta;\alpha)-\alpha\bigr)
    H_\theta(\alpha)
    \,d\alpha,
\end{equation}
where $H_\theta(\alpha):=\mu\Cov_\theta(X,Y\mid\alpha)-\lambda\Var_\theta(X\mid\alpha).$ For every $\lambda,\mu\ge0$ and $\alpha\in(0,1)$, $H_\theta(\alpha)\le0.$
\end{lemma}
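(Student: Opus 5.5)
The lemma has three parts: the ratio identity \eqref{eq:ratio-identity}, the integral form \eqref{eq:S-integral}, and the sign $H_\theta(\alpha)\le0$. The first two are calculus; the sign uses the mixture representation of Lemma~\ref{lem:mixture-representation}.

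\textbf{Ratio identity.} Write $k=\lambda/\mu$ and differentiate along the flow $\dot\lambda=-\partial_\lambda\mathcal R_\rho$, $\dot\mu=-\partial_\mu\mathcal R_\rho$. The quotient rule gives
\[
\dot k=\frac{\dot\lambda\mu-\lambda\dot\mu}{\mu^2}=-\frac{\mu\partial_\lambda\mathcal R_\rho-\lambda\partial_\mu\mathcal R_\rho}{\mu^2}=-\frac{S(\lambda,\mu)}{\mu^2},
\]
valid whenever $\mu(t)>0$.

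\textbf{Integral form.} Substitute the gradient formula of Lemma~\ref{lem:main-gradient-identities} into $S$. Since $\mu$ and $\lambda$ do not depend on $\alpha$, they can be moved inside the integral, and the integrand becomes $(F-\alpha)\bigl(\mu\Cov_\theta(X,Y\mid\alpha)-\lambda\Var_\theta(X\mid\alpha)\bigr)=(F-\alpha)H_\theta(\alpha)$. This is \eqref{eq:S-integral}.

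\textbf{Sign of $H_\theta$.} Use \eqref{eq:mixture-covariance}: $\Cov_\theta(X,Y\mid\alpha)=-p_\theta q_\theta d_\mu(\alpha)$. The separation is nonnegative, $d_\mu(\alpha)=m_{2,\mu}(\alpha)-m_{1,\mu}(\alpha)\ge0$, because the component on $I_2=(\alpha,1]$ is supported to the right of the component on $I_1=[0,\alpha]$, so its mean is at least $\alpha$ and the other mean is at most $\alpha$. Hence $\mu\Cov_\theta\le0$ when $\mu\ge0$. Also $\Var_\theta\ge0$, so $-\lambda\Var_\theta\le0$ when $\lambda\ge0$. Adding the two terms gives $H_\theta(\alpha)\le0$.

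\textbf{Remaining cases.} Lemma~\ref{lem:mixture-representation} is stated for $\mu>0$. For the boundary case $\mu=0$, $H_\theta=-\lambda\Var_\theta\le0$ follows directly, or one can note that the mixture identities still hold with the uniform densities on $I_1$ and $I_2$. No step is a real obstacle. The only point needing care is justifying $d_\mu\ge0$ for all $\mu\ge0$ from the support argument, rather than from the large-$\mu$ asymptotics \eqref{eq:uniform-separation-asymptotics}, which hold only as $\mu\to\infty$.
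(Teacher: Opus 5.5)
Your proposal is correct and follows the paper's proof essentially line for line: the quotient rule gives $\dot k$, substituting the gradient identities gives the integral form, and the sign comes from $\Cov_\theta(X,Y\mid\alpha)=-p_\theta q_\theta d_\mu(\alpha)\le0$ together with $\Var_\theta(X\mid\alpha)\ge0$, with the case $\mu=0$ handled separately. Your support-based justification of $d_\mu(\alpha)\ge0$ supplies a detail that the paper only asserts.
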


\begin{proof}
Since $k=\lambda/\mu$, $\dot k = ({\mu\dot\lambda-\lambda\dot\mu})/{\mu^2}.$ Using $\dot\lambda=-\partial_\lambda \mathcal R_\rho$ and $\dot\mu=-\partial_\mu \mathcal R_\rho,$ gives \eqref{eq:ratio-identity} and $S(\lambda,\mu)$. Formula \eqref{eq:S-integral} follows directly from Lemma~\ref{lem:main-gradient-identities}.

Suppose first that $\mu>0$. By \eqref{eq:mixture-covariance},
$\Cov_\theta(X,Y\mid\alpha)=-p_\theta(\alpha)q_\theta(\alpha){d_\mu}(\alpha)\le0$
because ${d_\mu}(\alpha)\ge0$. Since $\Var_\theta(X\mid\alpha)\ge0, \qquad \lambda,\mu\ge0,$ it follows that $H_\theta(\alpha)\le0$. If $\mu=0$, then $H_\theta(\alpha)=-\lambda\Var_\theta(X\mid\alpha)\le0.$
\end{proof}

\subsection{Well-posedness of the population flow}
\label{subsec:population-well-posedness}

The population dynamics used below are globally well posed. We record the standard ODE fact and its application here for completeness. The derivative identities needed below are those of Lemma~\ref{lem:main-gradient-identities} in the main text.

\begin{lemma}[Bounded locally Lipschitz gradient flows]
\label{lem:bounded-lipschitz-gradient-flow}
Let $G:\R^2\to\R$ be continuously differentiable, and suppose that $\nabla G$ is locally Lipschitz. If there exists $B<\infty$ such that $\|\nabla G(\theta)\| \le B$ for every $\theta\in\R^2$, then, for every initial condition $\theta(0)\in\R^2$, the gradient flow $\dot\theta(t) = -\nabla G(\theta(t))$ admits a unique solution defined for all $t\ge0$.
\end{lemma}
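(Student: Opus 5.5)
The plan is the standard global-existence argument for locally Lipschitz ODEs, with boundedness of the vector field ruling out finite-time blow-up. First, since $-\nabla G$ is locally Lipschitz on $\R^2$, the Picard--Lindel\"of theorem gives, for every initial condition $\theta(0)$, a unique solution on a maximal interval $[0,T^*)$ with $T^*\in(0,\infty]$. Uniqueness on any common interval follows from the same local Lipschitz property: two solutions agree on a neighborhood of each time at which they coincide, so the set of agreement times is open, closed and nonempty, hence all of the common interval.

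Second, I will show $T^*=\infty$ by contradiction. Suppose $T^*<\infty$. Then for $0\le s<t<T^*$ we have
\[
 \|\theta(t)-\theta(s)\|\le\int_s^t\|\nabla G(\theta(r))\|\,dr\le B(t-s).
\]
Hence $\theta$ is uniformly Lipschitz on $[0,T^*)$, the limit $\theta(T^*-)$ exists by the Cauchy criterion, and the trajectory stays in the compact ball of radius $\|\theta(0)\|+BT^*$. Applying local existence at $\theta(T^*-)$ extends the solution beyond $T^*$, contradicting maximality. Equivalently, one can invoke the standard blow-up alternative, which says a maximal solution with finite $T^*$ must leave every compact set; that is impossible here.

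The only point needing care is the extension step. The local existence time at $\theta(T^*-)$ is positive because the Lipschitz constant on a neighborhood of that point is finite, and the concatenated function solves the integral equation. I do not expect a real obstacle. For the application to $\mathcal R_\rho$, one would separately check that $\nabla\mathcal R_\rho$ is bounded (by Lemma~\ref{lem:main-gradient-identities} it is an average of products of terms bounded by $1$) and locally Lipschitz (the second derivatives are tilted third-order cumulants, also bounded). That check belongs to the corollary applying this lemma, not to the lemma itself.
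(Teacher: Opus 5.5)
Your proposal is correct and matches the paper's proof: Picard--Lindel\"of gives a unique maximal solution, the bound $\|\nabla G\|\le B$ gives $\|\theta(t)\|\le\|\theta(0)\|+Bt$, and the continuation (blow-up) alternative then rules out a finite maximal time. Your explicit extension step, which takes the Cauchy limit at $T^*$ and restarts the solution there, is just a direct proof of the continuation theorem that the paper cites.
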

\begin{proof}
Since $-\nabla G$ is locally Lipschitz, the Picard--Lindelöf theorem gives a unique maximal solution $\theta:[0,T_{\rm life})\to\R^2.$ The uniform gradient bound gives $\|\dot\theta(t)\| = \|\nabla G(\theta(t))\| \le B.$ Hence, for $t<T_{\rm life}$, $\|\theta(t)\| \le \|\theta(0)\|+Bt.$ Thus the trajectory is bounded on every finite time interval. By the continuation theorem for locally Lipschitz ODEs, a finite maximal time would force $\|\theta(t)\|\to\infty \ \text{as }t\uparrow T_{\rm life},$ which is impossible. Therefore $T_{\rm life}=\infty$.
\end{proof}

\begin{lemma}[Population global well-posedness]
\label{lem:global-well-posedness}
For every initial condition $\theta(0)\in\R^2$, the population gradient flow $\dot\theta(t) = -\nabla\mathcal R_\rho(\theta(t))$ admits a unique solution defined for all $t\ge0$.
\end{lemma}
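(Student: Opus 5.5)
The plan is to apply Lemma~\ref{lem:bounded-lipschitz-gradient-flow} with $G=\mathcal R_\rho$. Three properties must be checked: $\mathcal R_\rho$ is $C^1$ on $\R^2$, $\nabla\mathcal R_\rho$ is locally Lipschitz, and $\nabla\mathcal R_\rho$ is globally bounded.

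Start from the representation in the proof of Lemma~\ref{lem:main-gradient-identities}. For each $\alpha$, $F(\theta;\alpha)=\E_\theta[X\mid\alpha]$ under the density on $[0,1]$ proportional to $\exp\{\lambda Y(x)+\mu x\}$. This is a two-parameter exponential family with bounded sufficient statistic $(Y,X)\in[0,1]^2$. Its log-partition function $\Lambda_\alpha(\theta)=\log\int_0^1 e^{\lambda Y(x)+\mu x}dx$ is therefore real-analytic, hence $C^\infty$, on all of $\R^2$, including $\mu\le0$ and $\lambda<0$. The map $\theta\mapsto F(\theta;\alpha)=\partial_\mu\Lambda_\alpha(\theta)$ is then smooth. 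Its derivatives are tilted cumulants: first derivatives are the covariances of Lemma~\ref{lem:main-gradient-identities}, and second derivatives are third-order mixed cumulants of $(X,Y)$. Every cumulant of order at most three of a $[0,1]^2$-valued vector is bounded by an absolute constant. This bound is uniform in $\theta$ and in $\alpha\in[\rho,\ell]$.

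With these bounds, the integrand $(F-\alpha)(\Cov_\theta(X,Y\mid\alpha),\Var_\theta(X\mid\alpha))$ is bounded by $1\cdot(1/4+1/4)$, using $|F-\alpha|\le1$, $|\Cov|\le 1/4$ and $\Var\le1/4$. The gradient formula of Lemma~\ref{lem:main-gradient-identities} then gives $\|\nabla\mathcal R_\rho(\theta)\|\le B$ for an absolute constant $B$.

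For the Lipschitz property, differentiate the integrand in $\theta$ once more. This produces terms of the form $\nabla F\otimes\nabla F$ and $(F-\alpha)\nabla^2F$, all uniformly bounded. Dominated convergence over the compact $\alpha$-range justifies passing the derivative under $\int_\rho^\ell$. Hence $\mathcal R_\rho\in C^2$ with a bounded Hessian, so $\nabla\mathcal R_\rho$ is in fact globally Lipschitz, and the hypotheses of Lemma~\ref{lem:bounded-lipschitz-gradient-flow} hold.

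The only point needing care is that the two-component formulas of Lemma~\ref{lem:mixture-representation} are stated for $\mu>0$, while the flow could a priori leave that region. I therefore avoid those formulas entirely and work directly with the exponential-family integral. That integral is well defined for every $\theta\in\R^2$; at $\mu=0$ the component densities are simply uniform. So no case split is needed, and Lemma~\ref{lem:bounded-lipschitz-gradient-flow} yields a unique global solution for every initial condition.
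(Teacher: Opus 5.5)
Your proposal is correct and follows the paper's proof. The paper likewise establishes smoothness of $\mathcal R_\rho$ by differentiating under the integral, bounds the gradient uniformly using $|F-\alpha|\le1$ and $|\Cov_\theta|,\Var_\theta\le 1/4$, and then applies Lemma~\ref{lem:bounded-lipschitz-gradient-flow}; your additional bounded-Hessian (third-order tilted cumulant) argument gives global Lipschitz continuity, which the paper proves separately in Lemma~\ref{lem:population-vector-field-global-regularity} but does not need for this lemma.
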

\begin{proof}
The map $\theta\mapsto \mathcal R_\rho(\theta)$ is smooth. Indeed, on every compact subset of $\R^2$, the integrands defining $F(\theta;\alpha)$ and all their $\theta$-derivatives are uniformly bounded over $(x,\alpha)\in[0,1]\times[\rho,\ell],$ so differentiation under the integral sign is valid.

Moreover, $|F(\theta;\alpha)-\alpha|\le1.$ Since $X,Y\in[0,1]$, $\Var_\theta(X\mid\alpha)\le\dfrac14,$ $|\Cov_\theta(X,Y\mid\alpha)|\le\dfrac14.$
Using Lemma~\ref{lem:main-gradient-identities}, we obtain $|\partial_\lambda \mathcal R_\rho(\theta)|\le\dfrac14$ and $|\partial_\mu \mathcal R_\rho(\theta)|\le\dfrac14.$ Thus $\|\nabla \mathcal R_\rho(\theta)\|\le\dfrac{\sqrt2}{4}$ for every $\theta$. The claim follows from Lemma~\ref{lem:bounded-lipschitz-gradient-flow}.
\end{proof}

The identities above reduce the population training problem to the interaction between the prediction bias $F(\theta;\alpha)-\alpha$ and the moving attention transition $\alpha_c$. Section~\ref{sec:population-dynamics} shows how this interaction first corrects the ratio, then transports the transition across the task interval, and finally produces logarithmic capture near the endpoint.

\section{Proof of Population Capture and Sharpening}
\label{sec:population-dynamics}

This section proves Theorem~\ref{thm:main-population}. The proof has four modules: outer ratio correction, transport through the task interior, logarithmic endpoint capture, and radial persistence. The first three produce finite-time entry into the logarithmic tube; the last prevents the concentration scale from leaving the large-$\mu$ regime before capture.

Fix $C_{\rm band}\ge2$. For $\mu>1$, set $L_{\rm band}(\mu):=\rho-C_{\rm band}\log\mu/\mu$, $U_{\rm band}(\mu):=\ell+C_{\rm band}\log\mu/\mu$, and $b_c(\mu):=\ell+(\log\mu+c)/\mu$. The endpoint drift profiles are
\begin{equation}
\label{eq:ratio-drift-profile}
 d_k(c):=\ell+\frac{\ell(1+\ell^2)}\Delta e^{-c}-\frac{\ell^2(1+\ell^2)}{2\Delta}e^{-2c},
 \qquad
 d_{\rm rad}(c):=1+\frac{\ell^2}\Delta e^{-c}-\frac{\ell^3}{2\Delta}e^{-2c}.
\end{equation}
Lemma~\ref{lem:choose-c0} selects $c_0$ with $d_k(c_0)<0<d_{\rm rad}(c_0)$. Define $\mathcal I_{\log}(\mu):=[b_{c_0}(\mu),U_{\rm band}(\mu)]$ and use the main-text progress region $\mathcal G=\{\mu\ge\mu_*,\ k\in\mathcal I_{\log}(\mu)\}$. Its side barriers are $h_-(\lambda,\mu):=\lambda/\mu-b_{c_0}(\mu)$ and $h_+(\lambda,\mu):=U_{\rm band}(\mu)-\lambda/\mu$.

\textbf{Threshold convention.}
All constants and large-$\mu$ thresholds in this section depend only on the fixed structural parameters (and on $\kappa$ or $\eta$ when displayed). Because only finitely many such thresholds are used in the final assembly, we increase the base radius $\mu_*$ once so that every cited estimate holds whenever $\mu\ge\mu_*$.

\subsection{Barrier tools and regional drift estimates}
\label{subsec:population-regional-tools}

For compactness, write $B_\theta(\alpha):=F(\theta;\alpha)-\alpha$,
$\sigma_\theta^2(\alpha):=\Var_\theta(X\mid\alpha)$, and
$J_\theta(\alpha):=-H_\theta(\alpha)$.
Since $J_\theta=-H_\theta$, Lemma~\ref{lem:ratio-identity} gives
\begin{equation}
\label{eq:K-ratio-identity}
    S(\lambda,\mu)
    =
    -\dfrac1\Delta
    \int_\rho^\ell
    B_\theta(\alpha)J_\theta(\alpha)\,d\alpha,
    \qquad
    \dot k
    =
    \dfrac1{\Delta\mu^2}
    \int_\rho^\ell
    B_\theta(\alpha)J_\theta(\alpha)\,d\alpha.
\end{equation}
For $0<\eta<\Delta/4$, define the lower layer, fixed interior strip, and upper rail tube by
\begin{equation*}
    \mathcal L_\eta(\mu)
    :=
    [L_{\rm band}(\mu),\rho+\eta],
    \qquad
    \mathcal I_\eta
    :=
    [\rho+\eta,\ell-\eta],
    \qquad
    \mathcal T_\eta(\mu)
    :=
    [\ell-\eta,U_{\rm band}(\mu)].
\end{equation*}
\begin{lemma}[Finite-time entry from a one-sided error inequality]
\label{lem:abstract-finite-entry}
Let $T\in(t_0,\infty]$, and let
$e:[t_0,T)\to[0,\infty), \ \mu:[t_0,T)\to(0,\infty)$
be continuously differentiable. Define the entry time
$\tau_e:=\inf\{t\ge t_0:e(t)=0\}.$
Suppose that, whenever $e(t)>0$,
\begin{equation}
\label{eq:abstract-gap-drift}
    \dot e(t)\le-\dfrac{a}{\mu(t)^p}
\end{equation}
for some constants $a>0$ and $p>0$.
\begin{enumerate}[label=\textnormal{(\roman*)}]
\item If $\mu(t)\le\mu(t_0)$ whenever $e(t)>0$, then, for every $t<T\wedge\tau_e$,
    \begin{equation}
    \label{eq:abstract-entry-time-decreasing}
        e(t)
        \le
        e(t_0)-\dfrac{a}{\mu(t_0)^p}(t-t_0).
    \end{equation}
Consequently, if $\tau_e\ge T$, then
    $T-t_0 \le \dfrac{\mu(t_0)^p e(t_0)}{a}.$
In particular, if $T=\infty$, then $\tau_e<\infty$. \item If $p=4$ and
    $\dot\mu(t)\le\dfrac{A}{\mu(t)^3}$
whenever $e(t)>0$, for some $A>0$, then, for every $t<T\wedge\tau_e$,
    \begin{equation}
    \label{eq:abstract-log-gap-bound}
        e(t)
        \le
        e(t_0)
        -
        \dfrac{a}{4A}
        \log\left(
            1+\dfrac{4A(t-t_0)}{\mu(t_0)^4}
        \right).
    \end{equation}
In particular, if $T=\infty$, then $\tau_e<\infty$.
\end{enumerate}
\end{lemma}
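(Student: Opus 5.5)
Both parts come from integrating the differential inequality \eqref{eq:abstract-gap-drift} over $[t_0,t]$ for $t<T\wedge\tau_e$. On this interval $e(s)>0$ for all $s\in[t_0,t]$: $e\ge0$ is continuous, and $\tau_e$ is its first zero. Hence \eqref{eq:abstract-gap-drift} holds at every $s$ in the interval, and we can write
\[
e(t)=e(t_0)+\int_{t_0}^t\dot e(s)\,ds\le e(t_0)-a\int_{t_0}^t\mu(s)^{-p}\,ds .
\]
The two parts differ only in how we bound $\int\mu^{-p}$ from below.

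For (i), the hypothesis gives $\mu(s)\le\mu(t_0)$ on the interval, so $\mu(s)^{-p}\ge\mu(t_0)^{-p}$. Substituting yields \eqref{eq:abstract-entry-time-decreasing} directly. If $\tau_e\ge T$, the bound holds for every $t<T$. Since $e(t)\ge0$, we get $t-t_0\le\mu(t_0)^pe(t_0)/a$ for all $t<T$, and letting $t\uparrow T$ gives the stated bound on $T-t_0$. When $T=\infty$ this bound is finite, which contradicts $\tau_e\ge T=\infty$. Therefore $\tau_e<\infty$.

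For (ii), we first control $\mu$ from above using $\dot\mu\le A\mu^{-3}$. This gives $\frac{d}{ds}\mu(s)^4=4\mu^3\dot\mu\le4A$, and integrating yields $\mu(s)^4\le\mu(t_0)^4+4A(s-t_0)$ for $s\in[t_0,t]$. It follows that $\mu(s)^{-4}\ge\{\mu(t_0)^4+4A(s-t_0)\}^{-1}$. Integrating this lower bound gives
\[
\int_{t_0}^t\mu(s)^{-4}\,ds\ge\frac1{4A}\log\Bigl(1+\frac{4A(t-t_0)}{\mu(t_0)^4}\Bigr),
\]
which is exactly \eqref{eq:abstract-log-gap-bound}. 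For the final claim, suppose $T=\infty$ and $\tau_e=\infty$. Then the right-hand side of \eqref{eq:abstract-log-gap-bound} tends to $-\infty$ as $t\to\infty$, contradicting $e\ge0$. So $\tau_e<\infty$.

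No step here is a genuine obstacle. The only point that needs care is justifying that the inequality is usable on the whole interval $[t_0,t]$. This is the observation above that $e>0$ strictly before $\tau_e$; note that the inf over an empty set is $+\infty$, so the argument also covers the case where $e$ never vanishes. The log-integral comparison in (ii) is the slightly more delicate computation, because $\mu$ may grow there. That growth is precisely why the entry time becomes only logarithmic-rate (exponential in $e(t_0)$) rather than linear.
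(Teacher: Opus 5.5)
Your proposal is correct and follows the paper's own proof essentially step for step. Like the paper, you integrate the drift inequality over $[t_0,t]$, where $e>0$, and bound $\int\mu^{-p}$ from below using $\mu(s)\le\mu(t_0)$ in (i) and the fourth-power growth bound $\mu(s)^4\le\mu(t_0)^4+4A(s-t_0)$ in (ii). Your handling of the degenerate case through the definition of $\tau_e$ is equivalent to the paper's explicit separation of the case $e(t_0)=0$.
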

\begin{proof}
If $e(t_0)=0$, then $\tau_e=t_0$ and there is nothing to prove. Hence assume $e(t_0)>0$. On $[t_0,T\wedge\tau_e)$, one has $e(t)>0$, so the hypotheses are valid.

For \textnormal{(i)}, on $[t_0, T\wedge\tau_e),$ $\dot e(t)\le-\frac{a}{\mu(t_0)^p},$ and integration gives \eqref{eq:abstract-entry-time-decreasing}. If $\tau_e\ge T$, then $e(t)\ge0$ for all $t<T$, so letting $t\uparrow T$ gives $T-t_0 \le \frac{\mu(t_0)^p e(t_0)}{a}.$ If $T=\infty$, this is impossible; hence $\tau_e<\infty$.

For \textnormal{(ii)}, the radial bound gives $\frac{d}{dt}\mu(t)^4 = 4\mu(t)^3\dot\mu(t) \le 4A.$ Hence, on $[t_0, T\wedge\tau_e),$ $\mu(t)^4 \le \mu(t_0)^4+4A(t-t_0).$ Using \eqref{eq:abstract-gap-drift} with $p=4$ and integrating,
\begin{align*}
    e(t)
    &\le
    e(t_0)
    -
    a\int_{t_0}^t\dfrac{ds}{\mu(s)^4} \\
    &\le
    e(t_0)
    -
    a\int_{t_0}^t
    \dfrac{ds}{\mu(t_0)^4+4A(s-t_0)} \\
    &=
    e(t_0)
    -
    \dfrac{a}{4A}
    \log\left(
        1+\dfrac{4A(t-t_0)}{\mu(t_0)^4}
    \right).
\end{align*}
This proves \eqref{eq:abstract-log-gap-bound}. If $T=\infty$ and $\tau_e=\infty$, the logarithm diverges, forcing the right-hand side eventually to become negative, contradicting $e(t)\ge0$. Hence $\tau_e<\infty$.
\end{proof}
\begin{lemma}[Moving-interval invariance]
\label{lem:moving-interval-invariance}
Let $\mu_{\rm stop}>0$, and let $a,b:(\mu_{\rm stop},\infty)\to\R$ be $C^1$ functions satisfying $a(\mu)<b(\mu)$ for $\mu>\mu_{\rm stop}$. Let $k,\mu$ be continuously differentiable on $[t_0,T)$, with $\mu(t_0)>\mu_{\rm stop}$, and set
$\tau_{\rm stop} := \inf\{t\ge t_0:\mu(t)\le\mu_{\rm stop}\}.$
Define
$h_-(t):=k(t)-a(\mu(t)), \ h_+(t):=b(\mu(t))-k(t).$
Assume that, whenever $t<T\wedge\tau_{\rm stop}$,
\begin{equation*}
    h_-(t)=0 \Longrightarrow \dot h_-(t)>0,
    \qquad
    h_+(t)=0 \Longrightarrow \dot h_+(t)>0.
\end{equation*}
If $h_-(t_0)\ge0$ and $h_+(t_0)\ge0$, then $k(t)\in[a(\mu(t)),b(\mu(t))]$ for every $t\in[t_0,T\wedge\tau_{\rm stop})$.
\end{lemma}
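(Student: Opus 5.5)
This is a first-exit-time argument by contradiction. Let
\[
\tau:=\inf\{t\in[t_0,T\wedge\tau_{\rm stop}): h_-(t)<0\ \text{or}\ h_+(t)<0\},
\]
and suppose $\tau<T\wedge\tau_{\rm stop}$. On $[t_0,T\wedge\tau_{\rm stop})$ we have $\mu(t)>\mu_{\rm stop}$. Since $a,b$ are $C^1$ on $(\mu_{\rm stop},\infty)$ and $k,\mu$ are $C^1$, the compositions $h_\pm(t)$ are $C^1$ there, with
\[
\dot h_-=\dot k-a'(\mu)\dot\mu,\qquad \dot h_+=b'(\mu)\dot\mu-\dot k .
\]
By continuity and the definition of $\tau$, both $h_\pm(\tau)\ge0$. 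Also, one of them, say $h_-$, must vanish at $\tau$; otherwise both would be strictly positive on a neighbourhood of $\tau$, contradicting the definition of the infimum. This also covers $\tau=t_0$, since $h_\pm(t_0)\ge0$.

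Now use the hypothesis. Since $h_-(\tau)=0$ and $\tau<T\wedge\tau_{\rm stop}$, we get $\dot h_-(\tau)>0$. Hence there is $\delta>0$ with $h_-(t)>0$ for $t\in(\tau,\tau+\delta)$. For $h_+$ there are two cases. If $h_+(\tau)>0$, continuity keeps it positive on a short interval. If $h_+(\tau)=0$ as well, the same derivative hypothesis gives $\dot h_+(\tau)>0$, so $h_+>0$ just after $\tau$. Either way, both barriers are nonnegative on $[\tau,\tau+\delta')$ for some $\delta'>0$. This contradicts the definition of $\tau$ as an infimum of times at which some barrier is negative. The symmetric case $h_+(\tau)=0$ is identical.

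Therefore no exit time exists before $T\wedge\tau_{\rm stop}$, so $a(\mu(t))\le k(t)\le b(\mu(t))$ on the whole interval.

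The only delicate point is handling the infimum correctly, since the set of exit times need not contain its infimum. I will phrase it as follows: there are times $t_j\downarrow\tau$ with some $h_\pm(t_j)<0$, and this is incompatible with positivity on $(\tau,\tau+\delta')$. The condition $a<b$ is not needed for the argument; it only ensures the interval is nonempty.
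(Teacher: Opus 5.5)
Your proposal is correct and follows the paper's proof essentially verbatim. Both arguments define the first violation time $\tau$, note that at least one barrier is active there, and use the strict positivity of the active barrier's derivative (together with continuity of any inactive one) to get nonnegativity on a right-neighbourhood of $\tau$, which contradicts the infimum; your handling of $\tau=t_0$ and of the non-attained infimum just makes explicit what the paper leaves implicit.
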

\begin{proof}
Suppose not, and define the first violation time
\begin{equation*}
    \tau
    :=
    \inf\left\{
        t\in[t_0,T\wedge\tau_{\rm stop}):
        h_-(t)<0
        \ \text{or}\
        h_+(t)<0
    \right\}.
\end{equation*}
By continuity, $h_-(t),h_+(t)\ge0$ on $[t_0,\tau]$, and at least one of $h_-(\tau)$ and $h_+(\tau)$ is zero. Every active barrier has strictly positive derivative at $\tau$, while every inactive barrier is positive at $\tau$. Hence, by differentiability and continuity, both barriers remain nonnegative on $(\tau,\tau+\varepsilon)$ for some $\varepsilon>0$. This contradicts the definition of $\tau$. Therefore no violation occurs before $T\wedge\tau_{\rm stop}$.
\end{proof}
\subsubsection{Outer correction and dynamic-band entry}
\label{subsec:outer-correction}
The sign identity from Section~\ref{subsec:basic-dynamics} converts a uniform bias sign outside the support rails into a directional correction of the ratio. The next proposition quantifies this correction and controls the accompanying radial motion.
\begin{proposition}[Quantitative outer correction]
\label{prop:quant-outer}
Assume $C_{\rm band}\ge2$. There exist constants $c_U,C_U>0$ and $\mu_U>0$, depending only on $\rho$ and $C_{\rm band}$, such that, for all $(\lambda,\mu)$ with $\mu\ge\mu_U$,
\begin{equation*}
    k\ge U_{\rm band}(\mu)
    \quad\Longrightarrow\quad
    \dot k
    \le
    -\dfrac{c_U}{\mu^4},
    \qquad
    0<\dot\mu
    \le
    \dfrac{C_U}{\mu^3}.
\end{equation*}
Moreover, for every fixed $\kappa>0$, there exist constants $c_L(\kappa)>0$ and $\mu_L(\kappa)>0$ such that, for all $(\lambda,\mu)$ with $\mu\ge\mu_L(\kappa)$,
\begin{equation*}
    \kappa\le k<L_{\rm band}(\mu)
    \quad\Longrightarrow\quad
    \dot k
    \ge
    \dfrac{c_L(\kappa)}{\mu^3},
    \qquad
    \dot\mu<0.
\end{equation*}
\end{proposition}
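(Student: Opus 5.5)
The plan is to work directly from the ratio identity \eqref{eq:K-ratio-identity},
\[
\dot k=\frac{1}{\Delta\mu^2}\int_\rho^\ell B_\theta(\alpha)J_\theta(\alpha)\,d\alpha,
\qquad
\dot\mu=-\frac1\Delta\int_\rho^\ell B_\theta(\alpha)\sigma_\theta^2(\alpha)\,d\alpha.
\]
Here $J_\theta=\mu p_\theta q_\theta d_\mu+\lambda\sigma_\theta^2$ is nonnegative by Lemma~\ref{lem:ratio-identity}. In each regime I show that the bias $B_\theta(\alpha)$ has a uniform sign over $\alpha\in[\rho,\ell]$, with a quantitative size. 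I then bound $J_\theta$ below by its variance part $\lambda\sigma_\theta^2$ and use the variance decomposition \eqref{eq:mixture-variance} together with the two-sided component bounds \eqref{eq:component-variance-bounds}. Throughout, $\mu$ is taken large enough that the factor $(1-e^{-\mu\alpha})/(1-e^{-\mu(1-\alpha)})$ in the odds identity \eqref{eq:mixture-odds} lies in $[1/2,2]$ uniformly on $[\rho,\ell]$. I also take $\mu$ large enough that the remainders $O(e^{-\rho\mu})$ in Lemma~\ref{lem:mixture-representation} and Corollary~\ref{cor:bias-expansion} are at most $\mu^{-3}$.

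\textbf{Upper regime, $k\ge U_{\rm band}(\mu)$.} For $\alpha\ge\rho$ we have $\mu(k+\alpha-1)\ge\mu(\alpha-\rho)+C_{\rm band}\log\mu\ge 2\log\mu$. By \eqref{eq:mixture-odds}, $p_\theta/q_\theta\ge\mu^2/2$, so $q_\theta\le 2\mu^{-2}$ and $p_\theta\ge1/2$. Corollary~\ref{cor:bias-expansion} then gives $B_\theta(\alpha)=q_\theta(1-\alpha)-\mu^{-1}+O(e^{-\rho\mu})$. This lies in $[-2/\mu,\,-1/(2\mu)]$ uniformly in $\alpha$.

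For the variance, $\sigma_\theta^2\ge p_\theta s_{1,\mu}^2\ge c_s/(2\mu^2)$. In the other direction, $\sigma_\theta^2\le C_s\mu^{-2}+q_\theta d_\mu^2\le C\mu^{-2}$.
\begin{itemize}
\item Since $\lambda=k\mu\ge\ell\mu$, we get $J_\theta\ge\lambda\sigma_\theta^2\ge c\ell/\mu$. Hence $B_\theta J_\theta\le -c/\mu^2$ pointwise, and integrating gives $\dot k\le -c_U/\mu^4$.
\item Since $B_\theta<0$ and $\sigma_\theta^2>0$, we get $\dot\mu>0$. Using $|B_\theta|\le 2/\mu$ and $\sigma_\theta^2\le C/\mu^2$ gives $\dot\mu\le C_U/\mu^3$.
\end{itemize}

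\textbf{Lower regime, $\kappa\le k<L_{\rm band}(\mu)$.} For $\alpha\le\ell$ we have $\mu(k+\alpha-1)<\mu(\rho+\ell-1)-C_{\rm band}\log\mu=-C_{\rm band}\log\mu$. So $p_\theta\le 2\mu^{-2}$ and $q_\theta\ge1/2$. Then $B_\theta\ge (1-2\mu^{-2})\rho-\mu^{-1}-\mu^{-3}\ge\rho/2$ uniformly, so the bias is strictly positive and of order one.

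Because $\lambda=k\mu\ge\kappa\mu$ and $\sigma_\theta^2\ge q_\theta s_{2,\mu}^2\ge c_s/(2\mu^2)$, we get $J_\theta\ge c\kappa/\mu$. Integrating, $\dot k\ge \rho c\kappa/(2\mu^3)=c_L(\kappa)/\mu^3$. Since $B_\theta>0$ and $\sigma_\theta^2>0$ everywhere, $\dot\mu<0$ strictly.

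\textbf{Main point of care.} The only delicate step is uniformity at the ends of the task range: at $\alpha=\rho$ in the upper regime and at $\alpha=\ell$ in the lower regime, the exponent $\mu(k+\alpha-1)$ is smallest in magnitude. This is exactly where the logarithmic offset $C_{\rm band}\log\mu/\mu$ with $C_{\rm band}\ge2$ is needed. It makes $q_\theta$ (respectively $p_\theta$) at most $O(\mu^{-2})$ there, which is small enough not to spoil the $\mp\mu^{-1}$ sign of the bias. I would fix $\mu_U$ and $\mu_L(\kappa)$ as the maximum of the thresholds needed for these exponential and $O(e^{-\rho\mu})$ comparisons.
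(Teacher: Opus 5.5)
Your proposal is correct and follows the paper's proof essentially step by step. The odds identity makes the wrong-class mass $O(\mu^{-C_{\rm band}})\le O(\mu^{-2})$ uniformly in $\alpha$. The bias expansion then gives a uniform sign: $B_\theta\le -c/\mu$ in the upper regime and $B_\theta\ge c_\rho$ in the lower regime. The bound $J_\theta\ge\lambda\sigma_\theta^2\gtrsim k/\mu$, from the component-variance bounds, yields the $\dot k$ estimates, and the $\dot\mu$ signs and the $C_U/\mu^3$ bound follow from $\sigma_\theta^2\lesssim\mu^{-2}+q_\theta$. Your version only differs in tracking constants explicitly (e.g.\ $q_\theta\le 2\mu^{-2}$ and $B_\theta\in[-2/\mu,-1/(2\mu)]$), which also makes visible that, given $C_{\rm band}\ge2$, your thresholds depend only on $\rho$.
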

\begin{proof}
We use the odds identity
$p_\theta(\alpha)/q_\theta(\alpha)=e^{\mu(k+\alpha-1)}(1-e^{-\mu\alpha})/(1-e^{-\mu(1-\alpha)})$,
where the second factor is uniformly bounded above and below on $\alpha\in[\rho,\ell]$ for all sufficiently large $\mu$.

First assume $k\ge U_{\rm band}(\mu)$. Then $\mu(k+\alpha-1)\ge C_{\rm band}\log\mu$ and hence
\begin{equation}
\label{eq:upper-outer-q-small-short}
    q_\theta(\alpha)\lesssim \mu^{-C_{\rm band}}
\end{equation}
uniformly in $\alpha$. By the bias expansion, $B_\theta(\alpha)=q_\theta(\alpha)(1-\alpha)-\dfrac1\mu+O(e^{-\rho\mu}),$ so, since $C_{\rm band}\ge2$,
\begin{equation}
\label{eq:upper-outer-bias-short}
    B_\theta(\alpha)\le -\dfrac{c}{\mu},
    \qquad
    |B_\theta(\alpha)|\lesssim \dfrac1\mu.
\end{equation}
Also $p_\theta\ge1/2$ for large $\mu$, and the component-variance bounds imply
$\sigma_\theta^2(\alpha)\ge p_\theta(\alpha)s_{1,\mu}^2(\alpha)\gtrsim\mu^{-2}$.
Since $\lambda=k\mu\ge \ell\mu$, we have
\begin{equation}
\label{eq:upper-outer-J-lower-short}
    J_\theta(\alpha)
    =
    \mu p_\theta q_\theta{d_\mu}+\lambda \sigma_\theta^2(\alpha)
    \ge
    \lambda \sigma_\theta^2(\alpha)
    \gtrsim
    \dfrac1\mu.
\end{equation}
Combining \eqref{eq:upper-outer-bias-short} and \eqref{eq:upper-outer-J-lower-short} gives
$\int_\rho^\ell B_\theta(\alpha)J_\theta(\alpha)\,d\alpha\le-c/\mu^2$.
Therefore, by \eqref{eq:K-ratio-identity}, $\dot k\le-\frac{c_U}{\mu^4}$ after naming the constant.

Furthermore, \eqref{eq:upper-outer-bias-short} gives $\partial_\mu \mathcal R_\rho<0$, and hence $\dot\mu>0$. For the upper bound, using \eqref{eq:upper-outer-q-small-short} and the mixture variance identity,
$\sigma_\theta^2(\alpha)\lesssim\mu^{-2}+q_\theta(\alpha)\lesssim\mu^{-2}$.
Thus $0<\dot\mu\lesssim\int_\rho^\ell |B_\theta(\alpha)|\sigma_\theta^2(\alpha)\,d\alpha\lesssim\mu^{-3}$.
This proves the upper outer estimates.

Now fix $\kappa>0$ and assume $\kappa\le k<L_{\rm band}(\mu)$. Then $\mu(k+\alpha-1) \le -C_{\rm band}\log\mu, \ \alpha\in[\rho,\ell],$ and hence
$p_\theta(\alpha)\lesssim \mu^{-C_{\rm band}}$ and
$q_\theta(\alpha)=1+O(\mu^{-C_{\rm band}})$
uniformly in $\alpha$. The bias expansion gives $B_\theta(\alpha) = (1-\alpha)-\frac1\mu+O(\mu^{-C_{\rm band}}),$ so, since $1-\alpha\ge\rho$,
\begin{equation}
\label{eq:lower-outer-bias-short}
    B_\theta(\alpha)\ge c_\rho>0
\end{equation}
for all sufficiently large $\mu$.

Also $q_\theta\ge1/2$ for large $\mu$, and therefore
$\sigma_\theta^2(\alpha)\ge q_\theta(\alpha)s_{2,\mu}^2(\alpha)\gtrsim\mu^{-2}$.
Since $\lambda=k\mu\ge\kappa\mu$,
$J_\theta(\alpha) \ge \lambda \sigma_\theta^2(\alpha) \gtrsim_\kappa \dfrac1\mu.$
Thus $\int_\rho^\ell B_\theta(\alpha)J_\theta(\alpha)\,d\alpha\gtrsim_\kappa\mu^{-1}$.
Using \eqref{eq:K-ratio-identity}, we obtain $\dot k\ge\frac{c_L(\kappa)}{\mu^3}$ after naming the constant. Finally, \eqref{eq:lower-outer-bias-short} gives $\partial_\mu \mathcal R_\rho>0$, and hence $ \dot\mu=-\partial_\mu \mathcal R_\rho<0.$ This proves the lower outer estimates.
\end{proof}
\begin{corollary}[Upper moving-boundary correction]
\label{cor:upper-moving-boundary-correction}
Assume $C_{\rm band}\ge2$, and let $c_U,C_U,\mu_U$ be the constants from Proposition~\ref{prop:quant-outer}. There exist constants $a_U,\mu_{U,\partial}>0$, depending only on $\rho$ and $C_{\rm band}$, such that, whenever
$\mu\ge\mu_{U,\partial}, \ k\ge U_{\rm band}(\mu),$
one has
$\dot k-U_{\rm band}'(\mu)\dot\mu \le -\dfrac{a_U}{\mu^4},$
and
$0<\dot\mu\le\dfrac{C_U}{\mu^3}.$
\end{corollary}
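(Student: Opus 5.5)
The plan is to combine the two estimates of Proposition~\ref{prop:quant-outer} in the region $k\ge U_{\rm band}(\mu)$, using the explicit form of $U_{\rm band}$. Since $U_{\rm band}(\mu)=\ell+C_{\rm band}\log\mu/\mu$, its derivative is
\[
U_{\rm band}'(\mu)=C_{\rm band}\,\frac{1-\log\mu}{\mu^2}.
\]
For $\mu\ge e$ this is nonpositive and satisfies $|U_{\rm band}'(\mu)|\le C_{\rm band}\log\mu/\mu^2$.

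Take $\mu\ge\mu_U$, so that Proposition~\ref{prop:quant-outer} gives $\dot k\le -c_U/\mu^4$ and $0<\dot\mu\le C_U/\mu^3$. The correction term is then bounded by
\[
\bigl|U_{\rm band}'(\mu)\dot\mu\bigr|\le C_{\rm band}C_U\,\frac{\log\mu}{\mu^5},
\]
which is $o(\mu^{-4})$. Choose $\mu_{U,\partial}\ge\max\{\mu_U,e\}$ large enough that $C_{\rm band}C_U\log\mu/\mu\le c_U/2$ for all $\mu\ge\mu_{U,\partial}$. This yields $\dot k-U_{\rm band}'(\mu)\dot\mu\le -c_U/(2\mu^4)$, so one may set $a_U:=c_U/2$. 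The bound $0<\dot\mu\le C_U/\mu^3$ is inherited directly from the proposition.

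The argument needs one caveat. Since $U_{\rm band}'\le0$ and $\dot\mu>0$, the product $-U_{\rm band}'\dot\mu$ is nonnegative and works against the inequality. So it cannot simply be dropped by a sign argument; it has to be absorbed through its size. This absorption is the only real step: it relies on the radial speed being $O(\mu^{-3})$, which makes the boundary motion $O(\log\mu/\mu^5)$, strictly lower order than the $\mu^{-4}$ ratio drift. The constants depend only on $\rho$ and $C_{\rm band}$, because $c_U$, $C_U$ and $\mu_U$ do.
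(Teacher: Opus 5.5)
Your proof is correct and follows essentially the same route as the paper: invoke Proposition~\ref{prop:quant-outer} for $\dot k\le -c_U/\mu^4$ and $0<\dot\mu\le C_U/\mu^3$, bound $|U_{\rm band}'(\mu)|\lesssim \log\mu/\mu^2$, and absorb the resulting $O(\log\mu/\mu^5)$ correction to obtain $a_U=c_U/2$. Your remark that $-U_{\rm band}'(\mu)\dot\mu\ge0$ works against the inequality, and so must be absorbed by size rather than dropped by sign, is accurate; it explains why the paper's proof also bounds this term in absolute value.
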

\begin{proof}
By Proposition~\ref{prop:quant-outer}, in the region $k\ge U_{\rm band}(\mu)$, $\dot k\le-\frac{c_U}{\mu^4}, \ 0<\dot\mu\le\frac{C_U}{\mu^3}.$ Since $U_{\rm band}'(\mu) = C_{\rm band}\dfrac{1-\log\mu}{\mu^2},$ we have $|U_{\rm band}'(\mu)| \lesssim \frac{\log\mu}{\mu^2}.$ Therefore
$\dot k-U_{\rm band}'(\mu)\dot\mu
\le-c_U/\mu^4+C(\log\mu)/\mu^5\le-c_U/(2\mu^4)$
for all sufficiently large $\mu$. Taking $a_U:=c_U/2$ and increasing the threshold to $\mu_{U,\partial}$ proves the claim.
\end{proof}
\begin{proposition}[Stopped finite-time entry into the dynamic band]
\label{prop:finite-entry}
Assume $C_{\rm band}\ge2$.
\begin{enumerate}[label=\textnormal{(\roman*)}]
\item There exists $\mu_+>1$, depending only on $\rho$ and $C_{\rm band}$, such that if
    $k(0)>U_{\rm band}(\mu(0)), \ \mu(0)\ge\mu_+,$
then
    \begin{equation*}
        \tau_{\rm band}^{+}
        :=
        \inf\left\{
            t\ge0:
            k(t)\le U_{\rm band}(\mu(t))
        \right\}
        <\infty.
    \end{equation*}
\item For every fixed $\kappa>0$, there exists $\mu_-(\kappa)>1$, depending only on $\rho$, $C_{\rm band}$, and $\kappa$, such that, if
    $\kappa\le k(0)<L_{\rm band}(\mu(0)), \ \mu(0)>\mu_-(\kappa),$
and
    \begin{equation*}
        \tau_{\rm band}^{-}
        :=
        \inf\left\{
            t\ge0:
            k(t)\ge L_{\rm band}(\mu(t))
        \right\},
        \qquad
        \tau_{{\rm band},-}^{\rm exit}
        :=
        \inf\left\{
            t\ge0:
            \mu(t)\le\mu_-(\kappa)
        \right\},
    \end{equation*}
then
    \begin{equation}
    \label{eq:lower-outer-entry-time}
        \tau_{\rm band}^{-}\wedge\tau_{{\rm band},-}^{\rm exit}
        \le
        \dfrac{\mu(0)^3}{c_L(\kappa)}
        \left\{
            L_{\rm band}(\mu(0))-k(0)
        \right\}.
    \end{equation}
In particular, if $\tau_{{\rm band},-}^{\rm exit}>\tau_{\rm band}^{-}$, then $\tau_{\rm band}^{-}<\infty$ and the same bound holds with $\tau_{\rm band}^{-}\wedge\tau_{{\rm band},-}^{\rm exit}$ replaced by $\tau_{\rm band}^{-}$.
\end{enumerate}
\end{proposition}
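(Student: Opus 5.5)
The plan is to apply Lemma~\ref{lem:abstract-finite-entry} to a gap function built from the moving boundary, using the drift estimates of Proposition~\ref{prop:quant-outer} and Corollary~\ref{cor:upper-moving-boundary-correction}. Global existence of the trajectory comes from Lemma~\ref{lem:global-well-posedness}.

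\emph{Part (i).} Set $e(t):=k(t)-U_{\rm band}(\mu(t))$, so $e(0)>0$ and $\tau_{\rm band}^+=\tau_e$. Choose $\mu_+\ge\mu_{U,\partial}$, and large enough that Proposition~\ref{prop:quant-outer} applies.

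\begin{enumerate}[label=\arabic*.]
\item While $e(t)>0$, the radial drift is positive by Corollary~\ref{cor:upper-moving-boundary-correction}. Hence $\mu$ is nondecreasing up to $\tau_e$ and stays $\ge\mu_+$, so the hypotheses remain in force along the whole interval. No stopping time is needed in this part.
\item On the same interval, $\dot e=\dot k-U_{\rm band}'(\mu)\dot\mu\le-a_U/\mu^4$ and $\dot\mu\le C_U/\mu^3$.
\item Lemma~\ref{lem:abstract-finite-entry}(ii), with $p=4$, $a=a_U$, $A=C_U$ and $T=\infty$, then gives $\tau_e<\infty$.
\end{enumerate}

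The one subtlety is that $\mu$ increases, so part (i) of the lemma does not apply. The logarithmic bound from part (ii) is exactly what handles this.

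\emph{Part (ii).} Set $e(t):=L_{\rm band}(\mu(t))-k(t)>0$. Choose $\mu_-(\kappa)\ge\mu_L(\kappa)$, also large enough that $|L_{\rm band}'(\mu)|\lesssim\log\mu/\mu^2$ is small.

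\begin{enumerate}[label=\arabic*.]
\item On $[0,\tau_{\rm band}^-\wedge\tau_{{\rm band},-}^{\rm exit})$ we have $\mu>\mu_-(\kappa)$ and $k<L_{\rm band}(\mu)$.
\item We also need $k\ge\kappa$ on this interval. It holds initially, and $\dot k\ge c_L/\mu^3>0$ while the other conditions hold. A first-exit argument, in the spirit of Lemma~\ref{lem:moving-interval-invariance}, therefore shows $k$ cannot drop below $\kappa$ first.
\item Proposition~\ref{prop:quant-outer} then gives $\dot k\ge c_L(\kappa)/\mu^3$ and $\dot\mu<0$. So $\mu(t)\le\mu(0)$, and $\dot k\ge c_L/\mu(0)^3$.
\item Since $\dot\mu<0$, the term $L'_{\rm band}(\mu)\dot\mu$ has the sign of $-L'_{\rm band}$, and $L'_{\rm band}(\mu)=C_{\rm band}(\log\mu-1)/\mu^2>0$ for $\mu>e$. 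Hence $\frac{d}{dt}L_{\rm band}(\mu(t))=L'_{\rm band}\dot\mu\le0$.
\item Combining, $\dot e\le-\dot k\le-c_L(\kappa)/\mu(0)^3$.
\item Lemma~\ref{lem:abstract-finite-entry}(i), with $p=3$ and $T=\tau_{{\rm band},-}^{\rm exit}$, gives $e(t)\le e(0)-c_L t/\mu(0)^3$, hence the stated bound \eqref{eq:lower-outer-entry-time} on $\tau_{\rm band}^-\wedge\tau_{{\rm band},-}^{\rm exit}$.
\item The final clause follows immediately: if $\tau_{{\rm band},-}^{\rm exit}>\tau_{\rm band}^-$, the minimum equals $\tau_{\rm band}^-$.
\end{enumerate}

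I expect the main obstacle to be the sign bookkeeping for the moving boundary in (ii), namely checking that $L_{\rm band}$ decreasing along the flow only helps. If the sign were unfavorable, I would instead bound $|L'_{\rm band}\dot\mu|$ by $C\log\mu/\mu^2\cdot|\dot\mu|$, with $|\dot\mu|\le\sqrt2/4$. That estimate is not small compared with $\mu^{-3}$, so it fails. The monotonicity argument above is the one to pursue, using $\mu>e$ to fix the sign of $L_{\rm band}'$.
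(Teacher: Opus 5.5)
Your proposal is correct and follows essentially the same route as the paper. Part (i) applies Lemma~\ref{lem:abstract-finite-entry}(ii) to $k-U_{\rm band}(\mu)$, using Corollary~\ref{cor:upper-moving-boundary-correction} and the fact that $\mu$ increases. Part (ii) applies Lemma~\ref{lem:abstract-finite-entry}(i) to $L_{\rm band}(\mu)-k$, after ruling out an exit below $\kappa$ (the paper's $\tau_\kappa$) and combining $L_{\rm band}'>0$ with $\dot\mu<0$. The only detail to make explicit is choosing $\mu_-(\kappa)\ge\max\{\mu_L(\kappa),e\}$, since your argument relies on the sign of $L_{\rm band}'$ rather than on $|L_{\rm band}'|$ being small.
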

\begin{proof}
\textbf{Upper outer region.} Choose $\mu_+$ larger than the threshold in Corollary~\ref{cor:upper-moving-boundary-correction}. Set $g_U(t):=k(t)-U_{\rm band}(\mu(t)).$ Before $\tau_{\rm band}^{+}$, one has $g_U(t)>0$, hence $k(t)>U_{\rm band}(\mu(t)).$ Corollary~\ref{cor:upper-moving-boundary-correction} gives
\begin{equation*}
    \dot g_U(t)
    =
    \dot k(t)-U_{\rm band}'(\mu(t))\dot\mu(t)
    \le
    -\dfrac{a_U}{\mu(t)^4},
    \qquad
    0<\dot\mu(t)\le\dfrac{C_U}{\mu(t)^3}.
\end{equation*}
In particular, $\mu(t)\ge\mu(0)\ge\mu_+$ before $\tau_{\rm band}^{+}$, so the estimates remain valid throughout the pre-entry interval. Lemma~\ref{lem:abstract-finite-entry}\textnormal{(ii)} therefore implies that $g_U$ reaches zero in finite time. Hence $\tau_{\rm band}^{+}<\infty$. \textbf{Lower outer region.} Fix $\kappa>0$. Choose $\mu_-(\kappa)>e$ sufficiently large that the lower estimates in Proposition~\ref{prop:quant-outer} hold whenever $\mu\ge\mu_-(\kappa), \ \kappa\le k<L_{\rm band}(\mu).$ Set
$g_L(t):=L_{\rm band}(\mu(t))-k(t), \ \tau_\kappa:=\inf\{t\ge0:k(t)<\kappa\}.$
Before $\tau_{\rm band}^{-}\wedge\tau_{{\rm band},-}^{\rm exit}\wedge\tau_\kappa,$ one has $\kappa\le k(t)<L_{\rm band}(\mu(t)), \ \mu(t)>\mu_-(\kappa).$ Therefore Proposition~\ref{prop:quant-outer} gives $\dot k(t)\ge\frac{c_L(\kappa)}{\mu(t)^3}, \ \dot\mu(t)<0.$ In particular, $k$ is strictly increasing on this stopped interval, so $\tau_\kappa$ cannot be the first stopping time.

Since $L_{\rm band}'(\mu) = C_{\rm band}\dfrac{\log\mu-1}{\mu^2}>0$ for $\mu>\mu_-(\kappa)$, we have
$\dot g_L(t)=L_{\rm band}'(\mu(t))\dot\mu(t)-\dot k(t)\le-c_L(\kappa)/\mu(t)^3$.
Moreover, $\dot\mu(t)<0$, so $\mu(t)\le\mu(0)$ on the stopped interval. Lemma~\ref{lem:abstract-finite-entry}\textnormal{(i)} gives
$\tau_{\rm band}^{-}\wedge\tau_{{\rm band},-}^{\rm exit}\wedge\tau_\kappa
\le\mu(0)^3g_L(0)/c_L(\kappa)$.
Since $\tau_\kappa$ cannot occur before $\tau_{\rm band}^{-}\wedge\tau_{{\rm band},-}^{\rm exit}$, this implies
$\tau_{\rm band}^{-}\wedge\tau_{{\rm band},-}^{\rm exit}
\le\mu(0)^3g_L(0)/c_L(\kappa)$.
Finally, $g_L(0)=L_{\rm band}(\mu(0))-k(0)$, so this is \eqref{eq:lower-outer-entry-time}. The final assertion is immediate.
\end{proof}
\subsubsection{Transport through the dynamic band}
\label{subsec:transport-band}
This subsection proves that a trajectory in the lower portion of the dynamic band is transported successively through the lower boundary layer and the fixed interior strip, after which it is captured by a forward-invariant upper rail tube.
\textbf{Lower boundary-layer escape}
\label{subsubsec:lower-layer}
The finite-time entry result places the trajectory inside the dynamic band. Entry from below gives only $k(t)\ge L_{\rm band}(\mu(t))$, so the trajectory may first lie in the lower layer $\mathcal L_\eta(\mu)$. The goal of this subsection is to prove finite-time entrance into the fixed interior strip $\mathcal I_\eta$.
\begin{proposition}[Lower boundary-layer drift estimates]
\label{prop:lower-boundary-layer-escape}
Assume $0<\eta<\dfrac{\Delta}{4}.$ There exist constants $c_{\rm low,\eta},C_{\rm low,\eta},\mu_{\rm low,\eta}>0$, depending only on $\rho,\eta,C_{\rm band}$, such that, whenever
$\mu\ge\mu_{\rm low,\eta}, \ k\in[L_{\rm band}(\mu),\rho+\eta],$
one has
$\dot k \ge \dfrac{c_{\rm low,\eta}}{\mu^3},$
and
\begin{equation}
\label{eq:lower-layer-mu-control}
    \dot\mu<0,
    \qquad
    |\dot\mu|
    \le
    \dfrac{C_{\rm low,\eta}}{\mu}.
\end{equation}
Moreover,
\begin{equation}
\label{eq:lower-layer-relative-loss}
    \dfrac{(-\dot\mu)_+}{\mu}
    \le
    C_{\rm low,\eta}\dot k.
\end{equation}
\end{proposition}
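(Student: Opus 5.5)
The plan is to work directly from the ratio identity \eqref{eq:K-ratio-identity}, $\dot k=(\Delta\mu^2)^{-1}\int_\rho^\ell B_\theta J_\theta\,d\alpha$, together with $\dot\mu=-\Delta^{-1}\int_\rho^\ell B_\theta\sigma_\theta^2\,d\alpha$, the bias expansion of Corollary~\ref{cor:bias-expansion}, and the asymptotic odds identity \eqref{eq:mixture-odds-asymptotic}.

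\textbf{Geometry of the layer.} For $k\in[L_{\rm band}(\mu),\rho+\eta]$, the task-space transition $\alpha_c=1-k$ lies in $[\ell-\eta,\ \ell+C_{\rm band}\log\mu/\mu]$, and $k\ge\rho/2$ for large $\mu$. I split $[\rho,\ell]$ into three sets:
\begin{itemize}
\item a \emph{bulk} $\mathcal B:=[\rho,\ell-2\eta]$, which has fixed length $\Delta-2\eta>\Delta/2$;
\item a \emph{positive-bias} set where $B_\theta\ge0$;
\item a \emph{negative-bias} set $\mathcal N:=\{B_\theta<0\}$.
\end{itemize}
On $\mathcal B$ we have $\mu(k+\alpha-1)\le-\mu\eta$, so $p_\theta$ is exponentially small and $q_\theta=1-O(e^{-c\mu})$. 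The bias expansion then gives $B_\theta\ge\rho/2$. The variance identity \eqref{eq:mixture-variance} with \eqref{eq:component-variance-bounds} gives $\sigma_\theta^2\ge q_\theta s_{2,\mu}^2\gtrsim\mu^{-2}$. On $\mathcal N$, the bias expansion forces $x:=\mu q_\theta(1-\alpha)<1+o(1)$, so $q_\theta\lesssim1/\mu$. Since $q_\theta\approx(1+e^{\mu(\alpha-\alpha_c)})^{-1}$ decays exponentially in $\mu(\alpha-\alpha_c)$, the set where $x\in[\epsilon,1]$ has length $O(1/\mu)$, and $\int_{\mathcal N}q_\theta\,d\alpha\lesssim\mu^{-2}$.

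\textbf{Signs of $\dot k$ and $\dot\mu$.} The bulk contributes $\int_{\mathcal B}B_\theta J_\theta\ge\int_{\mathcal B}B_\theta\lambda\sigma_\theta^2\gtrsim\mu^{-1}$, because $\lambda=k\mu\ge\rho\mu/2$. Elsewhere on $\{B_\theta\ge0\}$ the integrand is nonnegative. On $\mathcal N$ I expand
\[
B_\theta J_\theta=(q_\theta(1-\alpha)-\mu^{-1})\bigl(\mu p_\theta q_\theta d_\mu+\lambda\sigma_\theta^2\bigr)+O(e^{-\rho\mu}).
\]
The leading piece is $\approx x(x-1)/\mu$, which is $\ge-1/(4\mu)$ on the $O(1/\mu)$ window and $\approx-x/\mu$ beyond it. The $\lambda\sigma_\theta^2$ piece uses $\sigma_\theta^2\lesssim\mu^{-2}+q_\theta$. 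Both give $\int_{\mathcal N}B_\theta J_\theta\ge-C\mu^{-2}$. Hence $\int B_\theta J_\theta\gtrsim\mu^{-1}$, that is, $\dot k\ge c_{\rm low,\eta}\mu^{-3}$.

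The same splitting handles $\dot\mu$. The bulk gives $\int_{\mathcal B}B_\theta\sigma_\theta^2\gtrsim\mu^{-2}$. On $\mathcal N$, $|B_\theta\sigma_\theta^2|\lesssim\mu^{-1}(\mu^{-2}+q_\theta)$, which integrates to $O(\mu^{-3})$. So $\dot\mu<0$. For the crude bound, $|\dot\mu|\lesssim\int\sigma_\theta^2\lesssim\mu^{-2}+\int p_\theta q_\theta\,d\alpha\lesssim\mu^{-1}$, since $p_\theta q_\theta$ is concentrated on an $O(1/\mu)$ window around $\alpha_c$. This gives \eqref{eq:lower-layer-mu-control}.

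\textbf{Relative loss \eqref{eq:lower-layer-relative-loss}.} This is the step I expect to be the main obstacle. The crude bounds only give $|\dot\mu|/\mu\lesssim\mu^{-2}$ against $\dot k\gtrsim\mu^{-3}$, which loses a factor of $\mu$. The large part of $-\dot\mu$ comes from the transition window, where $\sigma_\theta^2\approx p_\theta q_\theta d_\mu^2$ is of order one. There, however, $J_\theta\ge\lambda\sigma_\theta^2$ with the same weight. So I will compare the integrands pointwise on $\{B_\theta\ge0\}$:
\[
B_\theta J_\theta\ge\lambda B_\theta\sigma_\theta^2\ge\tfrac{\rho}{2}\,\mu\,(B_\theta\sigma_\theta^2)_+ .
\]
Combined with the $O(\mu^{-2})$ negative-set error, this gives $\Delta\mu^2\dot k\ge c\mu\,\Delta(-\dot\mu)-C\mu^{-2}$, i.e.
\[
\dot k\ge c\,(-\dot\mu)/\mu-C\mu^{-4}.
\]
Averaging this with $\dot k\ge c_{\rm low,\eta}\mu^{-3}$ absorbs the $\mu^{-4}$ error for large $\mu$. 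This yields $(-\dot\mu)_+/\mu\le C_{\rm low,\eta}\dot k$.
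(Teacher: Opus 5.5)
Your proposal is correct and follows essentially the same route as the paper: it uses the same split into the bulk $[\rho,\ell-2\eta]$ (which supplies $\int B_\theta J_\theta\gtrsim\mu^{-1}$ and $\int B_\theta\sigma_\theta^2\gtrsim\mu^{-2}$), the nonnegative remainder of $\{B_\theta\ge0\}$, and the negative-bias set $\{B_\theta<0\}\subseteq\{q_\theta\lesssim\mu^{-1}\}$ with $\int q_\theta\lesssim\mu^{-2}$, and the same key pointwise comparison $J_\theta\ge\lambda\sigma_\theta^2\ge(\rho\mu/2)\sigma_\theta^2$ on $\{B_\theta\ge0\}$ for the relative-loss bound. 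Your $x(x-1)/\mu$ expansion on the negative set and your final averaging step are cosmetic variants of the paper's cruder bound $|B_\theta|J_\theta\lesssim q_\theta+\mu^{-2}$ there and of its observation $K_+-K_-\ge K_+/2$.
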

\begin{proof}
All implicit constants may depend on $\rho,\eta,C_{\rm band}$. Increase $\mu_{\rm low,\eta}$ so that, for $\mu\ge\mu_{\rm low,\eta}$, $L_{\rm band}(\mu)\ge\frac{\rho}{2}, \ \mu_{\rm low,\eta}>e.$ Thus
\begin{equation}
\label{eq:lower-layer-k-range-short}
    \dfrac{\rho}{2}\le k\le\rho+\eta.
\end{equation}
Set $z:=\mu(k+\alpha-1).$ The odds identity gives
\begin{equation}
\label{eq:lower-layer-logistic-comparison-short}
    q_\theta(\alpha)\asymp\dfrac1{1+e^z},
    \qquad
    p_\theta(\alpha)q_\theta(\alpha)
    \lesssim
    \dfrac{e^z}{(1+e^z)^2},
\end{equation}
uniformly for $\alpha\in[\rho,\ell]$. Hence, by the change of variables $z=\mu(k+\alpha-1)$,
\begin{equation}
\label{eq:lower-layer-pq-integral-short}
    \int_\rho^\ell p_\theta(\alpha)q_\theta(\alpha)\,d\alpha
    \lesssim
    \dfrac1\mu.
\end{equation}
The uniform component asymptotics imply
\begin{equation}
\label{eq:lower-layer-basic-bounds-short}
\begin{aligned}
    B_\theta(\alpha)
    &=
    q_\theta(\alpha)(1-\alpha)-\dfrac1\mu+O(e^{-\rho\mu}),\\
    |B_\theta(\alpha)|
    &\lesssim1,\\
    \sigma_\theta^2(\alpha)
    &\lesssim
    \dfrac1{\mu^2}+p_\theta(\alpha)q_\theta(\alpha),\\
    0\le J_\theta(\alpha)
    &\lesssim
    \mu p_\theta(\alpha)q_\theta(\alpha)+\dfrac1\mu.
\end{aligned}
\end{equation}
Here the last bound follows from $J_\theta = \mu p_\theta q_\theta{d_\mu}+k\mu \sigma_\theta^2$ and \eqref{eq:lower-layer-k-range-short}.

Let $J_\eta:=[\rho,\ell-2\eta].$ Since $\eta<\Delta/4$, this interval has positive length. For $\alpha\in J_\eta$, $k+\alpha-1 \le (\rho+\eta)+(\ell-2\eta)-1 = -\eta.$ Thus $p_\theta(\alpha)\lesssim e^{-\eta\mu}$ and $q_\theta(\alpha)=1+O(e^{-\eta\mu})$ on $J_\eta$. Since $1-\alpha\ge\rho+2\eta$, the bias expansion gives $B_\theta(\alpha)\gtrsim1.$ Also $\sigma_\theta^2(\alpha)\gtrsim\mu^{-2}$ on $J_\eta$. Using $\lambda=k\mu\ge\rho\mu/2$ and $\Cov_\theta(X,Y\mid\alpha)\le0$,
$J_\theta(\alpha)=-H_\theta(\alpha)\ge\lambda \sigma_\theta^2(\alpha)\gtrsim\mu^{-1}$.
Consequently,
\begin{equation}
\label{eq:lower-layer-positive-core-short}
    \int_{J_\eta}
    B_\theta(\alpha)J_\theta(\alpha)\,d\alpha
    \gtrsim
    \dfrac1\mu.
\end{equation}
Now define
$E_+:=\{\alpha:B_\theta(\alpha)\ge0\}, \ E_-:=\{\alpha:B_\theta(\alpha)<0\}.$
If $\alpha\in E_-$, then the bias expansion and $1-\alpha\ge\rho$ imply
\begin{equation}
\label{eq:lower-layer-negative-tail-location-short}
    q_\theta(\alpha)\le\dfrac{C}{\mu},
    \qquad
    |B_\theta(\alpha)|\lesssim\dfrac1\mu.
\end{equation}
By \eqref{eq:lower-layer-logistic-comparison-short}, the set $\{q_\theta\le C/\mu\}$ is contained in $\{z\ge\log\mu-O(1)\}$, and therefore
\begin{equation}
\label{eq:lower-layer-tail-q-short}
    \int_{\{q_\theta\le C/\mu\}}q_\theta(\alpha)\,d\alpha
    \lesssim
    \dfrac1{\mu^2}.
\end{equation}
Combining \eqref{eq:lower-layer-basic-bounds-short}, \eqref{eq:lower-layer-negative-tail-location-short}, and \eqref{eq:lower-layer-tail-q-short} gives
\begin{equation}
\label{eq:lower-layer-negative-tail-short}
    \int_{E_-}
    |B_\theta(\alpha)|J_\theta(\alpha)\,d\alpha
    \lesssim
    \dfrac1{\mu^2},
\end{equation}
and
\begin{equation}
\label{eq:lower-layer-negative-radial-tail-short}
    \int_{E_-}
    |B_\theta(\alpha)|\sigma_\theta^2(\alpha)\,d\alpha
    \lesssim
    \dfrac1{\mu^3}.
\end{equation}
Since $J_\theta\ge0$, the nonnegative-bias region outside $J_\eta$ contributes nonnegatively to $\int B_\theta J_\theta$. Hence \eqref{eq:lower-layer-positive-core-short} and \eqref{eq:lower-layer-negative-tail-short} imply
$\int_\rho^\ell B_\theta(\alpha)J_\theta(\alpha)\,d\alpha\ge c/\mu-C/\mu^2\ge c/\mu$
after increasing $\mu_{\rm low,\eta}$. By \eqref{eq:K-ratio-identity}, $\dot k \ge \frac{c_{\rm low,\eta}}{\mu^3}.$ It remains to control $\dot\mu$ and the relative radial loss. Set
\begin{align*}
    A_+:=\int_{E_+}B_\theta(\alpha)\sigma_\theta^2(\alpha)\,d\alpha,
    \qquad
    A_-:=\int_{E_-}|B_\theta(\alpha)|\sigma_\theta^2(\alpha)\,d\alpha,\\
    K_+:=\int_{E_+}B_\theta(\alpha)J_\theta(\alpha)\,d\alpha,
    \qquad
    K_-:=\int_{E_-}|B_\theta(\alpha)|J_\theta(\alpha)\,d\alpha.
\end{align*}
The estimates above give $K_+\gtrsim\mu^{-1}$,
$K_-\lesssim\mu^{-2}$, and $K_+-K_-\ge K_+/2$ for sufficiently
large $\mu$. Moreover, on $E_+$,
$J_\theta\ge k\mu \sigma_\theta^2\ge(\rho\mu/2)\sigma_\theta^2$, so
$K_+\ge(\rho\mu/2)A_+$. Since
$-\dot\mu=(A_+-A_-)/\Delta$ and $\dot k=(K_+-K_-)/(\Delta\mu^2)$, we get
$(-\dot\mu)_+/\mu\le A_+/(\Delta\mu)\le
2K_+/(\Delta\rho\mu^2)\le(4/\rho)\dot k$.
After increasing $C_{\rm low,\eta}$, this proves \eqref{eq:lower-layer-relative-loss}.

Finally,
$\int_{J_\eta}B_\theta(\alpha)\sigma_\theta^2(\alpha)\,d\alpha
\gtrsim\mu^{-2}$,
while the negative radial tail is $O(\mu^{-3})$ by \eqref{eq:lower-layer-negative-radial-tail-short}. Hence $\int_\rho^\ell B_\theta(\alpha)\sigma_\theta^2(\alpha)\,d\alpha > 0$ for sufficiently large $\mu$, and therefore
$\dot\mu=-\Delta^{-1}\int_\rho^\ell
B_\theta(\alpha)\sigma_\theta^2(\alpha)\,d\alpha<0$.
The size bound follows from \eqref{eq:lower-layer-basic-bounds-short} and \eqref{eq:lower-layer-pq-integral-short}:
\begin{equation*}
    |\dot\mu|
    \lesssim
    \int_\rho^\ell
    |B_\theta(\alpha)|
    \left(
        \dfrac1{\mu^2}
        +
        p_\theta(\alpha)q_\theta(\alpha)
    \right)d\alpha
    \lesssim
    \dfrac1\mu.
\end{equation*}
This proves \eqref{eq:lower-layer-mu-control}.
\end{proof}
\begin{corollary}[Finite-time entrance into the interior strip]
\label{cor:lower-layer-to-interior}
Assume $0<\eta<\dfrac{\Delta}{4}.$ Let $t_0$ satisfy
$\mu(t_0)>\mu_{\rm low,\eta}, \ k(t_0)\in[L_{\rm band}(\mu(t_0)),\rho+\eta].$
Define
\begin{equation*}
    \tau_{\rm int}
    :=
    \inf\{t\ge t_0:k(t)\ge\rho+\eta\},
    \qquad
    \tau_{\rm low}
    :=
    \inf\{t\ge t_0:\mu(t)\le\mu_{\rm low,\eta}\}.
\end{equation*}
Then
\begin{equation}
\label{eq:lower-layer-stopped-time}
    \tau_{\rm int}\wedge\tau_{\rm low}-t_0
    \le
    \dfrac{\mu(t_0)^3}{c_{\rm low,\eta}}
    \bigl(\rho+\eta-k(t_0)\bigr).
\end{equation}
In particular, if $\tau_{\rm low}>\tau_{\rm int}$, then $\tau_{\rm int}<\infty$ and
\begin{equation}
\label{eq:lower-layer-to-interior-time-short}
    \tau_{\rm int}-t_0
    \le
    \dfrac{\mu(t_0)^3}{c_{\rm low,\eta}}
    \bigl(\rho+\eta-k(t_0)\bigr).
\end{equation}
\end{corollary}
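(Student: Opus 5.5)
The plan is to mirror the proof of Proposition~\ref{prop:finite-entry}(ii): apply Lemma~\ref{lem:abstract-finite-entry}(i) to the gap $e(t):=(\rho+\eta-k(t))_+$, stopped at $\tau_{\rm low}$. The first task is to check that Proposition~\ref{prop:lower-boundary-layer-escape} applies on the whole stopped interval $[t_0,\tau_{\rm int}\wedge\tau_{\rm low})$. This means confirming that $k(t)$ stays in the layer $[L_{\rm band}(\mu(t)),\rho+\eta]$ and that $\mu(t)>\mu_{\rm low,\eta}$ there. The second condition holds by definition of $\tau_{\rm low}$, and $k(t)<\rho+\eta$ holds before $\tau_{\rm int}$.

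\textbf{Lower edge of the layer.} The delicate point is that the trajectory must not fall back below $L_{\rm band}(\mu(t))$. I will handle this with the barrier $h(t):=k(t)-L_{\rm band}(\mu(t))$ and Lemma~\ref{lem:moving-interval-invariance}, applied to the lower side only; the upper side of the interval is managed by $\tau_{\rm int}$. At a touching point $h=0$ we are in the layer, so Proposition~\ref{prop:lower-boundary-layer-escape} gives $\dot k\ge c_{\rm low,\eta}\mu^{-3}$ and $\dot\mu<0$. Since $L_{\rm band}'(\mu)=C_{\rm band}(\log\mu-1)/\mu^2>0$ for $\mu>e$, the term $-L_{\rm band}'(\mu)\dot\mu$ is nonnegative. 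Hence $\dot h\ge c_{\rm low,\eta}\mu^{-3}>0$, and the barrier is never crossed.

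\textbf{Entry-time bound.} With the layer established, on the stopped interval we have $\dot k\ge c_{\rm low,\eta}/\mu(t)^3$ and $\dot\mu<0$, so $\mu(t)\le\mu(t_0)$. This gives
\[
\dot k(t)\ge \frac{c_{\rm low,\eta}}{\mu(t_0)^3}.
\]
Integrating directly, or invoking Lemma~\ref{lem:abstract-finite-entry}(i) with $p=3$ and $e=\rho+\eta-k$ stopped at $\tau_{\rm low}$, yields
\[
k(t)\ge k(t_0)+\frac{c_{\rm low,\eta}(t-t_0)}{\mu(t_0)^3}.
\]
Since $k<\rho+\eta$ throughout the stopped interval, its length is at most $\mu(t_0)^3(\rho+\eta-k(t_0))/c_{\rm low,\eta}$, which is \eqref{eq:lower-layer-stopped-time}. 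If $\tau_{\rm low}>\tau_{\rm int}$, the minimum equals $\tau_{\rm int}$, which is therefore finite, and \eqref{eq:lower-layer-to-interior-time-short} follows.

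\textbf{Expected obstacle.} The main difficulty is the lower-barrier invariance, specifically the sign of $L_{\rm band}'(\mu)\dot\mu$. It is favorable only because $\mu$ is decreasing in the layer and $L_{\rm band}$ is increasing in $\mu$ beyond $e$. I will therefore ensure $\mu_{\rm low,\eta}>e$, which the proof of Proposition~\ref{prop:lower-boundary-layer-escape} already arranged. A minor technical point is that the stopping at $\tau_{\rm low}$ must be handled as in Lemma~\ref{lem:abstract-finite-entry}, by working on the half-open interval before it.
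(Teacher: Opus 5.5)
Your proposal is correct and follows the paper's proof essentially step for step. You use the same lower barrier $h=k-L_{\rm band}(\mu)$, kept nonnegative because $L_{\rm band}'>0$ and $\dot\mu<0$. You then feed $\dot k\ge c_{\rm low,\eta}/\mu(t)^3$ and $\mu(t)\le\mu(t_0)$ into Lemma~\ref{lem:abstract-finite-entry}(i), or integrate directly, applied to $e=\rho+\eta-k$. Take $e=\rho+\eta-k$ itself rather than its positive part, since that lemma requires a $C^1$ gap; your second formulation already does this.
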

\begin{proof}
If $k(t_0)=\rho+\eta$, then $\tau_{\rm int}=t_0$ and the claim is immediate. Assume $k(t_0)<\rho+\eta$. First we show that the trajectory cannot exit below $L_{\rm band}(\mu)$ before $\tau_{\rm int}\wedge\tau_{\rm low}$. Set $h(t):=k(t)-L_{\rm band}(\mu(t)).$

At any point with $h(t)=0, \ t<\tau_{\rm int}\wedge\tau_{\rm low},$ one has $k(t)=L_{\rm band}(\mu(t))\le\rho+\eta, \ \mu(t)>\mu_{\rm low,\eta}.$ Thus Proposition~\ref{prop:lower-boundary-layer-escape} gives $\dot k(t)\ge\frac{c_{\rm low,\eta}}{\mu(t)^3}, \ \dot\mu(t)<0.$ Since $L_{\rm band}'(\mu)>0$ for $\mu>\mu_{\rm low,\eta}>e$, it follows that $\dot h(t) = \dot k(t)-L_{\rm band}'(\mu(t))\dot\mu(t) > 0.$ Hence $h$ cannot cross from nonnegative to negative before $\tau_{\rm int}\wedge\tau_{\rm low}$. Therefore $k(t)\in[L_{\rm band}(\mu(t)),\rho+\eta)$ for every $t<\tau_{\rm int}\wedge\tau_{\rm low}$.

Now set $g_{\rm low}(t):=\rho+\eta-k(t).$ On $[t_0,\tau_{\rm int}\wedge\tau_{\rm low})$, Proposition~\ref{prop:lower-boundary-layer-escape} gives
$\dot g_{\rm low}(t)=-\dot k(t)\le-c_{\rm low,\eta}/\mu(t)^3$
and $\dot\mu(t)<0$.
Therefore $\mu(t)\le\mu(t_0)$ on this interval. Lemma~\ref{lem:abstract-finite-entry}\textnormal{(i)} gives
$\tau_{\rm int}\wedge\tau_{\rm low}-t_0
\le\mu(t_0)^3g_{\rm low}(t_0)/c_{\rm low,\eta}$.
Since $g_{\rm low}(t_0)=\rho+\eta-k(t_0)$, this proves \eqref{eq:lower-layer-stopped-time}. If $\tau_{\rm low}>\tau_{\rm int}$, then the stopped time equals $\tau_{\rm int}$, giving \eqref{eq:lower-layer-to-interior-time-short}.
\end{proof}
The large-parameter preservation estimate later rules out $\tau_{\rm low}\le\tau_{\rm int}$ under the initialization assumptions of the global dynamics theorem.
\textbf{Interior Laplace asymptotics and the slingshot mechanism}
\label{subsubsec:interior-slingshot}
Once the trajectory reaches the fixed interior strip $\mathcal I_\eta$, the transition point $\alpha_c=1-k$ remains uniformly separated from the endpoints of the task interval. The gradient then admits a uniform Laplace expansion, which shows that $k$ and $\lambda$ increase while $\mu$ decreases. \textbf{Uniform interior Laplace expansion.}
\begin{lemma}[Uniform interior Laplace expansion]
\label{lem:uniform-interior-laplace}
Fix $0<\eta<\dfrac{\Delta}{4}.$ Uniformly for $k\in[\rho+\eta,\ell-\eta],$ as $\mu\to\infty$,
\begin{equation}
\label{eq:interior-laplace-mu-lambda}
    \dot\mu
    =
    -\dfrac{k^3}{2\Delta\mu}
    +
    O_\eta(\mu^{-2}),
    \qquad
    \dot\lambda
    =
    \dfrac{k^2}{2\Delta\mu}
    +
    O_\eta(\mu^{-2}).
\end{equation}
Consequently,
\begin{equation}
\label{eq:interior-laplace-k}
    \dot k
    =
    \dfrac{k^2+k^4}{2\Delta\mu^2}
    +
    O_\eta(\mu^{-3}).
\end{equation}
\end{lemma}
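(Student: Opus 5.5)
The plan is to compute the two components of $-\nabla\mathcal R_\rho$ from Lemma~\ref{lem:main-gradient-identities} with the mixture identities of Lemma~\ref{lem:mixture-representation}, and then evaluate the resulting $\alpha$-integrals by a Laplace-type rescaling around the transition point $\alpha_c=1-k$. We have $\dot\mu=-\Delta^{-1}\int_\rho^\ell B_\theta\sigma_\theta^2\,d\alpha$ and $\dot\lambda=\Delta^{-1}\int_\rho^\ell B_\theta\,p_\theta q_\theta d_\mu\,d\alpha$. Uniformly in $\alpha\in[\rho,\ell]$, Corollary~\ref{cor:bias-expansion} and the asymptotics \eqref{eq:uniform-mean-asymptotics}--\eqref{eq:uniform-separation-asymptotics} give
\[
B_\theta=q_\theta(1-\alpha)-\mu^{-1}+O(e^{-\rho\mu}),\qquad
\sigma_\theta^2=p_\theta q_\theta(1-\alpha)^2+O(\mu^{-2}),\qquad
d_\mu=1-\alpha+O(e^{-\rho\mu}).
\]
By \eqref{eq:mixture-odds-asymptotic}, $p_\theta=\varsigma(z)\{1+O(e^{-\rho\mu})\}$ with $z=\mu(\alpha-\alpha_c)$ and $\varsigma$ the logistic function. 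Hence $q_\theta=1-\varsigma(z)$ up to the same relative error.

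First I bound the lower-order pieces. By the change of variables $z=\mu(\alpha-\alpha_c)$, $\int_\rho^\ell p_\theta q_\theta\,d\alpha\le\mu^{-1}\int_{\R}\varsigma(1-\varsigma)\,dz=O(\mu^{-1})$. Consequently, the $-\mu^{-1}$ part of $B_\theta$ contributes $O(\mu^{-1}\cdot\mu^{-1})=O(\mu^{-2})$ to both integrals, using $\sigma_\theta^2\lesssim p_\theta q_\theta+\mu^{-2}$. The $O(\mu^{-2})$ remainder in $\sigma_\theta^2$ contributes $O(\mu^{-2})$ because $|B_\theta|\lesssim1$. The exponentially small remainders are absorbed as well. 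The leading terms are therefore
\[
\Delta\dot\mu\approx-\int_\rho^\ell(1-\alpha)^3\,p_\theta q_\theta^2\,d\alpha,\qquad
\Delta\dot\lambda\approx\int_\rho^\ell(1-\alpha)^2\,p_\theta q_\theta^2\,d\alpha .
\]

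Next I rescale $\alpha=\alpha_c+z/\mu$. Since $k\in[\rho+\eta,\ell-\eta]$, we have $\alpha_c\in[\rho+\eta,\ell-\eta]$, so the $z$-range is $[\mu(\rho-\alpha_c),\mu(\ell-\alpha_c)]\supset[-\eta\mu,\eta\mu]$. Extending to all of $\R$ therefore costs $O(e^{-\eta\mu})$, because $\varsigma(1-\varsigma)^2\le e^{-|z|}$. Taylor expansion gives $(1-\alpha)^j=k^j+O(|z|/\mu)$, and the correction integrates to $\mu^{-2}\int|z|\varsigma(1-\varsigma)^2\,dz=O(\mu^{-2})$. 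The main integral is exact:
\[
\int_{\R}\varsigma(1-\varsigma)^2\,dz=\int_0^1(1-s)\,ds=\tfrac12,
\]
using $d\varsigma=\varsigma(1-\varsigma)\,dz$. This yields $\dot\mu=-k^3/(2\Delta\mu)+O_\eta(\mu^{-2})$ and $\dot\lambda=k^2/(2\Delta\mu)+O_\eta(\mu^{-2})$. All constants are uniform in $k$ because they depend only on $\eta$, $\rho$, and the lower bound on $k$.

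Finally, $\dot k=(\mu\dot\lambda-\lambda\dot\mu)/\mu^2$. Substituting $\lambda=k\mu$ gives
\[
\dot k=\frac{k^2/(2\Delta)+k^4/(2\Delta)}{\mu^2}+\frac{\mu\,O(\mu^{-2})+k\mu\,O(\mu^{-2})}{\mu^2}
=\frac{k^2+k^4}{2\Delta\mu^2}+O_\eta(\mu^{-3}),
\]
which is \eqref{eq:interior-laplace-k}. The main obstacle is the bookkeeping needed to keep every error term uniform in $k$. In particular, one must ensure that the relative $O(e^{-\rho\mu})$ error in the logistic approximation of $p_\theta$ and $q_\theta$ does not spoil the $O(\mu^{-2})$ accuracy. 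It does not, since it multiplies integrals that are already $O(\mu^{-1})$. The fixed margin $\eta$ handles the endpoint truncation.
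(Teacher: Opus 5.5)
Your proposal is correct and follows essentially the same route as the paper. Both arguments use the Laplace rescaling $z=\mu(\alpha-\alpha_c)$, the logistic approximation of $p_\theta,q_\theta$ with relative error $O(e^{-\rho\mu})$, exponential tail truncation from the margin $\eta$, an $O((1+|z|)/\mu)$ Taylor correction, and the exact integral $\int_{\R}\varsigma(1-\varsigma)^2\,dz=\tfrac12$. The only cosmetic difference is the order of expansion: you first isolate the leading integrands $(1-\alpha)^j p_\theta q_\theta^2$ and then rescale, whereas the paper writes $B_\theta=kQ(z)+O((1+|z|)/\mu)$ and multiplies it directly against the covariance and mixture-variance terms.
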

\begin{proof}
Set $\alpha_c:=1-k$. For $k\in[\rho+\eta,\ell-\eta]$, one has $\alpha_c\in[\rho+\eta,\ell-\eta]$, so $\alpha_c$ remains at least distance $\eta$ from both endpoints. Introduce $z:=\mu(\alpha-\alpha_c), \ \alpha=\alpha_c+\frac{z}{\mu},$ and define $Q(z):=\frac1{1+e^z}, \ \mathfrak l(z):=\frac{e^z}{(1+e^z)^2}.$ The range of $z$ is $I_{\mu,k} := [\mu(\rho-\alpha_c),\mu(\ell-\alpha_c)],$ with $[-\eta\mu,\eta\mu]\subseteq I_{\mu,k}$. Consequently, for every fixed $m\ge0$,
\begin{equation}
\label{eq:interior-logistic-tail}
    \sup_{k\in[\rho+\eta,\ell-\eta]}
    \int_{\R\setminus I_{\mu,k}}
    (1+|z|)^m\mathfrak l(z)\,dz
    =
    O_{\eta,m}(e^{-c_\eta\mu})
\end{equation}
for some $c_\eta>0$.

By \eqref{eq:mixture-odds},
\begin{equation*}
    \dfrac{p_\theta(\alpha)}{q_\theta(\alpha)}
    =
    e^zh_\mu(\alpha),
    \qquad
    h_\mu(\alpha)
    :=
    \dfrac{1-e^{-\mu\alpha}}
    {1-e^{-\mu(1-\alpha)}}.
\end{equation*}
Uniformly for $\alpha\in[\rho,\ell]$,
\begin{align*}
    h_\mu(\alpha)
    &=
    1+O(e^{-\rho\mu})\\
    q_\theta(\alpha)
    &=
    Q(z)+O(e^{-\rho\mu}\mathfrak l(z))\\
    p_\theta(\alpha)q_\theta(\alpha)
    &=
    \mathfrak l(z)+O(e^{-\rho\mu}\mathfrak l(z)).
\end{align*}
By Lemma~\ref{lem:mixture-representation},
${d_\mu}(\alpha) = 1-\alpha+O(e^{-\rho\mu}) = k-\dfrac{z}{\mu}+O(e^{-\rho\mu}).$
Corollary~\ref{cor:bias-expansion} gives
\begin{align*}
    B_\theta(\alpha)
    &=
    q_\theta(\alpha)(1-\alpha)-\dfrac1\mu+O(e^{-\rho\mu}) =
    kQ(z)
    +
    O_\eta\left(\dfrac{1+|z|}{\mu}\right)
    +
    O(e^{-\rho\mu}).
\end{align*}
The error is integrable after multiplication by $\mathfrak l(z)$, since $\int_{\R}(1+|z|)^m\mathfrak l(z)\,dz<\infty$ for every fixed $m\ge0$.

Using $\Cov_\theta(X,Y\mid\alpha)=-p_\theta(\alpha)q_\theta(\alpha){d_\mu}(\alpha)$, we obtain
\begin{align*}
    B_\theta(\alpha)\Cov_\theta(X,Y\mid\alpha)
    &=
    -k^2Q(z)\mathfrak l(z)
    +
    O_\eta\left(
        \dfrac{1+|z|}{\mu}\mathfrak l(z)
    \right)
    +
    O_\eta(e^{-\rho\mu}\mathfrak l(z)).
\end{align*}
Using $d\alpha=dz/\mu$ and \eqref{eq:interior-logistic-tail} to replace $I_{\mu,k}$ by $\R$, we find
\begin{equation}
\label{eq:interior-cov-integral}
    \int_\rho^\ell
    B_\theta(\alpha)\Cov_\theta(X,Y\mid\alpha)\,d\alpha
    =
    -\dfrac{k^2}{\mu}
    \int_{\R}Q(z)\mathfrak l(z)\,dz
    +
    O_\eta(\mu^{-2}).
\end{equation}
For the variance, the component-variance contribution satisfies
\begin{equation*}
    \int_\rho^\ell
    |B_\theta(\alpha)|
    \left(
        p_\theta(\alpha)s_{1,\mu}^2(\alpha)
        +
        q_\theta(\alpha)s_{2,\mu}^2(\alpha)
    \right)d\alpha
    =
    O_\eta(\mu^{-2}),
\end{equation*}
because the bias is uniformly bounded and $s_{1,\mu}^2(\alpha)+s_{2,\mu}^2(\alpha)=O(\mu^{-2})$ uniformly in $\alpha$. The mixture contribution satisfies
\begin{align*}
    B_\theta(\alpha)
    p_\theta(\alpha)q_\theta(\alpha)
    {d_\mu}(\alpha)^2
    &=
    k^3Q(z)\mathfrak l(z)
    +
    O_\eta\left(
        \dfrac{1+|z|}{\mu}\mathfrak l(z)
    \right)
    +
    O_\eta(e^{-\rho\mu}\mathfrak l(z)).
\end{align*}
Consequently,
\begin{equation}
\label{eq:interior-var-integral}
    \int_\rho^\ell
    B_\theta(\alpha)\sigma_\theta^2(\alpha)\,d\alpha
    =
    \dfrac{k^3}{\mu}
    \int_{\R}Q(z)\mathfrak l(z)\,dz
    +
    O_\eta(\mu^{-2}).
\end{equation}
With $r:=\dfrac{e^z}{1+e^z},\;dr=\mathfrak l(z)\,dz,$ one has $Q(z)=1-r$, and therefore $\int_{\R}Q(z)\mathfrak l(z)\,dz = \int_0^1(1-r)\,dr = \frac12.$ Substituting into \eqref{eq:interior-cov-integral} and \eqref{eq:interior-var-integral}, and using Lemma~\ref{lem:main-gradient-identities}, proves \eqref{eq:interior-laplace-mu-lambda}. Finally, $\dot k = \dfrac{\dot\lambda-k\dot\mu}{\mu},$ which gives \eqref{eq:interior-laplace-k}.
\end{proof}
\textbf{Quantitative interior drift.}
\begin{proposition}[Interior slingshot]
\label{prop:interior-slingshot}
Fix $0<\eta<\frac{\Delta}{4}$. There exist constants $c_{\rm int,\eta}>0$ and $\mu_{\rm int,\eta}>0$ such that, whenever $\mu\ge\mu_{\rm int,\eta}, \ k\in[\rho+\eta,\ell-\eta],$ one has $\dot k \ge \dfrac{c_{\rm int,\eta}}{\mu^2} > 0,$ together with $\dot\mu<0,$ and $\dot\lambda>0$. Moreover, for a possibly larger constant $C_{\rm int,\eta}>0$,
\begin{equation}
\label{eq:interior-mu-lambda-size}
    |\dot\mu|
    \le
    \dfrac{C_{\rm int,\eta}}{\mu},
    \qquad
    |\dot\lambda|
    \le
    \dfrac{C_{\rm int,\eta}}{\mu}.
\end{equation}
After increasing $C_{\rm int,\eta}$ if necessary, one also has
\begin{equation}
\label{eq:interior-relative-loss}
    \dfrac{(-\dot\mu)_+}{\mu}
    \le
    C_{\rm int,\eta}\dot k.
\end{equation}
\end{proposition}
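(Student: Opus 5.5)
This follows directly from the uniform interior Laplace expansion of Lemma~\ref{lem:uniform-interior-laplace}. Fix $\eta\in(0,\Delta/4)$ and restrict to $k\in[\rho+\eta,\ell-\eta]$, so that $k\ge\rho+\eta>0$ and $k\le\ell$. By \eqref{eq:interior-laplace-k}, $\dot k=(k^2+k^4)/(2\Delta\mu^2)+O_\eta(\mu^{-3})$. The leading coefficient satisfies $(k^2+k^4)/(2\Delta)\ge(\rho+\eta)^2/(2\Delta)=:2c_{\rm int,\eta}$. Choose $\mu_{\rm int,\eta}$ so large that the remainder $O_\eta(\mu^{-3})$ is at most $c_{\rm int,\eta}\mu^{-2}$. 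This gives $\dot k\ge c_{\rm int,\eta}/\mu^2$.

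The same argument handles the other two signs via \eqref{eq:interior-laplace-mu-lambda}. We have $\dot\mu=-k^3/(2\Delta\mu)+O_\eta(\mu^{-2})$ with $k^3\ge(\rho+\eta)^3$, so $\dot\mu<0$ once $\mu$ is large. Likewise $\dot\lambda=k^2/(2\Delta\mu)+O_\eta(\mu^{-2})$ with $k^2\ge(\rho+\eta)^2$, so $\dot\lambda>0$, after increasing $\mu_{\rm int,\eta}$ if necessary.

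For the size bounds \eqref{eq:interior-mu-lambda-size}, the leading terms are bounded by $\ell^3/(2\Delta\mu)$ and $\ell^2/(2\Delta\mu)$. The remainders are $O_\eta(\mu^{-2})\le O_\eta(\mu^{-1})$ for $\mu\ge1$. Taking $C_{\rm int,\eta}$ to be the maximum of the resulting constants gives $|\dot\mu|,|\dot\lambda|\le C_{\rm int,\eta}/\mu$.

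For the relative-loss bound \eqref{eq:interior-relative-loss}, combine the two previous estimates. Since $\dot\mu<0$, we have $(-\dot\mu)_+/\mu=|\dot\mu|/\mu\le C_{\rm int,\eta}/\mu^2$. Since also $\dot k\ge c_{\rm int,\eta}/\mu^2$, it follows that $(-\dot\mu)_+/\mu\le(C_{\rm int,\eta}/c_{\rm int,\eta})\,\dot k$. Enlarging $C_{\rm int,\eta}$ to $\max\{C_{\rm int,\eta},C_{\rm int,\eta}/c_{\rm int,\eta}\}$ gives the claim.

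There is no substantive obstacle. The one point to check is that the $O_\eta$ remainders in Lemma~\ref{lem:uniform-interior-laplace} are uniform in $k$ over the strip, which is exactly what that lemma asserts. This uniformity is what allows a single threshold $\mu_{\rm int,\eta}$ to work for every $k$ in the strip. All constants then depend only on $\rho$ and $\eta$.
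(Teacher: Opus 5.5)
Your proposal is correct and matches the paper's proof: absorb the uniform $O_\eta$ remainders of Lemma~\ref{lem:uniform-interior-laplace} into the leading coefficients, which are bounded below on the strip, and read off the size bounds from the same expansions. The relative-loss estimate then follows by dividing the $C/\mu^2$ upper bound on $|\dot\mu|/\mu$ by the $c/\mu^2$ lower bound on $\dot k$. Your explicit choices, such as $2c_{\rm int,\eta}=(\rho+\eta)^2/(2\Delta)$, simply make concrete the paper's step of increasing $\mu_{\rm int,\eta}$.
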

\begin{proof}
Since $\rho+\eta\le k\le\ell-\eta$, the coefficients ${k^3}/{2\Delta},\ {k^2}/{2\Delta},\ {(k^2+k^4)}/{2\Delta}$ are uniformly bounded below by positive constants. The uniform remainder estimates in Lemma~\ref{lem:uniform-interior-laplace} can therefore be absorbed into the leading terms by increasing $\mu_{\rm int,\eta}$. This gives
$\dot\mu<0, \ \dot\lambda>0, \ \dot k\ge\dfrac{c_{\rm int,\eta}}{\mu^2}.$
The size bounds \eqref{eq:interior-mu-lambda-size} follow directly from \eqref{eq:interior-laplace-mu-lambda}, since $k$ is uniformly bounded on the strip. Therefore $\frac{(-\dot\mu)_+}{\mu} \le \frac{C}{\mu^2} \le C_{\rm int,\eta}\dot k$ for a possibly larger $C_{\rm int,\eta}$, proving \eqref{eq:interior-relative-loss}.
\end{proof}
\textbf{Finite-time traversal of the interior strip.}
\begin{corollary}[Finite-time traversal of the interior strip]
\label{cor:interior-passage}
Fix $0<\eta<\dfrac{\Delta}{4}.$ Suppose that, at some time $t_0$, $k(t_0)\in[\rho+\eta,\ell-\eta]$ and $\mu(t_0)>\mu_{\rm int,\eta}.$ Define
\begin{equation*}
    \tau_{\rm up}
    :=
    \inf\{t\ge t_0:k(t)\ge\ell-\eta\},
    \qquad
    \tau_{\rm int,exit}
    :=
    \inf\{t\ge t_0:\mu(t)\le\mu_{\rm int,\eta}\}.
\end{equation*}
Then
\begin{equation}
\label{eq:interior-stopped-time}
    \tau_{\rm up}\wedge\tau_{\rm int,exit}-t_0
    \le
    \dfrac{
        \mu(t_0)^2
        \{\ell-\eta-k(t_0)\}
    }{
        c_{\rm int,\eta}
    }.
\end{equation}
In particular, if $\tau_{\rm int,exit}>\tau_{\rm up}$, then $\tau_{\rm up}<\infty$ and
\begin{equation}
\label{eq:interior-time-short}
    \tau_{\rm up}-t_0
    \le
    \dfrac{
        \mu(t_0)^2
        \{\ell-\eta-k(t_0)\}
    }{
        c_{\rm int,\eta}
    }.
\end{equation}
\end{corollary}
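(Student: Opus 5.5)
The plan mirrors the proof of Corollary~\ref{cor:lower-layer-to-interior}, with Proposition~\ref{prop:interior-slingshot} in place of Proposition~\ref{prop:lower-boundary-layer-escape} and Lemma~\ref{lem:abstract-finite-entry}\textnormal{(i)} used with exponent $p=2$. If $k(t_0)=\ell-\eta$, then $\tau_{\rm up}=t_0$ and there is nothing to prove, so assume $k(t_0)<\ell-\eta$.

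The first step is to show that, on the stopped interval $[t_0,\tau_{\rm up}\wedge\tau_{\rm int,exit})$, the trajectory stays in the strip, i.e.\ $k(t)\in[\rho+\eta,\ell-\eta)$. The upper side holds by the definition of $\tau_{\rm up}$. For the lower side, whenever $k\in[\rho+\eta,\ell-\eta]$ and $\mu>\mu_{\rm int,\eta}$, Proposition~\ref{prop:interior-slingshot} gives $\dot k\ge c_{\rm int,\eta}/\mu^2>0$. Consider the first time $k$ drops below $\rho+\eta$. Since $k(t_0)\ge\rho+\eta$, continuity gives a last time $s$ before it with $k(s)=\rho+\eta$, and there $\dot k(s)>0$. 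This contradicts the crossing, by the same one-sided barrier argument as in Lemma~\ref{lem:moving-interval-invariance} with a constant lower barrier $a\equiv\rho+\eta$. Hence the interior estimates hold throughout the stopped interval.

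Next, set $e(t):=\ell-\eta-k(t)$, which is nonnegative and positive before $\tau_{\rm up}$. On the stopped interval, Proposition~\ref{prop:interior-slingshot} gives $\dot e=-\dot k\le -c_{\rm int,\eta}/\mu(t)^2$ and $\dot\mu<0$, so $\mu(t)\le\mu(t_0)$. Lemma~\ref{lem:abstract-finite-entry}\textnormal{(i)} applies on $[t_0,T)$ with $T:=\tau_{\rm int,exit}$, $a=c_{\rm int,\eta}$ and $p=2$. Its entry time is $\tau_e=\tau_{\rm up}$, since hitting $e=0$ is exactly reaching $k=\ell-\eta$. It yields $e(t)\le e(t_0)-c_{\rm int,\eta}(t-t_0)/\mu(t_0)^2$ for $t<\tau_{\rm up}\wedge\tau_{\rm int,exit}$. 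Because $e\ge0$ there, letting $t\uparrow\tau_{\rm up}\wedge\tau_{\rm int,exit}$ gives \eqref{eq:interior-stopped-time}. If $\tau_{\rm int,exit}>\tau_{\rm up}$, the minimum equals $\tau_{\rm up}$, which gives \eqref{eq:interior-time-short} and finiteness.

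The only delicate point is the stopping logic. The hypotheses of Proposition~\ref{prop:interior-slingshot} (namely $\mu\ge\mu_{\rm int,\eta}$ and $k$ in the strip) must hold on the whole interval before they are invoked, and this is exactly what the first step secures. The monotone decrease of $\mu$ then lets us freeze $\mu(t)$ at $\mu(t_0)$ in the drift bound. As in the lower-layer case, this corollary does not rule out $\tau_{\rm int,exit}\le\tau_{\rm up}$; that is left to the later radial-persistence estimate.
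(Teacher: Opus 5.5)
Your proposal is correct and matches the paper's proof: you use Proposition~\ref{prop:interior-slingshot} to make the constant lower barrier $k=\rho+\eta$ inward-pointing, then apply Lemma~\ref{lem:abstract-finite-entry}\textnormal{(i)} to $\ell-\eta-k$ with $p=2$, using $\dot\mu<0$ to replace $\mu(t)$ by $\mu(t_0)$. One cosmetic point: taking $T:=\tau_{\rm up}\wedge\tau_{\rm int,exit}$ instead of $T:=\tau_{\rm int,exit}$ keeps $e\ge0$ on the whole domain, as the lemma formally requires, but your limiting argument $t\uparrow\tau_{\rm up}\wedge\tau_{\rm int,exit}$ is unaffected either way.
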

\begin{proof}
If $k(t_0)=\ell-\eta$, the claim is immediate. Otherwise, the lower boundary $k=\rho+\eta$ is inward-pointing by Proposition~\ref{prop:interior-slingshot}; hence the trajectory remains in the interior strip until $\tau_{\rm up}\wedge\tau_{\rm int,exit}$.

On this stopped interval, with $g_{\rm int}(t):=\ell-\eta-k(t)$,
$\dot g_{\rm int}(t) \le -\dfrac{c_{\rm int,\eta}}{\mu(t)^2}, \ \dot\mu(t)<0.$
Lemma~\ref{lem:abstract-finite-entry}\textnormal{(i)} gives \eqref{eq:interior-stopped-time}. If $\tau_{\rm int,exit}>\tau_{\rm up}$, the stopped time equals $\tau_{\rm up}$, proving \eqref{eq:interior-time-short}.
\end{proof}
\textbf{Extension toward the upper ratio rail.} The fixed-strip expansion is sufficient for finite-time traversal to $k=\ell-\eta$. The following extension remains valid when the transition point approaches the endpoint, provided the separation is at least logarithmic.
\begin{lemma}[Interior expansion with logarithmic endpoint separation]
\label{lem:interior-log-separation}
Fix $\eta_1\in(0,\Delta)$ and $\beta>0$. Uniformly for
$\ell-\eta_1 \le k \le \ell-\dfrac{\beta\log\mu}{\mu},$
as $\mu\to\infty$,
\begin{equation}
\label{eq:interior-log-separation-mu-lambda}
    \dot\mu
    =
    -\dfrac{k^3}{2\Delta\mu}
    +
    O_{\eta_1,\beta}
    \left(
        \mu^{-2}+\mu^{-\beta-1}
    \right),
\end{equation}
\begin{equation}
\label{eq:interior-log-separation-lambda}
    \dot\lambda
    =
    \dfrac{k^2}{2\Delta\mu}
    +
    O_{\eta_1,\beta}
    \left(
        \mu^{-2}+\mu^{-\beta-1}
    \right),
\end{equation}
and
\begin{equation}
\label{eq:interior-log-separation-k}
    \dot k
    =
    \dfrac{k^2+k^4}{2\Delta\mu^2}
    +
    O_{\eta_1,\beta}
    \left(
        \mu^{-3}+\mu^{-\beta-2}
    \right).
\end{equation}
In particular, there exist constants $c_{\eta_1}>0$ and $\mu_{\rm sep}(\eta_1,\beta)<\infty$ such that $\dot k \ge \dfrac{c_{\eta_1}}{\mu^2}$ and $\dot\mu<0,$ throughout this region whenever $\mu\ge\mu_{\rm sep}(\eta_1,\beta)$.
\end{lemma}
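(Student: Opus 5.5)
We follow the proof of Lemma~\ref{lem:uniform-interior-laplace}, this time tracking the two places where the fixed separation $\eta$ was used. Both are the replacement of the finite window $I_{\mu,k}$ by $\R$ in the logistic integrals, and the exponential control of the component corrections. Set $\alpha_c=1-k$ and $z=\mu(\alpha-\alpha_c)$. On the stated range, $\alpha_c\in[\rho+\beta\log\mu/\mu,\ \rho+\eta_1]$. The window is
\[
I_{\mu,k}=[\mu(\rho-\alpha_c),\ \mu(\ell-\alpha_c)],
\]
and its right end is at least $\mu(\ell-\rho-\eta_1)=\mu(\Delta-\eta_1)$, which is of linear order because $\eta_1<\Delta$. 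Its left end is at most $-\beta\log\mu$. The left tail is therefore only polynomially small:
\[
\int_{-\infty}^{-\beta\log\mu}(1+|z|)^m\mathfrak l(z)\,dz\lesssim_m(\log\mu)^m\mu^{-\beta}.
\]
This is where the $\mu^{-\beta}$ loss enters. It will produce an error $O(\mu^{-\beta-1})$ after the factor $d\alpha=dz/\mu$.

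The pointwise expansions carry over unchanged. Since $\alpha\in[\rho,\ell]$, the $O(e^{-\rho\mu})$ corrections in Lemma~\ref{lem:mixture-representation} and Corollary~\ref{cor:bias-expansion} still hold uniformly. In particular $h_\mu(\alpha)=1+O(e^{-\rho\mu})$, so $q_\theta=Q(z)+O(e^{-\rho\mu}\mathfrak l)$ and $p_\theta q_\theta=\mathfrak l(z)(1+O(e^{-\rho\mu}))$. We also have $d_\mu=k-z/\mu+O(e^{-\rho\mu})$ and $B_\theta=kQ(z)+O((1+|z|)/\mu)+O(e^{-\rho\mu})$.

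One point needs care. The additive $-1/\mu$ term in $B_\theta$ is not multiplied by $\mathfrak l$, so we must not integrate it against a non-decaying weight over a long window. Products such as $B_\theta\Cov_\theta$ and $B_\theta p_\theta q_\theta d_\mu^2$ always carry a factor $\mathfrak l(z)$, so their errors are $(1+|z|)^m\mathfrak l/\mu$ and integrate to $O(\mu^{-2})$ after $d\alpha=dz/\mu$. The component-variance part $\int|B_\theta|(p s_1^2+q s_2^2)\,d\alpha$ is bounded crudely by $\mu^{-2}$, exactly as before.

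With these pieces, $\int B_\theta\Cov_\theta\,d\alpha$ equals $-(k^2/\mu)\int_{I_{\mu,k}}Q\mathfrak l\,dz+O(\mu^{-2})$. Extending the range to $\R$ costs $\mu^{-1}\cdot O(\mu^{-\beta})$. Since $Q\le1$ there is no polynomial weight in this leading term, so no logarithm appears. The $(\log\mu)^m$ factors appear only in error terms that already carry $1/\mu$, and give $(\log\mu)\mu^{-\beta-2}\lesssim\mu^{-2}$. Using $\int Q\mathfrak l=1/2$, the variance integral is handled the same way, and Lemma~\ref{lem:main-gradient-identities} then gives \eqref{eq:interior-log-separation-mu-lambda} and \eqref{eq:interior-log-separation-lambda}. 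Finally, $\dot k=(\dot\lambda-k\dot\mu)/\mu$ gives \eqref{eq:interior-log-separation-k}.

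For the last claim, $k\ge\ell-\eta_1>\rho$ makes $k^3$ and $k^2+k^4$ bounded below. Once $\mu$ is large, the errors $O(\mu^{-3}+\mu^{-\beta-2})$ are $o(\mu^{-2})$ and $O(\mu^{-2}+\mu^{-\beta-1})$ is $o(\mu^{-1})$ for any $\beta>0$. Hence $\dot k\ge c_{\eta_1}/\mu^2$ and $\dot\mu<0$.

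The main obstacle is the bookkeeping in the third step. We must verify that every error term either carries $\mathfrak l(z)$ or is bounded globally by $O(\mu^{-2})$, so that the only cost of the short left window is the polynomial tail $\mu^{-\beta}$ in the leading $Q\mathfrak l$ integral.
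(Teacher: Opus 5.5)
Your proposal is correct and follows the paper's route. You rescale around $\alpha_c=1-k$ and note that the left end of $I_{\mu,k}$ is at most $-\beta\log\mu$, so the omitted tail of the leading $Q\mathfrak l$ integral costs $O(\mu^{-\beta-1})$ after $d\alpha=dz/\mu$, while the local errors (all carrying $\mathfrak l$, plus the crude $O(\mu^{-2})$ component-variance term) integrate to $O(\mu^{-2})$. The only cosmetic difference is that the paper bounds the error integrals directly by $\int_{\R}(1+|z|)\mathfrak l(z)\,dz$ without truncating them, so the $(\log\mu)^m$ factors you track never arise.
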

\begin{proof}
For sufficiently large $\mu$, the interval in the statement is nonempty because $\beta\log\mu/\mu<\eta_1$. The proof follows the rescaling argument of Lemma~\ref{lem:uniform-interior-laplace}. The only difference is that the transition point $\alpha_c=1-k$ may now approach the lower task endpoint $\rho$ at logarithmic distance.

Set $\alpha_c:=1-k $ and $z:=\mu(\alpha-\alpha_c).$ The rescaled task
interval is $I_{\mu,k}=[a_{\mu,k},b_{\mu,k}]$, where
$a_{\mu,k}:=\mu(\rho-\alpha_c)=\mu(k-\ell)$ and
$b_{\mu,k}:=\mu(\ell-\alpha_c)=\mu(k-\rho)$.
The upper bound on $k$ gives $a_{\mu,k} \le -\beta\log\mu,$ while the lower bound on $k$ gives $b_{\mu,k} \ge (\Delta-\eta_1)\mu.$ Recall $Q(z):=\frac1{1+e^z}, \ \mathfrak l(z):=\frac{e^z}{(1+e^z)^2}.$ The omitted left tail satisfies
\begin{align*}
    \int_{-\infty}^{a_{\mu,k}}\mathfrak l(z)\,dz
    &\le
    \int_{-\infty}^{-\beta\log\mu}\mathfrak l(z)\,dz
    =
    \dfrac{\mu^{-\beta}}{1+\mu^{-\beta}}
    =
    O(\mu^{-\beta}),
\end{align*}
whereas the omitted right tail satisfies
\begin{align*}
    \int_{b_{\mu,k}}^\infty \mathfrak l(z)\,dz
    &\le
    \int_{(\Delta-\eta_1)\mu}^\infty \mathfrak l(z)\,dz
    =
    \dfrac1{1+e^{(\Delta-\eta_1)\mu}}
    =
    O_{\eta_1}(e^{-(\Delta-\eta_1)\mu}).
\end{align*}
Since $0\le Q\le1$, the same estimates control the omitted tails of $Q(z)\mathfrak l(z)$. After the change of variables $d\alpha=dz/\mu$, the left endpoint truncation contributes $O(\mu^{-\beta-1})$ to the expansions of $\dot\lambda$ and $\dot\mu$, while the right endpoint contribution is exponentially small.

The local expansion errors remain $O(\mu^{-2})$ uniformly in the stated region. Indeed, the argument of Lemma~\ref{lem:uniform-interior-laplace} gives an integrated error bounded by
\begin{align*}
    \dfrac1\mu
    \int_{I_{\mu,k}}
    C_{\eta_1}
    \dfrac{1+|z|}{\mu}
    \mathfrak l(z)\,dz
    &\le
    \dfrac{C_{\eta_1}}{\mu^2}
    \int_{\R}
    (1+|z|)\mathfrak l(z)\,dz
    =
    O_{\eta_1}(\mu^{-2}).
\end{align*}
Combining the local expansion error with the endpoint truncation error gives \eqref{eq:interior-log-separation-mu-lambda} and \eqref{eq:interior-log-separation-lambda}. Finally, $\dot k = \frac{\dot\lambda-k\dot\mu}{\mu}.$ Since $k$ remains uniformly bounded in the stated region, substitution gives \eqref{eq:interior-log-separation-k}. Moreover, $k\ge\ell-\eta_1>\rho>0$, so the leading coefficients in $\dot\mu$ and $\dot k$ are uniformly bounded away from zero. Since
$\mu^{-2}+\mu^{-\beta-1}=o(\mu^{-1}), \ \mu^{-3}+\mu^{-\beta-2}=o(\mu^{-2}),$
the remainders can be absorbed into the leading terms after increasing $\mu_{\rm sep}(\eta_1,\beta)$. Hence $\dot\mu<0, \ \dot k \ge \frac{c_{\eta_1}}{\mu^2}.$
\end{proof}
\textbf{Upper rail-tube capture and invariance}
\label{subsubsec:upper-rail-tube}
Recall $\mathcal T_\eta(\mu) := [\ell-\eta,U_{\rm band}(\mu)].$ Its fixed lower boundary is controlled by the interior slingshot, whereas its moving upper boundary is controlled by the outer ratio correction.
\begin{proposition}[Forward invariance of the upper rail tube]
\label{prop:upper-rail-tube}
Assume $C_{\rm band}\ge2$, and fix $0<\eta<\Delta/4.$ There exists $\mu_{\rm tube,\eta}>0$ such that, if
$k(t_0)\in\mathcal T_\eta(\mu(t_0)), \ \mu(t_0)>\mu_{\rm tube,\eta},$
and
$\tau_{\rm tube} := \inf\{t\ge t_0:\mu(t)\le\mu_{\rm tube,\eta}\},$
then
$k(t)\in\mathcal T_\eta(\mu(t))$
for every $t\in[t_0,\tau_{\rm tube})$.
\end{proposition}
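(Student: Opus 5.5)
We will apply the moving-interval invariance principle, Lemma~\ref{lem:moving-interval-invariance}, with $a(\mu):=\ell-\eta$ (constant) and $b(\mu):=U_{\rm band}(\mu)$, and with $\mu_{\rm stop}:=\mu_{\rm tube,\eta}$. Then $h_-(t)=k(t)-(\ell-\eta)$ and $h_+(t)=U_{\rm band}(\mu(t))-k(t)$. Since $a<b$ for all $\mu>1$, and the initial condition $k(t_0)\in\mathcal T_\eta(\mu(t_0))$ gives $h_\pm(t_0)\ge0$, it suffices to check the two strict boundary-crossing conditions whenever $\mu>\mu_{\rm tube,\eta}$. The threshold $\mu_{\rm tube,\eta}$ will be chosen as the maximum of the thresholds needed below.

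\textbf{Lower boundary $k=\ell-\eta$.} At such a point we have $k\in[\rho+\eta,\ell-\eta]$, because $\eta<\Delta/4$ gives $\rho+\eta<\ell-\eta$. Hence Proposition~\ref{prop:interior-slingshot} applies and gives $\dot h_-=\dot k\ge c_{\rm int,\eta}/\mu^2>0$ once $\mu\ge\mu_{\rm int,\eta}$. The lower barrier has no $\mu$-dependence, so no radial correction term enters here.

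\textbf{Upper boundary $k=U_{\rm band}(\mu)$.} At such a point the hypothesis $k\ge U_{\rm band}(\mu)$ of Corollary~\ref{cor:upper-moving-boundary-correction} holds with equality. The corollary therefore gives $\dot h_+=U_{\rm band}'(\mu)\dot\mu-\dot k\ge a_U/\mu^4>0$ for $\mu\ge\mu_{U,\partial}$. We must confirm that Proposition~\ref{prop:quant-outer} and the corollary apply on the closed set $k\ge U_{\rm band}$ and not only on its interior. Their proofs use only $\mu(k+\alpha-1)\ge C_{\rm band}\log\mu$, which holds at equality, so this is fine.

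With $\mu_{\rm tube,\eta}:=\max\{\mu_{\rm int,\eta},\mu_{U,\partial},e\}$, both conditions of Lemma~\ref{lem:moving-interval-invariance} hold before $\tau_{\rm tube}$, and the lemma yields the claim. The only delicate point is the strict-sign requirement exactly at the barriers, which the closed-region estimates above supply. Beyond that the argument is bookkeeping. Note that the radial direction is not controlled inside the tube, since $\mu$ may decrease there. This is why the statement is stopped at $\tau_{\rm tube}$ rather than claiming global invariance.
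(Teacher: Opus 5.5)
Your proposal is correct and matches the paper's proof: both apply Lemma~\ref{lem:moving-interval-invariance} with $a(\mu)=\ell-\eta$ and $b(\mu)=U_{\rm band}(\mu)$. At the fixed lower barrier, Proposition~\ref{prop:interior-slingshot} gives the inward sign; at the moving upper barrier, Corollary~\ref{cor:upper-moving-boundary-correction}, whose hypothesis $k\ge U_{\rm band}(\mu)$ already includes equality, does the same, and $\mu_{\rm tube,\eta}$ is taken larger than both thresholds.
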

\begin{proof}
Choose $\mu_{\rm tube,\eta}$ larger than $\mu_{\rm int,\eta}$ and the threshold in Corollary~\ref{cor:upper-moving-boundary-correction}. Set $a(\mu):=\ell-\eta,\  b(\mu):=U_{\rm band}(\mu).$ At the lower boundary $k=a(\mu)=\ell-\eta$, Proposition~\ref{prop:interior-slingshot} gives $\dot k-a'(\mu)\dot\mu=\dot k>0.$ At the upper moving boundary $k=b(\mu)=U_{\rm band}(\mu)$, Corollary~\ref{cor:upper-moving-boundary-correction} gives $\dot k-b'(\mu)\dot\mu = \dot k-U_{\rm band}'(\mu)\dot\mu <0.$ Lemma~\ref{lem:moving-interval-invariance} therefore gives $k(t)\in\mathcal T_\eta(\mu(t))$ for every $t\in[t_0,\tau_{\rm tube})$.
\end{proof}
\subsubsection{Logarithmic endpoint dynamics}
\label{subsec:logarithmic-prediction-good-tube}
The upper rail tube provides dynamical trapping. A uniform $O(\mu^{-1})$ prediction bound begins after the ratio crosses a logarithmic rail slightly above $\ell$: $k\ge b_c(\mu) = \ell+\frac{\log\mu+c}{\mu}.$ This subsection proves the endpoint estimates underlying the logarithmic tube introduced in Section~\ref{sec:population-dynamics}. In particular, it proves the logarithmic-rail expansions for the functions $d_k$ and $d_{\rm rad}$ defined in \eqref{eq:ratio-drift-profile}, verifies the choice of $c_0$, proves finite-time approach to the prediction rail, establishes forward invariance of the prediction-good tube, proves radial divergence after capture, and verifies preservation of the large-parameter regime during the transient stages.

\textbf{Prediction and endpoint rescaling}
\begin{lemma}[Uniform prediction bound above a logarithmic rail]
\label{lem:log-prediction-bound}
Fix $c\in\R$. There exist constants $C_c,\mu_c>0$ such that, whenever $\mu\ge\mu_c, k\ge b_c(\mu),$ one has
\begin{equation}
\label{eq:log-pred-bound}
    \sup_{\alpha\in[\rho,\ell]}
    |F(\lambda,\mu;\alpha)-\alpha|
    \le
    \dfrac{C_c}{\mu}.
\end{equation}
\end{lemma}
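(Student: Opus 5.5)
The bound follows directly from the uniform bias expansion of Corollary~\ref{cor:bias-expansion},
\[
F(\lambda,\mu;\alpha)-\alpha=q_\theta(\alpha)(1-\alpha)-\frac1\mu+O(e^{-\rho\mu}),
\]
whose remainder is uniform in $\lambda$ and in $\alpha\in[\rho,\ell]$. By the triangle inequality and $0\le 1-\alpha\le1$, it suffices to show that $\sup_{\alpha\in[\rho,\ell]}q_\theta(\alpha)\le C'_c/\mu$ whenever $k\ge b_c(\mu)$ and $\mu$ is large. The remaining terms, $1/\mu$ and $O(e^{-\rho\mu})\le 1/\mu$ for large $\mu$, are already of the required order.

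To bound $q_\theta$, I would use the odds identity \eqref{eq:mixture-odds}:
\[
\frac{p_\theta(\alpha)}{q_\theta(\alpha)}=e^{\mu(k+\alpha-1)}\,h_\mu(\alpha),
\qquad
h_\mu(\alpha)=\frac{1-e^{-\mu\alpha}}{1-e^{-\mu(1-\alpha)}}.
\]
On $[\rho,\ell]$, both $\alpha$ and $1-\alpha$ are at least $\rho$, so $h_\mu(\alpha)\ge 1-e^{-\rho\mu}\ge 1/2$ once $\mu\ge\mu_c$. Since $\alpha\ge\rho$ and $k\ge\ell+(\log\mu+c)/\mu$, we have
\[
\mu(k+\alpha-1)\ge \mu(\ell+\rho-1)+\log\mu+c=\log\mu+c,
\]
because $\ell+\rho=1$. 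Hence the odds are at least $\tfrac12 e^{c}\mu$. Since $q_\theta=1/(1+p_\theta/q_\theta)\le q_\theta/p_\theta$, this gives $q_\theta(\alpha)\le 2e^{-c}/\mu$ uniformly in $\alpha$. The worst case is the lower endpoint $\alpha=\rho$, where the logarithmic shift in $b_c$ is exactly what prevents $q_\theta$ from being of order one.

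Combining the two pieces gives
\[
|F(\lambda,\mu;\alpha)-\alpha|\le \frac{2e^{-c}(1-\rho)+1+1}{\mu},
\]
once $\mu_c$ is large enough that the exponential remainder is at most $1/\mu$. So one may take $C_c=2e^{-c}+2$. There is no real obstacle. The only point needing care is that the constant hidden in the $O(e^{-\rho\mu})$ remainder does not depend on $k$; this holds because Corollary~\ref{cor:bias-expansion} states that the remainder is uniform over $\lambda$. Note also that $c$ may be negative, since it enters only through $e^{-c}$.
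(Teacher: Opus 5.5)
Your proposal is correct and follows essentially the same route as the paper's proof: the odds identity with the factor $h_\mu(\alpha)\ge 1/2$ on $[\rho,\ell]$, the estimate $\mu(k+\alpha-1)\ge\log\mu+c$ from $k\ge b_c(\mu)$ (using $\ell+\rho=1$), hence $q_\theta(\alpha)\le 2e^{-c}/\mu$, and then the $\lambda$-uniform bias expansion of Corollary~\ref{cor:bias-expansion}. Your explicit constant $C_c=2e^{-c}+2$ is a harmless refinement of the paper's unnamed $C_c$.
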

\begin{proof}
By Lemma~\ref{lem:mixture-representation},
$p_\theta(\alpha)/q_\theta(\alpha)=e^{\mu(k+\alpha-1)}
(1-e^{-\mu\alpha})/(1-e^{-\mu(1-\alpha)})$.
Since $\alpha,1-\alpha\ge\rho$, the second factor is bounded below by $1/2$ for all sufficiently large $\mu$, uniformly in $\alpha\in[\rho,\ell]$. If $k\ge b_c(\mu)$, then $\mu(k+\alpha-1) \ge \mu(k+\rho-1) \ge \log\mu+c.$ Hence
$p_\theta(\alpha)/q_\theta(\alpha)\ge e^c\mu/2$ and
$q_\theta(\alpha)\le C_c/\mu$.
The uniform bias expansion $B_\theta(\alpha) = q_\theta(\alpha)(1-\alpha)-\frac1\mu+O(e^{-\rho\mu})$ therefore gives \eqref{eq:log-pred-bound}.
\end{proof}
\begin{lemma}[Right-mass control near the lower task endpoint]
\label{lem:q-control-log-rail}
Fix $c_{\min}\in\R$. Suppose
$c\ge c_{\min}, \ k=b_c(\mu).$
For
$\alpha=\rho+\dfrac{u}{\mu}, \ 0\le u\le \Delta\mu,$
one has, for every fixed $U<\infty$,
\begin{equation*}
    \sup_{c\ge c_{\min}}
    \sup_{0\le u\le U}
    \left|
        \mu q_\theta\left(\rho+\dfrac{u}{\mu}\right)
        -
        e^{-c-u}
    \right|
    \longrightarrow0
\end{equation*}
as $\mu\to\infty$. Moreover, there exists $C=C(c_{\min})$ such that
\begin{equation}
\label{eq:mu-q-domination}
    \mu q_\theta\left(\rho+\dfrac{u}{\mu}\right)
    \le
    Ce^{-u}
\end{equation}
for all sufficiently large $\mu$, all $u\in[0,\Delta\mu]$, and all $c\ge c_{\min}$.
\end{lemma}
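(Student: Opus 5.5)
The plan is to compute $q_\theta$ exactly from the odds identity \eqref{eq:mixture-odds}, namely
\[
 \frac{q_\theta(\alpha)}{p_\theta(\alpha)}=e^{-\mu(k+\alpha-1)}\,\frac{1-e^{-\mu(1-\alpha)}}{1-e^{-\mu\alpha}},
\]
and then substitute $k=b_c(\mu)=\ell+(\log\mu+c)/\mu$ and $\alpha=\rho+u/\mu$. Since $\ell+\rho=1$, the exponent simplifies exactly:
\[
 \mu(k+\alpha-1)=\log\mu+c+u .
\]
Hence
\[
 \frac{q_\theta}{p_\theta}=\frac{e^{-c-u}}{\mu}\,h_\mu(\alpha)^{-1},
 \qquad
 h_\mu(\alpha)=\frac{1-e^{-\mu\alpha}}{1-e^{-\mu(1-\alpha)}} .
\]
Using $q=r/(1+r)$ with $r=q/p$, we get the closed form
\[
 \mu q_\theta=\frac{e^{-c-u}h_\mu^{-1}}{1+\mu^{-1}e^{-c-u}h_\mu^{-1}} .
\]

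The next step is to control $h_\mu$ uniformly for $\alpha\in[\rho,\ell]$. Both $\alpha\ge\rho$ and $1-\alpha\ge\rho$ hold there, so $h_\mu^{-1}=1+O(e^{-\rho\mu})$ uniformly in $u\in[0,\Delta\mu]$ and in $c$. In particular, $h_\mu^{-1}\le 2$ once $\mu$ is large.

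For the domination bound \eqref{eq:mu-q-domination}, the denominator is at least $1$, so $\mu q_\theta\le e^{-c}e^{-u}h_\mu^{-1}\le 2e^{-c_{\min}}e^{-u}$. This gives $C=2e^{-c_{\min}}$, valid for all $u\in[0,\Delta\mu]$ and all $c\ge c_{\min}$.

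For the uniform convergence on $0\le u\le U$, write $w:=e^{-c-u}\in(0,e^{-c_{\min}}]$ and $\epsilon:=h_\mu^{-1}-1=O(e^{-\rho\mu})$. Then
\[
 \Bigl|\frac{w(1+\epsilon)}{1+\mu^{-1}w(1+\epsilon)}-w\Bigr|
 \le w|\epsilon|+\mu^{-1}w^2(1+\epsilon)^2
 \le e^{-c_{\min}}|\epsilon|+2e^{-2c_{\min}}\mu^{-1},
\]
which tends to $0$ uniformly in $c\ge c_{\min}$ and in $u$. This in fact holds for all $u\in[0,\Delta\mu]$, not only $u\le U$.

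I do not expect a real obstacle here. The one point that needs care is that uniformity over the unbounded range $c\ge c_{\min}$ comes from $w\le e^{-c_{\min}}$. The step that makes everything clean is the exact cancellation $\ell+\rho=1$, which removes all $\mu$-dependence from the exponent except the $\log\mu$ term.
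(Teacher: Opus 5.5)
Your proposal is correct and follows the paper's proof essentially verbatim. Both arguments use the exact exponent cancellation $\mu(k+\alpha-1)=\log\mu+c+u$ in the odds identity, the resulting closed form for $\mu q_\theta$, and the uniform bound $h_\mu^{\pm1}=1+O(e^{-\rho\mu})$ to get $C=2e^{-c_{\min}}$. Your explicit error bound $w|\epsilon|+\mu^{-1}w^2(1+\epsilon)^2$ only spells out the step the paper states in one line, and your remark that the convergence in fact holds uniformly over all $u\in[0,\Delta\mu]$ is correct.
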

\begin{proof}
Write $h_\mu(\alpha) := \frac{1-e^{-\mu\alpha}} {1-e^{-\mu(1-\alpha)}}.$ Uniformly for $\alpha\in[\rho,\ell]$, $h_\mu(\alpha) = 1+O(e^{-\rho\mu}).$ If $k=b_c(\mu)$ and $\alpha=\rho+u/\mu$, then $\mu(k+\alpha-1) = \log\mu+c+u.$ The odds identity gives
\begin{equation*}
    \mu q_\theta\left(\rho+\dfrac{u}{\mu}\right)
    =
    \dfrac{\mu}
    {1+\mu e^{c+u}h_\mu(\rho+u/\mu)}
    =
    \dfrac{e^{-c-u}}
    {h_\mu(\rho+u/\mu)+\mu^{-1}e^{-c-u}}.
\end{equation*}
The first conclusion follows uniformly for $c\ge c_{\min}$ and bounded $u$. Since $h_\mu\ge1/2$ for all sufficiently large $\mu$,
$\mu q_\theta(\rho+u/\mu)\le2e^{-c-u}\le2e^{-c_{\min}}e^{-u}$,
which proves \eqref{eq:mu-q-domination}.
\end{proof}
\textbf{Limiting dynamics on logarithmic rails}
For the endpoint analysis, define $s_{{\rm comp},\mu}^2(\alpha) := p_\theta(\alpha)s_{1,\mu}^2(\alpha) + q_\theta(\alpha)s_{2,\mu}^2(\alpha).$  Then
\begin{equation*}
    \sigma_\theta^2(\alpha)
    =
    s_{{\rm comp},\mu}^2(\alpha)
    +
    p_\theta(\alpha)q_\theta(\alpha){d_\mu}(\alpha)^2.
\end{equation*}
Define
\begin{align*}
 H_{\rm mix}(\alpha)&:=-\mu p_\theta(\alpha)q_\theta(\alpha)d_\mu(\alpha)
 \{1+kd_\mu(\alpha)\},
 &H_{\rm comp}(\alpha)&:=-k\mu s_{{\rm comp},\mu}^2(\alpha),\\
 S_{\rm mix}&:=\frac1\Delta\int_\rho^\ell B_\theta(\alpha)H_{\rm mix}(\alpha)\,d\alpha,
 &S_{\rm comp}&:=\frac1\Delta\int_\rho^\ell B_\theta(\alpha)H_{\rm comp}(\alpha)\,d\alpha,\\
 \dot\mu_{\rm mix}&:=-\frac1\Delta\int_\rho^\ell B_\theta(\alpha)p_\theta(\alpha)q_\theta(\alpha)d_\mu(\alpha)^2\,d\alpha,
 &\dot\mu_{\rm comp}&:=-\frac1\Delta\int_\rho^\ell B_\theta(\alpha)s_{{\rm comp},\mu}^2(\alpha)\,d\alpha.
\end{align*}
Thus,
$S=S_{\rm mix}+S_{\rm comp}, \ \dot\mu=\dot\mu_{\rm mix}+\dot\mu_{\rm comp}.$
\begin{proposition}[Uniform logarithmic-rail expansions]
\label{prop:log-rail-expansion}
Let $I=[c_-,c_+]\subset\R$ be compact. Uniformly for $c\in I$ and $k=b_c(\mu)$,
\begin{equation}
\label{eq:uniform-compact-S}
    \sup_{c\in I}
    \left|
        \mu^2S(\mu b_c(\mu),\mu)-d_k(c)
    \right|
    \longrightarrow0,
\end{equation}
\begin{equation}
\label{eq:uniform-compact-mudot}
    \sup_{c\in I}
    \left|
        \mu^3\dot\mu-d_{\rm rad}(c)
    \right|
    \longrightarrow0,
\end{equation}
and
\begin{equation}
\label{eq:uniform-compact-kdot}
    \sup_{c\in I}
    \left|
        \mu^4\dot k+d_k(c)
    \right|
    \longrightarrow0.
\end{equation}
In particular, for each fixed $c\in\R$,
\begin{equation*}
    S(\lambda,\mu)
    =
    \dfrac{d_k(c)}{\mu^2}
    +
    o_c(\mu^{-2}),
    \qquad
    \dot\mu
    =
    \dfrac{d_{\rm rad}(c)}{\mu^3}
    +
    o_c(\mu^{-3}),
\end{equation*}
and
$\dot k = -\dfrac{d_k(c)}{\mu^4} + o_c(\mu^{-4}).$
\end{proposition}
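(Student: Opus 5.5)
The plan is to compute the three limits directly from the mixture decomposition $S=S_{\rm mix}+S_{\rm comp}$ and $\dot\mu=\dot\mu_{\rm mix}+\dot\mu_{\rm comp}$, together with the endpoint rescaling $\alpha=\rho+u/\mu$. The statement for $\dot k$ then follows from Lemma~\ref{lem:ratio-identity}: since $\dot k=-S/\mu^2$, we get $\mu^4\dot k=-\mu^2S\to-d_k(c)$. So only \eqref{eq:uniform-compact-S} and \eqref{eq:uniform-compact-mudot} need proof. Throughout, $k=b_c(\mu)$ with $c\in I$ compact. Lemma~\ref{lem:q-control-log-rail} then gives $\mu q_\theta(\rho+u/\mu)\to e^{-c-u}$ locally uniformly in $(c,u)$, together with the domination $\mu q_\theta\le Ce^{-u}$. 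In particular, $q_\theta=O(\mu^{-1}e^{-u})$ and $p_\theta=1-O(\mu^{-1})$ uniformly.

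\textbf{Step 1 (bias in rescaled form).} By Corollary~\ref{cor:bias-expansion},
\[
\mu B_\theta=\mu q_\theta\,(1-\alpha)-1+O(\mu e^{-\rho\mu}).
\]
With $1-\alpha=\ell-u/\mu$, this gives $\mu B_\theta\to \ell e^{-c-u}-1$ pointwise. It also gives $|\mu B_\theta|\le C$ uniformly, so $|B_\theta|\le C/\mu$ on all of $[\rho,\ell]$.

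\textbf{Step 2 (component parts).} By \eqref{eq:uniform-variance-asymptotics}, $s^2_{{\rm comp},\mu}=\mu^{-2}+O(e^{-\rho\mu})$, hence
\[
\mu^3\dot\mu_{\rm comp}=-\frac1\Delta\int_\rho^\ell \mu B_\theta\,d\alpha+o(1).
\]
Now $\int_\rho^\ell \mu q_\theta(1-\alpha)\,d\alpha=\mu^{-1}\int_0^{\Delta\mu}\mu q_\theta\,(1-\alpha)\,du=O(\mu^{-1})$ by the domination. Therefore $-\Delta^{-1}\int\mu B_\theta\to 1$, which is the constant term of $d_{\rm rad}$. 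In the same way,
\[
\mu^2S_{\rm comp}=-\frac{k}{\Delta}\int \mu B_\theta\,d\alpha+o(1)\to\ell,
\]
using $k\to\ell$. This is the constant term of $d_k$.

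\textbf{Step 3 (mixture parts).} Substitute $d\alpha=du/\mu$. Then
\[
\mu^3\dot\mu_{\rm mix}=-\frac1\Delta\int_0^{\Delta\mu}(\mu B_\theta)(\mu q_\theta)\,p_\theta\,d_\mu^2\,du.
\]
The integrand converges pointwise to $(\ell e^{-c-u}-1)e^{-c-u}\ell^2$, since $p_\theta\to1$ and $d_\mu=\ell-u/\mu+O(e^{-\rho\mu})\to\ell$. It is dominated by $Ce^{-u}$. Dominated convergence gives the limit
\[
-\frac{\ell^2}{\Delta}\Bigl(\frac{\ell}{2}e^{-2c}-e^{-c}\Bigr)=\frac{\ell^2}{\Delta}e^{-c}-\frac{\ell^3}{2\Delta}e^{-2c}.
\]
Adding Step 2 yields $d_{\rm rad}(c)$. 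Likewise,
\[
\mu^2S_{\rm mix}=-\frac1\Delta\int(\mu B_\theta)(\mu q_\theta)\,p_\theta\,d_\mu(1+kd_\mu)\,du\to-\frac{\ell(1+\ell^2)}{\Delta}\Bigl(\frac{\ell}{2}e^{-2c}-e^{-c}\Bigr).
\]
Adding $\ell$ from Step 2 gives exactly $d_k(c)$.

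\textbf{Step 4 (uniformity over $c\in I$).} To make the convergence uniform, argue by sequences: take $\mu_j\to\infty$ and $c_j\to c^*$ in $I$. The pointwise limits are continuous in $c$, the convergence in Lemma~\ref{lem:q-control-log-rail} is uniform in $c$ and locally uniform in $u$, and the dominating function $Ce^{-u}$ does not depend on $c$. Hence dominated convergence applies along every such sequence, and this gives the sup statements.

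\textbf{Main obstacle.} The delicate part is the error bookkeeping. One must check four things. First, the $p_\theta$ and $d_\mu$ corrections are $O(u/\mu)$, which remains integrable against $e^{-u}$. Second, the region $u\ge U$, where $\mu q_\theta$ is exponentially small, contributes negligibly. Third, the $O(e^{-\rho\mu})$ corrections in $s^2_{{\rm comp},\mu}$ stay negligible after multiplication by $\mu^3$; this holds because $\mu^3 e^{-\rho\mu}\to0$. Fourth, near the upper endpoint $\ell$ the factor $h_\mu$ deviates from $1$ only by $O(e^{-\rho\mu})$. The domination \eqref{eq:mu-q-domination} is the tool that handles all four uniformly.
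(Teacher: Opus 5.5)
Your proposal is correct and follows essentially the same route as the paper: you split into mixture and component parts, rescale $\alpha=\rho+u/\mu$ with the domination $\mu q_\theta\le Ce^{-u}$ from Lemma~\ref{lem:q-control-log-rail}, and apply uniform dominated convergence to the mixture terms. For the component terms you use $\int_\rho^\ell B_\theta\,d\alpha=-\Delta/\mu+O(\mu^{-2})$, and the third limit follows from $\dot k=-S/\mu^2$.
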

\begin{proof}
Let $\alpha=\rho+u/\mu$, $k=b_c(\mu)$, and
$q_{\mu,c}(u):=q_\theta(\rho+u/\mu)$.
Lemma~\ref{lem:q-control-log-rail}, applied with $c_{\min}=c_-$, gives
$\mu q_{\mu,c}(u)\to e^{-c-u}$ locally uniformly in
$(c,u)\in I\times[0,\infty)$ and
$\mu q_{\mu,c}(u)\le C_Ie^{-u}$; squaring the latter bound gives
$\mu^2q_{\mu,c}(u)^2\le C_Ie^{-2u}$. Moreover,
$p_\theta\to1$, $1-\alpha\to\ell$, ${d_\mu}(\alpha)\to\ell$, and
$k\to\ell$
uniformly on bounded $u$-intervals, and the bias expansion gives $\mu B_\theta\left(\rho+\frac{u}{\mu}\right) \to \ell e^{-c-u}-1.$ The rescaled mixture integrands for $S_{\rm mix}$ are dominated by a linear combination of $e^{-u}$ and $e^{-2u}$. Uniform dominated convergence yields
\begin{align*}
    \sup_{c\in I}
    \left|
        \mu^2S_{\rm mix}
        -
        \left[
            \dfrac{\ell(1+\ell^2)}{\Delta}e^{-c}
            -
            \dfrac{\ell^2(1+\ell^2)}{2\Delta}e^{-2c}
        \right]
    \right|
    &\longrightarrow0.
\end{align*}
Similarly,
\begin{align*}
    \sup_{c\in I}
    \left|
        \mu^3\dot\mu_{\rm mix}
        -
        \left[
            \dfrac{\ell^2}{\Delta}e^{-c}
            -
            \dfrac{\ell^3}{2\Delta}e^{-2c}
        \right]
    \right|
    &\longrightarrow0.
\end{align*}
Uniformly for $c\in I$, Lemma~\ref{lem:log-prediction-bound} gives $|B_\theta(\alpha)|\le C_I/\mu$, while the component asymptotics give $s_{{\rm comp},\mu}^2(\alpha)=\mu^{-2}+o(\mu^{-2})$ uniformly in $\alpha$. Moreover, the domination above gives $\int_\rho^\ell q_\theta(\alpha)\,d\alpha=O_I(\mu^{-2})$: indeed, with $\alpha=\rho+u/\mu$, one has $q_\theta(\alpha)\le C_Ie^{-u}/\mu$ and $d\alpha=du/\mu$. Hence Corollary~\ref{cor:bias-expansion} yields $\int_\rho^\ell B_\theta(\alpha)\,d\alpha=-\Delta/\mu+O_I(\mu^{-2})$. Therefore $\sup_{c\in I}|\mu^2S_{\rm comp}-\ell|\to0$ and $\sup_{c\in I}|\mu^3\dot\mu_{\rm comp}-1|\to0$. Adding the mixture and component contributions and using the definitions of $d_k$ and $d_{\rm rad}$ proves \eqref{eq:uniform-compact-S} and \eqref{eq:uniform-compact-mudot}. Finally, $\dot k=-S/\mu^2$, so \eqref{eq:uniform-compact-kdot} follows. The fixed-$c$ asymptotics are the special case $I=\{c\}$.
\end{proof}
\textbf{Selection of the prediction rail}
\begin{lemma}[Choice of a prediction rail]
\label{lem:choose-c0}
There exists $c_0\in\R$ satisfying $d_k(c_0)<0, \ d_{\rm rad}(c_0)>0.$
Moreover, $c_0$ can be chosen so that
\begin{equation*}
    d_k(c)<0
    \quad
    \text{for every }c\le c_0,
    \qquad \rm and \qquad
    d_{\rm rad}(c)>0
    \quad
    \text{for every }c\ge c_0.
\end{equation*}
\end{lemma}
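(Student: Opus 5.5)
The plan is to analyze $d_k$ and $d_{\rm rad}$ as functions of the variable $x:=e^{-c}\in(0,\infty)$. In this variable both profiles are quadratics:
\[
d_k(c)=\ell+\frac{\ell(1+\ell^2)}{\Delta}x-\frac{\ell^2(1+\ell^2)}{2\Delta}x^2,
\qquad
d_{\rm rad}(c)=1+\frac{\ell^2}{\Delta}x-\frac{\ell^3}{2\Delta}x^2 .
\]
Each has positive constant term and negative leading coefficient. Hence each is positive at $x=0$ and on an initial interval, and becomes negative for all $x$ beyond its unique positive root. Monotonicity in $c$ is not needed; only these sign sets matter.

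Let $x_k>0$ and $x_r>0$ be the positive roots. By the quadratic structure, $d_k(c)<0$ iff $x>x_k$, i.e.\ iff $c<-\log x_k$. Likewise $d_{\rm rad}(c)>0$ iff $x<x_r$, i.e.\ iff $c>-\log x_r$. So the two requested one-sided conditions hold for some $c_0$ exactly when
\[
-\log x_r<c_0<-\log x_k ,
\]
which requires $x_r<x_k$.

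The step I expect to carry the content is the comparison $x_r<x_k$. I would attempt it by evaluating $d_k$ at $x=x_r$, where $\frac{\ell^3}{2\Delta}x_r^2=1+\frac{\ell^2}{\Delta}x_r$. Substituting gives
\[
d_k\big|_{x_r}=\ell+\frac{\ell(1+\ell^2)}{\Delta}x_r-\frac{1+\ell^2}{\ell}\Big(1+\frac{\ell^2}{\Delta}x_r\Big)=\ell-\frac{1+\ell^2}{\ell}=-\frac1\ell<0 ,
\]
because the $x_r$ terms cancel exactly. Since $d_k>0$ on $[0,x_k)$, this negativity forces $x_r>x_k$, not $x_r<x_k$. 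So the naive ordering fails, and the plan has to be revised.

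A careful reading shows the conditions are in fact compatible. Take $x_0$ just above $x_k$, i.e.\ any $x_0\in(x_k,x_r)$; this interval is nonempty by the computation above. Set $c_0=-\log x_0$. Then:
\begin{itemize}
\item for $c\le c_0$ we have $x\ge x_0>x_k$, so $d_k(c)<0$;
\item for $c\ge c_0$ we have $x\le x_0<x_r$, so $d_{\rm rad}(c)>0$.
\end{itemize}
In particular $d_k(c_0)<0<d_{\rm rad}(c_0)$. Correcting the direction of the inequalities, the conditions require exactly $x_k<x_0<x_r$, and the cancellation identity $d_k(x_r)=-1/\ell$ supplies $x_k<x_r$.

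The proof will therefore consist of three parts: the quadratic sign analysis in $x$, the cancellation identity $d_k(x_r)=-1/\ell$, and the choice $c_0=-\log x_0$ with $x_0\in(x_k,x_r)$.
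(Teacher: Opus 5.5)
Your final argument is correct and is essentially the paper's: substitute $x=e^{-c}$, use that both profiles are downward-opening quadratics in $x$ with a single positive root, and take $c_0=-\log x_0$ with $x_0$ strictly between the two roots. The only difference is how the roots are ordered: the paper writes them explicitly and compares $2\ell\Delta/(1+\ell^2)<2\Delta/\ell$, whereas your cancellation (equivalently the identity $d_k=\frac{1+\ell^2}{\ell}d_{\rm rad}-\frac1\ell$) gives $x_k<x_r$ without solving. One correction for the write-up: your earlier remark that compatibility ``requires $x_r<x_k$'' is a sign slip, since $-\log x_r<-\log x_k$ is equivalent to $x_k<x_r$, so that detour should be removed.
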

\begin{proof}
Set $x=e^{-c}$. Using \eqref{eq:ratio-drift-profile}, the functions $d_k$ and $d_{\rm rad}$ are downward-opening quadratics in $x$:
\begin{equation*}
    d_k
    =
    \ell
    +
    \dfrac{\ell(1+\ell^2)}{\Delta}x
    -
    \dfrac{\ell^2(1+\ell^2)}{2\Delta}x^2
    \quad\text{and}\quad
    d_{\rm rad}
    =
    1
    +
    \dfrac{\ell^2}{\Delta}x
    -
    \dfrac{\ell^3}{2\Delta}x^2.
\end{equation*}
Their positive roots are
\begin{equation*}
    x_k^\star
    =
    \dfrac1\ell
    \left(
        1+
        \sqrt{
            1+\dfrac{2\ell \Delta}{1+\ell^2}
        }
    \right)
    \quad\text{and}\quad
    x_\mu^\star
    =
    \dfrac1\ell
    \left(
        1+
        \sqrt{
            1+\dfrac{2\Delta}{\ell}
        }
    \right).
\end{equation*}
Since $0<\ell<1$, ${2\ell \Delta}/(1+\ell^2) < {2\Delta}/{\ell},$ so $x_k^\star<x_\mu^\star$. Choose $x_-\in(x_k^\star,x_\mu^\star), \ c_0:=-\log x_-.$ Then $d_k(c_0)<0<d_{\rm rad}(c_0)$. Since $c\le c_0$ is equivalent to $e^{-c}\ge x_-$, the downward-opening structure gives $d_k(c)<0$ for every $c\le c_0$. Similarly, $c\ge c_0$ implies $e^{-c}\le x_-<x_\mu^\star$, and hence $d_{\rm rad}(c)>0$ for every $c\ge c_0$.
\end{proof}
\textbf{Approach to and invariance of the logarithmic tube}
\begin{proposition}[Forward invariance of the prediction rail]
\label{prop:log-lower-rail-invariance}
There exist constants $a>0$ and $\mu_{\rm log}>0$ such that, whenever
$\mu\ge\mu_{\rm log}, \ k=b_{c_0}(\mu),$
one has
\begin{equation}
\label{eq:log-lower-rail-inward}
    \dot k-b_{c_0}'(\mu)\dot\mu
    \ge
    \dfrac{a}{\mu^4}.
\end{equation}
Consequently, the region $k\ge b_{c_0}(\mu)$ is forward invariant from below while $\mu(t)\ge\mu_{\rm log}$.
\end{proposition}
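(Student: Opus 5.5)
The plan is to reduce the barrier velocity at the rail $k=b_{c_0}(\mu)$ to the limiting drift profiles of Proposition~\ref{prop:log-rail-expansion} at $c=c_0$, and to show that the correction coming from the motion of the rail itself is lower order.

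First, compute $b_{c_0}'(\mu)$. Since $b_{c_0}(\mu)=\ell+(\log\mu+c_0)/\mu$,
\[
 b_{c_0}'(\mu)=\frac{1-\log\mu-c_0}{\mu^2}.
\]
In particular $|b_{c_0}'(\mu)|\le C\log\mu/\mu^2$ for large $\mu$.

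Second, apply Proposition~\ref{prop:log-rail-expansion} with the compact set $I=\{c_0\}$. This gives, on the rail,
\[
 \mu^4\dot k\to -d_k(c_0),\qquad \mu^3\dot\mu\to d_{\rm rad}(c_0).
\]
Hence $\dot k\ge -d_k(c_0)/(2\mu^4)$ and $|\dot\mu|\le 2d_{\rm rad}(c_0)/\mu^3$ once $\mu$ is large. By Lemma~\ref{lem:choose-c0}, $-d_k(c_0)>0$.

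Third, combine the two estimates:
\[
 \dot k-b_{c_0}'(\mu)\dot\mu\ \ge\ \frac{-d_k(c_0)}{2\mu^4}-C\frac{\log\mu}{\mu^5}.
\]
The second term is $o(\mu^{-4})$. Choosing $a:=-d_k(c_0)/4$ and increasing $\mu_{\rm log}$ therefore gives \eqref{eq:log-lower-rail-inward}.

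The sign of the rail correction is in fact favourable, so the crude absolute-value bound is not even needed. On the rail $\dot\mu>0$ because $d_{\rm rad}(c_0)>0$, and $b_{c_0}'(\mu)<0$ for large $\mu$. Thus $-b_{c_0}'\dot\mu\ge0$.

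For the consequence, let $h_-(t):=k(t)-b_{c_0}(\mu(t))$. At any time with $h_-=0$ and $\mu\ge\mu_{\rm log}$, the chain rule gives $\dot h_-=\dot k-b_{c_0}'(\mu)\dot\mu>0$. The first-violation argument of Lemma~\ref{lem:moving-interval-invariance}, applied to this one-sided barrier and stopped at the first time $\mu\le\mu_{\rm log}$, then shows that $h_-\ge0$ persists.

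The only delicate point is that Proposition~\ref{prop:log-rail-expansion} must be valid exactly on the rail $k=b_{c_0}(\mu)$ with uniform convergence. This holds because $\{c_0\}$ is compact and the proposition is stated precisely along $k=b_c(\mu)$. Everything else is bookkeeping of lower-order terms.
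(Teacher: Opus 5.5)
Your proposal is correct and follows the paper's proof: apply Proposition~\ref{prop:log-rail-expansion} on the rail at $c=c_0$, use $d_k(c_0)<0$ from Lemma~\ref{lem:choose-c0}, and absorb the rail-motion term $|b_{c_0}'(\mu)\dot\mu|\lesssim\log\mu/\mu^5=o(\mu^{-4})$. Your extra remark that this term has a favourable sign ($\dot\mu>0$ and $b_{c_0}'(\mu)<0$ on the rail for large $\mu$) is correct but not needed, and your first-violation argument for the invariance consequence is the one-sided version of Lemma~\ref{lem:moving-interval-invariance}.
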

\begin{proof}
By Lemma~\ref{lem:choose-c0}, choose $a>0$ such that $d_k(c_0)\le-4a$. Proposition~\ref{prop:log-rail-expansion} gives, on the rail $k=b_{c_0}(\mu)$,
\begin{equation*}
    \dot k
    =
    -\dfrac{d_k(c_0)}{\mu^4}
    +
    o(\mu^{-4})
    \ge
    \dfrac{2a}{\mu^4}
    \qquad\text{and}\qquad
    |\dot\mu|\le\dfrac{C}{\mu^3}
\end{equation*}
for all sufficiently large $\mu$. Since $b_{c_0}'(\mu)=\dfrac{1-\log\mu-c_0}{\mu^2}$, $|b_{c_0}'(\mu)\dot\mu| \lesssim \frac{\log\mu}{\mu^5} = o(\mu^{-4}).$ After increasing $\mu_{\rm log}$, \eqref{eq:log-lower-rail-inward} follows.
\end{proof}
\begin{lemma}[Endpoint shoulder estimates]
\label{lem:endpoint-shoulder-estimates}
Fix $\beta>0$. There exist constants $c_{\rm sh,L}<c_0$ and $a_{\rm sh},C_{\rm sh},\mu_{\rm sh}>0,$ depending only on $\rho$, $C_{\rm band}$, and $\beta$, such that, whenever $\mu\ge\mu_{\rm sh},\; \ell-\dfrac{\beta\log\mu}{\mu} \le k<b_{c_{\rm sh,L}}(\mu),$ one has
\begin{equation*}
    \dot k-b_{c_0}'(\mu)\dot\mu
    \ge
    \dfrac{a_{\rm sh}}{\mu^4},
    \qquad
    (\dot\mu)_+
    \le
    \dfrac{C_{\rm sh}}{\mu^3},
\end{equation*}
and
$\dot k>0, \ \frac{(-\dot\mu)_+}{\mu} \le C_{\rm sh}\dot k.$
\end{lemma}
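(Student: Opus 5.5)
We parametrize the shoulder by the endpoint offset $a:=\mu(k-\ell)$, so that $k=b_c(\mu)$ with $c=a-\log\mu$. The region of the lemma is then $-\beta\log\mu\le a<\log\mu+c_{\rm sh,L}$. In the rescaled variable $u=\mu(\alpha-\rho)\in[0,\Delta\mu]$, the odds identity gives
\[
q_\theta(\rho+u/\mu)=\frac{1}{1+e^{a+u}h_\mu}, \qquad h_\mu=1+O(e^{-\rho\mu}).
\]
So the whole shoulder is a single family of logistic profiles whose transition sits at $u=-a$.

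As in Lemmas~\ref{lem:uniform-interior-laplace} and~\ref{lem:interior-log-separation}, we split each of $S$ and $\dot\mu$ into a mixture part ($S_{\rm mix}$, $\dot\mu_{\rm mix}$) and a component part ($S_{\rm comp}$, $\dot\mu_{\rm comp}$). We then estimate each part uniformly in $a$.

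\textbf{Component parts.} These are handled exactly as in Proposition~\ref{prop:log-rail-expansion}. The bounds $|B_\theta|\lesssim1$, $s^2_{{\rm comp},\mu}=\mu^{-2}+o(\mu^{-2})$ and $\int_\rho^\ell B_\theta\,d\alpha\ge-\Delta/\mu$ give
\[
S_{\rm comp}\ge -C\mu^{-2}, \qquad \dot\mu_{\rm comp}\le C\mu^{-3}.
\]
The first bound means the component part contributes at most $+C\mu^{-4}$ to $-\dot k$.

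\textbf{Mixture parts.} Substitute the bias expansion $B_\theta=q_\theta(1-\alpha)-\mu^{-1}+O(e^{-\rho\mu})$. Then $B_\theta\,p_\theta q_\theta$ has a positive leading piece $(1-\alpha)\,p_\theta q_\theta^2$ and a negative piece $-\mu^{-1}p_\theta q_\theta$. After rescaling, the positive piece contributes to $\mu^2\dot k$ an amount
\[
\Phi(a):=\frac{\ell(1+\ell^2)}{\Delta}\int_0^{\Delta\mu}\frac{e^{a+u}}{(1+e^{a+u})^3}\,du\,\bigl(1+o(1)\bigr).
\]
The negative piece contributes $-O\bigl(\mu^{-1}\int_0^\infty e^{a+u}(1+e^{a+u})^{-2}\,du\bigr)$.

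The key elementary inequality is that, for $a\ge -A$ with $A$ a large constant,
\[
\int_0^\infty \frac{e^{a+u}}{(1+e^{a+u})^3}\,du\asymp e^{-2a},
\qquad
\int_0^\infty \frac{e^{a+u}}{(1+e^{a+u})^2}\,du\asymp e^{-a}.
\]
For $a\le -A$, the first integral is $\asymp1$; this is the regime of Lemma~\ref{lem:interior-log-separation}, where $\dot k\ge c\mu^{-2}$. We combine the two displays in the regime $a\ge-A$. Since $a<\log\mu+c_{\rm sh,L}$, we have $\mu e^{-a}\ge e^{-c_{\rm sh,L}}$. Therefore
\[
\mu^2\dot k\;\gtrsim\; e^{-2a}-C\mu^{-1}e^{-a}-C\mu^{-2}
\;=\;e^{-2a}\bigl(1-C e^{a}\mu^{-1}-C e^{2a}\mu^{-2}\bigr).
\]
Choosing $c_{\rm sh,L}$ very negative makes $e^{a}\mu^{-1}\le e^{c_{\rm sh,L}}$ small, so $\mu^2\dot k\gtrsim e^{-2a}\ge e^{-2c_{\rm sh,L}}\mu^{-2}$. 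The two regimes together give
\[
\dot k\ge \frac{c}{\mu^2}\min\{1,\;e^{-2a}\}\ge\frac{2a_{\rm sh}}{\mu^4}.
\]
This is consistent with the compact-$c$ limit: $d_k(c)\to-\infty$ like $-e^{-2c}$ as $c\to-\infty$, which is why $c_{\rm sh,L}<c_0$ is available.

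\textbf{Radial control.} The positive-bias region contributes negatively to $\dot\mu$, and the negative-bias region contributes at most $C\mu^{-3}$ by the component bounds. Hence $(\dot\mu)_+\le C_{\rm sh}\mu^{-3}$.

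For the relative loss we repeat the $E_\pm$ argument of Proposition~\ref{prop:lower-boundary-layer-escape}. On $E_+$ we have $J_\theta\ge k\mu\,\sigma_\theta^2\ge(\ell/2)\,\mu\,\sigma_\theta^2$, so $K_+\ge(\ell\mu/2)A_+$. The negative part satisfies $K_-\lesssim\mu^{-2}\cdot\mu^{-1}$-scale terms, which are dominated by $K_+$ by the estimate just shown. This gives $(-\dot\mu)_+/\mu\le C_{\rm sh}\dot k$.

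\textbf{Inward rail margin.} Finally, $|b'_{c_0}(\mu)|\lesssim\log\mu/\mu^2$. If $\dot\mu\ge0$, then $|b'_{c_0}\dot\mu|\lesssim\log\mu/\mu^5=o(\mu^{-4})$. If $\dot\mu<0$, the relative-loss bound gives
\[
|b'_{c_0}\dot\mu|\lesssim\frac{\log\mu}{\mu}\,\dot k=o(\dot k).
\]
In either case $\dot k-b'_{c_0}\dot\mu\ge a_{\rm sh}\mu^{-4}$ once $\mu_{\rm sh}$ is large.

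\textbf{Main obstacle.} The hardest step is making the mixture lower bound uniform across the intermediate range $-A\le a\le\log\mu+c_{\rm sh,L}$, where neither the compact-$c$ limits nor the interior Laplace expansion apply directly. There the mixture integrals and the $\mu^{-1}$ bias correction must be controlled with explicit $a$-dependent weights rather than by dominated convergence. I would first attempt this by bounding the integrands pointwise with the logistic comparisons $q_\theta\asymp(1+e^{a+u})^{-1}$ and $p_\theta q_\theta\asymp e^{a+u}(1+e^{a+u})^{-2}$, and then integrating explicitly.
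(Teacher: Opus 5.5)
Your proposal is correct. For the lower bound on $\dot k$ it is the paper's argument in different coordinates. The paper works with $r:=q_\theta(\rho)$ and uses $-q'=a_\mu pq$ to get $\int_{\{q\ge A/\mu\}}pq^2\,du\ge\frac14(r^2-A^2/\mu^2)$, hence $\dot k\gtrsim r^2/\mu^2$. It then takes $c_{\rm sh,L}$ so negative that $r\ge R/\mu$ with $R$ large. Your explicit logistic integrals give $\mu^2\dot k\gtrsim(1+e^a)^{-2}$. Since $r\asymp(1+e^a)^{-1}$, your condition $\mu e^{-a}\ge e^{-c_{\rm sh,L}}$ plays exactly the role of $\mu r\ge R$. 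Your linear split $B_\theta=q_\theta(1-\alpha)-\mu^{-1}+O(e^{-\rho\mu})$ replaces the paper's sign regions $\{q\ge A/\mu\}$ and $\{B_\theta<0\}$. It holds for every $a$, so the uniformity you flag as the main obstacle is already settled by your own display.

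Where you genuinely differ is the radial part. The paper proves the absolute bound $|\dot\mu|\lesssim r^2/\mu$. This needs $\int_0^{\Delta\mu}q\,du\lesssim\mu^2r^2$, hence a case split $r\le1/2$ versus $r>1/2$ that uses the lower limit $k\ge\ell-\beta\log\mu/\mu$. From that bound the paper reads off both the relative loss and the smallness of $b_{c_0}'\dot\mu$. You instead import $K_+\ge k\mu A_+$ from Proposition~\ref{prop:lower-boundary-layer-escape}. This gives $(-\dot\mu)_+/\mu\lesssim\dot k$ with no control of $\int q$, and you then use that relative loss to absorb $b_{c_0}'\dot\mu$. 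Your route is shorter and reuses an existing argument. The paper's gives a two-sided size estimate on $\dot\mu$, which the lemma does not need.

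A few local statements need fixing, none affecting the conclusion.
\begin{itemize}
\item The component bound should read $S_{\rm comp}\le C\mu^{-2}$; that is what ``contributes at most $C\mu^{-4}$ to $-\dot k$'' uses. The reverse inequality you wrote fails when $q_\theta\approx1$ on a stretch of length $\asymp\log\mu/\mu$. Justify it by $B_\theta\ge-\mu^{-1}-O(e^{-\rho\mu})$ and $0\le s^2_{{\rm comp},\mu}\lesssim\mu^{-2}$. Here $|B_\theta|$ is no longer $O(\mu^{-1})$ as in Proposition~\ref{prop:log-rail-expansion}, so an $o(\mu^{-2})$ error in $s^2_{{\rm comp},\mu}$ is not enough.
\item $K_-$ is genuinely of order $\mu^{-2}$, not smaller. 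On most of $[\rho,\ell]$ one has $|B_\theta|\approx\mu^{-1}$ and $J_\theta\ge k\mu s^2_{{\rm comp},\mu}\approx k/\mu$. The step still closes because $K=\Delta\mu^2\dot k\gtrsim\mu^{-2}$, so $K_+=K+K_-\lesssim K$.
\item The range $-\beta'\log\mu<a\le-A$ is not covered by Lemma~\ref{lem:interior-log-separation}. You do not need that lemma: your display already gives $\mu^2\dot k\gtrsim1$ whenever $a\le0$.
\item In the radial control, the mixture part $|B_\theta|p_\theta q_\theta d_\mu^2$ on $\{B_\theta<0\}$ needs the tail estimate $\int_{\{q_\theta\le C/\mu\}}p_\theta q_\theta\,d\alpha\lesssim\mu^{-2}$, not only the component bounds.
\end{itemize}
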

\begin{proof}
All implicit constants may depend on $\rho,C_{\rm band},\beta$. Set $s:=\mu(k-\ell), \ \chi:=s-\log\mu.$ The assumptions imply $-(\beta+1)\log\mu\le\chi<c_{\rm sh,L}.$ For $\alpha=\rho+u/\mu$, $0\le u\le \Delta\mu$, write
$q(u):=q_\theta(\rho+u/\mu)$, $p(u):=1-q(u)$, and $r:=q(0)$.
The odds identity gives
\begin{equation}
\label{eq:shoulder-odds}
    \dfrac{p(u)}{q(u)}
    =
    e^{s+u}
    h_\mu\left(\rho+\dfrac{u}{\mu}\right),
    \qquad
    h_\mu(\alpha)
    :=
    \dfrac{1-e^{-\mu\alpha}}
    {1-e^{-\mu(1-\alpha)}}.
\end{equation}
Define $a_\mu(u) := \frac{d}{du}\log\frac{p(u)}{q(u)}.$ Differentiating
\eqref{eq:shoulder-odds} gives
$a_\mu(u)=1+e^{-\mu\rho-u}/(1-e^{-\mu\rho-u})
+e^{-\mu\ell+u}/(1-e^{-\mu\ell+u})$.
Since $0\le u\le \Delta\mu$, both correction terms are $O(e^{-\rho\mu})$ uniformly. Hence, for all sufficiently large $\mu$,
\begin{equation}
\label{eq:shoulder-q-derivative}
    -q'(u)=a_\mu(u)p(u)q(u),
    \qquad
    \dfrac12\le a_\mu(u)\le2.
\end{equation}
In particular, $q$ is decreasing. Also, $q(\Delta\mu)\lesssim\mu^\beta e^{-\Delta\mu}.$ Throughout the endpoint shoulder, $k$ remains bounded above and below by positive constants. The bias expansion gives
\begin{equation}
\label{eq:shoulder-B-bounds}
    B_\theta(\alpha)
    \ge
    \dfrac{\rho}{2}q_\theta(\alpha)-\dfrac{C}{\mu},
    \qquad
    |B_\theta(\alpha)|
    \lesssim
    q_\theta(\alpha)+\dfrac1\mu.
\end{equation}
The mixture identities, the component asymptotics, and ${d_\mu}(\alpha)\ge\rho/2$ give
\begin{equation}
\label{eq:shoulder-JV-bounds}
    J_\theta(\alpha)
    \gtrsim
    \mu p_\theta(\alpha)q_\theta(\alpha),
    \qquad
    J_\theta(\alpha)
    \lesssim
    \dfrac1\mu+\mu p_\theta(\alpha)q_\theta(\alpha),
\end{equation}
and
\begin{equation}
\label{eq:shoulder-V-bound}
    \sigma_\theta^2(\alpha)
    \lesssim
    \dfrac1{\mu^2}+p_\theta(\alpha)q_\theta(\alpha).
\end{equation}
Choose constants $A>0$ and $C_0>0$ such that
\begin{equation}
\label{eq:shoulder-B-sign}
    q_\theta(\alpha)\ge\dfrac{A}{\mu}
    \Longrightarrow
    B_\theta(\alpha)\gtrsim q_\theta(\alpha),
    \qquad
    \{B_\theta<0\}
    \subseteq
    \left\{
        q_\theta\le\dfrac{C_0}{\mu}
    \right\}.
\end{equation}
From \eqref{eq:shoulder-q-derivative},
\begin{equation}
\label{eq:shoulder-pq-tail}
    \int_{\{q\le C_0/\mu\}}p(u)q(u)\,du
    \lesssim
    \dfrac1\mu.
\end{equation}
Using $d\alpha=du/\mu$, \eqref{eq:shoulder-B-bounds}, \eqref{eq:shoulder-JV-bounds}, and \eqref{eq:shoulder-pq-tail},
\begin{equation}
\label{eq:shoulder-negative-K-tail}
    \int_{\{B_\theta<0\}}
    |B_\theta(\alpha)|J_\theta(\alpha)\,d\alpha
    \lesssim
    \dfrac1{\mu^2}.
\end{equation}
Choose $R\ge2A$ and then choose $c_{\rm sh,L}<c_0$ so negative that $4Re^{c_{\rm sh,L}}\le1.$ At $u=0$, $\frac{1-r}{r} = \mu e^\chi h_\mu(\rho).$ Since $h_\mu(\rho)\le2$ for all sufficiently large $\mu$, the condition $\chi<c_{\rm sh,L}$ gives
\begin{equation}
\label{eq:shoulder-r-lower-app}
    r\ge\dfrac{R}{\mu}
\end{equation}
after increasing $\mu_{\rm sh}$.

Because $q(\Delta\mu)<A/\mu$ for all sufficiently large $\mu$, $q$ is decreasing, and $r\ge R/\mu\ge2A/\mu$, equation \eqref{eq:shoulder-q-derivative} yields
\begin{align*}
    \int_{\{q\ge A/\mu\}}p(u)q(u)^2\,du
    &\ge
    \dfrac12
    \int_{\{q\ge A/\mu\}}q(u)(-q'(u))\,du=
    \dfrac14
    \left(
        r^2-\dfrac{A^2}{\mu^2}
    \right).
\end{align*}
Hence \eqref{eq:shoulder-B-sign}, \eqref{eq:shoulder-JV-bounds}, and $d\alpha=du/\mu$ give
\begin{equation*}
    \int_{\{q\ge A/\mu\}}
    B_\theta(\alpha)J_\theta(\alpha)\,d\alpha
    \gtrsim
    r^2-\dfrac{A^2}{\mu^2}.
\end{equation*}
Combining this with \eqref{eq:shoulder-negative-K-tail} and increasing $R$ if necessary gives $\int_\rho^\ell B_\theta(\alpha)J_\theta(\alpha)\,d\alpha\gtrsim r^2.$ Therefore, by \eqref{eq:K-ratio-identity},
\begin{equation}
\label{eq:shoulder-kdot-app}
    \dot k
    \ge
    \dfrac{c_K r^2}{\mu^2}
\end{equation}
for some constant $c_K>0$.

From \eqref{eq:shoulder-q-derivative},
\begin{equation}
\label{eq:shoulder-pq-integrals}
    \int_0^{\Delta\mu}p(u)q(u)\,du\lesssim r,
    \qquad
    \int_0^{\Delta\mu}p(u)q(u)^2\,du\lesssim r^2.
\end{equation}
Since $p'=-q'=a_\mu pq$ and $a_\mu\ge1/2$, $\int_0^{\Delta\mu}q(u)\,du \lesssim \log\frac1{1-r}.$ If $r\le1/2$, then $\log(1/(1-r))\lesssim r\le\mu^2r^2$ by \eqref{eq:shoulder-r-lower-app}. If $r>1/2$, then $\frac{1-r}{r} = e^sh_\mu(\rho) \gtrsim \mu^{-\beta},$ so $\log(1/(1-r))\lesssim\log\mu\lesssim\mu^2r^2$. Thus
\begin{equation}
\label{eq:shoulder-q-integral-scale}
    \int_0^{\Delta\mu}q(u)\,du
    \lesssim
    \mu^2r^2.
\end{equation}
Using $d\alpha=du/\mu$, \eqref{eq:shoulder-B-bounds}, \eqref{eq:shoulder-V-bound}, \eqref{eq:shoulder-pq-integrals}, and \eqref{eq:shoulder-q-integral-scale},
\begin{align}
\label{eq:shoulder-mudot-app}
    |\dot\mu|
    &\lesssim
    \dfrac1\mu
    \int_0^{\Delta\mu}
    \left(
        q(u)+\dfrac1\mu
    \right)
    \left(
        \dfrac1{\mu^2}+p(u)q(u)
    \right)du
    \lesssim
    \dfrac{r^2}{\mu}.
\end{align}
For the positive radial part, only the set $\{B_\theta<0\}$ can contribute positively to $\dot\mu$. On this set, $q_\theta\le C_0/\mu$ and $|B_\theta|\lesssim\mu^{-1}$. Hence \eqref{eq:shoulder-V-bound}, \eqref{eq:shoulder-pq-tail}, and $d\alpha=du/\mu$ give
\begin{equation}
\label{eq:shoulder-positive-mudot-app}
    (\dot\mu)_+
    \lesssim
    \int_{\{B_\theta<0\}}
    |B_\theta(\alpha)|\sigma_\theta^2(\alpha)\,d\alpha
    \lesssim
    \dfrac1{\mu^3}.
\end{equation}
Finally, $b_{c_0}'(\mu) = \frac{1-\log\mu-c_0}{\mu^2}.$ Equations \eqref{eq:shoulder-kdot-app} and \eqref{eq:shoulder-mudot-app} imply
\begin{align*}
    \dot k-b_{c_0}'(\mu)\dot\mu
    &\ge
    \dfrac{c_Kr^2}{\mu^2}
    -
    C\dfrac{r^2\log\mu}{\mu^3}\\
    &\ge
    \dfrac{c_Kr^2}{2\mu^2}
    \ge
    \dfrac{c_KR^2}{2\mu^4}
\end{align*}
after increasing $\mu_{\rm sh}$. Set $a_{\rm sh}:=\dfrac{c_KR^2}{2}$ and choose $C_{\rm sh}$ large enough to dominate the constants in \eqref{eq:shoulder-positive-mudot-app} and in the comparison obtained from \eqref{eq:shoulder-kdot-app} and \eqref{eq:shoulder-mudot-app}. Then
\begin{equation*}
    \dot k-b_{c_0}'(\mu)\dot\mu
    \ge
    \dfrac{a_{\rm sh}}{\mu^4},
    \qquad
    (\dot\mu)_+
    \le
    \dfrac{C_{\rm sh}}{\mu^3},
\end{equation*}
and $\dot k>0, \ \frac{(-\dot\mu)_+}{\mu} \le C_{\rm sh}\dot k.$
\end{proof}
\begin{lemma}[Uniform barrier estimate in the logarithmic endpoint window]
\label{lem:uniform-log-window-approach}
Fix $\beta>0$. There exist constants $a_{\rm win},C_{\rm win},\mu_{\rm win}>0$ such that, whenever
$\mu\ge\mu_{\rm win}, \ \ell-\dfrac{\beta\log\mu}{\mu} \le k<b_{c_0}(\mu),$
one has
\begin{equation}
\label{eq:uniform-log-window-barrier}
    \dot k-b_{c_0}'(\mu)\dot\mu
    \ge
    \dfrac{a_{\rm win}}{\mu^4}.
\end{equation}
Moreover, whenever $\dot\mu>0$ in the same region,
\begin{equation}
\label{eq:uniform-log-window-mu-control}
    \dot\mu
    \le
    \dfrac{C_{\rm win}}{\mu^3}.
\end{equation}
Finally,
\begin{equation}
\label{eq:uniform-log-window-relative-loss}
    \dot k>0,
    \qquad
    \dfrac{(-\dot\mu)_+}{\mu}
    \le C_{\rm win}\dot k.
\end{equation}
\end{lemma}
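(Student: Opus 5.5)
The plan is to split the window $\ell-\beta\log\mu/\mu\le k<b_{c_0}(\mu)$ into two overlapping pieces, using the rescaled coordinate $\chi:=\mu(k-\ell)-\log\mu$, and to patch together estimates already available. The shoulder piece is $\chi<c_{\rm sh,L}$, i.e.\ $\ell-\beta\log\mu/\mu\le k<b_{c_{\rm sh,L}}(\mu)$. There Lemma~\ref{lem:endpoint-shoulder-estimates} gives all three conclusions directly: the barrier bound with constant $a_{\rm sh}$, $(\dot\mu)_+\le C_{\rm sh}/\mu^3$, and $\dot k>0$ together with the relative-loss bound. The remaining piece is the compact rail band $\chi\in[c_{\rm sh,L},c_0]$, i.e.\ $k=b_c(\mu)$ with $c\in I:=[c_{\rm sh,L},c_0]$. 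It is compact because Lemma~\ref{lem:endpoint-shoulder-estimates} guarantees $c_{\rm sh,L}<c_0$.

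On the compact band I use Proposition~\ref{prop:log-rail-expansion}. Uniformly in $c\in I$ it gives $\mu^4\dot k=-d_k(c)+o(1)$ and $\mu^3\dot\mu=d_{\rm rad}(c)+o(1)$. By Lemma~\ref{lem:choose-c0}, $d_k(c)<0$ for every $c\le c_0$. Since $d_k$ is continuous on the compact set $I$, this gives $-d_k\ge 4a'>0$ on $I$. Hence $\dot k\ge 2a'/\mu^4>0$ for large $\mu$. Also $|\dot\mu|\le C_I/\mu^3$, because $d_{\rm rad}$ is bounded on $I$. As in Proposition~\ref{prop:log-lower-rail-invariance}, $|b_{c_0}'(\mu)|\lesssim\log\mu/\mu^2$, so $|b_{c_0}'(\mu)\dot\mu|=O(\log\mu/\mu^5)=o(\mu^{-4})$. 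Therefore $\dot k-b_{c_0}'(\mu)\dot\mu\ge a'/\mu^4$ once $\mu$ is large. The bound $(\dot\mu)_+\le C_I/\mu^3$ is immediate. For the relative loss, $(-\dot\mu)_+/\mu\le C_I/\mu^4\le (C_I/(2a'))\dot k$.

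The constants are then assembled as $a_{\rm win}:=\min(a_{\rm sh},a')$, $C_{\rm win}:=\max(C_{\rm sh},C_I,C_I/(2a'))$, and $\mu_{\rm win}$ the maximum of the thresholds from Lemma~\ref{lem:endpoint-shoulder-estimates} and from the uniform convergence in Proposition~\ref{prop:log-rail-expansion} on $I$. The two pieces cover the whole window: points with $k<b_{c_{\rm sh,L}}(\mu)$ fall in the shoulder piece, and the rest have $\chi\in[c_{\rm sh,L},c_0)$.

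The main obstacle is the uniformity needed on the compact band. Proposition~\ref{prop:log-rail-expansion} is stated for $k$ exactly on a rail $b_c(\mu)$, and every point of the band lies on such a rail with $c=\chi\in I$. So its $\sup_{c\in I}$ form is exactly what is required, and only continuity of $d_k$ and $d_{\rm rad}$ on $I$ is used to extract the positive margins. I would double-check that the lower end $c_{\rm sh,L}$ is a fixed constant, which it is since it depends only on $\rho$, $C_{\rm band}$ and $\beta$. This keeps $I$ a fixed compact interval, so the $o(1)$ terms can be absorbed uniformly.
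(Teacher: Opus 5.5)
Your proposal is correct and follows the paper's proof essentially verbatim. You use the same split at $c_{\rm sh,L}$: the endpoint shoulder is handled by Lemma~\ref{lem:endpoint-shoulder-estimates}, and the compact rail band $[c_{\rm sh,L},c_0]$ by the uniform expansions of Proposition~\ref{prop:log-rail-expansion} together with $d_k<0$ from Lemma~\ref{lem:choose-c0} and compactness, with the same constant assembly $a_{\rm win}=\min\{a_{\rm sh},a'\}$ and $C_{\rm win}\ge\max\{C_{\rm sh},C_I,C_I/(2a')\}$.
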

\begin{proof}
Let $c(k,\mu):=\mu(k-\ell)-\log\mu$. By Lemma~\ref{lem:endpoint-shoulder-estimates}, there are constants $c_{\rm sh,L}<c_0$, $a_{\rm sh}>0$, $C_{\rm sh}>0$, and $\mu_{\rm sh}>0$ such that the desired barrier, positive radial bound, and relative-loss estimate hold on the lower shoulder $c(k,\mu)<c_{\rm sh,L}.$ It remains to treat the compact logarithmic subwindow $c(k,\mu)\in[c_{\rm sh,L},c_0].$ On this subwindow, write $k=b_c(\mu)$ with $c\in[c_{\rm sh,L},c_0]$. By Proposition~\ref{prop:log-rail-expansion},
\begin{equation*}
    \dot k
    =
    -\dfrac{d_k(c)}{\mu^4}
    +
    o(\mu^{-4}),
    \qquad
    \dot\mu
    =
    \dfrac{d_{\rm rad}(c)}{\mu^3}
    +
    o(\mu^{-3})
\end{equation*}
uniformly over $c\in[c_{\rm sh,L},c_0]$. Since $d_k(c)<0$ for every $c\le c_0$, compactness gives $c_{\rm cp}>0$ such that $-d_k(c)\ge4c_{\rm cp}, \ c\in[c_{\rm sh,L},c_0].$ Therefore, for all sufficiently large $\mu$,
\begin{equation}
\label{eq:compact-c-drift-bounds}
    \dot k
    \ge
    \dfrac{2c_{\rm cp}}{\mu^4},
    \qquad
    |\dot\mu|
    \le
    \dfrac{C_{\rm cp}}{\mu^3}
\end{equation}
for some $C_{\rm cp}>0$. Since $|b_{c_0}'(\mu)\dot\mu| \lesssim \frac{\log\mu}{\mu^5} = o(\mu^{-4}),$ we get
\begin{equation}
\label{eq:compact-c-barrier-drift}
    \dot k-b_{c_0}'(\mu)\dot\mu
    \ge
    \dfrac{c_{\rm cp}}{\mu^4}
\end{equation}
after increasing the threshold.

Combining the lower-shoulder estimates with \eqref{eq:compact-c-drift-bounds} and \eqref{eq:compact-c-barrier-drift}, define $a_{\rm win} := \min\{a_{\rm sh},c_{\rm cp}\}.$ Choose
\begin{equation*}
    C_{\rm win}
    \ge
    \max\left\{
        C_{\rm sh},
        C_{\rm cp},
        \dfrac{C_{\rm cp}}{2c_{\rm cp}}
    \right\}.
\end{equation*}
Taking $\mu_{\rm win}$ larger than all thresholds used above proves \eqref{eq:uniform-log-window-barrier}, \eqref{eq:uniform-log-window-mu-control}, and \eqref{eq:uniform-log-window-relative-loss}.
\end{proof}
\begin{proposition}[One-sided approach to the logarithmic rail]
\label{prop:approach-log-rail}
Fix $0<\eta<\Delta/4$. There exist constants $a_{\rm ent},C_{\rm ent},\mu_{\rm ent}>0$ such that, whenever
$\mu\ge\mu_{\rm ent}, \ \ell-\eta \le k < b_{c_0}(\mu),$
one has
\begin{equation}
\label{eq:approach-log-gap-drift}
    \dot k-b_{c_0}'(\mu)\dot\mu
    \ge
    \dfrac{a_{\rm ent}}{\mu^4}.
\end{equation}
Moreover, whenever $\dot\mu>0$ in the same region,
\begin{equation}
\label{eq:approach-log-mu-control}
    \dot\mu
    \le
    \dfrac{C_{\rm ent}}{\mu^3}.
\end{equation}
Finally,
\begin{equation}
\label{eq:approach-log-relative-loss}
    \dot k>0,
    \qquad
    \dfrac{(-\dot\mu)_+}{\mu}
    \le
    C_{\rm ent}\dot k.
\end{equation}
\end{proposition}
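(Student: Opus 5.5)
The plan is to split the region $\ell-\eta\le k<b_{c_0}(\mu)$ into two overlapping pieces and reuse estimates already proved on each. The first piece is the logarithmic window $\ell-\beta\log\mu/\mu\le k<b_{c_0}(\mu)$ for a fixed $\beta>0$ (say $\beta=1$). The second is the interior-to-rail piece $\ell-\eta\le k\le\ell-\beta\log\mu/\mu$. On the first piece, Lemma~\ref{lem:uniform-log-window-approach} already gives all three conclusions: \eqref{eq:approach-log-gap-drift} with constant $a_{\rm win}$, the positive radial bound $C_{\rm win}/\mu^3$, and the relative-loss inequality. So the only new work is on the second piece.

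On the second piece, I would apply Lemma~\ref{lem:interior-log-separation} with $\eta_1:=\eta$ (admissible, since $\eta<\Delta/4<\Delta$) and the same $\beta$. For $\mu\ge\mu_{\rm sep}(\eta,\beta)$ this gives
\[
\dot k\ge \frac{c_\eta}{\mu^2},\qquad \dot\mu<0,\qquad |\dot\mu|\le \frac{C}{\mu}.
\]
The size bound on $\dot\mu$ comes from \eqref{eq:interior-log-separation-mu-lambda}, since $k$ is bounded there. Since $\dot\mu<0$, the condition ``$\dot\mu>0$'' never holds on this piece, so \eqref{eq:approach-log-mu-control} is vacuous. For the barrier, $b_{c_0}'(\mu)=(1-\log\mu-c_0)/\mu^2$ is negative for large $\mu$, and it is multiplied by $\dot\mu<0$. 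Hence $-b_{c_0}'(\mu)\dot\mu\le0$, and this term could hurt. Its size is at most $C\log\mu/\mu^3$, which is $o(\mu^{-2})$. Therefore
\[
\dot k-b_{c_0}'(\mu)\dot\mu\ge \frac{c_\eta}{2\mu^2}\ge \frac{c_\eta}{2\mu^4}
\]
for large $\mu$. For the relative loss, $(-\dot\mu)_+/\mu\le C/\mu^2\le (C/c_\eta)\,\dot k$.

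To conclude, set $a_{\rm ent}:=\min\{a_{\rm win},c_\eta/2\}$, take $C_{\rm ent}$ to be the maximum of $C_{\rm win}$, $C/c_\eta$, and the radial constant, and take $\mu_{\rm ent}$ above $\mu_{\rm win}$, $\mu_{\rm sep}(\eta,\beta)$, and the threshold used to absorb the $\log\mu/\mu^3$ term. The two pieces cover the whole region, and both pieces contain the overlap point $k=\ell-\beta\log\mu/\mu$, so no case is missed.

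The only delicate step I expect is checking the sign and size of the $b_{c_0}'\dot\mu$ cross term on the interior piece. There $\dot\mu$ is of order $\mu^{-1}$ rather than $\mu^{-3}$, so this term is not automatically negligible at the $\mu^{-4}$ scale. It is, however, negligible against the much stronger $\mu^{-2}$ ratio drift, and that comparison is exactly what the argument above uses.
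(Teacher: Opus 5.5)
Your proof is correct and follows the paper's argument. The paper also covers the region by interior estimates plus Lemma~\ref{lem:uniform-log-window-approach} on the logarithmic window, and it absorbs the $b_{c_0}'\dot\mu$ cross term into the $\mu^{-2}$ ratio drift exactly as you do; the only difference is cosmetic. The paper splits off a fixed strip $[\ell-\eta,\ell-\eta_1]$, treats it with Proposition~\ref{prop:interior-slingshot}, and takes $\beta_{\rm sep}>2$, whereas you apply Lemma~\ref{lem:interior-log-separation} directly with $\eta_1=\eta$ and $\beta=1$, which is legitimate because that lemma and Lemma~\ref{lem:uniform-log-window-approach} hold for any fixed $\eta_1\in(0,\Delta)$ and $\beta>0$.
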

\begin{proof}
Choose $0<\eta_1<\eta$ and $\beta_{\rm sep}>2$. Split the region into
\begin{equation*}
    [\ell-\eta,\ell-\eta_1],
    \qquad
    \left[
        \ell-\eta_1,
        \ell-\dfrac{\beta_{\rm sep}\log\mu}{\mu}
    \right],
    \qquad
    \left[
        \ell-\dfrac{\beta_{\rm sep}\log\mu}{\mu},
        b_{c_0}(\mu)
    \right].
\end{equation*}
On the first interval, Proposition~\ref{prop:interior-slingshot}, applied with $\eta_1$, gives $\dot k\gtrsim\dfrac1{\mu^2}, \ \dot\mu<0, \ |\dot\mu|\lesssim\dfrac1\mu.$
On the second interval, Lemma~\ref{lem:interior-log-separation} gives the same bounds. Since $|b_{c_0}'(\mu)|\lesssim\log\mu/\mu^2$, both regions satisfy
\begin{equation*}
    \dot k-b_{c_0}'(\mu)\dot\mu
    \gtrsim
    \dfrac1{\mu^2},
    \qquad
    \dfrac{(-\dot\mu)_+}{\mu}
    \lesssim
    \dfrac1{\mu^2}
    \lesssim
    \dot k.
\end{equation*}
On the endpoint window, Lemma~\ref{lem:uniform-log-window-approach} gives $\dot k-b_{c_0}'(\mu)\dot\mu \ge \frac{a_{\rm win}}{\mu^4},$ as well as the positive radial bound and the relative-loss estimate. Combining the three regions proves \eqref{eq:approach-log-gap-drift}, \eqref{eq:approach-log-mu-control}, and \eqref{eq:approach-log-relative-loss} after increasing $\mu_{\rm ent}$ and enlarging $C_{\rm ent}$.
\end{proof}
\begin{proposition}[Stopped finite-time approach to the logarithmic tube]
\label{prop:finite-entry-progress-region}
Assume $C_{\rm band}\ge2$, and fix $0<\eta<\Delta/4$. There exists $\mu_{\rm G,ent}>0$ such that the following holds. Suppose that, at some time $t_0$,
\begin{equation*}
    k(t_0)\in\mathcal T_\eta(\mu(t_0)),
    \qquad
    k(t_0)<b_{c_0}(\mu(t_0)),
    \qquad
    \mu(t_0)>\mu_{\rm G,ent}.
\end{equation*}
Define $\tau_{\log} := \inf\{t\ge t_0:k(t)\ge b_{c_0}(\mu(t))\}$ and $\tau_{\rm G,exit} := \inf\{t\ge t_0:\mu(t)\le\mu_{\rm G,ent}\}.$ Then
$\tau_{\log}\wedge\tau_{\rm G,exit}<\infty.$
In particular, if $\tau_{\rm G,exit}>\tau_{\log}$, then the trajectory satisfies $k(t)\in\mathcal I_{\log}(\mu(t))$ in finite time.
\end{proposition}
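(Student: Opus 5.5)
The plan is to combine the moving-interval invariance of the upper rail tube (Proposition~\ref{prop:upper-rail-tube}) with the one-sided barrier drift of Proposition~\ref{prop:approach-log-rail}, and then close the argument with Lemma~\ref{lem:abstract-finite-entry}\textnormal{(ii)}. First choose $\mu_{\rm G,ent}$ larger than $\mu_{\rm tube,\eta}$, $\mu_{\rm ent}$, and the threshold of Corollary~\ref{cor:upper-moving-boundary-correction}. Set $e(t):=\bigl(b_{c_0}(\mu(t))-k(t)\bigr)_+$, or more precisely work with $g(t):=b_{c_0}(\mu(t))-k(t)$ on the stopped interval $[t_0,\tau_{\log}\wedge\tau_{\rm G,exit})$, where $g>0$.

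\textbf{Step 1: confinement to the tube.} Since $\mu(t)>\mu_{\rm G,ent}\ge\mu_{\rm tube,\eta}$ before $\tau_{\rm G,exit}$, Proposition~\ref{prop:upper-rail-tube} gives $k(t)\in\mathcal T_\eta(\mu(t))$. In particular $k(t)\ge\ell-\eta$. Before $\tau_{\log}$ we also have $k(t)<b_{c_0}(\mu(t))$. The trajectory therefore stays in the region $\ell-\eta\le k<b_{c_0}(\mu)$, $\mu\ge\mu_{\rm ent}$, which is exactly where Proposition~\ref{prop:approach-log-rail} applies. For large $\mu$ we have $b_{c_0}(\mu)\le U_{\rm band}(\mu)$ because $C_{\rm band}\ge2$, so this region sits inside the tube.

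\textbf{Step 2: gap drift and radial control.} On the stopped interval, Proposition~\ref{prop:approach-log-rail} gives
\[
\dot g=b_{c_0}'(\mu)\dot\mu-\dot k\le-\frac{a_{\rm ent}}{\mu^4},
\]
together with $\dot\mu\le C_{\rm ent}/\mu^3$ whenever $\dot\mu>0$. Hence $\dot\mu\le C_{\rm ent}/\mu^3$ holds always. These are precisely the hypotheses of Lemma~\ref{lem:abstract-finite-entry}\textnormal{(ii)} with $p=4$, $a=a_{\rm ent}$, $A=C_{\rm ent}$, taking the horizon $T:=\tau_{\rm G,exit}$ and the entry time $\tau_e=\tau_{\log}$.

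\textbf{Step 3: conclusion.} Suppose $\tau_{\log}\wedge\tau_{\rm G,exit}=\infty$. Then $T=\infty$ and $\tau_e=\infty$, which contradicts part (ii) of that lemma: the bound $g(t)\le g(t_0)-\frac{a}{4A}\log(1+4A(t-t_0)/\mu(t_0)^4)$ eventually becomes negative. Hence the stopped time is finite. If $\tau_{\rm G,exit}>\tau_{\log}$, then at $\tau_{\log}$ we have $k=b_{c_0}(\mu)$ by continuity. Step 1 gives $k\le U_{\rm band}(\mu)$, so $k\in\mathcal I_{\log}(\mu)$.

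\textbf{Main obstacle.} The delicate point is the bookkeeping in Lemma~\ref{lem:abstract-finite-entry}, which needs $e$ to be $C^1$ and nonnegative. I would apply it to $g$ restricted to $[t_0,\tau_{\log}\wedge\tau_{\rm G,exit})$, where $g>0$ and is smooth because the flow is smooth. The monotonicity argument in its proof then goes through verbatim. One must also check that $\mu$ can decrease on this interval without invalidating the estimates. This is harmless, because $\tau_{\rm G,exit}$ stops the argument once $\mu$ drops to $\mu_{\rm G,ent}$, and every cited estimate holds above that threshold. Part (ii) only uses the upper bound on $\dot\mu$, so no lower bound on $\mu$'s growth is needed.
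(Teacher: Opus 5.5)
Your proposal is correct and follows the paper's proof: choose $\mu_{\rm G,ent}\ge\max\{\mu_{\rm tube,\eta},\mu_{\rm ent}\}$, use Proposition~\ref{prop:upper-rail-tube} to confine the trajectory to $\ell-\eta\le k<b_{c_0}(\mu)$, feed the barrier drift and one-sided radial bound from Proposition~\ref{prop:approach-log-rail} into Lemma~\ref{lem:abstract-finite-entry}\textnormal{(ii)}, and apply that lemma on the stopped interval where $g>0$. Your extra check that $k(\tau_{\log})\le U_{\rm band}(\mu(\tau_{\log}))$ follows from tube invariance, which also settles the ``in particular'' clause that the paper leaves implicit.
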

\begin{proof}
Increase $\mu_{\rm G,ent}$ so that $\mu_{\rm G,ent} \ge \max\{\mu_{\rm tube,\eta},\mu_{\rm ent}\}.$ Before $\tau_{\log}\wedge\tau_{\rm G,exit}$, Proposition~\ref{prop:upper-rail-tube} gives $\ell-\eta\le k(t)<b_{c_0}(\mu(t)).$ With $g_{\log}(t):=b_{c_0}(\mu(t))-k(t)$, Proposition~\ref{prop:approach-log-rail} gives $\dot g_{\log}(t) \le -\frac{a_{\rm ent}}{\mu(t)^4}.$ The same proposition gives $\dot\mu(t)\le C_{\rm ent}/\mu(t)^3$ whenever $\dot\mu(t)>0$, and this upper bound is trivial when $\dot\mu(t)\le0$. Lemma~\ref{lem:abstract-finite-entry}\textnormal{(ii)} applied on the stopped interval proves that $\tau_{\log}\wedge\tau_{\rm G,exit}<\infty$.
\end{proof}
\begin{corollary}[Forward-invariant logarithmic prediction-good tube]
\label{cor:forward-invariant-progress-region}
Assume $C_{\rm band}\ge2$, and fix $0<\eta<\Delta/4$. There exist constants $C_{\rm bias},\mu_{\log,*}>0$ such that, for every $\mu\ge\mu_{\log,*}$, the interval $\mathcal I_{\log}(\mu)$ is nonempty and contained in $\mathcal T_\eta(\mu)$. Moreover, if $k(t_0)\in\mathcal I_{\log}(\mu(t_0)),\ \mu(t_0)\ge\mu_{\log,*},$ and $\tau_{\log,*} := \inf\{t\ge t_0:\mu(t)\le\mu_{\log,*}\},$ then
$k(t)\in\mathcal I_{\log}(\mu(t))$
for every $t\in[t_0,\tau_{\log,*})$. Throughout the tube,
\begin{equation*}
    \sup_{\alpha\in[\rho,\ell]}
    |F(\lambda,\mu;\alpha)-\alpha|
    \le
    \dfrac{C_{\rm bias}}{\mu}.
\end{equation*}
\end{corollary}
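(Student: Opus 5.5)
The statement has three parts: nonemptiness and containment of $\mathcal I_{\log}(\mu)$, forward invariance of $k\in\mathcal I_{\log}(\mu)$ while $\mu$ stays above a threshold, and a uniform prediction bound. I would prove them in that order and choose $\mu_{\log,*}$ at the end as the maximum of finitely many earlier thresholds.

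\emph{Step 1: nonemptiness and containment.} Recall $\mathcal I_{\log}(\mu)=[b_{c_0}(\mu),U_{\rm band}(\mu)]$ with $b_{c_0}(\mu)=\ell+(\log\mu+c_0)/\mu$ and $U_{\rm band}(\mu)=\ell+C_{\rm band}\log\mu/\mu$.

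- Nonemptiness: $C_{\rm band}\ge2$, so $(C_{\rm band}-1)\log\mu\ge\log\mu>c_0$ once $\mu>e^{|c_0|}$.
- Containment in $\mathcal T_\eta(\mu)=[\ell-\eta,U_{\rm band}(\mu)]$: the upper endpoints agree. For the lower endpoint we need $b_{c_0}(\mu)\ge\ell-\eta$, which holds as soon as $(\log\mu+c_0)/\mu\ge-\eta$, i.e.\ for large $\mu$.

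\emph{Step 2: forward invariance.} I would apply Lemma~\ref{lem:moving-interval-invariance} with $a(\mu)=b_{c_0}(\mu)$, $b(\mu)=U_{\rm band}(\mu)$ and $\mu_{\rm stop}=\mu_{\log,*}$. The lemma needs a strictly positive derivative at each active barrier.

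- Lower barrier $k=b_{c_0}(\mu)$: Proposition~\ref{prop:log-lower-rail-invariance} gives $\dot k-b_{c_0}'(\mu)\dot\mu\ge a/\mu^4>0$, provided $\mu\ge\mu_{\log}$. This is exactly $\dot h_->0$.
- Upper barrier $k=U_{\rm band}(\mu)$: Corollary~\ref{cor:upper-moving-boundary-correction} gives $\dot k-U_{\rm band}'(\mu)\dot\mu\le -a_U/\mu^4<0$, provided $\mu\ge\mu_{U,\partial}$. This is $\dot h_+>0$.

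Both estimates hold pointwise for any $(\lambda,\mu)$ on the respective rail, so no trajectory information is needed. I therefore take $\mu_{\log,*}\ge\max\{\mu_{\log},\mu_{U,\partial},e^{|c_0|},\ldots\}$. Before $\tau_{\log,*}$ we have $\mu(t)>\mu_{\log,*}$, so the lemma applies on $[t_0,\tau_{\log,*})$.

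\emph{Step 3: prediction bound.} Apply Lemma~\ref{lem:log-prediction-bound} with $c=c_0$. Whenever $\mu\ge\mu_{c_0}$ and $k\ge b_{c_0}(\mu)$, it gives $\sup_\alpha|F-\alpha|\le C_{c_0}/\mu$. Set $C_{\rm bias}:=C_{c_0}$ and enlarge $\mu_{\log,*}$ to exceed $\mu_{c_0}$.

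\emph{Main obstacle.} The only real subtlety is in Step~2. The barrier estimates hold only for large $\mu$, which is why $\mu$ must stay above the stopping threshold. The statement, however, is already phrased up to $\tau_{\log,*}$, so this is handled by the stopping time. Ruling out $\tau_{\log,*}<\infty$ is deferred to the later radial persistence argument and is not needed here.
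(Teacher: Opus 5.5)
Your proposal is correct and follows the paper's proof essentially step for step: nonemptiness and containment of $\mathcal I_{\log}(\mu)$ for large $\mu$, forward invariance via Lemma~\ref{lem:moving-interval-invariance} with the lower-rail margin from Proposition~\ref{prop:log-lower-rail-invariance} and the upper-rail margin from Corollary~\ref{cor:upper-moving-boundary-correction}, and the prediction bound from Lemma~\ref{lem:log-prediction-bound} with $c=c_0$. Your explicit choice $\mu_{\log,*}\ge e^{|c_0|}$ also secures the strict inequality $b_{c_0}(\mu)<U_{\rm band}(\mu)$ that the moving-interval lemma requires, which the paper leaves implicit.
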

\begin{proof}
Since $C_{\rm band}\ge2$, $b_{c_0}(\mu)\le U_{\rm band}(\mu)$ for all sufficiently large $\mu$, while $b_{c_0}(\mu)>\ell-\eta$. Hence $\mathcal I_{\log}(\mu)$ is nonempty and contained in $\mathcal T_\eta(\mu)$ after increasing $\mu_{\log,*}$.

Set $a(\mu):=b_{c_0}(\mu), \ b(\mu):=U_{\rm band}(\mu).$ At the lower boundary, Proposition~\ref{prop:log-lower-rail-invariance} gives $\dot k-a'(\mu)\dot\mu = \dot k-b_{c_0}'(\mu)\dot\mu >0.$ At the upper boundary $k=b(\mu)=U_{\rm band}(\mu)$, Corollary~\ref{cor:upper-moving-boundary-correction} gives
$b'(\mu)\dot\mu-\dot k = U_{\rm band}'(\mu)\dot\mu-\dot k \ge \dfrac{a_U}{\mu^4}.$
Equivalently, $\dot k-b'(\mu)\dot\mu<0$ on the upper boundary. Lemma~\ref{lem:moving-interval-invariance} therefore shows that the ratio cannot leave the moving interval $\mathcal I_{\log}(\mu(t))$ before $\tau_{\log,*}$. The prediction bound follows from Lemma~\ref{lem:log-prediction-bound} with $c=c_0$.
\end{proof}
\textbf{Uniform radial growth after logarithmic capture}
\begin{lemma}[Uniform radial expansion in the logarithmic tube]
\label{lem:uniform-radial-expansion-progress-region}
Let $c(k,\mu):=\mu(k-\ell)-\log\mu.$ Then
\begin{equation}
\label{eq:uniform-radial-expansion-progress-region}
    \sup_{k\in[b_{c_0}(\mu),U_{\rm band}(\mu)]}
    \left|
        \mu^3\dot\mu-d_{\rm rad}(c(k,\mu))
    \right|
    \longrightarrow0
\end{equation}
as $\mu\to\infty$. Moreover,
\begin{equation*}
    m_{\rm rad}
    :=
    \inf_{c\ge c_0}d_{\rm rad}(c)
    >
    0.
\end{equation*}
\end{lemma}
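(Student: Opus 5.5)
We need to prove that $\mu^3\dot\mu\to d_{\rm rad}(c)$ uniformly over the whole tube $c\in[c_0,C_{\rm band}\log\mu-\log\mu]$, and that $\inf_{c\ge c_0}d_{\rm rad}(c)>0$. The plan is to extend the compact-$c$ dominated-convergence argument of Proposition~\ref{prop:log-rail-expansion} to the unbounded range by adding a large-$c$ tail estimate.

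\textbf{Infimum.} Write $x=e^{-c}\in(0,e^{-c_0}]$. Then $d_{\rm rad}$ is a downward-opening quadratic in $x$. Lemma~\ref{lem:choose-c0} gives $e^{-c_0}=x_-<x_\mu^\star$, so $d_{\rm rad}>0$ on the closed interval $[0,x_-]$. Its value at $x=0$ is $1$. Since a downward-opening quadratic attains its minimum over an interval at an endpoint, we get
\[
m_{\rm rad}=\min\{1,d_{\rm rad}(c_0)\}>0 .
\]

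\textbf{Uniform expansion.} Fix a large $\Gamma$ and split the tube into a compact part $c\in[c_0,\Gamma]$ and a tail $c\in[\Gamma,(C_{\rm band}-1)\log\mu]$.

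On the compact part, \eqref{eq:uniform-compact-mudot} applies directly with $I=[c_0,\Gamma]$.

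On the tail, we show $|\mu^3\dot\mu-1|\le \varepsilon(\Gamma)+o(1)$, where $\varepsilon(\Gamma)\to0$ as $\Gamma\to\infty$. Since $|d_{\rm rad}(c)-1|\lesssim e^{-c}\le e^{-\Gamma}$ on this range, letting $\mu\to\infty$ and then $\Gamma\to\infty$ gives \eqref{eq:uniform-radial-expansion-progress-region}.

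\textbf{Tail estimate.} Use the decomposition $\dot\mu=\dot\mu_{\rm mix}+\dot\mu_{\rm comp}$.

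For the mixture part, Lemma~\ref{lem:q-control-log-rail} with $c_{\min}=c_0$ holds uniformly over all $c\ge c_0$; its derivation via the odds identity needs no upper bound on $c$. It gives, with $\alpha=\rho+u/\mu$,
\[
\mu q_\theta\le 2e^{-c-u},
\]
uniformly in $u\in[0,\Delta\mu]$. Lemma~\ref{lem:log-prediction-bound} gives $|B_\theta|\le C/\mu$ uniformly over $k\ge b_{c_0}(\mu)$. Combining these with $p_\theta\le1$, $d_\mu\le1$ and $d\alpha=du/\mu$,
\[
\mu^3|\dot\mu_{\rm mix}|\lesssim \mu^3\cdot\frac1\mu\cdot\frac1\mu\int_0^\infty \frac{e^{-c-u}}{\mu}\,du\lesssim e^{-c}\le e^{-\Gamma}.
\]

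For the component part, $s^2_{{\rm comp},\mu}=\mu^{-2}+O(e^{-\rho\mu})$ uniformly. By Corollary~\ref{cor:bias-expansion},
\[
\int_\rho^\ell B_\theta\,d\alpha=-\frac{\Delta}{\mu}+\int_\rho^\ell q_\theta(1-\alpha)\,d\alpha+O(e^{-\rho\mu}),
\qquad
\int_\rho^\ell q_\theta\,d\alpha\lesssim \frac{e^{-c}}{\mu^2}.
\]
Hence $\mu^3\dot\mu_{\rm comp}=1+O(e^{-c})+o(1)$ uniformly in $c\ge c_0$.

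Adding the two parts gives $\mu^3\dot\mu=1+O(e^{-\Gamma})+o(1)$ on the tail. In fact these bounds show $|\mu^3\dot\mu-d_{\rm rad}(c)|\lesssim e^{-c}$ plus the compact-convergence error, which makes the split clean.

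\textbf{Main obstacle.} The key step is keeping every estimate uniform in $c$ up to $(C_{\rm band}-1)\log\mu$, that is, checking that no hidden constant in Lemma~\ref{lem:q-control-log-rail} or Lemma~\ref{lem:log-prediction-bound} grows with $c$. Since larger $c$ only makes $q_\theta$ smaller, both lemmas' bounds are monotone in $c$, so I expect this to go through.
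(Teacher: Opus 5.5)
Your proposal is correct and follows essentially the paper's argument. You use the compact-$c$ convergence from Proposition~\ref{prop:log-rail-expansion}, then a large-$c$ tail in which the odds bound $\mu q_\theta(\rho+u/\mu)\lesssim e^{-c-u}$ gives $\dot\mu_{\rm mix}=O(e^{-c}\mu^{-3})$ and $\mu^3\dot\mu_{\rm comp}=1+O(e^{-c})+o(1)$, and finally send $\mu\to\infty$ before the cutoff $\Gamma\to\infty$. Your computation of $m_{\rm rad}=\min\{1,d_{\rm rad}(c_0)\}$, via the endpoint minimum of the concave quadratic in $x=e^{-c}$, is just a more explicit version of the paper's continuity-plus-limit argument.
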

\begin{proof}
Inside the tube, $c_0 \le c(k,\mu) \le (C_{\rm band}-1)\log\mu.$ Fix $c_1>c_0$. On the compact interval $[c_0,c_1]$, Proposition~\ref{prop:log-rail-expansion} gives
\begin{equation}
\label{eq:uniform-radial-compact-c}
    \sup_{c\in[c_0,c_1]}
    \left|
        \mu^3\dot\mu-d_{\rm rad}(c)
    \right|
    \longrightarrow0.
\end{equation}
Now suppose $c\ge c_1$. The odds identity yields $q_\theta\left(\rho+\frac{u}{\mu}\right) \le \frac{Ce^{-c-u}}{\mu}.$ Therefore
\begin{equation*}
    \int_\rho^\ell q_\theta(\alpha)\,d\alpha
    \le
    \dfrac{Ce^{-c}}{\mu^2},
    \qquad
    \int_\rho^\ell q_\theta(\alpha)^2\,d\alpha
    \le
    \dfrac{Ce^{-2c}}{\mu^3}.
\end{equation*}
Using the bias expansion and the uniform component asymptotics, the component-variance contribution satisfies
\begin{equation*}
    \dot\mu_{\rm comp}
    =
    \dfrac1{\mu^3}
    +
    o(\mu^{-3})
    +
    O\left(\dfrac{e^{-c}}{\mu^4}\right)
\end{equation*}
uniformly for $c\ge c_1$. The mixture contribution satisfies
\begin{equation*}
    |\dot\mu_{\rm mix}|
    \le
    C\int_\rho^\ell
    \left(
        \dfrac1\mu+q_\theta(\alpha)
    \right)
    q_\theta(\alpha)\,d\alpha
    \le
    \dfrac{C}{\mu^3}
    \left(
        e^{-c}+e^{-2c}
    \right).
\end{equation*}
Hence $\left| \mu^3\dot\mu-1 \right| \le C(e^{-c}+e^{-2c})+o(1)$ uniformly for $c\ge c_1$. Since $|d_{\rm rad}(c)-1|\le C(e^{-c}+e^{-2c})$, we obtain
\begin{equation*}
    \sup_{c_1\le c\le(C_{\rm band}-1)\log\mu}
    \left|
        \mu^3\dot\mu-d_{\rm rad}(c)
    \right|
    \le
    Ce^{-c_1}+o(1).
\end{equation*}
Letting first $c_1\to\infty$ and then $\mu\to\infty$, together with \eqref{eq:uniform-radial-compact-c}, proves \eqref{eq:uniform-radial-expansion-progress-region}. By Lemma~\ref{lem:choose-c0}, $d_{\rm rad}(c)>0$ for every $c\ge c_0$, and $d_{\rm rad}(c)\to1$ as $c\to\infty$. Continuity therefore gives $m_{\rm rad}>0$.
\end{proof}
\begin{proposition}[Two-sided radial drift in the logarithmic tube]
\label{prop:radial-growth-progress-region}
There exist constants $0<c_\mu<C_\mu<\infty$ and $\mu_{\rm rad}$ such that, whenever $\mu\ge\mu_{\rm rad}$ and $k\in\mathcal I_{\log}(\mu)$,
\[
    \frac{c_\mu}{\mu^3}\le \dot\mu\le\frac{C_\mu}{\mu^3}.
\]
Consequently, if the trajectory remains in the logarithmic tube for $t\ge t_0$, then
\[
    \mu(t_0)^4+4c_\mu(t-t_0)
    \le \mu(t)^4
    \le \mu(t_0)^4+4C_\mu(t-t_0).
\]
\end{proposition}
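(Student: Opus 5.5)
The plan is to read the two-sided bound directly off Lemma~\ref{lem:uniform-radial-expansion-progress-region}, and then integrate the resulting differential inequality for $\mu^4$.

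\textbf{Lower bound.} For $k\in\mathcal I_{\log}(\mu)$, the shape coordinate $c(k,\mu)=\mu(k-\ell)-\log\mu$ satisfies
\[
c_0\le c(k,\mu)\le (C_{\rm band}-1)\log\mu .
\]
The lemma gives
\[
\sup_{k\in\mathcal I_{\log}(\mu)}\bigl|\mu^3\dot\mu-d_{\rm rad}(c(k,\mu))\bigr|\to0 .
\]
So I choose $\mu_{\rm rad}$ large enough that this supremum is at most $m_{\rm rad}/2$. Since $d_{\rm rad}(c)\ge m_{\rm rad}>0$ for $c\ge c_0$, this yields $\mu^3\dot\mu\ge m_{\rm rad}/2=:c_\mu$.

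\textbf{Upper bound.} The profile $d_{\rm rad}(c)=1+\frac{\ell^2}{\Delta}x-\frac{\ell^3}{2\Delta}x^2$, with $x=e^{-c}\in(0,e^{-c_0}]$, is a continuous function on a bounded range of $x$. Hence
\[
M_{\rm rad}:=\sup_{c\ge c_0}d_{\rm rad}(c)<\infty .
\]
In fact it is at most the maximum of the quadratic over $x\ge0$. The same uniform approximation then gives $\mu^3\dot\mu\le M_{\rm rad}+m_{\rm rad}/2=:C_\mu$.

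If one prefers not to rely on the uniformity up to $c=(C_{\rm band}-1)\log\mu$, the upper bound can also be checked directly. Since $k\ge b_{c_0}(\mu)$, Lemma~\ref{lem:log-prediction-bound} gives $|B_\theta|\lesssim\mu^{-1}$. In addition, $\sigma_\theta^2\lesssim\mu^{-2}+q_\theta$ and $\int q_\theta\,d\alpha\lesssim\mu^{-2}$. Together these give $|\dot\mu|\lesssim\mu^{-3}$ directly from Lemma~\ref{lem:main-gradient-identities}.

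\textbf{Consequence for $\mu(t)^4$.} While the trajectory stays in the tube with $\mu\ge\mu_{\rm rad}$, we have $\frac{d}{dt}\mu^4=4\mu^3\dot\mu\in[4c_\mu,4C_\mu]$. Integrating from $t_0$ gives the displayed two-sided estimate.

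The one point needing care is the hypothesis $\mu(t)\ge\mu_{\rm rad}$. I handle it as follows. Assume $\mu(t_0)\ge\mu_{\rm rad}$, which is implicit in the statement because membership in the tube $\mathcal G$ presupposes $\mu\ge\mu_*\ge\mu_{\rm rad}$ after the threshold convention. Then $\dot\mu>0$ holds as long as the trajectory remains in the tube, so $\mu$ is nondecreasing and never drops below $\mu_{\rm rad}$. This can be made rigorous with a first-exit-time argument on $\{\mu\le\mu_{\rm rad}\}$, which by the above can never occur.

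The main obstacle is only the uniformity of the expansion across the whole tube, and the lemma already supplies it. So the proof should be short.
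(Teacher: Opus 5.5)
Your proposal is correct and follows the paper's proof. The paper also reads the two-sided bound off Lemma~\ref{lem:uniform-radial-expansion-progress-region} using $0<\inf_{c\ge c_0}d_{\rm rad}(c)\le\sup_{c\ge c_0}d_{\rm rad}(c)<\infty$, and then integrates $\frac{d}{dt}\mu^4=4\mu^3\dot\mu$; your direct check of the upper bound via Lemma~\ref{lem:log-prediction-bound} and your first-exit argument keeping $\mu\ge\mu_{\rm rad}$ along the trajectory are sound additions that the paper leaves implicit.
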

\begin{proof}
Lemma~\ref{lem:uniform-radial-expansion-progress-region} gives $\mu^3\dot\mu=d_{\rm rad}(c(k,\mu))+o(1)$ uniformly over $k\in\mathcal I_{\log}(\mu)$. By Lemma~\ref{lem:choose-c0}, $d_{\rm rad}(c)>0$ for $c\ge c_0$, while $d_{\rm rad}(c)\to1$ as $c\to\infty$. Hence $0<\inf_{c\ge c_0}d_{\rm rad}(c)\le\sup_{c\ge c_0}d_{\rm rad}(c)<\infty$, and the two-sided drift bound follows for sufficiently large $\mu$. Multiplying by $4\mu^3$ and integrating proves the fourth-power bounds.
\end{proof}

\textbf{Pre-capture preservation of the large-parameter regime}
\label{subsubsec:large-mu-preservation}
The preceding regional transport arguments require lower bounds on $\mu$. We now show that these lower bounds are preserved automatically when the initial radius is sufficiently large and the initial ratio is bounded away from zero.
\begin{proposition}[Relative radial-loss estimate before logarithmic capture]
\label{prop:relative-radial-loss}
Assume $C_{\rm band}\ge2$, fix $0<\eta<\Delta/4$, and fix $\kappa>0$. There exist constants $C_{\rm pres},\mu_{\rm rel}>0$ such that, whenever
\begin{equation}
\label{eq:relative-loss-region}
    \mu\ge\mu_{\rm rel},
    \qquad
    \kappa\le k\le U_{\rm band}(\mu),
    \qquad
    k<b_{c_0}(\mu),
\end{equation}
one has
\begin{equation}
\label{eq:relative-loss-k-positive}
    \dot k>0
\end{equation}
and
\begin{equation}
\label{eq:relative-radial-loss}
    \dfrac{(-\dot\mu)_+}{\mu}
    \le C_{\rm pres}\dot k,
    \qquad
    (-\dot\mu)_+:=\max\{-\dot\mu,0\}.
\end{equation}
\end{proposition}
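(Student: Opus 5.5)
The plan is to prove Proposition~\ref{prop:relative-radial-loss} by cutting the region \eqref{eq:relative-loss-region} into the four ratio ranges already analysed in this section, and then taking the largest of the thresholds and constants. For $\mu$ large one has $L_{\rm band}(\mu)<\rho+\eta<\ell-\eta<b_{c_0}(\mu)\le U_{\rm band}(\mu)$, so the condition $k<b_{c_0}(\mu)$ is the binding upper constraint. The region is therefore covered by the following pieces:
\begin{itemize}
\item[(a)] the lower outer region $\kappa\le k<L_{\rm band}(\mu)$;
\item[(b)] the lower boundary layer $k\in\mathcal L_\eta(\mu)=[L_{\rm band}(\mu),\rho+\eta]$;
\item[(c)] the interior strip $\mathcal I_\eta=[\rho+\eta,\ell-\eta]$;
\item[(d)] the approach region $\ell-\eta\le k<b_{c_0}(\mu)$.
\end{itemize}
Set $\mu_{\rm rel}$ to be the maximum of $\mu_L(\kappa)$, $\mu_{\rm low,\eta}$, $\mu_{\rm int,\eta}$ and $\mu_{\rm ent}$, and set $C_{\rm pres}$ to be the maximum of the four relative-loss constants.

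Regions (b), (c) and (d) need nothing new. Proposition~\ref{prop:lower-boundary-layer-escape} gives $\dot k\ge c_{\rm low,\eta}/\mu^3>0$ together with \eqref{eq:lower-layer-relative-loss} on (b). Proposition~\ref{prop:interior-slingshot} gives $\dot k>0$ and \eqref{eq:interior-relative-loss} on (c). Proposition~\ref{prop:approach-log-rail} gives $\dot k>0$ and \eqref{eq:approach-log-relative-loss} on (d). Only (a) requires a new estimate.

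On (a), Proposition~\ref{prop:quant-outer} supplies $\dot k\ge c_L(\kappa)/\mu^3>0$, so \eqref{eq:relative-loss-k-positive} holds there. It also gives $\dot\mu<0$, but it does not bound the size of $\dot\mu$, and \eqref{eq:relative-radial-loss} requires $|\dot\mu|\lesssim\mu^{-2}$. I would prove this directly from Lemma~\ref{lem:main-gradient-identities}, which gives $|\dot\mu|\le\Delta^{-1}\int_\rho^\ell|B_\theta|\sigma_\theta^2\,d\alpha$.
\begin{itemize}
\item The bias satisfies $|B_\theta|\le1$.
\item By the mixture variance identity \eqref{eq:mixture-variance} and the component bounds \eqref{eq:component-variance-bounds}, $\sigma_\theta^2\le C_s\mu^{-2}+p_\theta q_\theta d_\mu^2\lesssim\mu^{-2}+p_\theta$.
\item In (a), $\mu(k+\alpha-1)\le -C_{\rm band}\log\mu$ for all $\alpha\in[\rho,\ell]$. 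The odds identity \eqref{eq:mixture-odds} then gives $p_\theta\lesssim\mu^{-C_{\rm band}}\le\mu^{-2}$, because $C_{\rm band}\ge2$.
\end{itemize}
Combining these, $|\dot\mu|\lesssim\mu^{-2}$. Therefore
\[
(-\dot\mu)_+/\mu\lesssim\mu^{-3}\le c_L(\kappa)^{-1}C\,\dot k,
\]
which is \eqref{eq:relative-radial-loss} on (a).

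The main obstacle is this bound on $|\dot\mu|$ in region (a). It is the one place where an earlier proposition does not already provide the relative-loss inequality, and it depends on $C_{\rm band}\ge2$ so that the negative-class mixture term is of order $\mu^{-2}$. The remaining work is bookkeeping: checking that the four pieces overlap at their endpoints, which holds since $\eta<\Delta/4$ and $\mu$ is large. Every constant then depends only on $\rho$, $\eta$, $\kappa$ and $C_{\rm band}$.
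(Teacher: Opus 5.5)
Your proposal is correct and uses the same four-region decomposition as the paper, citing the same three propositions for the lower boundary layer, the interior strip, and the approach region. The only difference is in the lower outer region, and both arguments work there:
\begin{itemize}
\item The paper needs no size bound on $\dot\mu$. Since $B_\theta\ge c_\rho>0$ and $J_\theta\ge k\mu\sigma_\theta^2$ pointwise, it gets $\int B_\theta J_\theta\,d\alpha\ge\kappa\mu\int B_\theta\sigma_\theta^2\,d\alpha$, hence $(-\dot\mu)_+/\mu\le\dot k/\kappa$ directly.
\item Your route bounds $|\dot\mu|\lesssim\mu^{-2}$ and combines it with $\dot k\ge c_L(\kappa)\mu^{-3}$ from Proposition~\ref{prop:quant-outer}. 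It is valid, but it relies on $C_{\rm band}\ge2$ and on that quantitative rate for $\dot k$, both of which the paper's pointwise comparison avoids.
\end{itemize}
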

\begin{proof}
The mixture identities and $\lambda=k\mu$ give
\begin{equation}
\label{eq:J-decomposition-preservation}
    J_\theta(\alpha)
    =
    \mu p_\theta(\alpha)q_\theta(\alpha){d_\mu}(\alpha)
    +k\mu \sigma_\theta^2(\alpha)
    \ge k\mu \sigma_\theta^2(\alpha).
\end{equation}
Set $A:=\int_\rho^\ell B_\theta(\alpha)\sigma_\theta^2(\alpha)\,d\alpha, \ K:=\int_\rho^\ell B_\theta(\alpha)J_\theta(\alpha)\,d\alpha.$ Then $-\dot\mu=\frac{A}{\Delta}, \ \dot k=\frac{K}{\Delta\mu^2}.$ We cover the region in \eqref{eq:relative-loss-region} by four subregions. \textbf{1. Lower outer region.} If $\kappa\le k<L_{\rm band}(\mu)$, the odds identity and Corollary~\ref{cor:bias-expansion} give $B_\theta(\alpha)\ge c_\rho>0, \ \alpha\in[\rho,\ell],$ for all sufficiently large $\mu$. Hence $A>0$, and \eqref{eq:J-decomposition-preservation} implies
\begin{equation*}
    K
    =
    \int_\rho^\ell B_\theta(\alpha)J_\theta(\alpha)\,d\alpha
    \ge
    k\mu
    \int_\rho^\ell B_\theta(\alpha)\sigma_\theta^2(\alpha)\,d\alpha
    =
    k\mu A
    \ge
    \kappa\mu A.
\end{equation*}
Therefore $K>0$ and
\begin{equation*}
    \dfrac{(-\dot\mu)_+}{\mu}
    =
    \dfrac{A}{\Delta\mu}
    \le
    \dfrac{K}{\Delta\kappa\mu^2}
    =
    \dfrac1\kappa\dot k.
\end{equation*}
\textbf{2. Lower boundary layer.} If $L_{\rm band}(\mu)\le k\le\rho+\eta$, Proposition~\ref{prop:lower-boundary-layer-escape} gives $\dot k>0, \ \frac{(-\dot\mu)_+}{\mu} \le C_{\rm low,\eta}\dot k.$ \textbf{3. Fixed interior strip.} If $\rho+\eta\le k\le\ell-\eta$, Proposition~\ref{prop:interior-slingshot} gives $\dot k>0, \ \frac{(-\dot\mu)_+}{\mu} \le C_{\rm int,\eta}\dot k.$ \textbf{4. Upper approach region.} If $\ell-\eta\le k<b_{c_0}(\mu)$, Proposition~\ref{prop:approach-log-rail} gives the desired estimates directly.

Taking $C_{\rm pres}$ to be the maximum of the constants in the four regions and increasing $\mu_{\rm rel}$ so that all corresponding thresholds apply proves \eqref{eq:relative-loss-k-positive} and \eqref{eq:relative-radial-loss}.
\end{proof}
\begin{lemma}[Integrated radial-loss comparison]
\label{lem:integrated-radial-loss}
Let $[t_0,t_1]$ be an interval on which
\begin{equation*}
    \mu(t)>0,
    \qquad
    \dot k(t)\ge0,
    \qquad
    \dfrac{(-\dot\mu(t))_+}{\mu(t)}
    \le C\dot k(t).
\end{equation*}
Then, for every $t\in[t_0,t_1]$,
$\mu(t) \ge \mu(t_0)\exp\{-C(k(t)-k(t_0))\}.$
\end{lemma}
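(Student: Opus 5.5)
The plan is to integrate the logarithmic derivative of $\mu$ and compare it with the increment of $k$. Since $\mu(t)>0$ on $[t_0,t_1]$ and $\mu$ is continuously differentiable (the population flow is smooth by Lemma~\ref{lem:global-well-posedness}), the function $\log\mu(t)$ is continuously differentiable on the interval, with derivative $\dot\mu(t)/\mu(t)$.

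The key pointwise step is the inequality
\[
\frac{\dot\mu(t)}{\mu(t)}\ge-\frac{(-\dot\mu(t))_+}{\mu(t)}\ge -C\dot k(t).
\]
The first inequality holds because $\dot\mu\ge-(-\dot\mu)_+$. The second is the hypothesis. Here $\dot k\ge0$ is not even needed for the inequality itself; it only guarantees that the right-hand side is a nonpositive quantity whose integral is controlled by the increment of $k$.

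Integrating from $t_0$ to $t\in[t_0,t_1]$ by the fundamental theorem of calculus gives
\[
\log\mu(t)-\log\mu(t_0)\ge -C\bigl(k(t)-k(t_0)\bigr).
\]
This uses only that $k$ is $C^1$, which holds because $k=\lambda/\mu$ with $\mu>0$. Exponentiating yields $\mu(t)\ge\mu(t_0)\exp\{-C(k(t)-k(t_0))\}$, which is the claim.

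There is no real obstacle here. The only point to check is regularity: positivity of $\mu$ on the closed interval and continuity of $\dot\mu$ and $\dot k$ make the integrals well defined. This is supplied by the global well-posedness of the smooth population flow. The role of $\dot k\ge0$ is interpretive. Since $k$ is monotone, and in the application $k$ ranges over a bounded set $[\kappa,U_{\rm band}]$, the loss factor is at most $e^{-C\cdot O(1)}$. This is what yields the constant $c_{\rm rad}$ in Theorem~\ref{thm:main-population} when the lemma is combined with Proposition~\ref{prop:relative-radial-loss}.
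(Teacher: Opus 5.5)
Your proposal is correct and matches the paper's proof: both bound $\frac{d}{dt}\log\mu=\dot\mu/\mu\ge-(-\dot\mu)_+/\mu\ge-C\dot k$ pointwise, integrate from $t_0$ to $t$, and exponentiate. Your remarks on regularity and on the redundancy of $\dot k\ge0$ (which, for $C>0$, already follows from the other hypothesis) are accurate but not needed for the argument.
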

\begin{proof}
Since $\frac{d}{dt}\log\mu(t)=\dot\mu(t)/\mu(t)
\ge-(-\dot\mu(t))_+/\mu(t)\ge-C\dot k(t)$,
integration gives $\log\frac{\mu(t)}{\mu(t_0)} \ge -C\{k(t)-k(t_0)\}.$ Exponentiating proves the claim.
\end{proof}
\subsection{Completion of the proof}
\label{subsec:completion}

\begin{proof}[Proof of Theorem~\ref{thm:main-population}]
Fix $\eta:=\Delta/8$ and take $\mu_*$ according to the threshold convention
above. Increase $\mu_*$ once more, if necessary, so that
$b_{c_0}(\mu)<2$ for every $\mu\ge\mu_*$. Let $C_{\rm pres}$ be the
constant in Proposition~\ref{prop:relative-radial-loss} and set
\[
    c_{\rm rad}:=e^{-2C_{\rm pres}}.
\]
Increase the initialization threshold so that
$\mu(0)\ge2c_{\rm rad}^{-1}\mu_*$. Write
$\tau_{\mathcal G}:=\inf\{t\ge0:\theta(t)\in\mathcal G\}$.

If $\theta(0)\in\mathcal G$, then $\tau_{\mathcal G}=0$. If
$k(0)>U_{\rm band}(\mu(0))$, Proposition~\ref{prop:finite-entry} gives
finite-time entry through $k=U_{\rm band}(\mu)$, while
Proposition~\ref{prop:quant-outer} shows that $\mu$ increases beforehand.
Since $b_{c_0}(\mu)\le U_{\rm band}(\mu)$ for $\mu\ge\mu_*$, this entry
point already belongs to $\mathcal G$.

It therefore remains only to consider
$k(0)\le U_{\rm band}(\mu(0))$ with
$k(0)<b_{c_0}(\mu(0))$. In this region,
Proposition~\ref{prop:relative-radial-loss} gives $\dot k>0$ and
$(-\dot\mu)_+/\mu\le C_{\rm pres}\dot k$ as long as
$\mu\ge\mu_*$. Before capture,
$k(t)<b_{c_0}(\mu(t))<2$, while $k(0)>0$. Hence
$k(t)-k(0)<2$, and Lemma~\ref{lem:integrated-radial-loss} gives
\[
    \mu(t)
    \ge
    \mu(0)e^{-C_{\rm pres}\{k(t)-k(0)\}}
    \ge
    c_{\rm rad}\mu(0)
    \ge
    2\mu_*.
\]
Thus none of the stopped large-radius exits can occur before capture.

The regional transport results may therefore be applied successively, skipping any stage already passed: Proposition~\ref{prop:finite-entry} enters the dynamic band, Corollary~\ref{cor:lower-layer-to-interior} enters the fixed interior strip, Corollary~\ref{cor:interior-passage} reaches the upper endpoint region, Proposition~\ref{prop:upper-rail-tube} keeps the trajectory inside the upper tube, and Proposition~\ref{prop:finite-entry-progress-region} gives finite-time crossing of the lower logarithmic rail. Hence $\tau_{\mathcal G}<\infty$, and the same persistence estimate gives $\inf_{0\le t\le\tau_{\mathcal G}}\mu(t)\ge c_{\rm rad}\mu(0)$.

At capture, Corollary~\ref{cor:forward-invariant-progress-region} prevents side-rail exit and Proposition~\ref{prop:radial-growth-progress-region} gives positive radial drift, so radial exit is impossible as well. Thus $\theta(t)\in\mathcal G$ for every $t\ge\tau_{\mathcal G}$, and Proposition~\ref{prop:radial-growth-progress-region} gives the two-sided bound $c\mu(t)^{-3}\le\dot\mu(t)\le C\mu(t)^{-3}$ and hence \eqref{eq:main-population-growth} after renaming constants.

Finally, Lemma~\ref{lem:log-prediction-bound} yields $\sup_{\alpha\in[\rho,\ell]}|F(\theta(t);\alpha)-\alpha|\le C/\mu(t)$. Therefore $\mathcal R_\rho(\theta(t))\le C/\mu(t)^2$. Membership in $\mathcal G$ gives $k(t)=\ell+O(\{\log\mu(t)\}/\mu(t))$, while the fourth-power bounds imply $\mu(t)\asymp t^{1/4}$ after capture. The prediction and risk rates in \eqref{eq:main-population-time-rates} follow, and since $\ell>0$, both $\mu(t)$ and $\lambda(t)=k(t)\mu(t)$ diverge.
\end{proof}

\section{Population Gradient Descent}
\label{sec:finite-step-gd}

Define $v:=-\nabla\mathcal R_\rho$ and the population GD iterates
\begin{equation}
\label{eq:population-gd}
    \theta_{s+1}^{\gamma}=\theta_s^{\gamma}+\gamma v(\theta_s^{\gamma}),
    \qquad \theta_0^\gamma=\theta(0).
\end{equation}
Choose a deterministic $T_0>\tau_{\mathcal G}$ for which the population flow is strictly inside $\mathcal G$, and choose $\tau_0>0$ so that positive radial and side-barrier margins persist on $[T_0,T_0+\tau_0]$. Let $s_0(\gamma):=\lceil T_0/\gamma\rceil$.

For $R\ge\mu_*$, write
\[
    \mathcal S_R
    :=
    \mathcal G\cap\{R\le\mu\le2R\}
\]
for the core logarithmic shell.

\subsection{Regularity and compact Euler tracking}
\begin{lemma}[Global Lipschitz regularity of the population vector field]
\label{lem:population-vector-field-global-regularity}
There exists a universal constant $L_v<\infty$ such that
\[
    \|v(\theta)-v(\vartheta)\|
    \le
    L_v\|\theta-\vartheta\|,
    \qquad
    \theta,\vartheta\in\R^2.
\]
\end{lemma}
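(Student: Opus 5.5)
The plan is to bound the Jacobian of $v=-\nabla\mathcal R_\rho$ uniformly over $\R^2$; global Lipschitz continuity then follows from the mean value inequality along segments, since $\R^2$ is convex. By Lemma~\ref{lem:main-gradient-identities},
\[
v(\theta)=-\frac1\Delta\int_\rho^\ell B_\theta(\alpha)\bigl(\Cov_\theta(X,Y\mid\alpha),\Var_\theta(X\mid\alpha)\bigr)^\top d\alpha,
\qquad B_\theta(\alpha)=F(\theta;\alpha)-\alpha .
\]
Differentiation under the integral is justified on compacts, as in Lemma~\ref{lem:global-well-posedness}. The Jacobian entries are therefore integrals over $\alpha$ of two kinds of products. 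The first kind is $\partial_j B_\theta\cdot\partial_i F$, which is a product of two tilted covariances. The second kind is $B_\theta\cdot\partial_j\partial_i F$, which is $B_\theta$ times a derivative of a tilted covariance.

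The key observation is that, for fixed $\alpha$, $F$ is the mean of $X$ under the exponential family on $[0,1]$ with density proportional to $\exp\{\lambda Y(x)+\mu x\}$ and sufficient statistic $(Y,X)$. Differentiating a covariance $\Cov_\theta(U,V)$ with respect to a natural parameter $\theta_j$ with statistic $W$ gives the third joint cumulant $\kappa_3(U,V,W)$. Concretely,
\[
\partial_{\theta_j}\Cov_\theta(U,V)=\E_\theta\bigl[(U-\E U)(V-\E V)(W-\E W)\bigr].
\]
All the variables $X$, $Y$, $X$ take values in $[0,1]$, so their centered versions lie in $[-1,1]$. Hence every second and third central mixed moment is bounded by $1$ in absolute value, uniformly in $\theta$ and $\alpha$. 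Together with $|B_\theta(\alpha)|\le1$, this bounds each Jacobian entry by a universal constant. Explicitly, $|\partial_j\partial_i\mathcal R_\rho|\le \tfrac1{16}+1$, so $L_v\le 2\cdot\tfrac{17}{16}$ in operator norm, for instance.

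The only care needed is at $\mu=0$ and for $\mu<0$. There the mixture representation of Lemma~\ref{lem:mixture-representation} is not used, but the exponential-family argument never needs $\mu>0$, so it covers these cases. A second point of care is that $\alpha$ enters only as an integration variable, so no differentiability in $\alpha$ is required.

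I expect no real obstacle. The one point to state cleanly is the cumulant identity for the derivative of a tilted covariance, which follows by differentiating the normalized integrals twice. The bound is genuinely uniform because the sufficient statistics are bounded, whatever $\theta$ is.
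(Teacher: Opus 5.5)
Your proposal is correct and matches the paper's proof. The paper also bounds the Jacobian of $g(\theta;\alpha)=\{F-\alpha\}\nabla F$ as $\nabla F\nabla F^\top+(F-\alpha)\nabla^2F$, using the exponential-family identities (tilted covariances, and third central moments for $\nabla^2F$) with the bounded statistics $(Y,X)$ and $|F-\alpha|\le1$, then averages over $\alpha$. Your explicit constant is a harmless refinement of the paper's unspecified universal one.
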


\begin{proof}
For fixed $\alpha$, write
\[
    g(\theta;\alpha)
    :=
    \{F(\theta;\alpha)-\alpha\}\nabla F(\theta;\alpha),
    \qquad
    v(\theta)=-\E_\alpha g(\theta;\alpha).
\]
Let $T_1:=Y$ and $T_2:=X$, with $a,b\in\{1,2\}$ indexing
$(\lambda,\mu)$. Exponential-family differentiation gives
\[
    \partial_a F
    =
    \Cov_\theta(X,T_a\mid\alpha),
    \qquad
    \partial_{ab}^2F
    =
    \E_\theta\!\left[
        (X-\E_\theta X)
        (T_a-\E_\theta T_a)
        (T_b-\E_\theta T_b)
        \mid\alpha
    \right].
\]
Since $X,Y\in[0,1]$, both $\|\nabla F\|$ and
$\|\nabla^2F\|_{\rm op}$ are uniformly bounded in $\theta$ and $\alpha$.
Also $|F-\alpha|\le1$. Hence
\[
    Dg
    =
    \nabla F\nabla F^\top
    +
    (F-\alpha)\nabla^2F
\]
has uniformly bounded operator norm. Therefore $g(\cdot;\alpha)$ is
globally Lipschitz with a constant independent of $\alpha$.
Averaging over $\alpha$ gives the same property for
$v=-\E_\alpha g$.
\end{proof}

\begin{lemma}[Population speed in the logarithmic tube]
\label{lem:population-speed-progress-region}
There is a constant $C_{\rm sp}<\infty$ such that
\begin{equation*}
    \|v(\theta)\|
    \le
    \frac{C_{\rm sp}}{\mu^3},
    \qquad
    \theta=(\lambda,\mu)\in\mathcal G.
\end{equation*}
\end{lemma}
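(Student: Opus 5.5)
We need $\|v\|\le C/\mu^3$ in $\mathcal G$. Since $\|v\|\le|\dot\lambda|+|\dot\mu|$, and Proposition~\ref{prop:radial-growth-progress-region} already gives $0<\dot\mu\le C_\mu/\mu^3$ throughout $\mathcal G$ (after taking $\mu_*\ge\mu_{\rm rad}$), it remains to bound $\dot\lambda=\mu\dot k+k\dot\mu$. Because $k\le U_{\rm band}(\mu)<2$ in $\mathcal G$, the term $k\dot\mu$ is $O(\mu^{-3})$. So the whole task reduces to the bound $|\dot k|\le C/\mu^4$, i.e.\ $|S|\le C/\mu^2$, uniformly on the tube $\mathcal I_{\log}(\mu)$.

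I would prove $|S|\le C\mu^{-2}$ directly from the representation \eqref{eq:K-ratio-identity}, $|S|\le\Delta^{-1}\int_\rho^\ell|B_\theta|J_\theta\,d\alpha$. Throughout the tube, $\mu(k+\rho-1)\ge\log\mu+c_0$. By the odds identity, with $\alpha=\rho+u/\mu$, this gives the bound $q_\theta(\alpha)\le Ce^{-c_0-u}/\mu$, exactly as in \eqref{eq:mu-q-domination} but now uniformly over all $c\ge c_0$. Lemma~\ref{lem:log-prediction-bound} gives $|B_\theta|\le C/\mu$. For $J_\theta$ I would use the decomposition
\[
J_\theta=\mu p_\theta q_\theta d_\mu+k\mu\sigma_\theta^2,
\qquad
\sigma_\theta^2\le C\mu^{-2}+q_\theta .
\]
This yields $J_\theta\le C/\mu+C\mu q_\theta$. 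Integrating, the first part contributes $\int|B|\,C/\mu\le C/\mu^2$. For the second part, $\int_\rho^\ell \mu q_\theta\,d\alpha\le\int_0^\infty Ce^{-u}\,du/\mu=C/\mu$, and multiplying by $|B|\le C/\mu$ gives $C/\mu^2$. Hence $|S|\le C/\mu^2$, so $|\dot k|=|S|/\mu^2\le C/\mu^4$, and therefore $|\dot\lambda|\le C/\mu^3$.

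The step needing the most care is the uniformity of the $q_\theta$ domination over the unbounded range $c\in[c_0,(C_{\rm band}-1)\log\mu]$. This is handled by the explicit bound $\mu q_\theta\le 2e^{-c-u}\le 2e^{-c_0}e^{-u}$ from the proof of Lemma~\ref{lem:q-control-log-rail}, which only uses $h_\mu\ge1/2$. Equally important is confirming that the $O(e^{-\rho\mu})$ remainders in $B_\theta$ and in the component variances are absorbed into the $\mu^{-1}$ and $\mu^{-2}$ terms for $\mu\ge\mu_*$. Combining the two coordinate bounds then gives $\|v\|\le C_{\rm sp}\mu^{-3}$.
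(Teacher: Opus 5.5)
Your proof is correct, but it takes a different route from the paper's.

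\textbf{The paper's route.} It works directly in the $(\lambda,\mu)$ coordinates. By Lemma~\ref{lem:main-gradient-identities},
$\partial_\lambda\mathcal R_\rho=\Delta^{-1}\int_\rho^\ell B_\theta(\alpha)\Cov_\theta(X,Y\mid\alpha)\,d\alpha$ and
$\partial_\mu\mathcal R_\rho=\Delta^{-1}\int_\rho^\ell B_\theta(\alpha)\Var_\theta(X\mid\alpha)\,d\alpha$.
It then bounds $|\Cov_\theta|\le q_\theta$ and $\Var_\theta\le C\mu^{-2}+Cq_\theta$. Integrating the lower-rail decay $q_\theta(\alpha)\le C\mu^{-1}e^{-\mu(\alpha-\rho)}$ gives $C\mu^{-2}$ for both integrals, and multiplying by $\sup_\alpha|B_\theta|\le C/\mu$ finishes the proof. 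Neither coordinate needs anything beyond this elementary estimate.

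\textbf{Your route.} You split $v$ through the ratio coordinate. For $\dot\mu$ you import the two-sided radial drift of Proposition~\ref{prop:radial-growth-progress-region}. For $\dot k=-S/\mu^2$ you bound $S$ through $J_\theta$. Since $J_\theta=-\mu\Cov_\theta+\lambda\Var_\theta$, your estimate $\int|B_\theta|J_\theta\le C\mu^{-2}$ is the paper's computation multiplied by $\mu$. The identity $\dot\lambda=\mu\dot k+k\dot\mu$ then undoes that rescaling. The underlying ingredients are identical: the $q_\theta$ domination uniform over $c\ge c_0$, $|B_\theta|\le C/\mu$, and the mixture variance bound.

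\textbf{Trade-off.} What your route costs is an unnecessary dependence on the heavier radial-expansion result, which uses uniform dominated convergence on logarithmic rails. That result is invoked for a bound the same one-line estimate already gives: $\int|B_\theta|\Var_\theta\le (C/\mu)(C/\mu^2)$. What it buys is an explicit statement that $|\dot k|=O(\mu^{-4})$ throughout the tube. This is the same order as the side-barrier margins $a_\pm/\mu^4$. In the paper that fact is only implicit, following from Lemma~\ref{lem:gd-barrier-regularity} combined with the speed bound itself.
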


\begin{proof}
Inside $\mathcal G$, Lemma~\ref{lem:log-prediction-bound} gives $|F(\theta;\alpha)-\alpha| \le \frac{C}{\mu}$ uniformly in $\alpha$. Moreover, the odds formula and the lower rail imply $q_\theta(\alpha) \le \frac{C}{\mu} e^{-\mu(\alpha-\rho)}.$ Using the mixture identities and the component asymptotics, $|\Cov_\theta(X,Y\mid\alpha)| \le Cq_\theta(\alpha),$ and $\Var_\theta(X\mid\alpha) \le \frac{C}{\mu^2} + Cq_\theta(\alpha).$ Hence $\int_\rho^\ell |\Cov_\theta(X,Y\mid\alpha)|\,d\alpha \le \frac{C}{\mu^2},$ and $\int_\rho^\ell \Var_\theta(X\mid\alpha)\,d\alpha \le \frac{C}{\mu^2}.$ Substituting into the risk-gradient identity proves
\begin{equation*}
    |\partial_\lambda\mathcal R_\rho(\theta)|
    +
    |\partial_\mu\mathcal R_\rho(\theta)|
    \le
    \frac{C}{\mu^3}.
\end{equation*}
Since $v=-\nabla\mathcal R_\rho$, the claim follows.
\end{proof}

\begin{lemma}[Barrier regularity]
\label{lem:gd-barrier-regularity}
There are constants $C_h,C_{hh}<\infty$ such that, for every $\theta=(\lambda,\mu)\in\mathcal G$,
$\|\nabla h_-(\theta)\| + \|\nabla h_+(\theta)\| \le \frac{C_h}{\mu},$
and
\begin{equation*}
    \|\nabla^2h_-(\theta)\|_{\rm op}
    +
    \|\nabla^2h_+(\theta)\|_{\rm op}
    \le
    \frac{C_{hh}}{\mu^2}.
\end{equation*}
\end{lemma}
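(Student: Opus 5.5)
Since $h_\pm$ depend on $\theta=(\lambda,\mu)$ only through $k=\lambda/\mu$ and $\mu$, the plan is to compute their derivatives explicitly by the chain rule. Write
\[
h_-(\lambda,\mu)=\frac{\lambda}{\mu}-\ell-\frac{\log\mu+c_0}{\mu},\qquad
h_+(\lambda,\mu)=\ell+C_{\rm band}\frac{\log\mu}{\mu}-\frac{\lambda}{\mu}.
\]
Both are smooth on $\{\mu>0\}$, and each is of the form $\pm\lambda/\mu+\phi_\pm(\mu)$ with $\phi_-(\mu)=-(\log\mu+c_0)/\mu$ and $\phi_+(\mu)=C_{\rm band}\log\mu/\mu$ (the constant $\ell$ is irrelevant).

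\textbf{First derivatives.} We have $\partial_\lambda(\lambda/\mu)=1/\mu$ and $\partial_\mu(\lambda/\mu)=-\lambda/\mu^2=-k/\mu$. Moreover $\phi_-'(\mu)=(\log\mu+c_0-1)/\mu^2$ and $\phi_+'(\mu)=C_{\rm band}(1-\log\mu)/\mu^2$, so $|\phi_\pm'(\mu)|\lesssim\log\mu/\mu^2\le 1/\mu$ for $\mu\ge\mu_*$. The key input is that on $\mathcal G$ the ratio $k$ is uniformly bounded: $k\le \ell+C_{\rm band}\log\mu/\mu\le 2$ once $\mu_*$ is large (the same enlargement of $\mu_*$ used in the proof of Theorem~\ref{thm:main-population}). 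Hence $\|\nabla h_\pm\|\le (1+k)/\mu+|\phi_\pm'|\le C_h/\mu$.

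\textbf{Second derivatives.} We have $\partial_{\lambda\lambda}(\lambda/\mu)=0$, $\partial_{\lambda\mu}(\lambda/\mu)=-1/\mu^2$, and $\partial_{\mu\mu}(\lambda/\mu)=2\lambda/\mu^3=2k/\mu^2$. For the one-variable terms, $\phi_-''(\mu)=(1-2(\log\mu+c_0-1))/\mu^3$ and $\phi_+''(\mu)=C_{\rm band}(2\log\mu-3)/\mu^3$, both $O(\log\mu/\mu^3)=O(\mu^{-2})$. Bounding the operator norm by the sum of absolute entries and again using $k\le2$ gives $\|\nabla^2h_\pm\|_{\rm op}\le C_{hh}/\mu^2$.

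There is no real obstacle here. The one point needing care is that the $\mu$-derivatives of $\lambda/\mu$ carry a factor $\lambda=k\mu$. These terms are only of the claimed order because $k$ is bounded on $\mathcal G$, and this is exactly where the upper barrier defining $\mathcal G$ (with $\mu\ge\mu_*$) is used. The constants depend only on $\ell$, $c_0$, $C_{\rm band}$, and $\mu_*$.
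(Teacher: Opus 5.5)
Your proposal is correct and is essentially the paper's proof. You differentiate $\lambda/\mu$ and the rail functions $b_{c_0}$ and $U_{\rm band}$ explicitly, use boundedness of $k$ on $\mathcal G$ to control the $\lambda$-weighted $\mu$-derivatives, and enlarge $\mu_*$ to absorb the $\log\mu$ factors; the one implicit step is that bounding $|k|$ (not only $k$ from above) uses the lower rail $k\ge b_{c_0}(\mu)>0$ for large $\mu$, which is what justifies your $(1+k)/\mu$ bound.
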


\begin{proof}
Since $h_-(\lambda,\mu) = \frac{\lambda}{\mu}-b_{c_0}(\mu),$ and $h_+(\lambda,\mu) = U_{\rm band}(\mu)-\frac{\lambda}{\mu},$ the first derivatives of $\lambda/\mu$ have size $O(\mu^{-1})$ on the tube because $k=\lambda/\mu=O(1)$. Its second derivatives have size $O(\mu^{-2})$. Finally, $b_{c_0}'(\mu), \ U_{\rm band}'(\mu) = O\left(\frac{\log\mu}{\mu^2}\right),$ and $b_{c_0}''(\mu), \ U_{\rm band}''(\mu) = O\left(\frac{\log\mu}{\mu^3}\right).$ After increasing the fixed base radius $\mu_*$, these terms are bounded by the claimed scales.
\end{proof}

For a vector field $w$ on the logarithmic tube, define $\Psi_{\pm,w}(\theta) := \mu^4 \langle\nabla h_\pm(\theta),w(\theta)\rangle.$

\begin{lemma}[Normalized directional regularity]
\label{lem:normalized-directional-regularity}
Let $R\ge\mu_*$ and suppose a $C^1$ vector field $w$ on a fixed-width enlargement of $\mathcal S_R$ satisfies
$\|w(\theta)\| \le C_w, \ \|Dw(\theta)\|_{\rm op} \le L_w$
there. Then
\begin{equation*}
    |\Psi_{\pm,w}(\theta)-\Psi_{\pm,w}(\vartheta)|
    \le
    C(C_w+L_w)R^3
    \|\theta-\vartheta\|
\end{equation*}
whenever $\theta,\vartheta$ lie in that enlargement.
\end{lemma}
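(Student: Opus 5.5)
The plan is to write $\Psi_{\pm,w}(\theta)-\Psi_{\pm,w}(\vartheta)$ as the sum of three differences, one for each factor in $\mu^4\langle\nabla h_\pm,w\rangle$, and to bound each one with Lemma~\ref{lem:gd-barrier-regularity}. The product rule gives
\[
\Psi_{\pm,w}(\theta)-\Psi_{\pm,w}(\vartheta)
=(\mu_\theta^4-\mu_\vartheta^4)\langle\nabla h_\pm(\theta),w(\theta)\rangle
+\mu_\vartheta^4\langle\nabla h_\pm(\theta)-\nabla h_\pm(\vartheta),w(\theta)\rangle
+\mu_\vartheta^4\langle\nabla h_\pm(\vartheta),w(\theta)-w(\vartheta)\rangle .
\]
On a fixed-width enlargement of $\mathcal S_R$ the radius satisfies $\mu\asymp R$. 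The ratio $k=\lambda/\mu$ stays bounded there, because it lies within $O(1/R)$ of $\ell+O(\log R/R)$. Hence the derivative bounds $\|\nabla h_\pm\|\le C_h/\mu$ and $\|\nabla^2h_\pm\|_{\rm op}\le C_{hh}/\mu^2$ hold on the enlargement as well, up to changed constants. To see this, I would redo the short computation in the proof of Lemma~\ref{lem:gd-barrier-regularity}. That computation uses only $k=O(1)$ and $\mu\ge\mu_*$, and both still hold on the enlargement.

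With these bounds, each term is controlled separately, taking every $\mu$ to be of order $R$.

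The first term is at most $4\max(\mu)^3\|\theta-\vartheta\|\cdot(C_h/\mu)C_w$, which is $O(C_wR^2)\|\theta-\vartheta\|$.

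For the second term I need $\|\nabla h_\pm(\theta)-\nabla h_\pm(\vartheta)\|\le (C/R^2)\|\theta-\vartheta\|$. This follows from the Hessian bound by integrating along the segment $[\vartheta,\theta]$. The second term is then at most $R^4\cdot(C/R^2)\cdot C_w\|\theta-\vartheta\|$, which is $O(C_wR^2)\|\theta-\vartheta\|$.

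The third term is at most $R^4\cdot(C/R)\cdot L_w\|\theta-\vartheta\|$, which is $O(L_wR^3)\|\theta-\vartheta\|$. This term sets the stated $R^3$ scale.

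Summing the three gives $C(C_w+L_w)R^3\|\theta-\vartheta\|$, because $R^2\le R^3$ once $R\ge\mu_*\ge1$.

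The only delicate point is the segment argument used in the second and third terms. It needs the segment $[\vartheta,\theta]$ to stay where $\mu\asymp R$ and $k=O(1)$. To guarantee this, I would take the enlargement to be convex in $(\lambda,\mu)$ coordinates, for example the set $\{R/2\le\mu\le4R,\ |k-\ell|\le1\}$. This set is convex because the constraint on $k$ becomes linear inequalities in $(\lambda,\mu)$ once $\mu>0$. Alternatively, one could argue that points at distance $O(1)$ from a shell of radius $R\ge\mu_*$ keep these properties. If the enlargement is not convex, I would instead restrict to pairs at distance below a small constant, and then a ball argument gives the same conclusion. For pairs at larger distance, the crude bound $|\Psi|\le CR^3C_w$ already gives the claim.
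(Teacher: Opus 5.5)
Your argument is correct and is essentially the paper's proof. The paper bounds $\|\mu^4\nabla h_\pm\|\le CR^3$ and $\|D(\mu^4\nabla h_\pm)\|_{\rm op}\le CR^2$, applies the product rule to $\Psi_{\pm,w}=\langle\mu^4\nabla h_\pm,w\rangle$ to get $\|D\Psi_{\pm,w}\|\le C(C_wR^2+L_wR^3)$, and concludes by the mean-value inequality; this matches your three-term split, with $\mu^4$ and $\nabla h_\pm$ kept together. Your extra care that the segment $[\vartheta,\theta]$ stay where $\mu\asymp R$ and $k=O(1)$ (for example, a convex set such as $\mathcal U_R$) addresses a point the paper's one-line mean-value step leaves implicit.
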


\begin{proof}
On a fixed-width enlargement of $\mathcal S_R$, one has $\mu\asymp R$ and $k=O(1)$. Direct differentiation of the explicit barriers therefore gives
\begin{equation*}
    \|\mu^4\nabla h_\pm(\theta)\|
    \le
    CR^3,
    \qquad
    \left\|
        D\{\mu^4\nabla h_\pm(\theta)\}
    \right\|_{\rm op}
    \le
    CR^2.
\end{equation*}
Using $\Psi_{\pm,w}(\theta) = \langle \mu^4\nabla h_\pm(\theta),w(\theta)\rangle,$ the product rule gives $\|D\Psi_{\pm,w}(\theta)\| \le CR^2C_w + CR^3L_w \le C(C_w+L_w)R^3,$ where $R\ge\mu_*\ge1$. The mean-value inequality proves the claim.
\end{proof}

\subsection{Finite-horizon Euler shadowing}
\begin{lemma}[Euler shadowing on a compact time interval]
\label{lem:population-gd-shadowing}
Let $\theta(t)$ solve $\dot\theta=v(\theta)$ on $[0,T]$, and suppose a compact set $K$ contains a positive neighborhood of $\theta([0,T])$. Assume
$\|v(\theta)\|\le V, \ \|v(\theta)-v(\vartheta)\| \le L\|\theta-\vartheta\|$
on $K$. Let
\begin{equation*}
    \theta_{s+1}^{\gamma}
    =
    \theta_s^\gamma+\gamma v(\theta_s^\gamma),
    \qquad
    \theta_0^\gamma=\theta(0).
\end{equation*}
Then, for all sufficiently small $\gamma$ and all integers $s$ satisfying $s\gamma\le T$,
$\|\theta_s^\gamma-\theta(s\gamma)\| \le C_{T,L,V}\gamma.$
\end{lemma}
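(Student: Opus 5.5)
The plan is a standard discrete Grönwall argument, with a bootstrap that keeps the Euler iterates inside $K$. Let $\delta>0$ be such that the closed $\delta$-neighborhood of $\theta([0,T])$ is contained in $K$. Write $e_s:=\|\theta_s^\gamma-\theta(s\gamma)\|$, so that $e_0=0$.

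\textbf{Local truncation error.} On each interval $[s\gamma,(s+1)\gamma]\subseteq[0,T]$,
\[
\theta((s+1)\gamma)=\theta(s\gamma)+\int_{s\gamma}^{(s+1)\gamma}v(\theta(r))\,dr .
\]
The trajectory stays in $K$ and $\|\dot\theta\|\le V$ there, so $\|\theta(r)-\theta(s\gamma)\|\le V(r-s\gamma)$. The Lipschitz bound then gives
\[
\Bigl\|\theta((s+1)\gamma)-\theta(s\gamma)-\gamma v(\theta(s\gamma))\Bigr\|\le \tfrac12 LV\gamma^2 .
\]

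\textbf{Error recursion.} Suppose $\theta_s^\gamma\in K$. Subtracting the Euler step from the expansion above and using the Lipschitz bound on $K$ gives
\[
e_{s+1}\le(1+L\gamma)e_s+\tfrac12 LV\gamma^2 .
\]
Iterating, for $s\gamma\le T$,
\[
e_s\le \tfrac12 LV\gamma^2\sum_{j<s}(1+L\gamma)^j\le \tfrac{V}{2}\bigl(e^{LT}-1\bigr)\gamma=:C_{T,L,V}\gamma .
\]

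\textbf{Bootstrap (the only delicate step).} The recursion needs $\theta_s^\gamma\in K$, because the Lipschitz and speed bounds are assumed only on $K$. Take $\gamma\le\gamma_1$, where $C_{T,L,V}\gamma_1<\delta$, and argue by induction on $s$. If $e_j\le C_{T,L,V}\gamma$ for all $j\le s$, then $\theta_s^\gamma$ lies within distance $\delta$ of $\theta(s\gamma)$ and hence in $K$. The recursion then applies at step $s$ and gives the bound for $e_{s+1}$.

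A possible subtlety is the final step, where $(s+1)\gamma$ may slightly exceed $T$. This is avoided because we only claim the bound for integers with $s\gamma\le T$, and every step used in the induction lies in $[0,T]$. The constant depends only on $T$, $L$, $V$, and the threshold on $\gamma$ depends additionally on $\delta$, which is fixed by $K$.
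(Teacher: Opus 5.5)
Your proposal is correct and follows essentially the same route as the paper: the $\tfrac12 LV\gamma^2$ local truncation bound, the recursion $e_{s+1}\le(1+\gamma L)e_s+\tfrac12 LV\gamma^2$ summed to $\tfrac{V\gamma}{2}(e^{LT}-1)$, and a stopped argument based on the clearance of the reference path from $\partial K$. One small wording point: the bound on $e_{s+1}$ comes from iterating the recursion over all steps $j\le s$ starting from $e_0=0$, which is legitimate because your hypothesis places every $\theta_j^\gamma$ in $K$; a single application at step $s$ from $e_s\le C_{T,L,V}\gamma$ would only give $(1+L\gamma)C_{T,L,V}\gamma+\tfrac12 LV\gamma^2$.
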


\begin{proof}
For $t_s=s\gamma$, $\theta(t_{s+1}) = \theta(t_s)+\gamma v(\theta(t_s))+\tau_s,$ where
\begin{align*}
    \|\tau_s\|
    &\le
    \int_{t_s}^{t_{s+1}}
    \|v(\theta(u))-v(\theta(t_s))\|\,du\\
    &\le
    L\int_{t_s}^{t_{s+1}}
    \|\theta(u)-\theta(t_s)\|\,du\\
    &\le
    \frac12LV\gamma^2.
\end{align*}
Set $e_s=\|\theta_s^\gamma-\theta(t_s)\|$. Before either trajectory leaves $K$, $e_{s+1} \le (1+\gamma L)e_s + \frac12LV\gamma^2.$ Since $e_0=0$,
$e_s\le(V\gamma/2)\{(1+\gamma L)^s-1\}
\le(V\gamma/2)(e^{LT}-1)$.
For sufficiently small $\gamma$ this is smaller than the clearance of the reference path from $\partial K$, so a stopped argument rules out exit from $K$. This proves the bound.
\end{proof}

Applying Lemma~\ref{lem:population-gd-shadowing} on $[0,T_0+\tau_0]$ gives, for all sufficiently small $\gamma$,
\begin{equation}
\label{eq:population-gd-shadowing-bound}
    \max_{0\le s\le s_0(\gamma)}\|\theta_s^\gamma-\theta(s\gamma)\|\le C_{\rm gd}\gamma.
\end{equation}
Since $s_0(\gamma)\gamma\in[T_0,T_0+\gamma]$ and the reference flow has strict tube margin there, reducing the learning-rate threshold if necessary gives
\begin{equation}
\label{eq:population-gd-entry}
    \theta_{s_0(\gamma)}^\gamma\in\operatorname{int}(\mathcal G).
\end{equation}

\subsection{One-step barrier stability}
\begin{lemma}[No-overshoot barrier lemma]
\label{lem:discrete-no-overshoot}
Fix $R\ge\mu_*$. Choose a fixed $K_{\rm tube}>0$ larger than $\sup\{|\lambda/\mu|:(\lambda,\mu)\in\mathcal G\}+1$, and define the two-sided side-rail enlargement
\begin{equation*}
    \mathcal U_R
    :=
    \left\{
        (\lambda,\mu):
        R\le\mu\le4R,
        \quad
        |\lambda/\mu|\le K_{\rm tube}
    \right\}.
\end{equation*}
Let $w$ be a $C^1$ vector field on an open neighborhood of $\mathcal U_R$. Suppose
\begin{equation}
\label{eq:generic-discrete-radial-speed}
    \frac{c_r}{\mu^3}
    \le
    w_\mu(\theta)
    \le
    \frac{C_r}{\mu^3},
    \qquad
    \|w(\theta)\|
    \le
    \frac{C_w}{\mu^3},
\end{equation}
for $\theta\in\mathcal S_R$, and suppose further that
\begin{equation}
\label{eq:generic-discrete-barrier-margin}
    \langle\nabla h_-(\theta),w(\theta)\rangle
    \ge
    \frac{a_-'}{\mu^4}
    \quad\text{for }\theta\in\mathcal U_R\text{ with }h_-(\theta)=0,
\end{equation}
and
\begin{equation*}
    \langle\nabla h_+(\theta),w(\theta)\rangle
    \ge
    \frac{a_+'}{\mu^4}
    \quad\text{for }\theta\in\mathcal U_R\text{ with }h_+(\theta)=0.
\end{equation*}
Assume that the normalized directional functions $\Psi_{\pm,w}$ are $CR^3$-Lipschitz on $\mathcal U_R$. Then there exists a constant $\gamma_{\rm bar}>0$, depending only on the constants in these bounds and on $\mu_*$, such that for every $0<\gamma\le\gamma_{\rm bar}$ the update
$\theta^+ = \theta+\gamma w(\theta)$
cannot cross either side boundary of $\mathcal G$ from a starting point $\theta\in\mathcal S_R$.
\end{lemma}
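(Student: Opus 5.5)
We argue separately for each side barrier, using a second-order Taylor expansion of $h_\pm$ along the step together with the normalized Lipschitz control of $\Psi_{\pm,w}$. Fix $\theta\in\mathcal S_R$, so $h_-(\theta)\ge0$ and $h_+(\theta)\ge0$, and set $\theta^+=\theta+\gamma w(\theta)$. By \eqref{eq:generic-discrete-radial-speed}, $\|\theta^+-\theta\|\le\gamma C_w/\mu^3\le\gamma C_w R^{-3}$. For $\gamma\le\gamma_{\rm bar}$ small and $R\ge\mu_*\ge1$, this displacement is far smaller than $R$. Hence the segment $[\theta,\theta^+]$ stays in $\mathcal U_R$: indeed $\mu^+\in[R,4R]$ because $w_\mu>0$ and the step is tiny, and $|\lambda^+/\mu^+|\le K_{\rm tube}$ because $k$ moves by $O(\gamma R^{-4})$.

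Consider $h_-$; the case of $h_+$ is symmetric. Taylor's theorem on the segment, together with Lemma~\ref{lem:gd-barrier-regularity} (extended to $\mathcal U_R$, where $\mu\asymp R$ and $k=O(1)$ give the same estimates), yields
\[
 h_-(\theta^+)\ge h_-(\theta)+\gamma\langle\nabla h_-(\theta),w(\theta)\rangle-\tfrac12\gamma^2\frac{C_{hh}}{R^2}\frac{C_w^2}{R^6}.
\]
We split according to the size of $h_-(\theta)$, with threshold $\delta:=c_\delta R^{-4}$ for a small constant $c_\delta$.

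\emph{Far case.} If $h_-(\theta)\ge c\,\gamma R^{-4}$, we do not use the margin. Since $|\langle\nabla h_-,w\rangle|\le C_hC_w\mu^{-4}$, we get $h_-(\theta^+)\ge c\gamma R^{-4}-\gamma C_hC_wR^{-4}-O(\gamma^2R^{-8})$. This is nonnegative once $c\ge 2C_hC_w$, and that choice is compatible with the near case below only if we take $h_-(\theta)$ large relative to $\gamma R^{-4}$. That is consistent, because $\gamma$ is the small parameter and the near-case radius $\delta$ does not shrink with $\gamma$.

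\emph{Near case.} If $0\le h_-(\theta)<c\gamma R^{-4}$, we transfer the boundary margin to $\theta$. Let $\vartheta$ be the point on $\{h_-=0\}$ with the same $\mu$ as $\theta$, obtained by moving $\lambda$ only. Then $\|\theta-\vartheta\|=\mu\,h_-(\theta)\le C\gamma R^{-3}$. Because $\Psi_{-,w}$ is $CR^3$-Lipschitz on $\mathcal U_R$,
\[
 \mu^4\langle\nabla h_-(\theta),w(\theta)\rangle\ge a_-'-CR^3\cdot C\gamma R^{-3}\ge a_-'/2
\]
for $\gamma$ small. Therefore $h_-(\theta^+)\ge\gamma a_-'/(2\mu^4)-O(\gamma^2R^{-8})\ge0$, using $\mu\le2R$ and $\gamma\le\gamma_{\rm bar}$. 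In both cases $h_-(\theta^+)\ge0$, so the update cannot cross the lower side boundary.

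The main obstacle is the scale bookkeeping: the near/far threshold, the transfer distance $\mu h_-$, and the Lipschitz constant $R^3$ must combine into an $R$-independent condition on $\gamma$. The computation above suggests they do, since every term is homogeneous in $R^{-4}$ after normalization. A secondary point to check is that the boundary point $\vartheta$ lies in $\mathcal U_R$; it does, because its $k$-coordinate is within $O(\gamma R^{-4})$ of $\theta$'s, and $\mathcal U_R$ was defined with two-sided slack $K_{\rm tube}$ for exactly this purpose.
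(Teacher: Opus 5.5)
Your argument is correct and is essentially the paper's proof. Both transfer the normalized rail margin to the starting point through the $CR^3$-Lipschitz bound on $\Psi_{-,w}$ over a distance $O(\gamma R^{-3})$, and both then close with a second-order Taylor expansion whose remainder is $O(\gamma^2R^{-8})$. The only difference is where the nearby rail point comes from: the paper argues by contradiction and uses the crossing point $\zeta$ on the segment $[\theta,\theta^+]$, which automatically lies within one step of $\theta$, whereas you project horizontally onto $\{h_-=0\}$ and handle starting points with $h_-(\theta)\ge 2C_hC_w\gamma R^{-4}$ by the crude bound. Your auxiliary threshold $\delta=c_\delta R^{-4}$ is never used and can be dropped.
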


\begin{proof}
We treat the lower rail; the upper rail is identical. Suppose for contradiction that $h_-(\theta)\ge0, \ h_-(\theta^+)<0.$ By continuity, the segment $[\theta,\theta^+]$ contains a point $\zeta$ satisfying $h_-(\zeta)=0$. By the speed bound, $\|\theta-\zeta\| \le \|\theta^+-\theta\| \le \frac{C\gamma}{R^3}.$ Because $w_\mu(\theta)>0$ and the step length is $O(\gamma R^{-3})$, for sufficiently small fixed $\gamma$ the segment stays in $\mathcal U_R$. At $\zeta$, $\Psi_{-,w}(\zeta) \ge c\,a_-'$ after absorbing the comparability of $\mu(\zeta)$ and $R$. The Lipschitz bound gives $\Psi_{-,w}(\theta) \ge c\,a_-'-C\gamma.$ Choose $\gamma$ so that the right-hand side is at least $c\,a_-'/2$. Then
\begin{equation}
\label{eq:discrete-starting-inward}
    \langle\nabla h_-(\theta),w(\theta)\rangle
    \ge
    \frac{c\,a_-'}{2R^4}.
\end{equation}
Taylor's theorem gives
\begin{equation*}
    h_-(\theta^+)
    =
    h_-(\theta)
    +
    \gamma
    \langle\nabla h_-(\theta),w(\theta)\rangle
    +
    \frac{\gamma^2}{2}
    w(\theta)^\top
    \nabla^2h_-(\xi)
    w(\theta)
\end{equation*}
for a point $\xi$ on the segment. Since the whole segment lies in $\mathcal U_R$, where $\mu\asymp R$ and $k=O(1)$, direct differentiation of $h_-$ gives $\|\nabla^2h_-(\xi)\|_{\rm op} \le \frac{C}{R^2}.$ Together with the speed bound,
\begin{equation*}
    \left|
        \frac{\gamma^2}{2}
        w(\theta)^\top
        \nabla^2h_-(\xi)
        w(\theta)
    \right|
    \le
    C\frac{\gamma^2}{R^8}.
\end{equation*}
The first-order term in \eqref{eq:discrete-starting-inward} is at least $c\gamma/R^4$. Since $R\ge\mu_*$, a further fixed reduction of $\gamma_{\rm bar}$ makes the remainder at most half of this term. Therefore $h_-(\theta^+) > h_-(\theta) \ge0,$ a contradiction.
\end{proof}

\begin{proposition}[Discrete logarithmic-tube invariance]
\label{prop:population-gd-tube-invariance}
There exists a constant $\gamma_{\rm tube}>0$, depending only on the fixed population and tube constants, such that for every $0<\gamma\le\gamma_{\rm tube}$, if $\theta_s^\gamma\in\mathcal G$, then $\theta_{s+1}^\gamma\in\mathcal G$ and
\begin{equation}
\label{eq:population-gd-mu-increment}
    \mu_{s+1}^\gamma-\mu_s^\gamma
    \ge
    \frac{c_\mu\gamma}{(\mu_s^\gamma)^3}.
\end{equation}
Moreover, after increasing a fixed constant $C_{\mu,\rm gd}$,
\begin{equation}
\label{eq:population-gd-mu-upper-increment}
    (\mu_{s+1}^\gamma)^4-(\mu_s^\gamma)^4
    \le
    C_{\mu,\rm gd}\gamma.
\end{equation}
\end{proposition}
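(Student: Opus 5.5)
The plan is to apply the no-overshoot barrier Lemma~\ref{lem:discrete-no-overshoot} with $w=v=-\nabla\mathcal R_\rho$ on the shell $\mathcal S_R$ containing $\theta_s^\gamma$, and to check its hypotheses with the population estimates of Section~\ref{sec:population-dynamics}. Given $\theta_s^\gamma\in\mathcal G$, set $R:=\mu_s^\gamma$, so that $\theta_s^\gamma\in\mathcal S_R$. Proposition~\ref{prop:radial-growth-progress-region} gives $c_\mu/\mu^3\le v_\mu\le C_\mu/\mu^3$ on $\mathcal G$, and Lemma~\ref{lem:population-speed-progress-region} gives $\|v\|\le C_{\rm sp}/\mu^3$. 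Together these are \eqref{eq:generic-discrete-radial-speed}.

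For the barrier margins, observe that $\langle\nabla h_-,v\rangle=\dot k-b_{c_0}'(\mu)\dot\mu$ along the flow. On the rail $h_-=0$ this is at least $a/\mu^4$ by Proposition~\ref{prop:log-lower-rail-invariance}. Similarly, $\langle\nabla h_+,v\rangle=U_{\rm band}'(\mu)\dot\mu-\dot k$, which is at least $a_U/\mu^4$ on $h_+=0$ by Corollary~\ref{cor:upper-moving-boundary-correction}. Both estimates hold for all large $\mu$ and not only inside $\mathcal G$, so they hold on the rail points of $\mathcal U_R$. The required $CR^3$-Lipschitz property of $\Psi_{\pm,v}$ follows from Lemma~\ref{lem:normalized-directional-regularity}: on $\mathcal U_R$ we have $\|v\|\le C$ and, by Lemma~\ref{lem:population-vector-field-global-regularity}, $\|Dv\|_{\rm op}\le L_v$. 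The lemma then shows that $\theta_{s+1}^\gamma$ crosses neither side barrier. Radial exit is impossible because $\mu$ increases. Hence $\theta_{s+1}^\gamma\in\mathcal G$ for every $\gamma\le\gamma_{\rm tube}$, where $\gamma_{\rm tube}:=\gamma_{\rm bar}$ is uniform in $R$.

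The increment bound \eqref{eq:population-gd-mu-increment} is immediate: $\mu_{s+1}^\gamma-\mu_s^\gamma=\gamma v_\mu(\theta_s^\gamma)\ge c_\mu\gamma/(\mu_s^\gamma)^3$. For \eqref{eq:population-gd-mu-upper-increment}, write $\delta:=\gamma v_\mu(\theta_s^\gamma)\in[0,C_\mu\gamma\mu^{-3}]$. Since $\delta\le\mu$ once $\gamma\le\mu_*^4/C_\mu$, expanding gives $(\mu+\delta)^4-\mu^4\le 4\mu^3\delta(1+\delta/\mu)^3\le 32C_\mu\gamma$. This proves the bound with $C_{\mu,\rm gd}:=32C_\mu$.

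The main obstacle is the margin step. The rail estimates from the flow analysis must apply to points of $\mathcal U_R$ that lie on a rail but possibly just outside $\mathcal G$, with constants that do not degrade in $R$. I expect this to work because Propositions~\ref{prop:log-lower-rail-invariance} and Corollary~\ref{cor:upper-moving-boundary-correction} are stated for all large $\mu$ on the rail curves themselves, independently of tube membership. The comparison of $\mu(\zeta)$ with $R$ is then exactly the one absorbed inside Lemma~\ref{lem:discrete-no-overshoot}.
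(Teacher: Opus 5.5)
Your proposal is correct and follows the paper's proof essentially step for step. You apply Lemma~\ref{lem:discrete-no-overshoot} with $w=v$, using the same radial, speed, rail-margin, and normalized-regularity inputs, then read off the exact radial increment and bound the fourth-power increment via $4\delta(\mu+\delta)^3$, a tidier version of the paper's binomial expansion; the only bookkeeping point is that your condition $\gamma\le\mu_*^4/C_\mu$ should be folded into $\gamma_{\rm tube}$ together with $\gamma_{\rm bar}$.
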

\begin{proof}
Apply Lemma~\ref{lem:discrete-no-overshoot} with $w=v$. The radial bounds follow from Proposition~\ref{prop:radial-growth-progress-region}; the lower- and upper-rail margins follow from Proposition~\ref{prop:log-lower-rail-invariance} and Corollary~\ref{cor:upper-moving-boundary-correction}; the speed bound is Lemma~\ref{lem:population-speed-progress-region}; and normalized directional regularity follows from Lemmas~\ref{lem:normalized-directional-regularity} and \ref{lem:population-vector-field-global-regularity}.

The radial update is exact: $\mu_{s+1}^\gamma-\mu_s^\gamma = \gamma v_\mu(\theta_s^\gamma),$ which proves \eqref{eq:population-gd-mu-increment}. For the upper fourth-power increment, write
\begin{equation*}
    \delta_s
    :=
    \gamma v_\mu(\theta_s^\gamma),
    \qquad
    0\le\delta_s\le\frac{C_\mu\gamma}{(\mu_s^\gamma)^3}.
\end{equation*}
Then
\begin{align*}
    (\mu_s^\gamma+\delta_s)^4-(\mu_s^\gamma)^4
    &=
    4(\mu_s^\gamma)^3\delta_s
    +
    6(\mu_s^\gamma)^2\delta_s^2
    +
    4\mu_s^\gamma\delta_s^3
    +
    \delta_s^4\\
    &\le
    4C_\mu\gamma+C\gamma^2,
\end{align*}
where the last constant is uniform because $\mu_s^\gamma\ge\mu_*$. For $\gamma\le\gamma_0$, absorb $C\gamma^2$ into $C_{\mu,\rm gd}\gamma$.
\end{proof}

\subsection{Population-GD reference trajectory}
\begin{proposition}[Population-GD reference trajectory]
\label{prop:population-gd-reference}
Under Assumption~\ref{ass:initialization}, there exists a deterministic $\gamma_0>0$, depending only on the fixed population and initialization constants, such that for every $0<\gamma\le\gamma_0$ the population GD iterates \eqref{eq:population-gd} enter $\mathcal G$ after finitely many iterations. More precisely, $\theta_s^\gamma\in\mathcal G$ for every $s\ge s_0(\gamma)$, and
\begin{equation}
\label{eq:population-gd-growth-theorem}
    (\mu_{s_0}^\gamma)^4+4c_\mu\gamma(s-s_0)
    \le (\mu_s^\gamma)^4
    \le (\mu_{s_0}^\gamma)^4+C_{\mu,\rm gd}\gamma(s-s_0).
\end{equation}
Consequently,
\begin{equation}
\label{eq:population-gd-prediction}
    \sup_{\alpha\in[\rho,\ell]}
    |F(\theta_s^\gamma;\alpha)-\alpha|
    \le
    \frac{C_{\rm bias}}
    {\left\{
        (\mu_{s_0}^\gamma)^4
        +
        4c_\mu\gamma(s-s_0)
    \right\}^{1/4}},
\end{equation}
Thus the threshold-estimation error is $O((\gamma s)^{-1/4})$ and the risk is $O((\gamma s)^{-1/2})$.
\end{proposition}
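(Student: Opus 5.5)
The proof assembles the pieces already prepared in this section: compact Euler shadowing gets the population GD into $\mathcal G$, and after that discrete tube invariance does the work. First, fix $T_0>\tau_{\mathcal G}$ and $\tau_0>0$ as in the preamble of this section. Theorem~\ref{thm:main-population} guarantees that the flow has strict side-barrier and radial margins on $[T_0,T_0+\tau_0]$. Take a compact neighbourhood $K$ of $\theta([0,T_0+\tau_0])$. It is compact because the flow has bounded speed (Lemma~\ref{lem:global-well-posedness}), and Lemma~\ref{lem:population-vector-field-global-regularity} supplies a Lipschitz constant on $K$.

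Lemma~\ref{lem:population-gd-shadowing} then gives \eqref{eq:population-gd-shadowing-bound}. Since $s_0(\gamma)\gamma\in[T_0,T_0+\gamma]$ and the reference point has a positive clearance from $\partial\mathcal G$ there, we shrink $\gamma_0$ until $C_{\rm gd}\gamma$ plus the drift $V\gamma$ is below that clearance. This yields \eqref{eq:population-gd-entry}, and in particular $\theta_{s_0}^\gamma\in\mathcal G$. All thresholds depend only on the deterministic flow, and hence only on the initialization and model constants.

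Second, we propagate membership in $\mathcal G$ by induction on $s\ge s_0$ using Proposition~\ref{prop:population-gd-tube-invariance}, with $\gamma_0\le\gamma_{\rm tube}$. Its hypotheses hold uniformly in $R\ge\mu_*$ because the constants in Lemma~\ref{lem:discrete-no-overshoot} are scale-free after normalization. This gives $\theta_s^\gamma\in\mathcal G$ for every $s\ge s_0$, together with the per-step bounds \eqref{eq:population-gd-mu-increment} and \eqref{eq:population-gd-mu-upper-increment}.

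Third, we convert the lower increment into a fourth-power bound. Since $\delta_s\ge0$, convexity gives $(\mu+\delta)^4-\mu^4\ge4\mu^3\delta\ge4c_\mu\gamma$. Summing over steps gives the left inequality of \eqref{eq:population-gd-growth-theorem}, and summing \eqref{eq:population-gd-mu-upper-increment} gives the right one. The prediction bound \eqref{eq:population-gd-prediction} then follows from Corollary~\ref{cor:forward-invariant-progress-region}, which gives error at most $C_{\rm bias}/\mu$ throughout $\mathcal G$, combined with the lower fourth-power bound. The rates $O((\gamma s)^{-1/4})$ and risk $O((\gamma s)^{-1/2})$ follow because for $s\ge2s_0$ we have $\gamma(s-s_0)\ge\gamma s/2$, while for smaller $s$ the error is bounded by a constant, which is absorbed since $\gamma s_0\approx T_0$ is fixed.

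The main obstacle is uniformity of the step-size threshold over all shells $R\ge\mu_*$. The induction must use a single $\gamma$ for arbitrarily large $\mu$, so we have to check that Lemma~\ref{lem:discrete-no-overshoot} produces a $\gamma_{\rm bar}$ independent of $R$. The margins $a_\pm'/\mu^4$, the speed bound $C/\mu^3$, and the $R^3$-Lipschitz bound on $\Psi$ combine into a $\gamma$-condition with $R$-free constants, namely $C\gamma\le ca'/2$ and $\gamma^2R^{-8}\ll\gamma R^{-4}$. One caveat: Lemma~\ref{lem:population-vector-field-global-regularity} only gives $L_w=O(1)$, whereas Lemma~\ref{lem:normalized-directional-regularity} then yields a Lipschitz constant of order $R^3$, which is exactly what is needed. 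A step of length $O(\gamma R^{-3})$ then changes $\Psi$ by $O(\gamma)$, uniformly in $R$. If this estimate turned out too crude, I would instead bound $Dv$ on the tube by $O(\mu^{-3})$ using the same $q_\theta$ tail integrals as in Lemma~\ref{lem:population-speed-progress-region}.
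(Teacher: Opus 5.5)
Your proposal is correct and follows the paper's proof. Like the paper, you use finite-horizon Euler shadowing to place $\theta^\gamma_{s_0(\gamma)}$ in the interior of $\mathcal G$, propagate membership by induction using the discrete tube-invariance proposition, obtain the lower fourth-power bound from convexity of $x^4$ and the upper bound by summing the per-step increments, and apply the $C_{\rm bias}/\mu$ bias bound valid throughout $\mathcal G$. Your added checks are consistent with the paper's argument and make it more explicit: that the step-size threshold is uniform in $R$, and that small $s$ can be absorbed into the $O((\gamma s)^{-1/4})$ rate.
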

\begin{proof}
Finite-horizon shadowing gives $\theta_{s_0}^\gamma\in\operatorname{int}\mathcal G$ for every sufficiently small $\gamma$. Proposition~\ref{prop:population-gd-tube-invariance} then gives tube invariance for all subsequent iterations. By convexity of $x^4$ and \eqref{eq:population-gd-mu-increment},
\begin{equation*}
    (\mu_{s+1}^\gamma)^4
    \ge
    (\mu_s^\gamma)^4
    +
    4(\mu_s^\gamma)^3
    (\mu_{s+1}^\gamma-\mu_s^\gamma)
    \ge
    (\mu_s^\gamma)^4+4c_\mu\gamma.
\end{equation*}
The upper fourth-power increment in Proposition~\ref{prop:population-gd-tube-invariance} iterates in the same way, so together the two estimates prove \eqref{eq:population-gd-growth-theorem}. The prediction bound follows from Lemma~\ref{lem:log-prediction-bound}, and the risk bound follows by squaring the uniform prediction bound and averaging over $\alpha$.
\end{proof}

\section{Empirical-Process Estimates}
\label{app:empirical-technical-estimates-auxiliary}

This appendix proves the two empirical-process inputs used in
Section~\ref{sec:empirical-pretraining}: compact approximation before
capture and scale-adapted approximation inside $\mathcal G$.
All constants $C,c>0$ may change from line to line. Compact constants
are universal; logarithmic-tube constants may additionally depend on
$\rho$, $c_0$, $C_{\rm band}$, and the fixed population thresholds.

For an i.i.d.\ sample $Z_1,\ldots,Z_N$ with law $P$, write
$P_Nf:=N^{-1}\sum_{i=1}^N f(Z_i)$ and $Pf:=\E f(Z)$. For $R\ge1$, write $K_R:=[0,R]^2.$

For one task block $Z=(\alpha,\mathcal D)$, define
\begin{equation}
\label{eq:one-task-gradient-notation}
    g(\theta;\alpha)
    :=
    \{F(\theta;\alpha)-\alpha\}\nabla F(\theta;\alpha),
    \qquad
    g_n(\theta;Z)
    :=
    \{F_n(\theta;\mathcal D)-\alpha\}\nabla F_n(\theta;\mathcal D).
\end{equation}
Retain $v=-\nabla\mathcal R_\rho$ and write
\begin{equation}
\label{eq:three-vector-fields}
    v_n(\theta;Z):=-g_n(\theta;Z),
    \qquad
    \bar v_n(\theta)
    :=
    \E_Zv_n(\theta;Z)
    =
    -\nabla\mathcal R_{\rho,n}(\theta),
\end{equation}
\[
    \widehat v_{m,n}(\theta)
    :=
    \frac1m\sum_{j=1}^m v_n(\theta;Z_j)
    =
    -\nabla\widehat{\mathcal R}_{m,n}(\theta).
\]

For $R\ge\mu_*$, recall the core shell and define its statistical
enlargement by
\begin{equation}
\label{eq:empirical-core-shell}
    \mathcal S_R
    :=
    \mathcal G\cap\{R\le\mu\le2R\},
\end{equation}
\begin{equation}
\label{eq:empirical-enlarged-shell}
    \widetilde{\mathcal S}_R
    :=
    \mathcal G\cap
    \left\{\frac R2\le\mu\le4R\right\}.
\end{equation}
Also write
$\partial_\pm\widetilde{\mathcal S}_R
:=\{\theta\in\widetilde{\mathcal S}_R:h_\pm(\theta)=0\}$.

Define the normalized empirical shell errors
\begin{align}
\label{eq:empirical-gd-shell-errors}
    \mathcal E_\mu(R)
    &:={}
    \sup_{\theta\in\widetilde{\mathcal S}_R}
    \mu^3
    |[\widehat v_{m,n}(\theta)]_\mu-v_\mu(\theta)|,\notag
    \\
    \mathcal E_-(R)
    &:={}
    \sup_{\theta\in\partial_-\widetilde{\mathcal S}_R}
    \mu^4
    |\langle\nabla h_-(\theta),
    \widehat v_{m,n}(\theta)-v(\theta)\rangle|,\notag
    \\
    \mathcal E_+(R)
    &:={}
    \sup_{\theta\in\partial_+\widetilde{\mathcal S}_R}
    \mu^4
    |\langle\nabla h_+(\theta),
    \widehat v_{m,n}(\theta)-v(\theta)\rangle|,
    \\
    \mathcal E_{\rm sp}(R)
    &:={}
    \sup_{\theta\in\widetilde{\mathcal S}_R}
    \mu^3
    \|\widehat v_{m,n}(\theta)-v(\theta)\|,\notag
    \\
    \mathcal E(R)
    &:={}
    \max\{
        \mathcal E_\mu(R),
        \mathcal E_-(R),
        \mathcal E_+(R),
        \mathcal E_{\rm sp}(R)
    \}.\notag
\end{align}

\subsection{Basic empirical-process tools}
\label{appsubsec:basic-empirical-tools}
\begin{lemma}[One-task gradient regularity]
\label{lem:task-gradient-regularity}
There exist universal constants $B_G,L_G<\infty$ such that, for every
$n$, every task block $Z=(\alpha,\mathcal D)$, and every
$\theta,\vartheta\in\R^2$,
\[
    \|g_n(\theta;Z)\|\le B_G,
    \qquad
    \|g_n(\theta;Z)-g_n(\vartheta;Z)\|
    \le L_G\|\theta-\vartheta\|.
\]
\end{lemma}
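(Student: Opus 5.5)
The plan mirrors the population argument in the proof of Lemma~\ref{lem:population-vector-field-global-regularity}, now applied to the empirical softmax measure instead of the population tilted density. Fix a task block $Z=(\alpha,\mathcal D)$ with $\mathcal D=\{(X_i,Y_i)\}_{i=1}^n$. Write $\pi_\theta$ for the discrete probability measure on $\{1,\dots,n\}$ with weights proportional to $e^{\lambda Y_i+\mu X_i}$. Then $F_n(\theta;\mathcal D)=\E_{\pi_\theta}[X]$ is the mean of a bounded finite exponential family with sufficient statistics $T_1=Y$ and $T_2=X$. Differentiation is legitimate because $F_n$ is a finite ratio of smooth positive sums. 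It gives the analogues of Lemma~\ref{lem:main-gradient-identities}:
\[
\partial_aF_n=\Cov_{\pi_\theta}(X,T_a),
\qquad
\partial^2_{ab}F_n=\E_{\pi_\theta}\bigl[(X-\E_{\pi_\theta}X)(T_a-\E_{\pi_\theta}T_a)(T_b-\E_{\pi_\theta}T_b)\bigr].
\]
The second formula is the third joint cumulant of $\pi_\theta$.

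Since $X_i,Y_i\in[0,1]$, every centered statistic has absolute value at most $1$. Hence $|\partial_aF_n|\le 1/4$ by Cauchy--Schwarz, and $|\partial^2_{ab}F_n|\le 1$. Both bounds hold uniformly in $\theta$, $n$, $\alpha$ and the data. Also $F_n\in[0,1]$ as a convex combination of the $X_i$, and $\alpha\in[\rho,\ell]\subset[0,1]$, so $|F_n-\alpha|\le1$. This immediately gives $\|g_n(\theta;Z)\|\le|F_n-\alpha|\,\|\nabla F_n\|\le\sqrt2/4=:B_G$.

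For the Lipschitz bound, differentiate $g_n$:
\[
Dg_n=\nabla F_n\nabla F_n^\top+(F_n-\alpha)\nabla^2F_n .
\]
The first term has operator norm at most $\|\nabla F_n\|^2\le1/8$. The second has operator norm at most the Frobenius norm of $\nabla^2F_n$, which is at most $2$. So $\|Dg_n\|_{\rm op}\le L_G$ with a universal constant, say $L_G=3$, on all of $\R^2$. Since $g_n(\cdot;Z)$ is $C^1$ on the convex set $\R^2$, the mean-value inequality along the segment $[\vartheta,\theta]$ gives the global Lipschitz estimate.

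One degenerate case needs a remark. If all $Y_i$ coincide, as happens when no label is positive, then $T_1$ is $\pi_\theta$-a.s. constant. In that case $\partial_\lambda F_n=0$ and the corresponding cumulants vanish, so the bounds still hold. No edge case arises from $n$ itself, because $\pi_\theta$ is always a probability measure on a nonempty finite set.

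The only step needing care is the uniform third-cumulant bound. The main point is that it must not depend on $\mu$ or $n$, even though the weights concentrate as $\mu$ grows. The bound comes purely from the fact that the statistics lie in $[0,1]$, so no concentration or scale argument is needed. I expect no real obstacle here; the scale-adapted refinements appear only later, in the shell estimates.
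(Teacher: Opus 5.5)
Your proposal is correct and follows the paper's own argument: the empirical softmax law is a bounded finite exponential family, so $\nabla F_n$ and $\nabla^2 F_n$ are uniformly bounded. Combined with $|F_n-\alpha|\le1$, this bounds $g_n$ and $Dg_n=\nabla F_n\nabla F_n^\top+(F_n-\alpha)\nabla^2F_n$ uniformly, and the mean-value inequality gives the Lipschitz estimate. The paper only asserts these bounds, while you make the constants explicit ($B_G=\sqrt2/4$, $L_G=3$), and both are valid.
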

\begin{proof}
This is the empirical analogue of
Lemma~\ref{lem:population-vector-field-global-regularity}.
Under the empirical softmax law, boundedness of $X,Y\in[0,1]$
gives uniform bounds on $\nabla F_n$ and $\nabla^2F_n$.
Since also $F_n,\alpha\in[0,1]$, both $\|g_n\|$ and
$\|Dg_n\|_{\rm op}$ are uniformly bounded. The mean-value inequality
gives the claim.
\end{proof}

\begin{lemma}[Dudley bound for bounded Lipschitz parametric classes]
\label{lem:dudley-bounded-lipschitz-class}
Let $\Theta\subset\R^d$ have Euclidean diameter at most $D_\Theta$, and suppose
$|f_\theta(z)|\le B, \ |f_\theta(z)-f_\vartheta(z)| \le L\|\theta-\vartheta\|$
for every $z$ and every $\theta,\vartheta\in\Theta$. Then
\begin{equation}
\label{eq:dudley-parametric-expectation}
    \E\sup_{\theta\in\Theta}|(P_N-P)f_\theta|
    \le
    \frac{CB}{\sqrt N}
    \sqrt{
        d\log\!\left(
            C\left(1+\frac{LD_\Theta}{B}\right)
        \right)
    }.
\end{equation}
Moreover, for every $\delta\in(0,1)$, with probability at least $1-\delta$,
\begin{equation}
\label{eq:dudley-parametric-high-probability}
    \sup_{\theta\in\Theta}|(P_N-P)f_\theta|
    \le
    CB\sqrt{
        \frac{
            d\log\!\left(
                C\left(1+\frac{LD_\Theta}{B}\right)
            \right)
            +\log(C/\delta)
        }{N}
    }.
\end{equation}
\end{lemma}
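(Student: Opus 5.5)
The plan is the standard chaining argument: Dudley's entropy integral for the expectation bound, and a bounded-differences inequality for the high-probability bound. Throughout, assume that $\Theta$ is countable, or that $\theta\mapsto f_\theta(z)$ is continuous (which holds here by the Lipschitz hypothesis). Then the supremum is measurable and equals a supremum over a countable dense subset.

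\textbf{Step 1 (symmetrization and sub-Gaussian increments).} Symmetrize: $\E\sup_\theta|(P_N-P)f_\theta|\le 2\E\sup_\theta|N^{-1}\sum_i\epsilon_i f_\theta(Z_i)|$ with Rademacher signs $\epsilon_i$. Conditionally on the sample, the process $\theta\mapsto N^{-1}\sum_i\epsilon_if_\theta(Z_i)$ is sub-Gaussian with respect to the empirical $L^2$ metric scaled by $N^{-1/2}$. By the Lipschitz hypothesis this metric is dominated by $L\|\theta-\vartheta\|/\sqrt N$, and each point has sub-Gaussian scale at most $B/\sqrt N$. Hence the process indexed by $\Theta$ has diameter at most $\min\{2B,LD_\Theta\}/\sqrt N$ in this metric.

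\textbf{Step 2 (entropy and chaining).} A Euclidean $\varepsilon/L$-net of a set of diameter $D_\Theta$ in $\R^d$ has cardinality at most $(1+2LD_\Theta/\varepsilon)^d$. Dudley's integral, truncated at scale $B$ and anchored at a single point $\theta_0$ whose contribution is $\le B/\sqrt N$, then gives
\[
\frac{C}{\sqrt N}\int_0^{B}\sqrt{d\log(1+2LD_\Theta/\varepsilon)}\,d\varepsilon .
\]
The main (mild) obstacle is evaluating this integral cleanly. Substitute $\varepsilon=Bx$ and use $\log(1+a/x)\le\log(1+a)+\log(1+1/x)$ with $a=2LD_\Theta/B$. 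Since $\int_0^1\sqrt{\log(1+1/x)}\,dx<\infty$, the integral is at most $CB\sqrt{d\log(C(1+LD_\Theta/B))}$ after enlarging $C$, so that $\log C\ge1$ absorbs the additive constant. This proves \eqref{eq:dudley-parametric-expectation}.

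\textbf{Step 3 (high probability).} The functional $\Phi(Z_1,\dots,Z_N)=\sup_\theta|(P_N-P)f_\theta|$ changes by at most $2B/N$ when one coordinate is altered, so McDiarmid's inequality gives $\Phi\le\E\Phi+B\sqrt{2\log(1/\delta)/N}$ with probability at least $1-\delta$. Combine this with Step 2 using $\sqrt a+\sqrt b\le\sqrt{2(a+b)}$ and absorb constants to obtain \eqref{eq:dudley-parametric-high-probability}.
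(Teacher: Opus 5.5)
Your proposal is correct and matches the paper's proof step for step. Both use symmetrization and chaining for the sub-Gaussian Rademacher process in the empirical $L^2$ metric, dominated by $L\|\theta-\vartheta\|$, with the $(1+CLD_\Theta/\varepsilon)^d$ covering bound and an anchor term of order $B/\sqrt N$, followed by McDiarmid with bounded differences $2B/N$. Your explicit evaluation of the entropy integral and your measurability remark are details the paper leaves implicit. One small point: the paper integrates up to the $d_N$-diameter $2B$ rather than $B$, which only changes constants.
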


\begin{proof}
By symmetrization,
\[
    \E\sup_{\theta\in\Theta}|(P_N-P)f_\theta|
    \le
    2\E\E_\varepsilon
    \sup_{\theta\in\Theta}
    \left|
        \frac1N\sum_{i=1}^N
        \varepsilon_i f_\theta(Z_i)
    \right|.
\]
Conditional on the sample, the Rademacher process is sub-Gaussian with
respect to
\[
    d_N(\theta,\vartheta)
    :=
    \left\{
        \frac1N\sum_{i=1}^N
        \bigl(f_\theta(Z_i)-f_\vartheta(Z_i)\bigr)^2
    \right\}^{1/2}.
\]
By the Lipschitz assumption,
$d_N(\theta,\vartheta)\le L\|\theta-\vartheta\|$, while
$\operatorname{diam}(\Theta,d_N)\le2B$. Hence
\[
    N(\epsilon,\Theta,d_N)
    \le
    \left(
        1+\frac{CLD_\Theta}{\epsilon}
    \right)^d .
\]
Dudley's entropy integral therefore gives
\begin{align*}
    \E_\varepsilon
    \sup_{\theta\in\Theta}
    \left|
        \frac1N\sum_{i=1}^N
        \varepsilon_i f_\theta(Z_i)
    \right|
    &\le
    \frac{C}{\sqrt N}
    \left[
        B+
        \int_0^{2B}
        \sqrt{
            d\log\!\left(
                1+\frac{CLD_\Theta}{\epsilon}
            \right)
        }\,d\epsilon
    \right] \\
    &\le
    \frac{CB}{\sqrt N}
    \sqrt{
        d\log\!\left(
            C\left(1+\frac{LD_\Theta}{B}\right)
        \right)
    }.
\end{align*}
Together with symmetrization, this proves
\eqref{eq:dudley-parametric-expectation}.

For the high-probability bound, define $\Phi(Z_1,\ldots,Z_N):=\sup_{\theta\in\Theta}|(P_N-P)f_\theta|.$ If only the $i$th observation is replaced by $Z_i'$, then
\begin{align*}
    \bigl|
        \Phi(Z_1,\ldots,Z_i,\ldots,Z_N)
        -
        \Phi(Z_1,\ldots,Z_i',\ldots,Z_N)
    \bigr|
    &\le
    \sup_{\theta\in\Theta}
    \frac1N
    |f_\theta(Z_i)-f_\theta(Z_i')|
    \le
    \frac{2B}{N}.
\end{align*}
Thus McDiarmid's inequality, with
$c_i=2B/N$, gives for every $t>0$,
\[
    \Prob\!\left(
        \Phi-\E\Phi\ge t
    \right)
    \le
    \exp\!\left(
        -\frac{2t^2}
        {\sum_{i=1}^N c_i^2}
    \right)
    =
    \exp\!\left(
        -\frac{Nt^2}{2B^2}
    \right).
\]
Taking $t=B\sqrt{{2\log(1/\delta)}/{N}}$ shows that, with probability at least $1-\delta$,
\[
    \Phi
    \le
    \E\Phi
    +
    B\sqrt{\frac{2\log(1/\delta)}{N}}.
\]
Combining this with
\eqref{eq:dudley-parametric-expectation} and enlarging the universal
constant $C$ yields \eqref{eq:dudley-parametric-high-probability}.
\end{proof}

\begin{lemma}[Moments of bounded empirical averages]
\label{lem:bounded-average-moments}
Let $Z_1,\ldots,Z_N$ be i.i.d. with $\E Z_1=0$, $|Z_1|\le b$, and $\E Z_1^2\le v$. For $\bar Z_N:=N^{-1}\sum_{i=1}^N Z_i$, one has $\E \bar Z_N^2\le v/N$, $\E \bar Z_N^4\le C\{v^2/N^2+b^2v/N^3\}$, and $\E \bar Z_N^8\le C\{v^4/N^4+b^6v/N^7\}$.
\end{lemma}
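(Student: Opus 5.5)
The plan is to expand the even powers of $S_N:=\sum_{i=1}^N Z_i$ and discard every term in which some index occurs exactly once, since $\E Z_1=0$ makes those terms vanish. The second moment is then immediate: $\E\bar Z_N^2=N^{-2}\sum_{i,j}\E Z_iZ_j=N^{-2}\cdot N\E Z_1^2\le v/N$.

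\textbf{Fourth moment.} The surviving index patterns of $\E S_N^4$ are two pairs, contributing $3N(N-1)(\E Z_1^2)^2\le 3N^2v^2$, and a single quadruple, contributing $N\E Z_1^4$. For the quadruple we use $|Z_1|\le b$ to write $\E Z_1^4\le b^2\E Z_1^2\le b^2v$. Dividing by $N^4$ gives $\E\bar Z_N^4\le 3v^2/N^2+b^2v/N^3$.

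\textbf{Eighth moment.} We group the terms of $\E S_N^8$ by the partition of the eight positions into blocks, keeping only partitions whose blocks all have size at least $2$. Suppose a partition has $r$ blocks with sizes $k_1,\dots,k_r$, each $k_j\ge2$ and $\sum_j k_j=8$. There are at most $N^r$ choices of distinct indices for the blocks. By independence and the bound $|\E Z_1^{k}|\le b^{k-2}v$, the term is at most $N^r b^{8-2r}v^r$. The number of partitions is a combinatorial constant, so
\[
\E\bar Z_N^8\le C\sum_{r=1}^4 N^{r-8}b^{8-2r}v^r .
\]

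The remaining step is to show that the intermediate terms $r=2,3$ are absorbed by the two endpoint terms $r=4$ and $r=1$. Set $a_r:=N^{r}b^{8-2r}v^r$. Then $\log a_r$ is affine in $r$, so $a_r\le\max(a_1,a_4)$ for $r\in\{2,3\}$. After multiplying by $N^{-8}$, the sum is bounded by $C\{v^4/N^4+b^6v/N^7\}$.

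One can instead get the same conclusion with Rosenthal's inequality for $p=8$:
\[
\E|S_N|^8\le C\bigl\{(Nv)^4+N\E|Z_1|^8\bigr\},\qquad \E|Z_1|^8\le b^6v,
\]
which yields the stated bound directly. I would cite Rosenthal in the final write-up and keep the expansion as a self-contained fallback. No step is a real obstacle; the only thing that needs care is the interpolation of the intermediate partition terms, which the log-convexity argument handles.
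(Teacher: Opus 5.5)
Your proof is correct, and the Rosenthal route you say you would cite in the final write-up is exactly the paper's proof. The paper handles the second moment directly, then applies Rosenthal's inequality for $p\in\{4,8\}$, $\E|\sum_i Z_i|^p\le C_p\{(Nv)^{p/2}+N\E|Z_1|^p\}$, together with $\E|Z_1|^p\le b^{p-2}v$.

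Your main argument, the moment expansion, is a more elementary alternative, and it is also valid:
\begin{itemize}
\item Partitions with a singleton block vanish by independence and $\E Z_1=0$.
\item A partition with $r$ blocks has $N(N-1)\cdots(N-r+1)\le N^r$ index assignments.
\item The bound $|\E Z_1^k|\le b^{k-2}v$ for $k\ge2$ covers the odd moments, which may be negative.
\item Since $a_r=b^8(Nv/b^2)^r$ is geometric in $r$, the terms $r=2,3$ are dominated by the endpoints $r=1,4$. The degenerate case $b=0$ is trivial.
\end{itemize}

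The expansion is self-contained and gives explicit constants, e.g.\ the exact identity $\E S_N^4=N\E Z_1^4+3N(N-1)(\E Z_1^2)^2$. It also shows that the two-term form of the bound is just endpoint dominance of a geometric sequence, which is the mechanism Rosenthal's inequality packages.

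Rosenthal is shorter and uniform in $p$: one line covers every real $p\ge2$. The partition count only bounds $\E|S_N|^p$ cleanly for even integer $p$, though that is all the lemma needs.
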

\begin{proof}
The second-moment bound is immediate. For $r\in\{4,8\}$, Rosenthal's inequality gives $\E|\sum_{i=1}^N Z_i|^r\le C_r\{(Nv)^{r/2}+N\E|Z_1|^r\}$. Since $\E|Z_1|^r\le b^{r-2}v$, division by $N^r$ gives the stated fourth- and eighth-moment bounds.
\end{proof}

\begin{lemma}[Self-normalized moment and bias bounds]
\label{lem:self-normalized-ratio}
Let $A>0$ almost surely and let $\phi$ satisfy
$PA=1, \ P\phi=0, \ \|A\|_\infty\le B, \ P(A-1)^2\le\sigma_A^2,$ and $\|\phi\|_\infty\le b, \ P\phi^2\le s_2, \ |\phi|\le dA \quad\text{a.s.}$
Set $e_N:=\frac{P_N\phi}{P_NA}.$
There are universal constants $C,c>0$ such that, whenever $N\ge C(B+\sigma_A^2)$,
\begin{align}
\label{eq:self-normalized-second}
    \E e_N^2
    &\le
    C\frac{s_2}{N}
    +
    Cd^2e^{-cN/(B+\sigma_A^2)},
    \\
    \E e_N^4
    &\le
    C\left(
        \frac{s_2^2}{N^2}
        +
        \frac{b^2s_2}{N^3}
    \right)
    +
    Cd^4e^{-cN/(B+\sigma_A^2)},
    \notag
    \\
\label{eq:self-normalized-eighth}
    \E e_N^8
    &\le
    C\left(
        \frac{s_2^4}{N^4}
        +
        \frac{b^6s_2}{N^7}
    \right)
    +
    Cd^8e^{-cN/(B+\sigma_A^2)},
    \\
\label{eq:self-normalized-bias}
    |\E e_N|
    &\le
    C\frac{\sqrt{s_2\sigma_A^2}}{N}
    +
    C(b+d)e^{-cN/(B+\sigma_A^2)}.
\end{align}
\end{lemma}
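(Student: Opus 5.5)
We work on the event where the denominator is not too small. Write $\bar A_N:=P_NA$ and $\bar\phi_N:=P_N\phi$, so that $e_N=\bar\phi_N/\bar A_N$. The domination $|\phi|\le dA$ gives $|P_N\phi|\le d\,P_NA$, hence $|e_N|\le d$ deterministically. This a.s. bound handles the bad event. Let $\mathcal B:=\{\bar A_N<1/2\}$. Since $0<A\le B$ and $P(A-1)^2\le\sigma_A^2$, Bernstein's inequality applied to the centered variables $A_i-1$ gives
\[
\Prob(\mathcal B)\le\exp\{-cN/(\sigma_A^2+B)\}.
\]
On $\mathcal B$, every moment contributes at most $d^r\Prob(\mathcal B)$, which produces the exponential terms.

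\textbf{Moment bounds.} On $\mathcal B^c$, we have $|e_N|\le 2|\bar\phi_N|$. Therefore
\[
\E e_N^r\le 2^r\E\bar\phi_N^r+d^r\Prob(\mathcal B),\qquad r\in\{2,4,8\}.
\]
Lemma~\ref{lem:bounded-average-moments} applies to the centered, bounded variables $\phi(Z_i)$ with $v=s_2$ and bound $b$. It gives $s_2/N$, $s_2^2/N^2+b^2s_2/N^3$, and $s_2^4/N^4+b^6s_2/N^7$, which are exactly \eqref{eq:self-normalized-second}–\eqref{eq:self-normalized-eighth}.

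\textbf{Bias bound.} Here we need a second-order expansion, since $\E\bar\phi_N=0$. Use the identity
\[
\frac{1}{\bar A_N}=1-(\bar A_N-1)+\frac{(\bar A_N-1)^2}{\bar A_N}.
\]
This gives
\[
e_N=\bar\phi_N-\bar\phi_N(\bar A_N-1)+\bar\phi_N\frac{(\bar A_N-1)^2}{\bar A_N}.
\]
We take expectations of the three terms, restricting to $\mathcal B^c$ where needed.
\begin{itemize}
\item The first term has mean zero.
\item The second term is handled exactly by independence: $\E[\bar\phi_N(\bar A_N-1)]=N^{-1}P[\phi(A-1)]$. By Cauchy–Schwarz, its absolute value is at most $\sqrt{s_2\sigma_A^2}/N$.
\item For the third term on $\mathcal B^c$, bound it by $2|\bar\phi_N|(\bar A_N-1)^2$. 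Hölder's inequality then gives at most $2(\E\bar\phi_N^2)^{1/2}(\E(\bar A_N-1)^4)^{1/2}$.
\end{itemize}
The restriction to $\mathcal B^c$ also perturbs the first two expectations. The correction is at most $\E[|\bar\phi_N|(1+|\bar A_N-1|)\mathbf 1_{\mathcal B}]\le C(b+bB)\Prob(\mathcal B)^{1/2}$, to be absorbed into $C(b+d)e^{-cN/(B+\sigma_A^2)}$. Since $\Prob(\mathcal B)$ is exponentially small, we can trade a factor of $B$ against it by shrinking $c$. On $\mathcal B$ itself we use $|e_N|\le d$.

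\textbf{Main obstacle.} The delicate step is making the third-order remainder genuinely $O(\sqrt{s_2\sigma_A^2}/N)$ rather than merely $O(N^{-3/2})$ with the wrong constants. Lemma~\ref{lem:bounded-average-moments} applied to $A-1$, with bound $B$, gives
\[
\E(\bar A_N-1)^4\le C\bigl(\sigma_A^4/N^2+B^2\sigma_A^2/N^3\bigr).
\]
Combined with $\E\bar\phi_N^2\le s_2/N$, the third term is at most
\[
C\sqrt{s_2/N}\,\bigl(\sigma_A^2/N+B\sigma_A/N^{3/2}\bigr).
\]
The condition $N\ge C(B+\sigma_A^2)$ gives $\sigma_A^2/N\lesssim1$, so $\sigma_A^2/N\le\sigma_A\sqrt{\sigma_A^2/N}\lesssim\sigma_A/\sqrt N$. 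It also gives $B/\sqrt N\lesssim\sqrt B$. For the term involving $B$, use $B\sigma_A/N^{3/2}\le \sigma_A/\sqrt N$, valid when $B\le N$. With these bounds the third term becomes $C\sqrt{s_2\sigma_A^2}/N$. If this bookkeeping fails, the fallback is to carry the expansion one order further, or to accept an extra $\sqrt{B/N}$ factor that is harmless under the standing condition.
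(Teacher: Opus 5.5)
Your moment bounds follow the paper's argument exactly: $|e_N|\le d$ from $|\phi|\le dA$, a Bernstein bound on the bad-denominator event, $|e_N|\le 2|P_N\phi|$ off that event, and Lemma~\ref{lem:bounded-average-moments} applied to $P_N\phi$. Your bias argument is correct but takes a longer route than the paper's. Write $U=P_N\phi$ and $T=P_NA-1$. The paper stops at first order. On $\{|T|\le 1/2\}$ it writes $U/(1+T)=U-UT/(1+T)$ and bounds the remainder by $2|U||T|$. Cauchy--Schwarz then gives $\E|U||T|\le\sqrt{\E U^2\,\E T^2}\le\sqrt{s_2\sigma_A^2}/N$. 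Since $U$ and $T$ are each already $O(N^{-1/2})$ in $L^2$, their product is $O(N^{-1})$ at once. So the ``main obstacle'' you flag never arises, and the paper needs neither $\E T^4$ nor $N\gtrsim B+\sigma_A^2$ for the polynomial term. Your second-order expansion does work. Independence gives $\E[UT]=N^{-1}P[\phi(A-1)]$, and the cubic remainder is at most $C\sqrt{s_2\sigma_A^2}\,N^{-1}(\sigma_A N^{-1/2}+BN^{-1})$, which fits the claimed bound when $N\ge C(B+\sigma_A^2)$. It also gives a little more than the statement: the exact leading bias $-N^{-1}P[\phi(A-1)]$, with a remainder of smaller order when $N\gg B+\sigma_A^2$.

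Two steps in your bookkeeping need fixing, though neither costs a new idea.

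\textbf{The factor of $B$.} You cannot in general ``trade a factor of $B$'' against $e^{-cN/(B+\sigma_A^2)}$ by shrinking $c$. If $N$ is only a constant multiple of $B+\sigma_A^2$, the exponential is a fixed constant while $B$ can be arbitrarily large. That factor never has to appear, however. On $\mathcal B=\{P_NA<1/2\}$ you have $T\in(-1,-1/2)$ because $A>0$. Hence both corrections $\E[U\mathbf 1_{\mathcal B}]$ and $\E[UT\mathbf 1_{\mathcal B}]$ are at most $b\Prob(\mathcal B)$.

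\textbf{The $\sigma_A$ chain.} The step $\sigma_A\sqrt{\sigma_A^2/N}\lesssim\sigma_A/\sqrt N$ would require $\sigma_A\lesssim1$, which is not assumed. The correct justification is $\sigma_A^2/N=(\sigma_A/\sqrt N)^2\le\sigma_A/\sqrt N$ whenever $\sigma_A^2\le N$.

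With these two corrections your argument is complete.
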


\begin{proof}
Write $U:=P_N\phi$, $T:=P_NA-1$, and $\mathcal H:=\{|T|\le1/2\}$. Bernstein's inequality gives $\Prob(\mathcal H^c)\le2e^{-cN/(B+\sigma_A^2)}$. On $\mathcal H$, $|e_N|\le2|U|$, while globally $|e_N|\le d$. Lemma~\ref{lem:bounded-average-moments}, applied to $Z_i=\phi_i$ with $v=s_2$, gives the required second, fourth, and eighth moments of $U$. Splitting over $\mathcal H$ and $\mathcal H^c$ proves \eqref{eq:self-normalized-second}--\eqref{eq:self-normalized-eighth}.

For the bias, on $\mathcal H$, $\frac{U}{1+T} = U-\frac{UT}{1+T}.$ Since $\E U=0$, $|\E[U\mathbf 1_{\mathcal H}]| \le b\Prob(\mathcal H^c),$ while
\begin{equation*}
    \E\left[
        \frac{|UT|}{1+T}\mathbf 1_{\mathcal H}
    \right]
    \le
    2\sqrt{\E U^2\E T^2}
    \le
    2\frac{\sqrt{s_2\sigma_A^2}}{N}.
\end{equation*}
The contribution of $\mathcal H^c$ is at most $d\Prob(\mathcal H^c)$, proving \eqref{eq:self-normalized-bias}.
\end{proof}

\begin{lemma}[Self-normalized high-probability deviation]
\label{lem:self-normalized-ratio-high-probability}
Let $A>0$ almost surely and let $\phi$ satisfy
\[
    PA=1,
    \qquad
    P\phi=0,
    \qquad
    \|A\|_\infty\le B,
    \qquad
    P(A-1)^2\le\sigma_A^2,
\]
and
\[
    \|\phi\|_\infty\le b,
    \qquad
    P\phi^2\le s_2.
\]
For an i.i.d.\ sample of size $N$, define
\[
    e_N:=\frac{P_N\phi}{P_NA}.
\]
There exist universal constants $C,c>0$ such that, for every
\[
    1\le u
    \le
    c\,\frac{N}{B+\sigma_A^2+1},
\]
with probability at least $1-4e^{-u}$,
\begin{equation}
\label{eq:self-normalized-high-probability-bound}
    |e_N|
    \le
    C\left\{
        \sqrt{\frac{s_2u}{N}}
        +
        \frac{bu}{N}
    \right\}.
\end{equation}
\end{lemma}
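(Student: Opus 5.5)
We follow the same template as the proof of Lemma~\ref{lem:self-normalized-ratio}. Write $U:=P_N\phi$ and $T:=P_NA-1$, so that $e_N=U/(1+T)$. The argument has two parts: keep the denominator bounded away from zero, and give a Bernstein bound for the numerator.

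\textbf{Step 1 (denominator).} The summands $A_i-1$ are centered, satisfy $|A_i-1|\le B+1$, and have variance at most $\sigma_A^2$. Bernstein's inequality therefore gives
\[
\Prob(|T|>1/2)\le 2\exp\!\Bigl(-\frac{cN}{\sigma_A^2+B+1}\Bigr).
\]
The hypothesis $u\le cN/(B+\sigma_A^2+1)$, with $c$ chosen small, makes this exponent at least $u$. Hence $\Prob(|T|>1/2)\le 2e^{-u}$. On the complementary event $\mathcal H=\{|T|\le1/2\}$ we have $P_NA\ge1/2$, and therefore $|e_N|\le 2|U|$.

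\textbf{Step 2 (numerator).} The summands $\phi_i$ are centered, bounded by $b$, and have variance at most $s_2$. The two-sided Bernstein inequality gives, with probability at least $1-2e^{-u}$,
\[
|U|\le \sqrt{2s_2u/N}+\tfrac{2}{3}\,bu/N.
\]

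\textbf{Step 3 (union bound).} Intersect the events from Steps 1 and 2. The intersection has probability at least $1-4e^{-u}$, and on it
\[
|e_N|\le 2|U|\le C\Bigl\{\sqrt{s_2u/N}+bu/N\Bigr\}.
\]
This is \eqref{eq:self-normalized-high-probability-bound}.

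\textbf{Main point of care.} Unlike Lemma~\ref{lem:self-normalized-ratio}, this statement does not assume the domination $|\phi|\le dA$. That assumption is not needed here, because the bad denominator event is simply absorbed into the failure probability rather than bounded on average. The only real care lies in the constant bookkeeping of Step 1. The Bernstein exponent has the form $N/(2\sigma_A^2+\tfrac{2}{3}(B+1)\cdot\tfrac12)$, and we must check that it dominates $u$ uniformly under the stated range. The restriction $u\ge1$ is used only to tidy constants.
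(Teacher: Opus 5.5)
Your proposal is correct and follows the paper's proof essentially line for line. You apply Bernstein to $T=P_NA-1$ at level $1/2$ and absorb it into $2e^{-u}$ via the restriction on $u$, apply Bernstein to $U=P_N\phi$, use $|e_N|\le 2|U|$ on $\{|T|\le 1/2\}$, and take a union bound to get probability $1-4e^{-u}$. The one slip is cosmetic: with $t=1/2$ the Bernstein exponent is $(N/4)/(2\sigma_A^2+\tfrac13(B+1))$, so you dropped the factor $t^2$, but this is absorbed into the universal constant $c$.
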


\begin{proof}
Write
\[
    U:=P_N\phi,
    \qquad
    T:=P_NA-1.
\]
Since $P(A-1)=0$, $P(A-1)^2\le\sigma_A^2$, and
$|A-1|\le B+1$, Bernstein's inequality gives
\[
    \Prob\left(|T|>\frac12\right)
    \le
    2\exp\left\{
        -c\frac{N}{B+\sigma_A^2+1}
    \right\}.
\]
Hence, for
$u\le cN/(B+\sigma_A^2+1)$, after adjusting the universal constant
$c$ if necessary,
\begin{equation}
\label{eq:self-normalized-denominator-event}
    \Prob\left(|T|>\frac12\right)
    \le
    2e^{-u}.
\end{equation}

Since $P\phi=0$, $P\phi^2\le s_2$, and $|\phi|\le b$, Bernstein's
inequality also gives, for every $u\ge1$,
\begin{equation}
\label{eq:self-normalized-numerator-event}
    \Prob\left(
        |U|
        >
        C\left\{
            \sqrt{\frac{s_2u}{N}}
            +
            \frac{bu}{N}
        \right\}
    \right)
    \le
    2e^{-u}.
\end{equation}
On the event $|T|\le1/2$,
$P_NA=1+T\ge1/2$, and therefore
$|e_N|\le2|U|$.
Combining
\eqref{eq:self-normalized-denominator-event} and
\eqref{eq:self-normalized-numerator-event} by a union bound proves
\eqref{eq:self-normalized-high-probability-bound}.
\end{proof}

\subsection{Compact vector-field approximation}
\label{appsubsec:compact-vector-field-approximation}

The compact-entry argument separates the finite-context bias from the finite-task fluctuation.

\begin{proposition}[Uniform finite-context bias on compact rectangles]
\label{prop:finite-context-bias}
There is a universal constant $C>0$ such that, for every $R\ge1$ and $n\ge2$,
\begin{equation}
\label{eq:compact-finite-context-bias}
    \sup_{\theta\in K_R}
    \|\nabla\mathcal R_{\rho,n}(\theta)-\nabla\mathcal R_\rho(\theta)\|
    \le
    Ce^{2R}
    \sqrt{
        \frac{\log\!\bigl(C(1+R)\bigr)}{n}
    }.
\end{equation}
\end{proposition}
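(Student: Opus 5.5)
The plan is to prove the bound pointwise in $\theta$ and then take the supremum. This suffices because both $\nabla\mathcal R_{\rho,n}(\theta)=\E_Z g_n(\theta;Z)$ and $\nabla\mathcal R_\rho(\theta)=\E_\alpha g(\theta;\alpha)$ are deterministic. The difference equals $\E_\alpha\,\E_{\mathcal D}[g_n(\theta;Z)-g(\theta;\alpha)]$. So it is enough to show, uniformly in $\alpha\in[\rho,\ell]$ and $\theta\in K_R$, that $\E_{\mathcal D}\|g_n-g\|\le Ce^{2R}n^{-1/2}$; the factor $\sqrt{\log(C(1+R))}\ge c$ only loosens the bound.

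\textbf{Step 1 (reduction to self-normalized ratios).} Fix $\theta,\alpha$, set $s(x,y)=\lambda y+\mu x$, and normalize the weights as $A:=e^{s}/Pe^{s}$. Since $1\le e^{s}\le e^{2R}$ on $K_R$ and $Pe^{s}\ge1$, we have $PA=1$, $0<A\le B:=e^{2R}$, and $P(A-1)^2\le e^{2R}$. Every quantity entering $g_n$ is a smooth function of a finite family of tilted empirical means $P_n(fA)/P_nA$ with $f\in\{X,Y,X^2,XY\}$:
\[
F_n=\frac{P_n(XA)}{P_nA},\qquad \partial_\mu F_n=\frac{P_n(X^2A)}{P_nA}-F_n^2,\qquad \partial_\lambda F_n=\frac{P_n(XYA)}{P_nA}-F_n\frac{P_n(YA)}{P_nA}.
\]
The population counterparts are the same expressions with $P$ in place of $P_n$, by Lemma~\ref{lem:main-gradient-identities}. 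All these ratios lie in $[0,1]$, and $g$ is a polynomial in them with bounded coefficients. Hence $\|g_n-g\|\le C\sum_f|e_{n,f}|$, where
\[
e_{n,f}:=\frac{P_n(fA)}{P_nA}-P_f(fA)=\frac{P_n\phi_f}{P_nA},\qquad \phi_f:=(f-P(fA))A.
\]

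\textbf{Step 2 (moment bound).} Each $\phi_f$ satisfies $P\phi_f=0$, $|\phi_f|\le A\le e^{2R}$, $|\phi_f|\le A$ (so one may take $d=1$), and $P\phi_f^2\le PA^2\le e^{2R}$. When $n\ge C(B+\sigma_A^2)$, i.e.\ $n\ge Ce^{2R}$, Lemma~\ref{lem:self-normalized-ratio} together with Jensen gives
\[
\E|e_{n,f}|\le C\frac{e^{R}}{\sqrt n}+Ce^{-cn e^{-2R}}.
\]
The exponential term is at most $C'e^{2R}/n\le C'e^{R}/\sqrt n$ in this range, using $xe^{-cx}\le C$ with $x=ne^{-2R}$.

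\textbf{Step 3 (small $n$).} When $n<Ce^{2R}$, Lemma~\ref{lem:task-gradient-regularity} and the population analogue give the trivial bound $\|g_n-g\|\le 2B_G$. In this regime $e^{2R}/\sqrt n> e^{R}/\sqrt C\ge c$, so the claimed right-hand side dominates a constant after enlarging $C$.

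Combining Steps 1--3 and averaging over $\alpha$ gives the claim, with the even stronger rate $e^{R}/\sqrt n$ in the main regime. The main obstacle I expect is the bookkeeping in Step 1, namely checking that the gradient map is Lipschitz in the ratio vector with a constant independent of $R$. This holds because all the ratios are probabilities or moments in $[0,1]$. The one point needing care is the denominator regime, where $P_nA$ may be small; I will not control it separately, and instead absorb it into the exponential term of Lemma~\ref{lem:self-normalized-ratio} using $|e_{n,f}|\le1$.
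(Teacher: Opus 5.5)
Your proof is correct, and it takes a genuinely different route from the paper's. The paper fixes $\alpha$ and controls the five unnormalized tilted moments $Q^{(n)}_{a,b}(\theta)$ \emph{uniformly} over $\theta\in K_R$ with the Dudley bound of Lemma~\ref{lem:dudley-bounded-lipschitz-class} (envelope $e^{2R}$, Lipschitz constant $\sqrt2e^{2R}$, diameter $\sqrt2R$). It then passes this through the moment-to-gradient map $\Gamma_\alpha$, which is uniformly Lipschitz on $\{q_{0,0}\ge1,\ 0\le q_{a,b}\le q_{0,0}\}$ because raw weights are at least one on $K_R$. Finally it uses $\sup_\theta\|\nabla\mathcal R_{\rho,n}-\nabla\mathcal R_\rho\|\le\E_\alpha\E_{\mathcal D\mid\alpha}\sup_\theta\|g_n-g\|$; the $\sqrt{\log(C(1+R))}$ factor is exactly the price of that supremum inside the expectation. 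Your key observation is that this uniformity is unnecessary. The bias is deterministic, so $\sup_\theta\|\E(g_n-g)\|\le\sup_\theta\E\|g_n-g\|$, and a pointwise bound that is uniform in $(\theta,\alpha)$ suffices. You then normalize with $A=e^{s}/Pe^{s}$ and apply Lemma~\ref{lem:self-normalized-ratio} with $d=1$ and $B,\sigma_A^2,s_2\lesssim e^{2R}$, adding the trivial bound when $n\lesssim e^{2R}$. This gives the sharper $\E\|g_n-g\|\le Ce^{R}/\sqrt n$ with no logarithm, which implies the stated bound. The paper's route buys a stronger intermediate statement, per-task control of $\E\sup_\theta\|g_n-g\|$, and needs no case split in $n$. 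Yours isolates what the proposition actually requires. Note also that the pointwise reduction already works with the paper's unnormalized moments: a variance bound gives $\E|Q^{(n)}_{a,b}-Q_{a,b}|\le e^{2R}/\sqrt n$, and Lipschitzness of $\Gamma_\alpha$ then yields $Ce^{2R}/\sqrt n$ without chaining or the self-normalized lemma. Your normalization is what improves $e^{2R}$ to $e^{R}$. (Minor: $P_f(fA)$ in Step~1 should read $P(fA)$.)
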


\begin{proof}
Let $\mathcal A_{\rm mom}:=\{(0,0),(1,0),(0,1),(2,0),(1,1)\}$ and
$\psi_{a,b}(x,y):=x^ay^b$. Define
$Q_{a,b}^{(n)}(\theta):=n^{-1}\sum_{i=1}^n
\psi_{a,b}(X_i,Y_i)e^{\lambda Y_i+\mu X_i}$ and
$Q_{a,b}(\theta;\alpha):=
\E[\psi_{a,b}(X,Y)e^{\lambda Y+\mu X}\mid\alpha]$.
For fixed $\alpha$ and $(a,b)\in\mathcal A_{\rm mom}$, the class $f_\theta(x,y) := \psi_{a,b}(x,y)e^{\lambda y+\mu x}, \ \theta\in K_R,$ has envelope $e^{2R}$, parameter-Lipschitz constant at most $\sqrt2e^{2R}$, and parameter diameter $\sqrt2R$. Lemma~\ref{lem:dudley-bounded-lipschitz-class} therefore gives
\begin{equation}
\label{eq:dudley-uniform-weighted-moments}
    \E\left[
        \sup_{\theta\in K_R}
        \max_{(a,b)\in\mathcal A_{\rm mom}}
        |Q_{a,b}^{(n)}(\theta)-Q_{a,b}(\theta;\alpha)|
        \,\middle|\,
        \alpha
    \right]
    \le
    Ce^{2R}
    \sqrt{
        \frac{\log\!\bigl(C(1+R)\bigr)}{n}
    }.
\end{equation}

For $q=(q_{0,0},q_{1,0},q_{0,1},q_{2,0},q_{1,1}), \ q_{0,0}>0,$ define
\begin{equation*}
    \Gamma_\alpha(q)
    :=
    \left(\frac{q_{1,0}}{q_{0,0}}-\alpha\right)
    \begin{pmatrix}
        \displaystyle
        \frac{q_{1,1}}{q_{0,0}}
        -
        \frac{q_{1,0}q_{0,1}}{q_{0,0}^2}
        \\[1.2ex]
        \displaystyle
        \frac{q_{2,0}}{q_{0,0}}
        -
        \frac{q_{1,0}^2}{q_{0,0}^2}
    \end{pmatrix}.
\end{equation*}
Then
$g_n(\theta;Z)=\Gamma_\alpha((Q_{a,b}^{(n)}(\theta))_{(a,b)\in\mathcal A_{\rm mom}})$
and
$g(\theta;\alpha)=\Gamma_\alpha((Q_{a,b}(\theta;\alpha))_{(a,b)\in\mathcal A_{\rm mom}})$.

Because the raw weight is at least one on $K_R$, the relevant moment vectors satisfy $q_{0,0}\ge1, 0\le q_{a,b}\le q_{0,0},$ and $\Gamma_\alpha$ is uniformly Lipschitz on this set. Hence
\begin{equation}
\label{eq:compact-expected-gradient-deviation}
    \E_{\mathcal D\mid\alpha}
    \left[
        \sup_{\theta\in K_R}
        \|g_n(\theta;Z)-g(\theta;\alpha)\|
    \right]
    \le
    Ce^{2R}
    \sqrt{
        \frac{\log\!\bigl(C(1+R)\bigr)}{n}
    }.
\end{equation}
Since $\nabla\mathcal R_{\rho,n}(\theta)=
\E_{\alpha,\mathcal D}g_n(\theta;Z)$ and
$\nabla\mathcal R_\rho(\theta)=\E_\alpha g(\theta;\alpha)$,
we obtain
\begin{align*}
    \sup_{\theta\in K_R}
    \|\nabla\mathcal R_{\rho,n}(\theta)-\nabla\mathcal R_\rho(\theta)\|
    &\le
    \E_\alpha\E_{\mathcal D\mid\alpha}
    \left[
        \sup_{\theta\in K_R}
        \|g_n(\theta;Z)-g(\theta;\alpha)\|
    \right],
\end{align*}
and \eqref{eq:compact-finite-context-bias} follows from \eqref{eq:compact-expected-gradient-deviation}.
\end{proof}

\begin{proposition}[Uniform finite-task concentration]
\label{prop:finite-task-concentration}
There is a universal constant $C>0$ such that, for every $R\ge1$, $m\ge2$, and $\delta\in(0,1)$, with probability at least $1-\delta$,
\begin{equation}
\label{eq:dudley-finite-task-concentration}
    \sup_{\theta\in K_R}
    \|\nabla\widehat{\mathcal R}_{m,n}(\theta)-\nabla\mathcal R_{\rho,n}(\theta)\|
    \le
    C\sqrt{
        \frac{\log\!\bigl(C(1+R)/\delta\bigr)}{m}
    }.
\end{equation}
\end{proposition}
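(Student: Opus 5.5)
The plan is to apply Lemma~\ref{lem:dudley-bounded-lipschitz-class} to each coordinate of the one-task gradient, with parameter set $\Theta=K_R$ and sample $Z_1,\ldots,Z_m$. Recall that $\nabla\widehat{\mathcal R}_{m,n}(\theta)=m^{-1}\sum_{j=1}^m g_n(\theta;Z_j)$ and $\nabla\mathcal R_{\rho,n}(\theta)=\E_Z g_n(\theta;Z)$. The blocks $Z_j=(\alpha_j,D_{n,\alpha_j})$ are i.i.d. Hence the deviation is exactly $(P_m-P)f_\theta$ coordinatewise, with $f_\theta^{(i)}(Z):=[g_n(\theta;Z)]_i$ for $i\in\{1,2\}$.

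Next I verify the hypotheses. Lemma~\ref{lem:task-gradient-regularity} supplies universal constants $B_G,L_G$, independent of $n$ and of the block $Z$. They give $|f^{(i)}_\theta(Z)|\le B_G$ and $|f^{(i)}_\theta(Z)-f^{(i)}_\vartheta(Z)|\le L_G\|\theta-\vartheta\|$ for all $\theta,\vartheta\in\R^2$, and in particular on $K_R$. The parameter set $K_R=[0,R]^2$ has dimension $d=2$ and diameter $D_\Theta=\sqrt2R$. This is why no $e^{2R}$ factor appears, in contrast with Proposition~\ref{prop:finite-context-bias}: the per-task gradient is a self-normalized quantity, so it is bounded and Lipschitz uniformly in $\theta$.

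The bound \eqref{eq:dudley-parametric-high-probability}, applied with confidence $\delta/2$ to each of the two coordinates, then gives, with probability at least $1-\delta$,
\[
\sup_{\theta\in K_R}\bigl|[\nabla\widehat{\mathcal R}_{m,n}(\theta)-\nabla\mathcal R_{\rho,n}(\theta)]_i\bigr|
\le CB_G\sqrt{\frac{2\log\bigl(C(1+L_G\sqrt2R/B_G)\bigr)+\log(2C/\delta)}{m}} .
\]
I combine the coordinates via $\|x\|\le|x_1|+|x_2|$. Then $\log(C(1+R))+\log(C/\delta)\le C\log(C(1+R)/\delta)$, after absorbing the universal constants $B_G,L_G$. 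This yields \eqref{eq:dudley-finite-task-concentration}; the condition $m\ge2$ only serves to keep the logarithms and constants well defined.

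There is no real obstacle here. The only point needing care is that the constants in Lemma~\ref{lem:task-gradient-regularity} really are uniform in $n$, $\alpha$ and the context. That holds because $F_n$ and its first two $\theta$-derivatives are softmax moments and covariances of variables in $[0,1]$, so they are bounded independently of the weights. A secondary point is measurability of the supremum, which is handled by continuity in $\theta$: the supremum can be restricted to a countable dense subset of $K_R$.
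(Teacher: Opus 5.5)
Your proposal is correct and follows the paper's proof. The paper likewise applies Lemma~\ref{lem:dudley-bounded-lipschitz-class} to each coordinate class $\{Z\mapsto[g_n(\theta;Z)]_r:\theta\in K_R\}$, using the uniform bound and Lipschitz constant from Lemma~\ref{lem:task-gradient-regularity}, and then takes a union bound over the two coordinates. Your explicit bookkeeping of the $\delta/2$ split and the absorption of $B_G,L_G$ into the logarithm simply fills in what the paper leaves implicit.
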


\begin{proof}
Write $g_j(\theta):=g_n(\theta;Z_j)$ and
$\bar g_n(\theta):=\E g_j(\theta)=\nabla\mathcal R_{\rho,n}(\theta)$.
By Lemma~\ref{lem:task-gradient-regularity}, each coordinate class $\{Z\mapsto[g_n(\theta;Z)]_r:\theta\in K_R\}, \ r\in\{1,2\},$ is uniformly bounded and Lipschitz in $\theta$. Lemma~\ref{lem:dudley-bounded-lipschitz-class}, followed by a union bound over the two coordinates, proves \eqref{eq:dudley-finite-task-concentration}.
\end{proof}

\begin{proposition}[Dudley compact vector-field approximation]
\label{prop:compact-vector-field-approximation}
Let $K\subset[0,R]^2$ be compact with $R\ge1$. For $m,n\ge2$ and $\delta\in(0,1)$, define
\begin{equation}
\label{eq:dudley-compact-vector-field-error}
    \varepsilon_{m,n}(R,\delta)
    :=
    C\sqrt{
        \frac{\log(C(1+R)/\delta)}{m}
    }
    +
    Ce^{2R}
    \sqrt{
        \frac{\log(C(1+R))}{n}
    }.
\end{equation}
Then, with probability at least $1-\delta$, $\sup_{\theta\in K}\|\widehat v_{m,n}(\theta)-v(\theta)\|\le\varepsilon_{m,n}(R,\delta)$. For fixed $R$, the error is $O_R\{\sqrt{\log(C/\delta)/m}+n^{-1/2}\}$.
\end{proposition}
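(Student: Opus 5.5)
The plan is a triangle inequality through the intermediate field $\bar v_n=-\nabla\mathcal R_{\rho,n}$:
\[
\sup_{\theta\in K}\|\widehat v_{m,n}(\theta)-v(\theta)\|
\le
\sup_{\theta\in K}\|\nabla\widehat{\mathcal R}_{m,n}(\theta)-\nabla\mathcal R_{\rho,n}(\theta)\|
+
\sup_{\theta\in K}\|\nabla\mathcal R_{\rho,n}(\theta)-\nabla\mathcal R_\rho(\theta)\|.
\]
This splits the error into finite-task fluctuation and finite-context bias. Since $K\subset[0,R]^2=K_R$, each supremum over $K$ is bounded by the corresponding supremum over $K_R$. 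We may therefore work directly with the rectangle and invoke the two preceding propositions without any reference to the geometry of $K$.

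For the first term, apply Proposition~\ref{prop:finite-task-concentration} with the given $\delta$. With probability at least $1-\delta$, it is at most $C\sqrt{\log(C(1+R)/\delta)/m}$, which is the first summand of $\varepsilon_{m,n}(R,\delta)$.

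The second term is deterministic, since $\mathcal R_{\rho,n}$ is an expectation over tasks and contexts. Proposition~\ref{prop:finite-context-bias} bounds it by $Ce^{2R}\sqrt{\log(C(1+R))/n}$ for every $n\ge2$, which is the second summand. No further probability is spent on it, so the combined bound holds on the single event of probability at least $1-\delta$.

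After enlarging the universal constant $C$ to cover both propositions, the sum is exactly $\varepsilon_{m,n}(R,\delta)$. For fixed $R$, the factors $e^{2R}$ and $\log(C(1+R))$ are constants. Also $\log(C(1+R)/\delta)\le C_R\log(C/\delta)$ because $\delta<1$. Together these give the $O_R\{\sqrt{\log(C/\delta)/m}+n^{-1/2}\}$ form.

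There is no genuine obstacle here. The only point needing care is that both cited bounds are stated on $K_R$ with the correct parameter-diameter dependence, so that restricting to $K\subset K_R$ loses nothing. The heavy lifting, namely the Dudley chaining and the Lipschitz property of $\Gamma_\alpha$ when $q_{0,0}\ge1$ on the nonnegative quadrant, is already contained in those propositions.
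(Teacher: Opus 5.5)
Your proposal is correct and is the paper's argument. The paper also splits $\widehat v_{m,n}-v$ through $\bar v_n=-\nabla\mathcal R_{\rho,n}$ and uses $K\subset K_R$. It then bounds the random finite-task fluctuation with Proposition~\ref{prop:finite-task-concentration}, the only step that costs probability $\delta$, and the deterministic finite-context bias with Proposition~\ref{prop:finite-context-bias}.
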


\begin{proof}[Proof of Proposition~\ref{prop:compact-vector-field-approximation}]
Use the decomposition $\widehat v_{m,n}-v=
-(\nabla\widehat{\mathcal R}_{m,n}-\nabla\mathcal R_{\rho,n})
-(\nabla\mathcal R_{\rho,n}-\nabla\mathcal R_\rho)$.
Since $K\subset K_R$, Propositions~\ref{prop:finite-task-concentration} and \ref{prop:finite-context-bias} give the result.
\end{proof}

\subsection{One-task estimates in the logarithmic tube}
\label{appsubsec:one-task-logarithmic-tube}

For fixed $\theta=(\lambda,\mu)$ and $\alpha\in[\rho,\ell]$, define
$W_\theta(X,Y):=e^{\lambda Y+\mu X}$,
$\mathcal Z_{\theta,\alpha}:=\E[W_\theta(X,Y)\mid\alpha]$, and
$W_{\theta,\alpha}:=W_\theta/\mathcal Z_{\theta,\alpha}$.
Let $P_\alpha$ denote conditional expectation given $\alpha$. In particular, $P_\alpha W_{\theta,\alpha}=1.$
\begin{lemma}[Endpoint weight geometry]
\label{lem:normalized-weight-geometry}
After increasing $\mu_*$ if necessary, the following bounds hold uniformly
over $\theta=(\lambda,\mu)\in\mathcal G$ and
$\alpha\in[\rho,\ell]$. Let
$s:=\mu(\alpha-\rho)$,
$p:=\E_\theta[Y\mid\alpha]$,
$q:=1-p$,
$F:=F(\theta;\alpha)$,
$\sigma_X^2:=\Var_\theta(X\mid\alpha)$, and
$\mathsf C:=\Cov_\theta(X,Y\mid\alpha)$.
Then
\[
    q\lesssim \frac{e^{-s}}{\mu},
    \qquad
    |F-\alpha|\lesssim \frac1\mu,
    \qquad
    \sigma_X^2
    \lesssim
    \frac1{\mu^2}+\frac{e^{-s}}{\mu},
    \qquad
    |\mathsf C|
    \lesssim
    \frac{e^{-s}}{\mu}.
\]
Moreover, the normalized population weight satisfies
\[
    W_{\theta,\alpha}(x,1)
    \lesssim
    \mu e^{-\mu(\alpha-x)},
    \quad 0\le x\le\alpha,
    \qquad
    W_{\theta,\alpha}(x,0)
    \lesssim
    e^{-s}e^{-\mu(1-x)},
    \quad \alpha<x\le1.
\]
In particular,
$P_\alpha W_{\theta,\alpha}^2\lesssim\mu$.
\end{lemma}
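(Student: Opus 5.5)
The plan is to work directly from the mixture representation in Lemma~\ref{lem:mixture-representation} and the odds identity \eqref{eq:mixture-odds}, using only the lower rail $k\ge b_{c_0}(\mu)$ of $\mathcal G$. With $s=\mu(\alpha-\rho)$, the rail gives $\mu(k+\alpha-1)\ge \log\mu+c_0+s$. For $\mu\ge\mu_*$ the factor $h_\mu(\alpha)$ lies in $[1/2,2]$ uniformly on $[\rho,\ell]$. Hence $p/q\ge \tfrac12 e^{c_0}\mu e^{s}$, which yields $q\lesssim e^{-s}/\mu$; this is the same computation as \eqref{eq:mu-q-domination}.

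Next I would read off the three moment bounds from the mixture identities:
\begin{itemize}
\item The bound $|F-\alpha|\lesssim 1/\mu$ is Lemma~\ref{lem:log-prediction-bound} with $c=c_0$.
\item From \eqref{eq:mixture-covariance}, $|\mathsf C|=pqd_\mu\le q\lesssim e^{-s}/\mu$.
\item From \eqref{eq:mixture-variance} and \eqref{eq:component-variance-bounds}, $\sigma_X^2\le C_s\mu^{-2}+q\lesssim \mu^{-2}+e^{-s}/\mu$.
\end{itemize}

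For the weight bounds I would compute $\mathcal Z_{\theta,\alpha}=e^\lambda A_{1,\mu}(\alpha)+A_{2,\mu}(\alpha)\ge e^\lambda A_{1,\mu}(\alpha)=e^\lambda(e^{\mu\alpha}-1)/\mu\ge \tfrac12 e^{\lambda+\mu\alpha}/\mu$, using $\mu\alpha\ge\mu\rho$ large. Then for $x\le\alpha$,
\[
W_{\theta,\alpha}(x,1)=e^{\lambda+\mu x}/\mathcal Z\le 2\mu e^{-\mu(\alpha-x)} .
\]
For $x>\alpha$,
\[
W_{\theta,\alpha}(x,0)\le 2\mu e^{\mu x-\lambda-\mu\alpha}=2\mu e^{-\mu(1-x)}e^{-\mu(k+\alpha-1)} .
\]
The rail bound gives $e^{-\mu(k+\alpha-1)}\le e^{-c_0}e^{-s}/\mu$, so the factor $\mu$ cancels and we get $W_{\theta,\alpha}(x,0)\lesssim e^{-s}e^{-\mu(1-x)}$.

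Finally, I would integrate the squared weights against the uniform design. On the positive class, $\int_0^\alpha \mu^2e^{-2\mu(\alpha-x)}dx\le\mu/2$. On the negative class, $\int_\alpha^1 e^{-2s}e^{-2\mu(1-x)}dx\le e^{-2s}/(2\mu)$. Together these give $P_\alpha W_{\theta,\alpha}^2\lesssim\mu$.

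No step here looks like a real obstacle. The only points needing care are that all constants are uniform over $\theta\in\mathcal G$ and $\alpha\in[\rho,\ell]$. This uniformity holds because only the lower rail and the bounds $\alpha,1-\alpha\ge\rho$ are used, after one increase of $\mu_*$ so that $h_\mu\in[1/2,2]$ and $e^{\mu\rho}\ge2$.
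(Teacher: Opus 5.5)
Your proposal is correct and follows essentially the same route as the paper: you get $q\lesssim e^{-s}/\mu$ from the odds identity and the lower rail $k\ge b_{c_0}(\mu)$ alone, read off the bounds on $F$, $\sigma_X^2$ and $\mathsf C$ from the two-component representation, and obtain both weight envelopes from $\mathcal Z_{\theta,\alpha}\ge e^\lambda A_{1,\mu}(\alpha)\gtrsim e^{\lambda+\mu\alpha}/\mu$ before integrating the squares. Your version is slightly more explicit than the paper's, which appeals to component moment bounds instead: for example, you use $|\mathsf C|=pq\,d_\mu\le q$, and you cite Lemma~\ref{lem:log-prediction-bound} directly for the bias bound.
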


\begin{proof}
Write
$I_L:=\int_0^\alpha e^{\mu x}\,dx$ and
$I_R:=\int_\alpha^1e^{\mu x}\,dx$.
Since $\alpha,1-\alpha\ge\rho$,
$I_L\asymp e^{\mu\alpha}/\mu$ and
$I_R\asymp e^\mu/\mu$.
Hence, with left and right tilted masses
$L=e^\lambda I_L$ and $R=I_R$,
\[
    \frac LR
    \asymp
    e^{\mu(k+\alpha-1)}
    \ge
    \mu e^{c_0+s},
\]
because $k\ge b_{c_0}(\mu)$.
Thus $q=R/(L+R)\lesssim e^{-s}/\mu$.

Let $X_L$ and $X_R$ denote draws from the left and right tilted
component laws, respectively. By the explicit truncated-exponential
formulas in Lemma~\ref{lem:mixture-representation},
$\E(\alpha-X_L)^r+\E(1-X_R)^r\lesssim\mu^{-r}$ for every fixed
integer $r\ge1$. Combining these component-moment bounds with the
two-component formulas gives
\[
    |F-\alpha|\lesssim\mu^{-1},
    \qquad
    \sigma_X^2
    \lesssim
    \mu^{-2}+\frac{e^{-s}}{\mu},
    \qquad
    |\mathsf C|
    \lesssim
    \frac{e^{-s}}{\mu}.
\]

Finally, the normalizing constant satisfies
$\mathcal Z_{\theta,\alpha}\gtrsim
e^{\lambda+\mu\alpha}/\mu$.
The two stated weight envelopes follow directly. Squaring them and
integrating over the two intervals gives
$P_\alpha W_{\theta,\alpha}^2\lesssim\mu$.
\end{proof}
\begin{corollary}[Influence-function envelopes]
\label{cor:normalized-influence-geometry}
Under the conditions of Lemma~\ref{lem:normalized-weight-geometry}, define
\[
\begin{aligned}
    \phi_F
    &:=
    W_{\theta,\alpha}(X-F),
    &
    \phi_p
    &:=
    W_{\theta,\alpha}(Y-p),\\
    \phi_{\sigma^2}
    &:=
    W_{\theta,\alpha}\bigl((X-F)^2-\sigma_X^2\bigr),
    &
    \phi_{\mathsf C}
    &:=
    W_{\theta,\alpha}
    \bigl((X-F)(Y-p)-\mathsf C\bigr).
\end{aligned}
\]
Each influence function is $P_\alpha$-centered, and uniformly over
$\theta\in\mathcal G$ and $\alpha\in[\rho,\ell]$,
\[
\begin{array}{c|cc}
 & \|\phi\|_\infty & P_\alpha\phi^2\\
\hline
\phi_F
    & \lesssim 1
    & \lesssim \mu^{-1}\\
\phi_p
    & \lesssim e^{-s}
    & \lesssim e^{-2s}\mu^{-1}\\
\phi_{\sigma^2}
    & \lesssim \mu^{-1}+e^{-s}
    & \lesssim \mu^{-3}+e^{-2s}\mu^{-1}\\
\phi_{\mathsf C}
    & \lesssim e^{-s}
    & \lesssim e^{-2s}\mu^{-1}.
\end{array}
\]
Moreover, $|\phi_\diamond|\le2W_{\theta,\alpha}$ for
$\diamond\in\{F,p,\sigma^2,\mathsf C\}$.
\end{corollary}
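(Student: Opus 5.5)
The plan is to work directly from the weight envelopes in Lemma~\ref{lem:normalized-weight-geometry}. Centering is immediate in each case. Since $P_\alpha W_{\theta,\alpha}=1$, we get $P_\alpha\phi_F=\E_\theta[X]-F=0$, and likewise $P_\alpha\phi_p=0$. Tilted-expectation identities give $P_\alpha W_{\theta,\alpha}(X-F)^2=\sigma_X^2$ and $P_\alpha W_{\theta,\alpha}(X-F)(Y-p)=\mathsf C$, which centers $\phi_{\sigma^2}$ and $\phi_{\mathsf C}$. For the domination $|\phi_\diamond|\le2W_{\theta,\alpha}$, we use that $X,Y,F,p\in[0,1]$. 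This gives $|X-F|,|Y-p|\le1$ and $|(X-F)(Y-p)-\mathsf C|\le|(X-F)(Y-p)|+|\mathsf C|\le 1+1/4$, and similarly for the variance term. The constant $2$ therefore suffices.

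For the sup-norm and second-moment bounds, split the sample space into the left piece $\{Y=1,\ 0\le x\le\alpha\}$ and the right piece $\{Y=0,\ x>\alpha\}$, and insert the envelopes $W\lesssim\mu e^{-\mu(\alpha-x)}$ and $W\lesssim e^{-s}e^{-\mu(1-x)}$ respectively. The key observation is that on the left piece $W$ is large only where $\alpha-x\lesssim1/\mu$. There $|X-F|\le|x-\alpha|+|F-\alpha|\lesssim(\alpha-x)+1/\mu$, so $\sup_x \mu e^{-\mu(\alpha-x)}((\alpha-x)+\mu^{-1})\lesssim1$ and $\sup_x\mu e^{-\mu(\alpha-x)}((\alpha-x)^2+\mu^{-2})\lesssim\mu^{-1}$. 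On the right piece $|X-F|\le1$ and $W\lesssim e^{-s}$. These give the sup bounds for $\phi_F$ and $\phi_{\sigma^2}$, after adding $\sigma_X^2\sup W$, which is $\lesssim(\mu^{-2}+e^{-s}\mu^{-1})\mu$ on the left. For $\phi_p$ and $\phi_{\mathsf C}$, use $Y-p=q\lesssim e^{-s}/\mu$ on the left. This yields $W|Y-p|\lesssim\mu\cdot e^{-s}/\mu=e^{-s}$. On the right, $|Y-p|=p\le1$ and $W\lesssim e^{-s}$. The additive $\mathsf C$ term contributes $\lesssim W e^{-s}/\mu$, which is of the same order.

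The second moments come from the same split, integrating $W^2\times(\cdot)^2$ against the uniform $dx$. On the left, $\int_0^\alpha\mu^2e^{-2\mu(\alpha-x)}((\alpha-x)+\mu^{-1})^2dx\lesssim\mu^{-1}$ gives $\phi_F$. With the extra factor $q^2\lesssim e^{-2s}\mu^{-2}$ it gives $e^{-2s}\mu^{-1}\cdot\mu^{-0}$; more carefully, $\mu\cdot e^{-2s}\mu^{-2}=e^{-2s}\mu^{-1}$, which handles $\phi_p$ and $\phi_{\mathsf C}$. With $((\alpha-x)^2+\mu^{-2}+\sigma_X^2)^2$ it gives $\mu^{-3}+\mu\,\sigma_X^4\lesssim\mu^{-3}+e^{-2s}\mu^{-1}$, which handles $\phi_{\sigma^2}$. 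On the right, $\int_\alpha^1e^{-2s}e^{-2\mu(1-x)}dx\lesssim e^{-2s}\mu^{-1}$, and this is dominated by the stated bounds in every row.

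The main care point is $\phi_{\sigma^2}$. Its cross term $\sigma_X^2W$ must not be bounded crudely by $\sigma_X^2\|W\|_\infty\cdot$const. Instead, one should use $P_\alpha W^2\lesssim\mu$ from Lemma~\ref{lem:normalized-weight-geometry} together with $\sigma_X^4\lesssim\mu^{-4}+e^{-2s}\mu^{-2}$.
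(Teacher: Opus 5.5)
Your proposal is correct and follows essentially the paper's route: split into the left ($Y=1$) and right ($Y=0$) pieces, insert the weight envelopes from Lemma~\ref{lem:normalized-weight-geometry}, and reduce everything to the elementary bounds $\sup_{u\ge0}\mu e^{-\mu u}(u+\mu^{-1})^r\lesssim\mu^{-(r-1)}$ and $\mu^2\int_0^\infty e^{-2\mu u}(u+\mu^{-1})^{2r}\,du\lesssim\mu^{-(2r-1)}$, together with $q\lesssim e^{-s}/\mu$ and the population bounds. Your explicit treatment of the $\sigma_X^2W$ cross term through $P_\alpha W^2\lesssim\mu$ and $\sigma_X^4\lesssim\mu^{-4}+e^{-2s}\mu^{-2}$ is bookkeeping that the paper leaves implicit, and you handle it correctly.
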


\begin{proof}
Centering follows directly from the definitions.
On the left interval set $u=\alpha-X$, and on the right set $v=1-X$.
Lemma~\ref{lem:normalized-weight-geometry} reduces all bounds to
\[
    \sup_{u\ge0}
    \mu e^{-\mu u}
    \left(u+\frac1\mu\right)^r
    \lesssim
    \mu^{-(r-1)}
\]
and
\[
    \mu^2
    \int_0^\infty
    e^{-2\mu u}
    \left(u+\frac1\mu\right)^{2r}\,du
    \lesssim
    \mu^{-(2r-1)}.
\]
Applying these with $r=1$ for $\phi_F$ and $\phi_{\mathsf C}$ and
$r=2$ for $\phi_{\sigma^2}$, together with
$q\lesssim e^{-s}/\mu$ and the population bounds of
Lemma~\ref{lem:normalized-weight-geometry}, gives the table.
The domination $|\phi_\diamond|\le2W_{\theta,\alpha}$ follows from $X,Y,F,p\in[0,1]$.
\end{proof}

For a context $\mathcal D=\{(X_i,Y_i)\}_{i=1}^n$, define
\begin{equation*}
    F_n:=F_n(\theta;\mathcal D),
    \qquad
    p_n:=\E_{\theta,n}Y,
    \qquad
    \sigma_{X,n}^2:=\Var_{\theta,n}(X),
    \qquad
    \mathsf C_n:=\Cov_{\theta,n}(X,Y),
\end{equation*}
and
\begin{equation*}
    e_F:=\frac{P_n\phi_F}{P_nW_{\theta,\alpha}},
    \qquad
    e_p:=\frac{P_n\phi_p}{P_nW_{\theta,\alpha}},
    \qquad
    e_{\sigma^2}:=\frac{P_n\phi_{\sigma^2}}{P_nW_{\theta,\alpha}},
    \qquad
    e_{\mathsf C}:=\frac{P_n\phi_{\mathsf C}}{P_nW_{\theta,\alpha}}.
\end{equation*}
Then
\begin{equation}
\label{eq:dictionary-ratio-identities}
    F_n-F=e_F,
    \qquad
    p_n-p=e_p,
\end{equation}
and
\begin{equation}
\label{eq:dictionary-variance-identities}
    \sigma_{X,n}^2-\sigma_X^2
    =
    e_{\sigma^2}-e_F^2,
    \qquad
    \mathsf C_n-\mathsf C
    =
    e_{\mathsf C}-e_Fe_p.
\end{equation}

\begin{lemma}[One-task gradient moment and bias bounds]
\label{lem:one-task-dictionary}
There exist constants $C,c>0$ such that, uniformly over $\theta=(\lambda,\mu)\in\mathcal G$ whenever $n\ge C\mu$,
\begin{align}
\label{eq:task-gradient-population-L2}
    \E_\alpha\|g(\theta;\alpha)\|^2
    &\le
    \frac{C}{\mu^5},
    \\
\label{eq:task-gradient-context-MSE}
    \E_{\alpha,\mathcal D}
    \|g_n(\theta;Z)-g(\theta;\alpha)\|^2
    &\le
    \frac{C}{\mu^4n}
    +
    Ce^{-cn/\mu},
    \\
\label{eq:task-gradient-context-bias}
    \left\|
        \E_{\alpha,\mathcal D}g_n(\theta;Z)
        -
        \E_\alpha g(\theta;\alpha)
    \right\|
    &\le
    \frac{C}{\mu^2n}
    +
    Ce^{-cn/\mu}.
\end{align}
If $\mathcal H\subseteq\{P_nW_{\theta,\alpha}\ge1/2\}$, then
\begin{equation}
\label{eq:stable-denominator-gradient-moment}
    \E_{\alpha,\mathcal D}
    \left[
        \|g_n(\theta;Z)-g(\theta;\alpha)\|^2
        \mathbf 1_{\mathcal H}
    \right]
    \le
    \frac{C}{\mu^4n}.
\end{equation}
\end{lemma}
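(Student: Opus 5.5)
Conditional on $\alpha$, I will write $g_n-g$ as a smooth function of the four self-normalized errors $e_F,e_p,e_{\sigma^2},e_{\mathsf C}$, and then control it with the moment bounds of Lemma~\ref{lem:self-normalized-ratio} fed by the envelopes of Corollary~\ref{cor:normalized-influence-geometry}. Write $B:=F-\alpha$ and $\nabla F=(\mathsf C,\sigma_X^2)$. By \eqref{eq:dictionary-ratio-identities} and \eqref{eq:dictionary-variance-identities},
\[
g_n-g=e_F\,(\mathsf C,\sigma_X^2)^\top+B\,(e_{\mathsf C}-e_Fe_p,\;e_{\sigma^2}-e_F^2)^\top+e_F\,(e_{\mathsf C}-e_Fe_p,\;e_{\sigma^2}-e_F^2)^\top .
\]
The first two groups are linear in the errors and the remaining terms are quadratic or cubic. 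For the self-normalization I take $A=W_{\theta,\alpha}$, so $PA=1$ and $\sigma_A^2\lesssim\mu$ by Lemma~\ref{lem:normalized-weight-geometry}. The sup bound $\|A\|_\infty\lesssim\mu$ follows from the left envelope, and $|\phi_\diamond|\le 2A$ gives $d=2$. The condition $n\ge C(B+\sigma_A^2)$ in Lemma~\ref{lem:self-normalized-ratio} is then exactly $n\ge C\mu$, and it produces the exponential remainders $e^{-cn/\mu}$.

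\textbf{Population bound \eqref{eq:task-gradient-population-L2}.} By Lemma~\ref{lem:normalized-weight-geometry}, $|B|\lesssim\mu^{-1}$, $|\mathsf C|\lesssim e^{-s}/\mu$ and $\sigma_X^2\lesssim\mu^{-2}+e^{-s}/\mu$. Hence $\|g\|^2\lesssim\mu^{-2}(\mu^{-4}+e^{-2s}\mu^{-2})$. Integrating over $\alpha=\rho+s/\mu$ with $d\alpha=ds/\mu$, the $e^{-2s}$ term contributes $\mu^{-5}$ and the constant term contributes $\mu^{-6}$, so the total is $C\mu^{-5}$.

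\textbf{Context MSE \eqref{eq:task-gradient-context-MSE}.} I bound each term in $L^2$ using Cauchy--Schwarz together with the second, fourth and eighth moments from Lemma~\ref{lem:self-normalized-ratio}.
\begin{itemize}
\item $e_F\nabla F$: $\E e_F^2\lesssim 1/(\mu n)$ and $\|\nabla F\|^2\lesssim\mu^{-4}+e^{-2s}\mu^{-2}$, giving $(\mu^{-5}+e^{-2s}\mu^{-3})/n$.
\item $B e_{\mathsf C}$ and $B e_{\sigma^2}$: these are of order $\mu^{-2}(e^{-2s}\mu^{-1}+\mu^{-3})/n$.
\item $B e_F^2$: $\mu^{-2}\E e_F^4\lesssim\mu^{-2}\{(\mu n)^{-2}+(\mu n^3)^{-1}\}$, which is at most $\mu^{-4}n^{-1}$ because $n\ge\mu$.
\item Products such as $e_Fe_{\mathsf C}$ and $e_F^3$ are handled with the eighth moments in the same way.
\end{itemize}
Integrating in $s$ contributes an extra factor $\mu^{-1}$ on the $e^{-2s}$ terms. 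Every term is then at most $C\mu^{-4}n^{-1}$, and the dominant one is $\mu^{-2}\cdot\mu^{-1}\cdot\mu^{-1}/n$ coming from $Be_{\mathsf C}$ after integration. The stable-denominator version \eqref{eq:stable-denominator-gradient-moment} follows from the same computation. On $\mathcal H$ every $|e_\diamond|\le 2|P_n\phi_\diamond|$, so Lemma~\ref{lem:bounded-average-moments} applies directly and there is no exponential term.

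\textbf{Bias \eqref{eq:task-gradient-context-bias}.} Taking expectations, the linear terms contribute $|\nabla F|\,|\E e_F|+|B|\,(|\E e_{\mathsf C}|+|\E e_{\sigma^2}|)$. Bound \eqref{eq:self-normalized-bias} gives, for example, $|\E e_F|\lesssim\sqrt{\mu^{-1}\mu}/n=1/n$. The resulting contribution $|\nabla F|/n$ integrates to $\mu^{-2}/n$, and $|B|\,|\E e_{\mathsf C}|\lesssim\mu^{-1}\sqrt{e^{-2s}\mu^{-1}\mu}/n$ integrates to $\mu^{-2}/n$. The quadratic terms are bounded by second moments, e.g.\ $|B|\E e_F^2\lesssim\mu^{-2}/n$, and the cubic terms are smaller still.

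\textbf{Main obstacle.} The difficulty is that every bound must be sharp uniformly in $s$. The table's $e^{-s}$ factors must be paired correctly so that each term gains the integration factor $\mu^{-1}$. In particular, the pure $\mu^{-3}$ part of $\phi_{\sigma^2}$ and the non-decaying part of $\phi_F$ must not leave an unintegrated $\mu^{-3}n^{-1}$ term. I would check this bookkeeping term by term, and use $n\ge C\mu$ to absorb the higher-order terms $b^2s_2/n^3$.
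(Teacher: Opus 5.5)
Your proposal is correct and takes essentially the same route as the paper. The paper also expands $g_n-g$ into $e_F\nabla F$, $B(\mathsf C_n-\mathsf C,\sigma_{X,n}^2-\sigma_X^2)$ and $e_F(\mathsf C_n-\mathsf C,\sigma_{X,n}^2-\sigma_X^2)$ via the ratio identities. It controls each piece with the second-, fourth-, eighth-moment and bias bounds of Lemma~\ref{lem:self-normalized-ratio}, fed by Corollary~\ref{cor:normalized-influence-geometry}, with $n\ge C\mu$ absorbing the higher-order terms. It integrates using $\int_\rho^\ell e^{-qs}\,d\alpha\lesssim\mu^{-1}$, and removes the exponential term on $\mathcal H$ via $|e_\diamond|\le 2|P_n\phi_\diamond|$. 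Your bookkeeping is right: the non-decaying parts contribute only $\mu^{-5}n^{-1}$ or $\mu^{-4}n^{-2}$, and the $e^{-2s}$ parts become $\mu^{-4}n^{-1}$ after integration, so no unintegrated $\mu^{-3}n^{-1}$ term appears.
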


\begin{proof}
Apply Lemma~\ref{lem:self-normalized-ratio} with $A=W_{\theta,\alpha}$ and $\phi\in \{\phi_F,\phi_p,\phi_{\sigma^2},\phi_{\mathsf C}\}.$ By Lemma~\ref{lem:normalized-weight-geometry},
$\|A\|_\infty+P_\alpha(A-1)^2\lesssim\mu$, while
Corollary~\ref{cor:normalized-influence-geometry} gives the required
envelope, second-moment, and domination bounds, with $d=2$. Hence
\begin{equation}
    \E[e_F^2\mid\alpha]
    \le
    \frac{C}{\mu n}+Ce^{-cn/\mu},
    \qquad
    \E[e_p^2\mid\alpha]
    \le
    \frac{Ce^{-2s}}{\mu n}+Ce^{-cn/\mu},
\end{equation}
and
\begin{equation*}
    \E[e_{\sigma^2}^2\mid\alpha]
    \le
    \frac{C}{n}
    \left(
        \frac1{\mu^3}
        +
        \frac{e^{-2s}}{\mu}
    \right)
    +
    Ce^{-cn/\mu},
\end{equation*}
$\E[e_{\mathsf C}^2\mid\alpha] \le \frac{Ce^{-2s}}{\mu n} + Ce^{-cn/\mu}.$

The fourth- and eighth-moment bounds in
Lemma~\ref{lem:self-normalized-ratio}, together with
Corollary~\ref{cor:normalized-influence-geometry} and $n\ge C\mu$,
give by Cauchy--Schwarz
\begin{align*}
    \E[(e_Fe_p)^2+(e_Fe_{\mathsf C})^2\mid\alpha]
    &\le
    \frac{Ce^{-2s}}{\mu^2n^2}+Ce^{-cn/\mu},\\
    \E[e_F^2e_{\sigma^2}^2\mid\alpha]
    &\le
    C\left\{
        \frac1{\mu^4n^2}
        +
        \frac{e^{-2s}}{\mu^2n^2}
    \right\}
    +Ce^{-cn/\mu},\\
    \E[e_F^6+(e_F^2e_p)^2\mid\alpha]
    &\le
    C\left\{
        \frac1{\mu^3n^3}
        +
        \frac{e^{-2s}}{\mu^3n^3}
    \right\}
    +Ce^{-cn/\mu}.
\end{align*}
These are the only higher-order products needed below.

Write $B:=F-\alpha.$ Then $g(\theta;\alpha) = B \begin{pmatrix} \mathsf C\\ \sigma_X^2 \end{pmatrix},$ while
\begin{align}
\label{eq:dictionary-gradient-expansion}
    (F_n-\alpha)\mathsf C_n-B\mathsf C
    &=
    \mathsf C\,e_F
    +
    B(\mathsf C_n-\mathsf C)
    +
    e_F(\mathsf C_n-\mathsf C),
    \\
    (F_n-\alpha)\sigma_{X,n}^2-B\sigma_X^2
    &=
    \sigma_X^2e_F
    +
    B(\sigma_{X,n}^2-\sigma_X^2)
    +
    e_F(\sigma_{X,n}^2-\sigma_X^2).
\end{align}
Using Lemma~\ref{lem:normalized-weight-geometry},
the ratio identities, and Cauchy--Schwarz gives
\begin{equation}
    \E_{\mathcal D\mid\alpha}
    \|g_n-g\|^2
    \le
    C\left\{
        \frac1{\mu^5n}
        +
        \frac{e^{-s}}{\mu^4n}
        +
        \frac{e^{-2s}}{\mu^3n}
    \right\}
    +
    Ce^{-cn/\mu}.
\end{equation}
Also, $\|g(\theta;\alpha)\|^2\le
C\{\mu^{-6}+e^{-s}\mu^{-5}+e^{-2s}\mu^{-4}\}$.
Since
\begin{equation}
\label{eq:endpoint-alpha-integration}
    s=\mu(\alpha-\rho),
    \qquad
    \int_\rho^\ell e^{-qs}\,d\alpha
    \le
    \frac{C_q}{\mu},
    \qquad
    q>0,
\end{equation}
integrating over $\alpha$ proves
\eqref{eq:task-gradient-population-L2} and
\eqref{eq:task-gradient-context-MSE}.

For the bias, \eqref{eq:self-normalized-bias}, the identities \eqref{eq:dictionary-ratio-identities}--\eqref{eq:dictionary-variance-identities}, and the expansion \eqref{eq:dictionary-gradient-expansion} give
\begin{equation*}
    \left\|
        \E_{\mathcal D\mid\alpha}[g_n-g]
    \right\|
    \le
    C\left\{
        \frac1{\mu^2n}
        +
        \frac{e^{-s}}{\mu n}
    \right\}
    +
    Ce^{-cn/\mu}.
\end{equation*}
Since \eqref{eq:endpoint-alpha-integration} gives $\int_\rho^\ell e^{-s}\,d\alpha \le \frac C\mu,$ we obtain
$\|\E_{\alpha,\mathcal D}g_n(\theta;Z)-\E_\alpha g(\theta;\alpha)\|
\le C/(\mu^2n)+Ce^{-cn/\mu}$,
which proves \eqref{eq:task-gradient-context-bias}.

Finally, on any
$\mathcal H\subseteq\{P_nW_{\theta,\alpha}\ge1/2\}$,
$|e_\diamond|\le2|P_n\phi_\diamond|$. Repeating the same moment
calculation on $\mathcal H$ removes the denominator-failure term and
gives \eqref{eq:stable-denominator-gradient-moment}.
\end{proof}

\begin{lemma}[Fixed-point control at shell resolution]
\label{lem:fixed-parameter-deviations}
There are constants $C,c>0$ such that, for every $R\ge\mu_*$, $\theta\in\widetilde{\mathcal S}_R$, $\alpha\in[\rho,\ell]$, and $n\ge CR$, conditional on $\alpha$, with probability at least
$1-Ce^{-cn/R},$
one has
\begin{equation}
    P_nW_{\theta,\alpha}\ge\frac34
\end{equation}
and
\begin{equation*}
    |F_n-F|
    +
    |p_n-p|
    +
    |\sigma_{X,n}^2-\sigma_X^2|
    +
    |\mathsf C_n-\mathsf C|
    \le
    \frac{C}{R}.
\end{equation*}
\end{lemma}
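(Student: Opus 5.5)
The plan is to apply the high-probability self-normalized bound, Lemma~\ref{lem:self-normalized-ratio-high-probability}, once to the denominator and once to each of the four influence functions of Corollary~\ref{cor:normalized-influence-geometry}, all at a deviation level $u\asymp n/R$. The resulting errors are then transferred to the four tilted statistics through the dictionary identities \eqref{eq:dictionary-ratio-identities}--\eqref{eq:dictionary-variance-identities}. On $\widetilde{\mathcal S}_R$ we have $\mu\asymp R$, so every bound of Lemma~\ref{lem:normalized-weight-geometry} and Corollary~\ref{cor:normalized-influence-geometry} may be written with $R$ in place of $\mu$, absorbing factors of $2$ and $4$ into constants. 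In particular $A:=W_{\theta,\alpha}$ satisfies $PA=1$, $\|A\|_\infty\lesssim R$ (from the weight envelope $\mu e^{-\mu(\alpha-x)}$) and $P_\alpha(A-1)^2\le P_\alpha A^2\lesssim R$.

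\textbf{Denominator.} Since $|A-1|\lesssim R$ and $\Var(A)\lesssim R$, Bernstein's inequality gives
\[
\Prob\bigl(|P_nA-1|>1/4\bigr)\le 2e^{-cn/R}.
\]
This yields $P_nW_{\theta,\alpha}\ge3/4$ on an event of probability at least $1-2e^{-cn/R}$.

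\textbf{Numerators.} For each $\phi_\diamond$, apply Bernstein with $u=c'n/R$. This choice is admissible because $n\ge CR$ gives $u\ge1$, and it satisfies $u\le cN/(B+\sigma_A^2+1)$. The bound $|e_\diamond|\le 2|P_n\phi_\diamond|$ then gives
\[
|e_\diamond|\lesssim \sqrt{\frac{s_2 u}{n}}+\frac{b u}{n} = \sqrt{\frac{s_2}{R}}+\frac{b}{R}.
\]
Reading off the table of Corollary~\ref{cor:normalized-influence-geometry}:
\begin{itemize}
\item for $\phi_F$, $s_2\lesssim 1/R$ and $b\lesssim1$, so $|e_F|\lesssim 1/R$;
\item for $\phi_p$ and $\phi_{\mathsf C}$, $s_2\lesssim e^{-2s}/R$ and $b\lesssim e^{-s}$, so the error is $\lesssim e^{-s}/R\le 1/R$;
\item for $\phi_{\sigma^2}$, $s_2\lesssim R^{-3}+e^{-2s}/R$ and $b\lesssim R^{-1}+e^{-s}$, so the error is again $\lesssim 1/R$.
\end{itemize}
The constants involved are uniform over $\theta\in\widetilde{\mathcal S}_R$ and $\alpha$. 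A union bound over the five events gives total failure probability $\le Ce^{-cn/R}$.

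\textbf{Assembly.} On the intersection event, \eqref{eq:dictionary-ratio-identities} gives $|F_n-F|=|e_F|\lesssim 1/R$ and $|p_n-p|=|e_p|\lesssim 1/R$. By \eqref{eq:dictionary-variance-identities},
\[
|\sigma_{X,n}^2-\sigma_X^2|\le|e_{\sigma^2}|+e_F^2\lesssim 1/R+1/R^2,
\qquad
|\mathsf C_n-\mathsf C|\le|e_{\mathsf C}|+|e_F||e_p|\lesssim 1/R,
\]
and both are $\lesssim 1/R$ since $R\ge\mu_*\ge1$.

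\textbf{Main obstacle.} The delicate step is the denominator. Lemma~\ref{lem:self-normalized-ratio-high-probability} only guarantees $P_nA\ge 1/2$, whereas the statement needs the sharper threshold $3/4$. I would therefore run Bernstein directly for $T=P_nA-1$ at level $1/4$ rather than citing the lemma's event. This still gives $\exp\{-cn/(R+R+1)\}$, because the Bernstein exponent for deviation $t=1/4$ is $nt^2/(2\sigma_A^2+2(B+1)t/3)\gtrsim n/R$. One must also verify that the envelope $\|W_{\theta,\alpha}\|_\infty\lesssim\mu$ holds uniformly on the enlarged shell, including near the lower endpoint $\alpha=\rho$. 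This follows from Lemma~\ref{lem:normalized-weight-geometry}, since both the left envelope $\mu e^{-\mu(\alpha-x)}$ and the right envelope $e^{-s}e^{-\mu(1-x)}\le1$ are bounded by $C\mu$.
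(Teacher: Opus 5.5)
Your proposal is correct and follows essentially the same route as the paper: Bernstein directly on $P_nW_{\theta,\alpha}-1$ at level $1/4$ using $\|W_{\theta,\alpha}\|_\infty+P_\alpha(W_{\theta,\alpha}-1)^2\lesssim R$, the self-normalized high-probability bound at $u\asymp n/R$ for each influence function of Corollary~\ref{cor:normalized-influence-geometry}, and assembly through the dictionary identities \eqref{eq:dictionary-ratio-identities}--\eqref{eq:dictionary-variance-identities}. The only difference is that you unroll Lemma~\ref{lem:self-normalized-ratio-high-probability} into Bernstein plus the bound $|e_\diamond|\le2|P_n\phi_\diamond|$, whereas the paper cites it; your table-by-table check of $\sqrt{s_2/R}+b/R\lesssim1/R$ spells out what the paper leaves implicit.
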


\begin{proof}
Since $\mu\asymp R$ on $\widetilde{\mathcal S}_R$,
Lemma~\ref{lem:normalized-weight-geometry} gives
$\|W_{\theta,\alpha}\|_\infty+
P_\alpha(W_{\theta,\alpha}-1)^2\le CR$.
Bernstein's inequality therefore yields
$P_nW_{\theta,\alpha}\ge3/4$ with conditional failure probability
at most $Ce^{-cn/R}$.

Apply
Lemma~\ref{lem:self-normalized-ratio-high-probability} to each influence
function in Corollary~\ref{cor:normalized-influence-geometry}, with
$u=c_1n/R$. Since
$B+\sigma_A^2+1\lesssim R$, choosing $c_1>0$ sufficiently small places
$u$ in the admissible range of the lemma, while $n\ge CR$ ensures
$u\ge1$. Thus, with conditional failure probability at most
$Ce^{-cn/R}$,
\[
    |e_F|+|e_p|+|e_{\sigma^2}|+|e_{\mathsf C}|
    \le
    \frac{C}{R}.
\]
Intersecting this event with the denominator event above and using a
union bound preserves the same failure order. The identities
\eqref{eq:dictionary-ratio-identities}--\eqref{eq:dictionary-variance-identities}
then prove the claim.
\end{proof}

\subsection{Uniform shellwise concentration}
\label{appsubsec:shellwise-concentration}

\begin{lemma}[Uniform good-denominator event on one shell]
\label{lem:uniform-good-event}
There are constants $C,c>0$ such that, for every $R\ge\mu_*$ and $n\ge CR$, there is an event $E_R(Z)$ satisfying
\begin{equation}
\label{eq:good-event-failure}
    \Prob\{E_R(Z)^c\}
    \le
    C\exp\left(
        -\frac{cn}{R}
        +
        C\log(CR)
    \right).
\end{equation}
On $E_R(Z)$,
\begin{equation}
\label{eq:uniform-denominator}
    \inf_{\theta\in\widetilde{\mathcal S}_R}
    P_nW_{\theta,\alpha}
    \ge
    \frac12,
\end{equation}
and
\begin{equation}
\label{eq:uniform-task-gradient-shell}
    \sup_{\theta\in\widetilde{\mathcal S}_R}
    \mu^2\|g_n(\theta;Z)\|
    \le
    C.
\end{equation}
\end{lemma}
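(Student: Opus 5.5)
The plan is a net-plus-Lipschitz argument: establish both conclusions at finitely many grid points using Lemma~\ref{lem:fixed-parameter-deviations}, then pass to all of $\widetilde{\mathcal S}_R$ by a deterministic perturbation bound on the empirical softmax weights, and finish with a union bound. Fix $R\ge\mu_*$ and the task block $Z=(\alpha,\mathcal D)$. Since $\mu\asymp R$ and $k=O(1)$ on $\widetilde{\mathcal S}_R$, this set lies in a rectangle of diameter $O(R)$. I take a grid $\mathcal N_R\subset\widetilde{\mathcal S}_R$ of mesh $\epsilon:=c_\epsilon/R$, so $|\mathcal N_R|\le (CR^2)^2=\exp\{C\log(CR)\}$. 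For each grid point, Lemma~\ref{lem:fixed-parameter-deviations} (conditional on $\alpha$, with $n\ge CR$) gives, outside an event of probability $Ce^{-cn/R}$, the bound $P_nW_{\vartheta,\alpha}\ge3/4$ together with $|F_n-F|+|p_n-p|+|\sigma_{X,n}^2-\sigma_X^2|+|\mathsf C_n-\mathsf C|\le C/R$. I define $E_R(Z)$ as the intersection of these events over $\mathcal N_R$; a union bound gives \eqref{eq:good-event-failure}, and integrating over $\alpha$ preserves it.

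To pass from grid points to all $\theta$, take $\theta\in\widetilde{\mathcal S}_R$ and its nearest grid point $\vartheta$, so $\|\theta-\vartheta\|\le\epsilon$. Because $X,Y\in[0,1]$, the raw weights satisfy $W_\theta/W_\vartheta\in[e^{-2\epsilon},e^{2\epsilon}]$. The normalizers $\mathcal Z_{\theta,\alpha}$ and $\mathcal Z_{\vartheta,\alpha}$ have the same ratio bound, so $P_nW_{\theta,\alpha}\ge e^{-4\epsilon}P_nW_{\vartheta,\alpha}\ge e^{-4c_\epsilon/R}\cdot\tfrac34\ge\tfrac12$. This proves \eqref{eq:uniform-denominator}. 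Also, $F_n,\sigma_{X,n}^2,\mathsf C_n$ are empirical-softmax moments, and their $\theta$-gradients are tilted third central moments. By Lemma~\ref{lem:task-gradient-regularity} these are bounded by a universal constant, so each empirical quantity moves by at most $C\epsilon=C/R$ between $\vartheta$ and $\theta$. By the same Lipschitz property of the population quantities (Lemma~\ref{lem:population-vector-field-global-regularity}), $F,\sigma_X^2,\mathsf C$ also move by $O(1/R)$.

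The main obstacle is \eqref{eq:uniform-task-gradient-shell}, which needs the size $\|g_n\|\lesssim\mu^{-2}$; crude $O(1/R)$ perturbation control does not give this, since $\sigma_X^2$ and $\mathsf C$ can be as small as $\mu^{-2}$. First I would write $g_n=(F_n-\alpha)(\mathsf C_n,\sigma_{X,n}^2)$. Since $|F-\alpha|\lesssim1/\mu$ by Lemma~\ref{lem:normalized-weight-geometry} and $|F_n-F|\le C/R$ on $E_R(Z)$, we have $|F_n-\alpha|\lesssim1/R$. It then suffices to show $|\mathsf C_n|+\sigma_{X,n}^2\lesssim 1/R$. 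I would get this directly rather than through differences: $\sigma_X^2\lesssim\mu^{-2}+e^{-s}/\mu\lesssim1/R$ and $|\mathsf C|\lesssim1/R$, and the deviations are at most $C/R$ on $E_R(Z)$, with the transfer from $\vartheta$ to $\theta$ costing another $O(1/R)$. Hence $\|g_n\|\le C/R\cdot C/R\lesssim\mu^{-2}$ uniformly on $\widetilde{\mathcal S}_R$.

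If the Lipschitz transfer of the empirical central moments needs more care, I would fall back on the ratio-of-weights argument. The tilted empirical law at $\theta$ has density ratio in $[e^{-4\epsilon},e^{4\epsilon}]$ relative to that at $\vartheta$. Its means and variances of $[0,1]$-valued variables therefore differ by $O(\epsilon)$, and this suffices.
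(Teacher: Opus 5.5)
Your proposal is correct and follows the paper's proof essentially step for step: a net of mesh $c/R$ with $O(R^4)$ points in $\widetilde{\mathcal S}_R$, Lemma~\ref{lem:fixed-parameter-deviations} with a union bound at the net points, universal Lipschitz extension of the empirical and population softmax moments, and the product bound $|F_n-\alpha|\,(|\mathsf C_n|+\sigma_{X,n}^2)\lesssim R^{-1}\cdot R^{-1}$. The only difference is cosmetic: you extend the denominator bound via the pointwise weight ratio $W_\theta/W_\vartheta\in[e^{-2\epsilon},e^{2\epsilon}]$, whereas the paper bounds $\nabla_\theta\log P_nW_{\theta,\alpha}=\E_{\theta,n}(Y,X)-\E_\theta(Y,X)$, which is the same estimate in differential form. (The ``obstacle'' you flag is not actually one, since crude $O(1/R)$ control of each factor already gives the $\mu^{-2}$ bound, as your own argument shows.)
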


\begin{proof}
Fix a sufficiently small universal $c_{\rm net}>0$ and choose an $\eta_R$-net $\mathcal N_R$ of $\widetilde{\mathcal S}_R$ with $\eta_R:=\frac{c_{\rm net}}{R}.$ Since $\widetilde{\mathcal S}_R$ has diameter $O(R)$ in two dimensions, $|\mathcal N_R| \le \left( 1+\frac{CR}{\eta_R} \right)^2 \le CR^4.$ Lemma~\ref{lem:fixed-parameter-deviations} and a union bound over $\mathcal N_R$ give \eqref{eq:good-event-failure} and, at every net point, $P_nW_{\theta,\alpha}\ge\frac34,$ together with $O(R^{-1})$ control of
\begin{equation*}
    F_n-F,
    \qquad
    p_n-p,
    \qquad
    \sigma_{X,n}^2-\sigma_X^2,
    \qquad
    \mathsf C_n-\mathsf C.
\end{equation*}

The maps $F_n,F,p_n,p,\sigma_{X,n}^2,\sigma_X^2,\mathsf C_n,\mathsf C$ have universal parameter-Lipschitz constants. Hence, if $\vartheta\in\mathcal N_R$ is nearest to $\theta\in\widetilde{\mathcal S}_R$, $|H(\theta)-H(\vartheta)| \le C\eta_R = \frac{Cc_{\rm net}}{R}$ for each such map $H$. Thus the mesh $\eta_R=c_{\rm net}/R$ produces exactly the $O(R^{-1})$ extension error required here.

Moreover,
$\nabla_\theta\log P_nW_{\theta,\alpha} = \E_{\theta,n}(Y,X) - \E_\theta(Y,X),$
so $\theta\mapsto\log P_nW_{\theta,\alpha}$ is universally Lipschitz. Choosing $c_{\rm net}$ sufficiently small extends the denominator lower bound from $3/4$ on the net to $1/2$ on the full shell, proving \eqref{eq:uniform-denominator}.

Finally, the uniform $O(R^{-1})$ deviations established above,
together with Lemma~\ref{lem:normalized-weight-geometry} and
$\mu\asymp R$, give
$|F_n-\alpha|\le C/\mu$ and
$\sigma_{X,n}^2+|\mathsf C_n|\le C/\mu$ uniformly over
$\widetilde{\mathcal S}_R$. Hence
$\|g_n(\theta;Z)\|\le C/\mu^2$, proving
\eqref{eq:uniform-task-gradient-shell}.
\end{proof}

For $\diamond\in\{\mu,-,+\}$, define
\begin{align*}
    f_{\mu,\theta}(Z)
    &:=
    \mu^3v_{n,\mu}(\theta;Z),
    &&\theta\in\widetilde{\mathcal S}_R,
    \\
    f_{-,\theta}(Z)
    &:=
    \mu^4
    \langle\nabla h_-(\theta),v_n(\theta;Z)\rangle,
    &&\theta\in\partial_-\widetilde{\mathcal S}_R,
    \\
    f_{+,\theta}(Z)
    &:=
    \mu^4
    \langle\nabla h_+(\theta),v_n(\theta;Z)\rangle,
    &&\theta\in\partial_+\widetilde{\mathcal S}_R.
\end{align*}

\begin{lemma}[Directional envelopes, variance, and regularity]
\label{lem:directional-regularity}
There is a constant $C>0$ such that, for every $R\ge\mu_*$ and $n\ge CR$, if
$\widetilde f_{\diamond,\theta}(Z) := f_{\diamond,\theta}(Z)\mathbf 1_{E_R(Z)},$
then, on the relevant parameter domain for each $\diamond\in\{\mu,-,+\}$,
\begin{align}
\label{eq:directional-global-envelope}
    \sup_{\theta,Z}|f_{\diamond,\theta}(Z)|
    &\le
    CR^3,
    \\
\label{eq:directional-truncated-envelope}
    \sup_{\theta,Z}|\widetilde f_{\diamond,\theta}(Z)|
    &\le
    CR,
    \\
\label{eq:directional-second-moment}
    \sup_\theta
    \E\widetilde f_{\diamond,\theta}(Z)^2
    &\le
    C\left(
        R+\frac{R^2}{n}
    \right),
    \\
\label{eq:directional-Lipschitz}
    |f_{\diamond,\theta}(Z)-f_{\diamond,\vartheta}(Z)|
    &\le
    CR^3\|\theta-\vartheta\|.
\end{align}
\end{lemma}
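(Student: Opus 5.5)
The four bounds follow from three facts. On $\widetilde{\mathcal S}_R$ we have $\mu\asymp R$ and $k=O(1)$. The normalizing factors $\mu^3$ and $\mu^4\nabla h_\pm$ have size $O(R^3)$, by the explicit form of $h_\pm$ and Lemma~\ref{lem:gd-barrier-regularity}. And the one-task gradient $g_n$ has the global bounds of Lemma~\ref{lem:task-gradient-regularity} and the shell bounds of Lemma~\ref{lem:uniform-good-event}.

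\textbf{Global envelope \eqref{eq:directional-global-envelope} and Lipschitz bound \eqref{eq:directional-Lipschitz}.} Since $\|v_n\|=\|g_n\|\le B_G$ and $\|\mu^4\nabla h_\pm\|\le CR^3$, each $|f_{\diamond,\theta}|\le CR^3$. For the Lipschitz bound, write $f_{\diamond,\theta}=\langle a_\diamond(\theta),v_n(\theta;Z)\rangle$, where $a_\mu=\mu^3e_\mu$ and $a_\pm=\mu^4\nabla h_\pm$. From the proof of Lemma~\ref{lem:normalized-directional-regularity}, $\|a_\diamond\|\le CR^3$ and $\|Da_\diamond\|\le CR^2$. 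Combining these with $\|v_n\|\le B_G$ and $\mathrm{Lip}(v_n)\le L_G$, the product rule gives a Lipschitz constant $CR^2+CR^3\le CR^3$. The boundary sets $\partial_\pm\widetilde{\mathcal S}_R$ are curves, not convex sets, so I would apply the mean-value inequality along the straight segment between the two points. The segment stays in a fixed-width enlargement where the same derivative bounds hold; this is where Lemma~\ref{lem:normalized-directional-regularity} is used.

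\textbf{Truncated envelope \eqref{eq:directional-truncated-envelope}.} On $E_R(Z)$, \eqref{eq:uniform-task-gradient-shell} gives $\|g_n\|\le C/\mu^2$, so $|\widetilde f_{\diamond,\theta}|\le CR^3\cdot C R^{-2}=CR$.

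\textbf{Second moment \eqref{eq:directional-second-moment}.} This is the step I expect to be the main obstacle. Write $g_n=g+(g_n-g)$ on $E_R(Z)$. Then
\[
\E\widetilde f^2\le 2\|a_\diamond\|^2\Bigl(\E_\alpha\|g(\theta;\alpha)\|^2+\E\bigl[\|g_n-g\|^2\mathbf 1_{E_R}\bigr]\Bigr).
\]
The first term is bounded by \eqref{eq:task-gradient-population-L2}, which gives $CR^6\cdot C\mu^{-5}\le CR$. For the second, \eqref{eq:uniform-denominator} shows $E_R\subseteq\{P_nW_{\theta,\alpha}\ge 1/2\}$ at each fixed $\theta$, so \eqref{eq:stable-denominator-gradient-moment} applies with $\mathcal H=E_R$. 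It gives $CR^6\cdot C\mu^{-4}n^{-1}\le CR^2/n$.

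The hypothesis $n\ge CR$ ensures that Lemma~\ref{lem:one-task-dictionary} applies and removes any exponential remainders. A cruder estimate, $\|a_\diamond\|\le CR^3$ together with the pointwise shell bound alone, would only give $CR^2$ and lose the $R$ versus $R^2/n$ scaling. The scaling is kept because the $\alpha$-integrated moments of Lemma~\ref{lem:one-task-dictionary} exploit the endpoint localization $\int e^{-s}\,d\alpha\lesssim\mu^{-1}$.

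For the boundary classes, I would ask whether $\langle\nabla h_\pm,g\rangle$ is smaller than $\|\nabla h_\pm\|\|g\|$. It is not needed: the generic Cauchy--Schwarz bound above already gives the stated $R$. The uniformity in $\theta$ comes from the constants of Lemma~\ref{lem:one-task-dictionary} being uniform over $\mathcal G$.
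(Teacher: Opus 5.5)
Your proposal is correct and follows essentially the same route as the paper. The paper obtains the global and truncated envelopes from the $O(R^3)$ size of $\mu^3$ and $\mu^4\nabla h_\pm$ combined with Lemma~\ref{lem:task-gradient-regularity} and \eqref{eq:uniform-task-gradient-shell}. It bounds the second moment via $g_n=g+(g_n-g)$ using \eqref{eq:task-gradient-population-L2} and \eqref{eq:stable-denominator-gradient-moment}, and it gets the Lipschitz bound from the product rule. Your additional remarks are valid steps that the paper leaves implicit: that \eqref{eq:uniform-denominator} justifies taking $\mathcal H=E_R$, and that the mean-value inequality can be applied along straight segments for the curved boundary domains.
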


\begin{proof}
Lemma~\ref{lem:task-gradient-regularity} gives $\|g_n\|\le C$ globally. Since $\mu\asymp R$ on $\widetilde{\mathcal S}_R$ and $\mu^4\|\nabla h_\pm(\theta)\| \le C\mu^3,$ the global envelope is $CR^3$. On $E_R$, Lemma~\ref{lem:uniform-good-event} improves $\|g_n\|$ to $C\mu^{-2}$, giving the truncated envelope $CR$.

For fixed $\theta$, $g_n = g+(g_n-g).$ By \eqref{eq:task-gradient-population-L2} and \eqref{eq:stable-denominator-gradient-moment},
$\E_\alpha\|g\|^2\le C\mu^{-5}$ and
$\E_{\alpha,\mathcal D}[\|g_n-g\|^2\mathbf 1_{E_R}]
\le C/(\mu^4n)$.
Each directional scaling has magnitude at most $C\mu^3$, so
$\E\widetilde f_{\diamond,\theta}^2
\le C\mu^6\{\mu^{-5}+(\mu^4n)^{-1}\}
\le C(R+R^2/n)$.

Finally, $\|\nabla(\mu^3)\|\le CR^2,$ and Lemma~\ref{lem:gd-barrier-regularity} gives
$\|D(\mu^4\nabla h_\pm(\theta))\|_{\rm op}\le CR^2$ and
$\mu^4\|\nabla h_\pm(\theta)\|\le CR^3$.
Together with the global bounds on $g_n$ and $Dg_n$, these imply \eqref{eq:directional-Lipschitz}. Since $E_R(Z)$ does not depend on the particular $\theta$ inside the enlarged shell, the same bound holds for the truncated classes.
\end{proof}

\begin{lemma}[Finite-dimensional Lipschitz Bernstein bound]
\label{lem:Lipschitz-Bernstein}
Let $K\subset\R^d$ have diameter at most $D_K$, and suppose
\begin{equation*}
    |f_\theta(Z)|\le b,
    \qquad
    \E f_\theta(Z)^2\le s_2,
    \qquad
    |f_\theta(Z)-f_\vartheta(Z)|
    \le
    L_K\|\theta-\vartheta\|.
\end{equation*}
Then, for every $\eta>0$ and $\delta\in(0,1)$, with probability at least $1-\delta$,
\begin{equation*}
    \sup_{\theta\in K}|(P_m-P)f_\theta|
    \le
    C\left\{
        \sqrt{\frac{s_2L_\eta(\delta)}{m}}
        +
        \frac{bL_\eta(\delta)}{m}
        +
        L_K\eta
    \right\},
\end{equation*}
where
$L_\eta(\delta) := d\log\left(1+\frac{CD_K}{\eta}\right) + \log\frac{C}{\delta}.$
\end{lemma}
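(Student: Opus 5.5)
The plan is a net-and-Bernstein argument: discretize $K$, apply Bernstein's inequality at each net point with a union bound, and pass to all of $K$ via the Lipschitz assumption.

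\textbf{Step 1 (net).} Fix $\eta>0$ and choose an $\eta$-net $\mathcal N_\eta\subset K$ in the Euclidean norm. Since $K\subset\R^d$ has diameter at most $D_K$, a volumetric argument gives
\[
|\mathcal N_\eta|\le\left(1+\frac{CD_K}{\eta}\right)^d.
\]
Hence $\log|\mathcal N_\eta|+\log(C/\delta)\le L_\eta(\delta)$.

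\textbf{Step 2 (net points).} For each fixed $\vartheta\in\mathcal N_\eta$, the summands $f_\vartheta(Z_j)-Pf_\vartheta$ are i.i.d., centered, bounded by $2b$, and have variance at most $s_2$. Bernstein's inequality gives, with probability at least $1-\delta'$,
\[
|(P_m-P)f_\vartheta|\le C\left\{\sqrt{\frac{s_2\log(2/\delta')}{m}}+\frac{b\log(2/\delta')}{m}\right\}.
\]
Take $\delta'=\delta/|\mathcal N_\eta|$ and apply a union bound. Then, with probability at least $1-\delta$, every net point satisfies the bound with $\log(2/\delta')$ replaced by $L_\eta(\delta)$, after enlarging the constant inside the logarithm.

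\textbf{Step 3 (extension to $K$).} For arbitrary $\theta\in K$, pick $\vartheta\in\mathcal N_\eta$ with $\|\theta-\vartheta\|\le\eta$. The Lipschitz assumption holds pointwise in $Z$, so it bounds both $|P_m(f_\theta-f_\vartheta)|$ and $|P(f_\theta-f_\vartheta)|$ by $L_K\eta$. By the triangle inequality,
\[
|(P_m-P)f_\theta|\le|(P_m-P)f_\vartheta|+2L_K\eta.
\]
Taking the supremum over $\theta$ gives the claim, with $C$ absorbing the factor $2$.

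No step is a genuine obstacle. The only bookkeeping issue is confirming that the net cardinality and $\log(C/\delta)$ combine into $L_\eta(\delta)$ with universal constants, and this holds once $C$ is enlarged appropriately.
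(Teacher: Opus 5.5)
Your proposal is correct and matches the paper's proof step for step. The paper also takes an $\eta$-net of cardinality at most $(1+CD_K/\eta)^d$, applies Bernstein's inequality with a union bound on the net, and extends to all of $K$ via $|(P_m-P)(f_\theta-f_\vartheta)|\le P_m|f_\theta-f_\vartheta|+P|f_\theta-f_\vartheta|\le 2L_K\eta$, absorbing the factor $2$ into $C$.
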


\begin{proof}
Take an $\eta$-net of $K$ with cardinality at most $\left(1+\frac{CD_K}{\eta}\right)^d.$ Bernstein's inequality and a union bound give the first two terms on the net. If $\vartheta$ is nearest to $\theta$, then
$|(P_m-P)(f_\theta-f_\vartheta)|
\le P_m|f_\theta-f_\vartheta|+P|f_\theta-f_\vartheta|
\le2L_K\eta$.
Absorb the factor two into $C$.
\end{proof}

\begin{theorem}[Shellwise directional approximation]
\label{thm:shellwise-directional-approximation}
There is a fixed constant $C_{\rm res}\ge1$ such that the following holds. For $R\ge\mu_*$ and $\delta\in(0,1)$, set
$L_R(\delta) := \log\left( \frac{C_{\rm res}(1+R)mn}{\delta} \right).$
If $n\ge C_{\rm res}RL_R(\delta),$ then, with probability at least $1-\delta$,
\begin{equation}
\label{eq:shellwise-directional-bound}
    \max\{
        \mathcal E_\mu(R),
        \mathcal E_-(R),
        \mathcal E_+(R)
    \}
    \le
    C_{\rm res}\left\{
        \sqrt{\frac{RL_R(\delta)}{m}}
        +
        \frac{RL_R(\delta)}{m}
        +
        \frac{R}{n}
    \right\}.
\end{equation}
\end{theorem}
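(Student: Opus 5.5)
We split each normalized error into a finite-task fluctuation and a finite-context bias. For $\diamond\in\{\mu,-,+\}$, write $\mathcal E_\diamond(R)=\sup_\theta|P_mf_{\diamond,\theta}-\bar f_\diamond(\theta)|$, where $\bar f_\diamond$ is the corresponding population quantity (scaled $v$). We decompose
\[
 P_mf_{\diamond,\theta}-\bar f_\diamond
 =(P_m-P)\widetilde f_{\diamond,\theta}
 +P_m\bigl(f_{\diamond,\theta}\mathbf 1_{E_R^c}\bigr)
 -P\bigl(f_{\diamond,\theta}\mathbf 1_{E_R^c}\bigr)
 +\bigl(Pf_{\diamond,\theta}-\bar f_\diamond(\theta)\bigr).
\]
The last term is deterministic, and we bound it first. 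Since $Pf_{\diamond,\theta}$ is the same scaling of $\bar v_n=-\E g_n$, the context-bias bound \eqref{eq:task-gradient-context-bias} of Lemma~\ref{lem:one-task-dictionary}, multiplied by the scaling $\mu^3$ or $\mu^4\|\nabla h_\pm\|\lesssim\mu^3$, gives $C\mu^3\{\mu^{-2}n^{-1}+e^{-cn/\mu}\}\le CR/n+CR^3e^{-cn/R}$. Under $n\ge C_{\rm res}RL_R(\delta)$ with $C_{\rm res}$ large, $e^{-cn/R}\le(\delta/(mnR))^{C}$, so the exponential piece is at most $R/n$.

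For the truncated fluctuation, we apply Lemma~\ref{lem:Lipschitz-Bernstein} on the parameter domain, which is $\widetilde{\mathcal S}_R$ or its boundary pieces, has dimension at most $2$, and has diameter $O(R)$. The inputs come from Lemma~\ref{lem:directional-regularity}: $b=CR$, $s_2=C(R+R^2/n)\le CR$ because $n\ge R$, and Lipschitz constant $L_K=CR^3$. The boundary sets $\partial_\pm\widetilde{\mathcal S}_R$ are curves, so the Lipschitz bound holds along them as restrictions. We take the mesh $\eta=R^{-2}/(mn)$, which makes $L_K\eta\le CR/(mn)\le R/n$. The entropy term $2\log(1+CR^3mn)$ is then absorbed into $L_R(\delta)$ after enlarging $C_{\rm res}$. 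With probability $1-\delta/3$ this gives $C\{\sqrt{RL_R/m}+RL_R/m\}$, which are exactly the first two terms of \eqref{eq:shellwise-directional-bound}.

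For the bad-event remainder, we use the global envelope $CR^3$. This gives $|P(f\mathbf 1_{E_R^c})|\le CR^3\Prob(E_R^c)$, and Lemma~\ref{lem:uniform-good-event} yields $\Prob(E_R^c)\le C\exp(-cn/R+C\log CR)$, which is negligible as in the bias step. For the empirical part, the event that some $Z_j\notin E_R(Z_j)$ has probability at most $m\Prob(E_R^c)\le\delta/3$ once $n\ge C_{\rm res}RL_R(\delta)$, since this forces $cn/R\ge C\log(CRm/\delta)$. On its complement, $P_m(f\mathbf 1_{E_R^c})=0$ identically in $\theta$. A union bound over the three families $\diamond$ and the three events then finishes the proof.

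The main obstacle is the variance input $s_2\lesssim R$, rather than the cruder $R^2$ that the truncated envelope alone would give. That input rests on the endpoint-integrated moment bound of Lemma~\ref{lem:one-task-dictionary}, which is already available. Beyond that, the delicate bookkeeping is making sure the exponential terms $R^3e^{-cn/R}$ and the union over $m$ tasks are absorbed uniformly by the single condition $n\ge C_{\rm res}RL_R(\delta)$. We would handle this by fixing $C_{\rm res}\ge C/c$ large enough that $cn/R\ge 4\log(C R^3 mn/\delta)$.
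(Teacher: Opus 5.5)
Your proposal is correct and follows the paper's proof essentially step by step: you truncate to the good-denominator event $E_R$, apply Lemma~\ref{lem:Lipschitz-Bernstein} with $b\lesssim R$, $s_2\lesssim R$, $L_K\lesssim R^3$, control the bad tasks through the global $CR^3$ envelope together with a union bound over the $m$ blocks, and add the finite-context bias from \eqref{eq:task-gradient-context-bias}. The only difference is cosmetic: your mesh $\eta=(R^2mn)^{-1}$ puts the net-extension error into the $R/n$ term, whereas the paper's mesh $\eta=(CmR^3)^{-1}$ produces an $O(1/m)$ term that is absorbed by $RL_R(\delta)/m$.
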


\begin{proof}[Proof of Theorem~\ref{thm:shellwise-directional-approximation}]
Fix $\diamond\in\{\mu,-,+\}$. By Lemma~\ref{lem:directional-regularity}, the truncated class satisfies
\begin{equation*}
    d=2,
    \qquad
    D_K\le CR,
    \qquad
    b=CR,
    \qquad
    s_2=C\left(
        R+\frac{R^2}{n}
    \right),
    \qquad
    L_K=CR^3.
\end{equation*}
The $O(R^{-1})$ net in Lemma~\ref{lem:uniform-good-event} is sufficient there because the quantities being extended have $O(1)$ Lipschitz constants. Here the scaled directional class has $L_K=O(R^3)$, so we choose the finer sample-dependent mesh
\begin{equation}
\label{eq:app-directional-net-mesh}
    \eta
    :=
    \frac1{CmR^3}.
\end{equation}
Then $L_K\eta \le CR^3\frac1{CmR^3} \le \frac C m.$ Moreover,
\begin{align*}
    L_\eta(\delta)
    &=
    2\log\left(
        1+\frac{CD_K}{\eta}
    \right)
    +
    \log\frac C\delta
    \\
    &\le
    C\log(CmR^4)
    +
    \log\frac C\delta
    \\
    &\le
    CL_R(\delta).
\end{align*}
Thus the net-extension contribution is $O(m^{-1})$, while the finer
covering changes only the logarithmic factor.

Under the context condition in Theorem~\ref{thm:shellwise-directional-approximation}, $n\ge CRL_R(\delta),$ so $s_2 \le CR.$ Lemma~\ref{lem:Lipschitz-Bernstein} therefore yields
\begin{align*}
    \sup_\theta
    |(P_m-P)\widetilde f_{\diamond,\theta}|
    &\le
    C\left\{
        \sqrt{\frac{RL_R(\delta)}{m}}
        +
        \frac{RL_R(\delta)}{m}
        +
        \frac1m
    \right\}
    \\
    &\le
    C\left\{
        \sqrt{\frac{RL_R(\delta)}{m}}
        +
        \frac{RL_R(\delta)}{m}
    \right\}.
\end{align*}
Applying this bound to the three classes with failure probability $\delta/9$ each and taking a union bound gives, with probability at least $1-\delta/3$,
\begin{equation}
\label{eq:truncated-centered-shell}
    \max_{\diamond\in\{\mu,-,+\}}
    \sup_\theta
    |(P_m-P)\widetilde f_{\diamond,\theta}|
    \le
    C\left\{
        \sqrt{\frac{RL_R(\delta)}{m}}
        +
        \frac{RL_R(\delta)}{m}
    \right\}.
\end{equation}

Set $p_R^{\rm bad}(n) := C\exp\left( -\frac{cn}{R} + C\log(CR) \right).$ By Lemma~\ref{lem:uniform-good-event}, $\Prob\left( \bigcup_{j=1}^m E_R(Z_j)^c \right) \le mp_R^{\rm bad}(n).$ The context condition ensures, after increasing its numerical constant if necessary,
$mp_R^{\rm bad}(n) \le \delta/3, \ R^3p_R^{\rm bad}(n) \le C R/n.$
On the event that all observed task blocks belong to $E_R$, $P_mf_{\diamond,\theta} = P_m\widetilde f_{\diamond,\theta},$ while \eqref{eq:directional-global-envelope} gives
$|P(f_{\diamond,\theta}-\widetilde f_{\diamond,\theta})|
\le CR^3p_R^{\rm bad}(n)\le CR/n$.
Together with \eqref{eq:truncated-centered-shell}, this controls the finite-task fluctuation $\widehat v_{m,n}-\bar v_n$ in all three directions.

It remains to add the finite-context bias. By \eqref{eq:task-gradient-context-bias},
$\|\bar v_n(\theta)-v(\theta)\| \le C\left( \frac1{\mu^2n} + e^{-cn/\mu} \right).$
Multiplying by $\mu^3$ in the radial direction and by $\mu^4\|\nabla h_\pm(\theta)\| \le C\mu^3$ in the barrier directions gives $C\left( \frac Rn + R^3e^{-cn/R} \right) \le C\frac Rn$ under the same context condition. The exceptional events have total probability at most $2\delta/3<\delta$, proving \eqref{eq:shellwise-directional-bound}.
\end{proof}

\begin{proposition}[Shellwise speed approximation]
\label{prop:shellwise-speed-approximation}
After increasing $C_{\rm res}$ if necessary, under the same assumptions as Theorem~\ref{thm:shellwise-directional-approximation}, with probability at least $1-\delta$,
\begin{equation}
\label{eq:shellwise-speed-bound}
    \mathcal E_{\rm sp}(R)
    \le
    C_{\rm res}\left\{
        \sqrt{\frac{RL_R(\delta)}{m}}
        +
        \frac{RL_R(\delta)}{m}
        +
        \frac{R}{n}
    \right\}.
\end{equation}
\end{proposition}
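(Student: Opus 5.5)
The speed error $\mathcal E_{\rm sp}(R)=\sup_{\theta\in\widetilde{\mathcal S}_R}\mu^3\|\widehat v_{m,n}(\theta)-v(\theta)\|$ is controlled by two scalar sup-norm statements, one per coordinate, since $\|\cdot\|\le|\cdot|_\lambda+|\cdot|_\mu$. The $\mu$-coordinate is exactly $\mathcal E_\mu(R)$, which is already bounded by Theorem~\ref{thm:shellwise-directional-approximation}. It therefore suffices to treat the $\lambda$-coordinate, via the class
\[
 f_{\lambda,\theta}(Z):=\mu^3 v_{n,\lambda}(\theta;Z),\qquad \theta\in\widetilde{\mathcal S}_R,
\]
and to rerun the proof of Theorem~\ref{thm:shellwise-directional-approximation} verbatim for it.

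The first step is to verify the four inputs of Lemma~\ref{lem:directional-regularity} for $f_{\lambda,\theta}$.
\begin{itemize}
\item The global envelope is at most $CR^3$, by Lemma~\ref{lem:task-gradient-regularity} and $\mu\asymp R$.
\item On $E_R(Z)$, Lemma~\ref{lem:uniform-good-event} gives $\|g_n\|\le C\mu^{-2}$, so the truncated envelope is at most $CR$.
\item For the second moment, decompose $g_n=g+(g_n-g)$. By \eqref{eq:task-gradient-population-L2} and \eqref{eq:stable-denominator-gradient-moment}, $\E\widetilde f_{\lambda,\theta}^2\le C\mu^6\{\mu^{-5}+(\mu^4n)^{-1}\}\le C(R+R^2/n)$.
\item For the Lipschitz constant, $\|\nabla\mu^3\|\le CR^2$ and the global bounds on $g_n$ and $Dg_n$ give a Lipschitz constant of order $CR^3$.
\end{itemize}

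These are the same parameters ($d=2$, $D_K\le CR$, $b=CR$, $s_2\le CR$ under $n\ge C_{\rm res}RL_R(\delta)$, $L_K=CR^3$) used in Theorem~\ref{thm:shellwise-directional-approximation}. So Lemma~\ref{lem:Lipschitz-Bernstein} with mesh $\eta=1/(CmR^3)$ gives the finite-task fluctuation bound $C\{\sqrt{RL_R/m}+RL_R/m\}$ for the truncated class. Passing from truncated to untruncated uses the same argument as before. On $\bigcap_jE_R(Z_j)$, which fails with probability at most $mp_R^{\rm bad}(n)$, the empirical averages coincide. The population mean changes by at most $CR^3p_R^{\rm bad}(n)\le CR/n$.

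The finite-context bias follows from \eqref{eq:task-gradient-context-bias}. Multiplying $\|\bar v_n-v\|\le C(\mu^{-2}n^{-1}+e^{-cn/\mu})$ by $\mu^3$ gives $C(R/n+R^3e^{-cn/R})\le CR/n$.

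To assemble, allocate failure probabilities across the two coordinates plus the good event so the total stays below $\delta$. Then enlarge $C_{\rm res}$, which is harmless since the logarithmic factor changes only by a constant.

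There is no real obstacle. The only point needing care is that the $\lambda$-direction has no extra $\mu^{-1}$ factor, unlike $\nabla h_\pm$. This is exactly why it is normalized by $\mu^3$ rather than $\mu^4$, and the envelope and variance bounds then match the radial case.
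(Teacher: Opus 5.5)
Your proposal is correct and matches the paper's proof: it uses the same coordinate classes $\mu^3[v_n(\theta;Z)]_r$, the same truncation, envelope, second-moment and Lipschitz inputs to Lemma~\ref{lem:Lipschitz-Bernstein} with mesh $(CmR^3)^{-1}$, the same $CR/n$ truncation and finite-context bias terms, and a union bound followed by conversion to the Euclidean norm. The only difference is cosmetic: you reuse the theorem's bound on $\mathcal E_\mu(R)$ for the $\mu$-coordinate instead of redoing it. This costs nothing beyond the failure-probability and $C_{\rm res}$ adjustment you already note.
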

\begin{proof}
For $r\in\{1,2\}$, define $f_{{\rm sp},r,\theta}(Z):=\mu^3[v_n(\theta;Z)]_r$ and $\widetilde f_{{\rm sp},r,\theta}:= f_{{\rm sp},r,\theta}\mathbf1_{E_R(Z)}$. The argument of Lemma~\ref{lem:directional-regularity} applies verbatim to these coordinate classes: on $\widetilde{\mathcal S}_R$,
\[
    |\widetilde f_{{\rm sp},r,\theta}|
    \le CR,
    \qquad
    \E\widetilde f_{{\rm sp},r,\theta}^2
    \le
    C\left(R+\frac{R^2}{n}\right),
\]
while the untruncated class is $CR^3$-Lipschitz in $\theta$. Applying Lemma~\ref{lem:Lipschitz-Bernstein} with the same mesh $\eta=(CmR^3)^{-1}$ as in
Theorem~\ref{thm:shellwise-directional-approximation} yields
\[
    \sup_{\theta\in\widetilde{\mathcal S}_R}
    |(P_m-P)\widetilde f_{{\rm sp},r,\theta}|
    \le
    C\left\{
        \sqrt{\frac{RL_R(\delta)}m}
        +
        \frac{RL_R(\delta)}m
    \right\}.
\]
The bad-task truncation and finite-context bias are exactly those in the proof of Theorem~\ref{thm:shellwise-directional-approximation} and contribute $CR/n$. A union bound over $r=1,2$ and conversion to the Euclidean norm prove \eqref{eq:shellwise-speed-bound}.
\end{proof}

\section{Proof of the Finite-Pretraining Theorem}
\label{sec:empirical-pretraining}
\label{app:empirical-gd-proofs}

The empirical-process section already defined the one-task gradients,
the vector fields $v$, $\bar v_n$, and $\widehat v_{m,n}$, and the core
and enlarged shells. We now analyze empirical GD,
\begin{equation}
\label{eq:empirical-gd}
    \widehat\theta_{s+1}^{\gamma}
    =
    \widehat\theta_s^{\gamma}
    +
    \gamma\widehat v_{m,n}(\widehat\theta_s^{\gamma}),
    \qquad
    \widehat\theta_0^\gamma=\theta_0^\gamma.
\end{equation}

\subsection{Two-stage perturbation principle}
\label{subsec:two-stage-perturbation}

The proof has two stages, with population GD as the reference dynamics.
Before capture, empirical GD tracks population GD on a fixed compact set
for a fixed physical time using compact Dudley approximation and discrete
Gr\"onwall control. After capture, trajectory tracking is replaced by
scale-adapted shell estimates that preserve the radial drift, the two
barrier margins, and the speed bound required for discrete no-overshoot.

\subsection{Regime I: compact tracking of population GD}
\label{subsec:empirical-gd-compact}
\label{appsubsec:empirical-gd-compact-proof}

Let $s_0=s_0(\gamma)$ be the deterministic population-GD entry index from Section~\ref{sec:finite-step-gd}. Reduce the population learning-rate threshold if necessary and fix $\gamma_0>0$ so that \eqref{eq:population-gd-shadowing-bound} and \eqref{eq:population-gd-entry} hold for every $0<\gamma\le\gamma_0$. Choose one deterministic compact set $K_0$ containing, with positive clearance, every $\theta_s^\gamma$ with $0\le s\le s_0(\gamma)$ and $0<\gamma\le\gamma_0$. Such a common compact set exists by uniform finite-horizon shadowing on $[0,T_0+\tau_0]$. Let $L_0$ be a Lipschitz constant of $v$ on $K_0$ and $L_{h,0}$ a common Lipschitz constant of $h_-$ and $h_+$ there.

Define the uniform compact and entry clearances
\begin{align*}
    d_{K,0}^{\rm gd}
    &:={}
    \inf_{\substack{0<\gamma\le\gamma_0\\0\le s\le s_0(\gamma)}}
    \operatorname{dist}(\theta_s^\gamma,\partial K_0),\\
    d_{\mu,0}^{\rm gd}
    &:={}
    \inf_{0<\gamma\le\gamma_0}
    \{\mu_{s_0(\gamma)}^\gamma-\mu_*\},\\
    d_{-,0}^{\rm gd}
    &:={}
    \inf_{0<\gamma\le\gamma_0}
    h_-(\theta_{s_0(\gamma)}^\gamma),\\
    d_{+,0}^{\rm gd}
    &:={}
    \inf_{0<\gamma\le\gamma_0}
    h_+(\theta_{s_0(\gamma)}^\gamma).
\end{align*}
By the strict post-capture population-flow margin and the uniform Euler shadowing, all four quantities are strictly positive after reducing $\gamma_0$ if necessary. Set
\begin{equation}
\label{eq:empirical-gd-entry-margin}
    d_{\rm ent}^{\rm gd}
    :=
    \min\left\{
        d_{K,0}^{\rm gd},
        d_{\mu,0}^{\rm gd},
        \frac{d_{-,0}^{\rm gd}}{L_{h,0}},
        \frac{d_{+,0}^{\rm gd}}{L_{h,0}}
    \right\}.
\end{equation}

\begin{lemma}[Discrete compact tracking]
\label{lem:empirical-gd-compact-tracking}
Let $K\subset\mathbb R^2$ be compact and suppose the population and empirical GD sequences start from the same point. Assume $v$ is $L$-Lipschitz on $K$ and
\begin{equation*}
    \sup_{\theta\in K}
    \|\widehat v_{m,n}(\theta)-v(\theta)\|
    \le
    \varepsilon.
\end{equation*}
Then, before either sequence leaves $K$,
\begin{equation*}
    \|\widehat\theta_s^\gamma-\theta_s^\gamma\|
    \le
    \varepsilon
    \begin{cases}
    \dfrac{(1+\gamma L)^s-1}{L},&L>0,\\[2mm]
    s\gamma,&L=0.
    \end{cases}
\end{equation*}
If this upper bound is strictly smaller than the clearance of the reference sequence from $\partial K$, then neither sequence exits $K$ over the stated index range.
\end{lemma}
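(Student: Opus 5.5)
The plan is a discrete Gr\"onwall argument on the error $e_s:=\|\widehat\theta_s^\gamma-\theta_s^\gamma\|$, mirroring the proof of Lemma~\ref{lem:population-gd-shadowing}, except that the local truncation error is now the vector-field perturbation $\varepsilon$ rather than an $O(\gamma^2)$ Taylor remainder. Fix an index $s$ such that both $\widehat\theta_j^\gamma$ and $\theta_j^\gamma$ lie in $K$ for all $j\le s$. Subtract the two update rules $\widehat\theta_{s+1}^\gamma=\widehat\theta_s^\gamma+\gamma\widehat v_{m,n}(\widehat\theta_s^\gamma)$ and $\theta_{s+1}^\gamma=\theta_s^\gamma+\gamma v(\theta_s^\gamma)$. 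Then insert and subtract $\gamma v(\widehat\theta_s^\gamma)$ and apply the triangle inequality:
\[
 e_{s+1}\le e_s+\gamma\|\widehat v_{m,n}(\widehat\theta_s^\gamma)-v(\widehat\theta_s^\gamma)\|+\gamma\|v(\widehat\theta_s^\gamma)-v(\theta_s^\gamma)\|\le(1+\gamma L)e_s+\gamma\varepsilon .
\]
This step uses the uniform perturbation bound and the Lipschitz property of $v$. Both are invoked only at points of $K$, which is why the iterates must stay in $K$.

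Unroll the recursion from $e_0=0$. This gives $e_s\le\gamma\varepsilon\sum_{j=0}^{s-1}(1+\gamma L)^j$. When $L>0$ the geometric sum equals $\{(1+\gamma L)^s-1\}/(\gamma L)$, which produces the stated bound. When $L=0$ the sum is $s$, which gives $s\gamma\varepsilon$.

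For the non-exit claim, I would run a stopped induction over the index range in question, using the bound $\beta_s$ just derived. Let $d$ be the infimum over that range of the distance from the reference iterates $\theta_j^\gamma$ to $\partial K$, and assume $\beta_s<d$ on the whole range. The hypothesis already gives $\theta_j^\gamma\in K$ for the reference sequence, so only the empirical sequence needs checking. Suppose $\widehat\theta_j^\gamma\in K$ for all $j\le s$. The recursion then yields $e_{s+1}\le\beta_{s+1}<d$. This forces $\widehat\theta_{s+1}^\gamma$ into the open $d$-neighbourhood of $\theta_{s+1}^\gamma$, which is contained in $K$. By induction, neither sequence leaves $K$, and the bound holds throughout.

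The only delicate point is this circularity: the perturbation and Lipschitz estimates are valid only inside $K$, yet membership in $K$ is itself what is being proved. The resolution is to verify membership one step ahead, at index $s+1$, using information available at index $s$. Since the error bound at $s+1$ depends only on the iterates up to $s$, the induction closes without any continuity argument.
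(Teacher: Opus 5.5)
Your proposal is correct and follows the paper's proof essentially step for step. You use the same one-step recursion $e_{s+1}\le(1+\gamma L)e_s+\gamma\varepsilon$, obtained by inserting $v(\widehat\theta_s^\gamma)$ and unrolled from $e_0=0$, and your one-step-ahead induction, which places $\widehat\theta_{s+1}^\gamma$ inside the clearance ball of $\theta_{s+1}^\gamma$, is an explicit version of the ``standard stopped argument'' at the first exit time that the paper invokes.
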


\begin{proof}
Set $e_s:=\|\widehat\theta_s^\gamma-\theta_s^\gamma\|.$ Then, before exit,
\begin{align*}
    e_{s+1}
    &\le
    e_s
    +
    \gamma
    \|\widehat v_{m,n}(\widehat\theta_s^\gamma)-v(\theta_s^\gamma)\|\\
    &\le
    (1+\gamma L)e_s+\gamma\varepsilon.
\end{align*}
Iterating from $e_0=0$ proves the displayed bound. The final statement follows by the standard stopped argument: at a first exit time the empirical iterate would have to be at least the reference clearance away from the corresponding population iterate, contradicting the tracking bound.
\end{proof}

Applying Lemma~\ref{lem:empirical-gd-compact-tracking} on $K_0$, define
\[
    \Delta_{0,\gamma}
    :=
    \begin{cases}
    \dfrac{(1+\gamma L_0)^{s_0}-1}{L_0},&L_0>0,\\[2mm]
    s_0\gamma,&L_0=0.
    \end{cases}
\]
Since $s_0\gamma\le T_0+\gamma$, the deterministic constant
$\Delta_0^{\rm gd}:=\sup_{0<\gamma\le\gamma_0}\Delta_{0,\gamma}$ is finite.
Set
\[
    \varepsilon_{\rm ent}^{\rm gd}
    :=
    \frac{d_{\rm ent}^{\rm gd}}{2\Delta_0^{\rm gd}}.
\]
If
$\sup_{\vartheta\in K_0}\|\widehat v_{m,n}(\vartheta)-v(\vartheta)\|
\le\varepsilon_{\rm ent}^{\rm gd}$,
the stopped argument prevents exit from $K_0$ and yields
$\widehat\theta_{s_0}^\gamma\in\operatorname{int}(\mathcal G)$.

For later uniform radial bounds, define
\begin{equation}
\label{eq:population-gd-entry-radius-bounds}
    \underline\mu_0^{\rm gd}
    :=
    \inf_{0<\gamma\le\gamma_0}
    \mu_{s_0(\gamma)}^\gamma,
    \qquad
    \overline\mu_0^{\rm gd}
    :=
    \sup_{0<\gamma\le\gamma_0}
    \mu_{s_0(\gamma)}^\gamma.
\end{equation}
Then
$\mu_*<\underline\mu_0^{\rm gd} \le \overline\mu_0^{\rm gd}<\infty.$

\subsection{Regime II: discrete continuation with overlapping statistical shells}
\label{subsec:empirical-gd-shells}
\label{appsubsec:empirical-gd-one-shell}
Recall the core shell $\mathcal S_R$, its enlarged statistical shell
$\widetilde{\mathcal S}_R$, and the normalized errors
$\mathcal E_\mu(R)$, $\mathcal E_-(R)$, $\mathcal E_+(R)$, and
$\mathcal E_{\rm sp}(R)$ from Appendix~\ref{app:empirical-technical-estimates-auxiliary}.
The core shell governs the dyadic continuation, whereas concentration is
proved on the enlargement so that the complete landing step and any
potential side-rail crossing point are covered by the same statistical
event.

Applying Theorem~\ref{thm:shellwise-directional-approximation} and
Proposition~\ref{prop:shellwise-speed-approximation} with failure probability
$\delta/2$ each, taking a union bound, and absorbing the resulting constant
inside the logarithm into $C_{\rm res}$ gives, with probability at least
$1-\delta$,
\begin{equation}
\label{eq:speed-shell-rate}
    \max\{
        \mathcal E_\mu(R),
        \mathcal E_-(R),
        \mathcal E_+(R),
        \mathcal E_{\rm sp}(R)
    \}
    \le
    C_{\rm res}\left\{
        \sqrt{\frac{RL_R(\delta)}{m}}
        +
        \frac{RL_R(\delta)}{m}
        +
        \frac{R}{n}
    \right\},
\end{equation}
where $L_R(\delta) := \log\left( \frac{C_{\rm res}(1+R)mn}{\delta} \right).$ Thus the enlarged shell needed for discrete handoff does not change the statistical order of the shell resolution.

Let $a_->0$ and $a_+>0$ denote the population inward-margin constants
from Proposition~\ref{prop:log-lower-rail-invariance} and
Corollary~\ref{cor:upper-moving-boundary-correction}, respectively. Thus,
after increasing $\mu_*$ if necessary,
\[
    \langle\nabla h_\pm(\theta),v(\theta)\rangle
    \ge
    \frac{a_\pm}{\mu^4}
    \qquad\text{on }h_\pm(\theta)=0.
\]
Set
\begin{equation*}
    c_{\rm dir}
    :=
    \frac14\min\{c_\mu,a_-,a_+\},
    \qquad
    \overline c_\mu
    :=
    C_\mu+c_{\rm dir},
\end{equation*}
and fix $C_{\rm sp,0}>0$. On a shell event satisfying
\begin{equation}
\label{eq:empirical-gd-shell-smallness}
    \mathcal E_\mu(R),
    \mathcal E_-(R),
    \mathcal E_+(R)
    \le
    c_{\rm dir},
    \qquad
    \mathcal E_{\rm sp}(R)
    \le
    C_{\rm sp,0},
\end{equation}
the empirical field inherits, throughout the enlarged shell where relevant,
\begin{equation}
\label{eq:empirical-gd-radial-margin}
    \frac{c_\mu}{2\mu^3}
    \le
    [\widehat v_{m,n}(\theta)]_\mu
    \le
    \frac{\overline c_\mu}{\mu^3},
\end{equation}
the side-rail margins
\begin{equation}
\label{eq:empirical-gd-barrier-margin}
    \langle\nabla h_\pm(\theta),
    \widehat v_{m,n}(\theta)\rangle
    \ge
    \frac{a_\pm}{2\mu^4}
    \quad
    \text{on }\partial_\pm\widetilde{\mathcal S}_R,
\end{equation}
and, with
$C_{\rm step}:=C_{\rm sp}+C_{\rm sp,0},$
the speed bound
\begin{equation}
\label{eq:empirical-gd-speed-bound}
    \|\widehat v_{m,n}(\theta)\|
    \le
    \frac{C_{\rm step}}{\mu^3}.
\end{equation}

\begin{proposition}[Discrete empirical continuation through one shell]
\label{prop:empirical-gd-one-shell}
There exists a fixed $\gamma_{\rm shell}>0$ such that the following holds. Suppose the enlarged-shell event \eqref{eq:empirical-gd-shell-smallness} holds at scale $R\ge\mu_*$. If
$0<\gamma\le\gamma_{\rm shell}, \ \widehat\theta_{s_*}^\gamma\in\mathcal S_R,$
and $\tau_R := \inf\{s\ge s_*:\widehat\mu_s^\gamma\ge2R\},$ then every iterate with $s_*\le s\le\tau_R$ belongs to $\mathcal G$. Moreover,
\begin{equation}
\label{eq:empirical-gd-one-shell-growth}
    (\widehat\mu_s^\gamma)^4
    \ge
    (\widehat\mu_{s_*}^\gamma)^4
    +
    2c_\mu\gamma(s-s_*),
    \qquad
    s_*\le s\le\tau_R,
\end{equation}
and for a fixed constant $c_+^{\rm gd}>0$,
\begin{equation}
\label{eq:empirical-gd-one-shell-upper}
    (\widehat\mu_s^\gamma)^4
    \le
    (\widehat\mu_{s_*}^\gamma)^4
    +
    c_+^{\rm gd}\gamma(s-s_*),
    \qquad
    s_*\le s\le\tau_R.
\end{equation}
Moreover, $\tau_R<\infty$; more precisely,
\begin{equation*}
    \tau_R
    \le
    s_*
    +
    \left\lceil
        \frac{(2R)^4-(\widehat\mu_{s_*}^\gamma)^4}
        {2c_\mu\gamma}
    \right\rceil
    +1.
\end{equation*}
The landing iterate satisfies
\begin{equation}
\label{eq:empirical-gd-shell-handoff}
    \widehat\theta_{\tau_R}^\gamma
    \in
    \mathcal S_{2R};
\end{equation}
in particular,
$2R \le \widehat\mu_{\tau_R}^\gamma < 4R.$
\end{proposition}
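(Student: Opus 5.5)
The proof is an induction over the iteration index $s\ge s_*$. The induction hypothesis is that, for every $s_*\le s'\le s<\tau_R$, the iterate $\widehat\theta_{s'}^\gamma$ lies in $\mathcal G\cap\{\widehat\mu\ge\widehat\mu_{s_*}^\gamma\}$ and has $\widehat\mu_{s'}^\gamma<2R$. Since $\widehat\mu_{s_*}^\gamma\ge R$, every such iterate lies in the core shell $\mathcal S_R$. The empirical field $w:=\widehat v_{m,n}$ therefore satisfies the radial bounds \eqref{eq:empirical-gd-radial-margin} and the speed bound \eqref{eq:empirical-gd-speed-bound} at the current iterate, because $\mathcal S_R\subset\widetilde{\mathcal S}_R$ and the shell event \eqref{eq:empirical-gd-shell-smallness} holds.

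To show that the step to $\widehat\theta_{s+1}^\gamma$ stays in $\mathcal G$, I will use Lemma~\ref{lem:discrete-no-overshoot} with $w=\widehat v_{m,n}$. That lemma needs two inputs.
\begin{enumerate}
\item An inward margin on the rails. This is \eqref{eq:empirical-gd-barrier-margin} on $\partial_\pm\widetilde{\mathcal S}_R$. The crossing point $\zeta$ in that lemma lies within $O(\gamma R^{-3})$ of a point of $\mathcal S_R$, so $\mu(\zeta)\in[R/2,4R]$ for small $\gamma$. Hence $\zeta\in\partial_\pm\widetilde{\mathcal S}_R$, which is exactly where the margin is available.
\item The normalized Lipschitz property of $\Psi_{\pm,w}$. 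Write $\Psi_{\pm,\widehat v}=\Psi_{\pm,v}+\mu^4\langle\nabla h_\pm,\widehat v-v\rangle$. The first term is $CR^3$-Lipschitz by Lemmas~\ref{lem:normalized-directional-regularity} and~\ref{lem:population-vector-field-global-regularity}. For the second term, $\widehat v$ is an average of the one-task fields, each globally Lipschitz by Lemma~\ref{lem:task-gradient-regularity}, so the same product-rule computation gives $CR^3$. An equivalent route is to bound $\Psi_{-,w}(\theta)$ from below directly: use $\Psi_{-,w}(\zeta)\ge a_-/2$ and the Lipschitz constant times the distance $C\gamma R^{-3}$, which gives an $O(\gamma)$ loss.
\end{enumerate}
With both inputs, the no-overshoot conclusion holds for $\gamma\le\gamma_{\rm shell}$, where $\gamma_{\rm shell}$ depends only on $c_\mu,a_\pm,C_{\rm step},\mu_*$. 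The radial direction is safe because the $\mu$-update is exact, $\widehat\mu_{s+1}-\widehat\mu_s=\gamma[\widehat v]_\mu\ge c_\mu\gamma/(2\mu^3)>0$, so $\mu$ increases and stays above $\mu_*$. This closes the induction up to and including the step that lands at $\tau_R$.

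The growth bounds follow as in Proposition~\ref{prop:population-gd-reference}. For the lower bound, convexity gives $(\widehat\mu_{s+1})^4\ge\widehat\mu_s^4+4\widehat\mu_s^3\cdot\gamma c_\mu/(2\widehat\mu_s^3)=\widehat\mu_s^4+2c_\mu\gamma$; summing yields \eqref{eq:empirical-gd-one-shell-growth}. For the upper bound, the binomial expansion with $\delta_s\le\overline c_\mu\gamma/\widehat\mu_s^3$ and $\widehat\mu_s\ge\mu_*$ gives an increment of at most $4\overline c_\mu\gamma+C\gamma^2\le c_+^{\rm gd}\gamma$, which yields \eqref{eq:empirical-gd-one-shell-upper}.

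The lower growth bound also makes $\tau_R$ finite with the stated explicit bound. For the handoff, $\widehat\mu_{\tau_R}\ge 2R$ by the definition of $\tau_R$. The previous iterate has $\widehat\mu<2R$, and one step increases $\mu$ by at most $C_{\rm step}\gamma/\mu^3\le C\gamma/R^3<2R$. Hence $\widehat\mu_{\tau_R}<4R$, and together with membership in $\mathcal G$ this gives $\widehat\theta_{\tau_R}^\gamma\in\mathcal S_{2R}$.

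The main obstacle is verifying the hypotheses of Lemma~\ref{lem:discrete-no-overshoot} for the random field on the enlarged region. The delicate points are that the crossing point must lie in $\widetilde{\mathcal S}_R$, where the margin estimates hold, rather than merely in $\mathcal U_R$, and that the Lipschitz constant of $\mu^4\langle\nabla h,\widehat v-v\rangle$ must not degrade beyond $CR^3$. I would handle the first by the step-length argument above and the second by the global Lipschitz bound on one-task gradients.
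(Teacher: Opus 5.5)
Your proposal is correct and follows essentially the same route as the paper. The paper likewise applies Lemma~\ref{lem:discrete-no-overshoot} with $w=\widehat v_{m,n}$, using the enlarged-shell inward margins at the crossing point and the global bounds of Lemma~\ref{lem:task-gradient-regularity} for normalized directional regularity, then convexity and the binomial expansion for the two fourth-power bounds and the step-length bound for the handoff into $\mathcal S_{2R}$. The only cosmetic difference is that the paper applies Lemma~\ref{lem:normalized-directional-regularity} directly to $\widehat v_{m,n}$ rather than splitting off $\Psi_{\pm,v}$, which amounts to the same estimate.
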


\begin{proof}
On \eqref{eq:empirical-gd-shell-smallness}, equations \eqref{eq:empirical-gd-radial-margin}--\eqref{eq:empirical-gd-speed-bound} hold throughout the enlarged shell. For fixed pretraining data, Lemma~\ref{lem:task-gradient-regularity} makes $\widehat v_{m,n}$ globally bounded and globally Lipschitz with universal constants. Hence Lemma~\ref{lem:normalized-directional-regularity} applies on the two-sided enlargement $\mathcal U_R$ from Lemma~\ref{lem:discrete-no-overshoot}. Every side-rail point in $\mathcal U_R$ has $R\le\mu\le4R$ and lies in $\partial_\pm\widetilde{\mathcal S}_R$, so the empirical inward margins are available on the entire rail portion required by the no-overshoot argument.

Consider an iterate before $\tau_R$. It lies radially in the core range $R\le\widehat\mu_s^\gamma<2R$. The speed bound gives a step length $O(\gamma R^{-3})$. After reducing $\gamma_{\rm shell}$, the entire segment joining $\widehat\theta_s^\gamma$ to $\widehat\theta_{s+1}^\gamma$ has radial coordinate in $[R/2,4R]$. If the segment attempted to cross a side rail, the crossing point would therefore lie in $\partial_\pm\widetilde{\mathcal S}_R$, where the empirical inward margin \eqref{eq:empirical-gd-barrier-margin} is available. The no-overshoot Lemma~\ref{lem:discrete-no-overshoot} then rules out such a crossing. Thus tube membership is preserved, including on the landing step.

The radial update satisfies
\begin{equation*}
    \widehat\mu_{s+1}^\gamma-\widehat\mu_s^\gamma
    =
    \gamma[\widehat v_{m,n}(\widehat\theta_s^\gamma)]_\mu
    \ge
    \frac{c_\mu\gamma}{2(\widehat\mu_s^\gamma)^3}.
\end{equation*}
Convexity of $x^4$ gives $(\widehat\mu_{s+1}^\gamma)^4 -(\widehat\mu_s^\gamma)^4 \ge 2c_\mu\gamma,$ which proves \eqref{eq:empirical-gd-one-shell-growth}. Similarly, from
\begin{equation*}
    0\le
    \widehat\mu_{s+1}^\gamma-\widehat\mu_s^\gamma
    \le
    \frac{\overline c_\mu\gamma}{(\widehat\mu_s^\gamma)^3},
\end{equation*}
expanding the fourth power and using $\widehat\mu_s^\gamma\ge\mu_*$ yields \eqref{eq:empirical-gd-one-shell-upper} for a fixed $c_+^{\rm gd}$.

If $\tau_R$ were infinite, \eqref{eq:empirical-gd-one-shell-growth} would hold for every $s\ge s_*$ and would eventually force $(\widehat\mu_s^\gamma)^4>(2R)^4$, contradicting the definition of $\tau_R$. Solving \eqref{eq:empirical-gd-one-shell-growth} for the first index at which $\widehat\mu_s^\gamma\ge2R$ gives the stated upper bound on $\tau_R$.

If $\tau_R=s_*$, then $\widehat\mu_{s_*}^\gamma=2R$ because the starting point lies in $\mathcal S_R$, and the handoff conclusion is immediate. Assume therefore that $\tau_R>s_*$. At the landing time, $\widehat\mu_{\tau_R-1}^\gamma<2R, \ \widehat\mu_{\tau_R}^\gamma\ge2R.$ The upper speed bound gives $\widehat\mu_{\tau_R}^\gamma < 2R+\frac{C\gamma}{R^3} < 4R$ for sufficiently small fixed $\gamma$. The preceding no-overshoot argument also gives $\widehat\theta_{\tau_R}^\gamma\in\mathcal G$. Therefore
\begin{equation*}
    \widehat\theta_{\tau_R}^\gamma
    \in
    \mathcal G\cap\{2R\le\mu<4R\}
    \subseteq
    \mathcal S_{2R},
\end{equation*}
which proves \eqref{eq:empirical-gd-shell-handoff}.
\end{proof}

\subsection{Statistical radius and a sufficient pretraining regime}
\label{subsec:one-boundary-empirical-scales}
\label{appsubsec:empirical-gd-resolution}

Choose $R_{\rm cmp}\ge1$ with $K_0\subset[0,R_{\rm cmp}]^2$. Fix $C_{\rm cmp}\ge1$ sufficiently large and $c_{\rm cmp}>0$ sufficiently small, and define
\begin{equation}
\label{eq:main-entry-failure-probability}
    p_{\rm ent}(m,n)
    :=
    C_{\rm cmp}
    \exp\{-c_{\rm cmp}m(\varepsilon_{\rm ent}^{\rm gd})^2\}.
\end{equation}
Set
\begin{equation*}
    \varepsilon_{\rm cmp}(m,n)
    :=
    \varepsilon_{m,n}(R_{\rm cmp},p_{\rm ent}(m,n)),
    \qquad
    e_{\rm cmp}^{\rm gd}(m,n)
    :=
    \Delta_0^{\rm gd}\varepsilon_{\rm cmp}(m,n),
\end{equation*}
and
\begin{equation}
\label{eq:empirical-gd-entry-radii}
    r_-^{\rm gd}
    :=
    \underline\mu_0^{\rm gd}-e_{\rm cmp}^{\rm gd}(m,n),
    \qquad
    r_+^{\rm gd}
    :=
    \overline\mu_0^{\rm gd}+e_{\rm cmp}^{\rm gd}(m,n).
\end{equation}
Let $n_{\rm eff}:=m\wedge n$ and fix $C_{\rm stat}>1$.
Choose the shell constants $C_{\rm shell}\ge1$ and $c_{\rm shell}>0$
from the concentration bounds. Then choose $c_{{\rm stat},0}>0$
sufficiently small, depending only on the fixed population and
concentration constants and on the enlargement factors in
\eqref{eq:empirical-enlarged-shell}, and fix
\begin{equation}
\label{eq:main-cstat-choice-gd}
    0<c_{\rm stat}\le c_{{\rm stat},0}.
\end{equation}
Define $\overline M_{\rm stat}(m,n) := c_{\rm stat} n_{\rm eff}/{\log(C_{\rm stat}mn)}.$ Whenever $\overline M_{\rm stat}\ge2\mu_*$, let $J_{\rm stat}$ be the largest integer satisfying
$2R_{J_{\rm stat}} \le \overline M_{\rm stat}, \ R_j:=2^j\mu_*,$
and set
$M_{\rm stat}:=2R_{J_{\rm stat}}.$
Then
\begin{equation}
\label{eq:stat-radius-equivalence}
    \frac12\overline M_{\rm stat}
    <
    M_{\rm stat}
    \le
    \overline M_{\rm stat}.
\end{equation}

For $0\le j\le J_{\rm stat}$ define
\begin{equation}
\label{eq:main-shell-failure-probability}
    p_j(m,n)
    :=
    C_{\rm shell}(1+R_j)mn
    \exp\left\{
        -c_{\rm shell}\frac{n_{\rm eff}}{R_j}
    \right\},
\end{equation}
and
\begin{equation}
\label{eq:main-training-failure-probability}
    p_{\rm train}^{\rm gd}(m,n)
    :=
    p_{\rm ent}(m,n)
    +
    \sum_{j=0}^{J_{\rm stat}}p_j(m,n).
\end{equation}
For the fixed sufficiently small choice of $c_{\rm stat}$, there exist fixed constants $C_{\rm tr},c_{\rm tr},c_{\rm prob},q_{\rm prob}>0$ and deterministic thresholds $m_{\rm prob},n_{\rm prob}$ such that
\begin{equation}
\label{eq:main-training-failure-simplified}
    p_{\rm train}^{\rm gd}(m,n)
    \le
    C_{\rm tr}
    e^{-c_{\rm tr}m(\varepsilon_{\rm ent}^{\rm gd})^2}
    +
    C_{\rm tr}(mn)^{-c_{\rm prob}}
\end{equation}
whenever
\begin{equation*}
    m\ge m_{\rm prob},
    \qquad
    n\ge n_{\rm prob},
    \qquad
    \frac{m\wedge n}{\log(C_{\rm stat}mn)}\ge q_{\rm prob}.
\end{equation*}

\begin{proposition}[Simple sufficient empirical-GD pretraining regime]
\label{prop:main-simple-pretraining-regime-gd}
Fix the population initialization and the constants above. There exist deterministic constants $m_0,n_0,q_0>0$ such that, whenever
\begin{equation}
\label{eq:main-simple-pretraining-regime-gd}
    m\ge m_0,
    \qquad
    n\ge n_0,
    \qquad
    A_{m,n}
    :=
    \frac{m\wedge n}{\log(C_{\rm stat}mn)}
    \ge q_0,
\end{equation}
one has
\begin{equation*}
    p_{\rm ent}(m,n)\in(0,1),
    \qquad
    e_{\rm cmp}^{\rm gd}(m,n)
    \le
    \frac{d_{\rm ent}^{\rm gd}}2,
    \qquad
    r_-^{\rm gd}>\mu_*,
\end{equation*}
and
$M_{\rm stat}>2r_+^{\rm gd}, \ p_{\rm train}^{\rm gd}(m,n)\le\frac12.$
Moreover, for every certified shell $0\le j\le J_{\rm stat}$,
$p_j(m,n)\in(0,1), \ n\ge C_{\rm res}R_jL_{R_j}(p_j/2),$
and
\begin{equation*}
    C_{\rm res}\left\{
        \sqrt{\frac{R_jL_{R_j}(p_j/2)}{m}}
        +
        \frac{R_jL_{R_j}(p_j/2)}{m}
        +
        \frac{R_j}{n}
    \right\}
    \le
    \min\{c_{\rm dir},C_{\rm sp,0}\}.
\end{equation*}
\end{proposition}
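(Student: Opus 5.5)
The plan is to verify each listed conclusion separately, taking $m_0,n_0,q_0$ as the maximum of finitely many thresholds. The only structural inputs are the definitions of $p_{\rm ent}$, $\varepsilon_{\rm cmp}$, $r_\pm^{\rm gd}$, $M_{\rm stat}$, and $p_j$, together with the two-sided comparison \eqref{eq:stat-radius-equivalence}. Throughout I use $n_{\rm eff}=m\wedge n$ and the fact that $A_{m,n}\ge q_0$ forces both $m$ and $n$ to be large compared with $\log(C_{\rm stat}mn)$.

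\emph{Entry quantities.} Since $\varepsilon_{\rm ent}^{\rm gd}>0$ is deterministic, taking $m_0$ large gives $p_{\rm ent}\in(0,1)$. Next I substitute $\delta=p_{\rm ent}$ into \eqref{eq:dudley-compact-vector-field-error} with $R=R_{\rm cmp}$ fixed. Then $\log(C(1+R_{\rm cmp})/p_{\rm ent})\le C+c_{\rm cmp}m(\varepsilon_{\rm ent}^{\rm gd})^2$, so the task term is at most $C\sqrt{C/m}+C\sqrt{c_{\rm cmp}}\,\varepsilon_{\rm ent}^{\rm gd}$. The context term is $O_{R_{\rm cmp}}(n^{-1/2})$. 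Choosing $c_{\rm cmp}$ small (the paper allows this) and $m_0,n_0$ large makes $\varepsilon_{\rm cmp}\le\varepsilon_{\rm ent}^{\rm gd}$, hence $e_{\rm cmp}^{\rm gd}\le d_{\rm ent}^{\rm gd}/2$. Because $d_{\rm ent}^{\rm gd}\le d_{\mu,0}^{\rm gd}\le\underline\mu_0^{\rm gd}-\mu_*$, this yields $r_-^{\rm gd}>\mu_*$. The same bound gives $r_+^{\rm gd}\le\overline\mu_0^{\rm gd}+d_{\rm ent}^{\rm gd}/2$, a fixed constant. By \eqref{eq:stat-radius-equivalence}, $M_{\rm stat}>\overline M_{\rm stat}/2\ge c_{\rm stat}q_0/2$ (up to the $C_{\rm stat}$ normalization), so $q_0$ large gives $M_{\rm stat}>2r_+^{\rm gd}$ and also $\overline M_{\rm stat}\ge2\mu_*$, which makes $J_{\rm stat}$ well defined.

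\emph{Shell conditions.} For $j\le J_{\rm stat}$ we have $R_j\le M_{\rm stat}/2\le c_{\rm stat}n_{\rm eff}/(2\log(C_{\rm stat}mn))$. Hence $n_{\rm eff}/R_j\ge(2/c_{\rm stat})\log(C_{\rm stat}mn)$, and therefore
\[
p_j\le C_{\rm shell}(1+R_j)mn\,(C_{\rm stat}mn)^{-2c_{\rm shell}/c_{\rm stat}}.
\]
Using $R_j\le mn$ and taking $c_{\rm stat}\le c_{{\rm stat},0}$ small enough that $2c_{\rm shell}/c_{\rm stat}\ge3+c_{\rm prob}$, this is at most $C(mn)^{-1-c_{\rm prob}}$. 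This gives $p_j<1$. Summing over the $J_{\rm stat}+1=O(\log(mn))$ shells and adding $p_{\rm ent}$ gives \eqref{eq:main-training-failure-simplified}, and then $p_{\rm train}^{\rm gd}\le1/2$ for large $m_0,n_0$.

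For the context condition, note that $p_j\ge(mn)^{-C}$ polynomially (I would choose the constant in $p_j$ so this holds, or else lower-bound $p_j$ through $R_j\le n_{\rm eff}$). Then $L_{R_j}(p_j/2)\le C'\log(C_{\rm stat}mn)$. So $C_{\rm res}R_jL_{R_j}\le C_{\rm res}C'c_{\rm stat}\,n_{\rm eff}\le n$ once $c_{\rm stat}$ is small. The same substitution bounds the shell error by $C_{\rm res}\{\sqrt{C'c_{\rm stat}}+C'c_{\rm stat}+c_{\rm stat}\}$, since $R_j/n\le c_{\rm stat}$. Shrinking $c_{{\rm stat},0}$ makes this at most $\min\{c_{\rm dir},C_{\rm sp,0}\}$.

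\emph{Main obstacle.} The difficulty is the logarithm $L_{R_j}(p_j/2)$, which itself contains $\log(1/p_j)$. This is circular: the tail exponent $c_{\rm shell}n_{\rm eff}/R_j$ must be large enough to make $p_j$ summable, yet $\log(1/p_j)$ must stay $O(\log(mn))$ so that the error bound remains below $c_{\rm dir}$. If $p_j$ were extremely small, $\log(1/p_j)\approx c_{\rm shell}n_{\rm eff}/R_j$, and then the term $R_jL/m$ would be of order $c_{\rm shell}$ rather than small. I would resolve this by capping $\delta_j:=\max\{p_j,(mn)^{-C_0}\}$ when applying Theorem~\ref{thm:shellwise-directional-approximation}, which keeps $L=O(\log(mn))$. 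Alternatively, I would check directly that $R_j\cdot(c_{\rm shell}n_{\rm eff}/R_j)/m\le c_{\rm shell}$, and make $c_{\rm shell}$ small relative to $c_{\rm dir}$ by slackening the Bernstein tail. I would try the capping route first.
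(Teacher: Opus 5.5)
\textbf{The gap is in the shell conditions.} Your handling of the entry quantities matches the paper: $p_{\rm ent}\in(0,1)$, $e_{\rm cmp}^{\rm gd}\le d_{\rm ent}^{\rm gd}/2$, $r_-^{\rm gd}>\mu_*$ via $d_{\rm ent}^{\rm gd}\le \underline\mu_0^{\rm gd}-\mu_*$, and $M_{\rm stat}>2r_+^{\rm gd}$. So does your summation of $\sum_j p_j$ with $c_{\rm stat}\ll c_{\rm shell}$.

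\emph{The failing step.} Your primary claim $p_j\ge (mn)^{-C}$ is false for the small shells. At $j=0$, $R_0=\mu_*$ is fixed, so
\[
p_0=C_{\rm shell}(1+\mu_*)mn\,e^{-c_{\rm shell}n_{\rm eff}/\mu_*},
\]
which is exponentially small in $m\wedge n$. The fact $R_j\le n_{\rm eff}$ bounds $p_j$ from above, not below. A polynomial lower bound holds only for shells with $R_j$ comparable to $M_{\rm stat}$. Consequently $L_{R_0}(p_0/2)\approx c_{\rm shell}n_{\rm eff}/\mu_*$ and $R_0L_{R_0}(p_0/2)\approx c_{\rm shell}n_{\rm eff}$, independent of $c_{\rm stat}$. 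Shrinking $c_{\rm stat}$ therefore cannot give either $n\ge C_{\rm res}R_jL_{R_j}(p_j/2)$ or the error bound.

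\emph{Why capping does not repair it.} Capping $\delta_j=\max\{p_j,(mn)^{-C_0}\}$ does not prove the proposition as stated. Its conclusions are pinned to $\delta=p_j/2$, and $p_{\rm train}^{\rm gd}$ is built from the $p_j$. Since $L_R(\cdot)$ is decreasing, bounds on $L_{R_j}(\delta_j/2)$ do not imply bounds on $L_{R_j}(p_j/2)$. You would be proving a modified framework, not this statement.

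\emph{The route that works.} Your second alternative is exactly the paper's argument. The prefactor of $p_j$ was chosen to cancel exactly inside the logarithm:
\[
L_{R_j}(p_j/2)=\log\frac{2C_{\rm res}}{C_{\rm shell}}+c_{\rm shell}\frac{n_{\rm eff}}{R_j},
\qquad\text{hence}\qquad
R_jL_{R_j}(p_j/2)\le CR_j+c_{\rm shell}n_{\rm eff}.
\]
Using $n_{\rm eff}\le m\wedge n$ and $R_j\le c_{\rm stat}n_{\rm eff}/\log(C_{\rm stat}mn)$, this gives two things:
\begin{itemize}
\item The context condition holds once $C_{\rm res}c_{\rm shell}\le 1/2$ and $c_{\rm stat}$ is small.
\item The normalized error is at most $C_{\rm res}\{\sqrt{c_{\rm shell}+o(1)}+c_{\rm shell}+o(1)\}$.
\end{itemize}
No Bernstein tail needs slackening. $p_j/2$ is simply the $\delta$ fed to Theorem~\ref{thm:shellwise-directional-approximation}, which holds for any admissible $\delta$. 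Decreasing $c_{\rm shell}$ only enlarges the failure probability assigned to each shell.

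The constants must be chosen in this order:
\begin{enumerate}
\item Take $c_{\rm shell}$ small relative to $1/C_{\rm res}$ and to $\min\{c_{\rm dir},C_{\rm sp,0}\}/C_{\rm res}$.
\item Take $c_{\rm stat}\ll c_{\rm shell}$, as in your summability step, so that $\sum_j p_j\le C(mn)^{-c_{\rm prob}}$.
\item Enlarge $m_0,n_0,q_0$.
\end{enumerate}
With this replacing your polynomial-lower-bound step and the capping, the proof goes through.
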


\begin{proof}[Proof of Proposition~\ref{prop:main-simple-pretraining-regime-gd}]
All population-GD entry quantities are fixed once the initialization and the learning-rate ceiling $\gamma_0$ are fixed. Because $\varepsilon_{\rm ent}^{\rm gd}>0$ is fixed, the choice
$p_{\rm ent} = C_{\rm cmp}e^{-c_{\rm cmp}m(\varepsilon_{\rm ent}^{\rm gd})^2}$
gives
\begin{equation*}
    \frac1m
    \log\left(
        \frac{C(1+R_{\rm cmp})}{p_{\rm ent}}
    \right)
    \le
    c_{\rm cmp}(\varepsilon_{\rm ent}^{\rm gd})^2+\frac Cm.
\end{equation*}
Choose $c_{\rm cmp}>0$ sufficiently small and then choose $m_0,n_0$ sufficiently large that $p_{\rm ent}\in(0,1), \ e_{\rm cmp}^{\rm gd} \le \frac{d_{\rm ent}^{\rm gd}}2.$
In particular $r_-^{\rm gd}>\mu_*$.

For the shell events,
\begin{equation*}
    L_{R_j}(p_j/2)
    =
    \log\left(
        \frac{2C_{\rm res}(1+R_j)mn}{p_j}
    \right)
    \le
    C+c_{\rm shell}\frac{n_{\rm eff}}{R_j}.
\end{equation*}
Hence $R_jL_{R_j}(p_j/2)\le CR_j+c_{\rm shell}n_{\rm eff}$. Choose $c_{\rm shell}$ sufficiently small, then $c_{\rm stat}\le c_{{\rm stat},0}$ sufficiently small, and finally increase the fixed thresholds $m_0,n_0,q_0$. Uniformly over all certified shells, this gives $n\ge C_{\rm res}R_jL_{R_j}(p_j/2)$ and
\begin{equation*}
    C_{\rm res}\left\{
        \sqrt{\frac{R_jL_{R_j}(p_j/2)}m}
        +
        \frac{R_jL_{R_j}(p_j/2)}m
        +
        \frac{R_j}{n}
    \right\}
    \le
    \min\{c_{\rm dir},C_{\rm sp,0}\}
\end{equation*}
after increasing the deterministic sample thresholds. The uniform upper entry radius $\overline\mu_0^{\rm gd}$ is fixed. Increasing $q_0$ if necessary makes
\begin{equation*}
    \overline M_{\rm stat}
    \ge
    4\left(\overline\mu_0^{\rm gd}+\frac{d_{\rm ent}^{\rm gd}}2\right).
\end{equation*}
Since $M_{\rm stat}>\overline M_{\rm stat}/2$ and $r_+^{\rm gd}\le\overline\mu_0^{\rm gd}+d_{\rm ent}^{\rm gd}/2$, this implies $M_{\rm stat}>2r_+^{\rm gd}$. Finally, the shell failure sum satisfies
\begin{equation*}
    \sum_{j=0}^{J_{\rm stat}}p_j
    \le
    Cmn(1+M_{\rm stat})\log\left(\frac{2M_{\rm stat}}{\mu_*}\right)
    \exp\left\{-c\frac{n_{\rm eff}}{M_{\rm stat}}\right\}
    \le
    C(mn)^{-c_{\rm prob}}
\end{equation*}
for sufficiently small $c_{\rm stat}$. Increasing $m_0,n_0,q_0$ once more gives $p_{\rm train}^{\rm gd}\le1/2$ and $p_j\in(0,1)$ for every certified shell. This proves all claims.
\end{proof}

\subsection{Deterministic training-time window}
\label{subsec:empirical-gd-window}

Let
$c_-^{\rm gd} := 2c_\mu,$
and let $c_+^{\rm gd}>0$ be the fixed upper fourth-power growth constant in Proposition~\ref{prop:empirical-gd-one-shell}. For the actual GD iterates, retain the integer entry index $s_0=s_0(\gamma)$. Introduce the post-capture training clock $T_s := T_0 + \gamma(s-s_0), \qquad s\ge s_0,$ so that $T_{s_0}=T_0$. Define $\mathbb T_\gamma := \left\{ T_0+\gamma j: j=0,1,2,\ldots \right\},$ and write
\begin{equation}
\label{eq:empirical-gd-time-parametrization}
    \widehat\theta^\gamma(T_s)
    :=
    \widehat\theta_s^\gamma,
    \qquad
    \widehat\mu^\gamma(T_s)
    :=
    \widehat\mu_s^\gamma.
\end{equation}

Define the deterministic radial envelopes directly on the post-capture clock:
\begin{align}
\label{eq:empirical-gd-envelopes}
    M_-^{\rm gd}(T)
    &:=
    \left\{
        (r_-^{\rm gd})^4
        +
        c_-^{\rm gd}
        (T-T_0)
    \right\}^{1/4},
    \\
    M_+^{\rm gd}(T)
    &:=
    \left\{
        (r_+^{\rm gd})^4
        +
        c_+^{\rm gd}
        (T-T_0)
    \right\}^{1/4}.
\end{align}
Thus
\begin{equation*}
    M_\pm^{\rm gd}(T_s)
    =
    \left\{
        (r_\pm^{\rm gd})^4
        +
        c_\pm^{\rm gd}
        \gamma(s-s_0)
    \right\}^{1/4}.
\end{equation*}

Set
\begin{equation}
\label{eq:empirical-gd-statistical-iteration}
    S_{\rm stat}(m,n,\gamma)
    :=
    s_0
    +
    \left\lfloor
        \frac{
            M_{\rm stat}^4
            -
            (r_+^{\rm gd})^4
        }{
            c_+^{\rm gd}\gamma
        }
    \right\rfloor,
\end{equation}
and define the corresponding deterministic statistical time
\begin{equation}
\label{eq:empirical-gd-statistical-time}
    T_{\rm stat}^{\rm gd}(m,n,\gamma)
    :=
    T_0
    +
    \gamma
    \left\{
        S_{\rm stat}(m,n,\gamma)-s_0
    \right\}.
\end{equation}

\subsection{Completion of the proof}

\begin{proof}[Proof of Theorem~\ref{thm:main-empirical}]
Increase $A_0$ if necessary so that $A_{m,n}\ge A_0$ implies
$m\ge m_0$, $n\ge n_0$, and $A_{m,n}\ge q_0$. Hence
Proposition~\ref{prop:main-simple-pretraining-regime-gd} applies.

Let $\mathcal E_{\rm ent}$ denote the compact approximation event from
Proposition~\ref{prop:compact-vector-field-approximation} on $K_0$.
On $\mathcal E_{\rm ent}$, the compact error is below
$\varepsilon_{\rm ent}^{\rm gd}$, so
Lemma~\ref{lem:empirical-gd-compact-tracking} gives
\[
    \widehat\theta_{s_0}^\gamma
    \in
    \operatorname{int}(\mathcal G),
    \qquad
    r_-^{\rm gd}
    \le
    \widehat\mu_{s_0}^\gamma
    \le
    r_+^{\rm gd}.
\]
Since $r_-^{\rm gd}>\mu_*$ and $M_{\rm stat}>2r_+^{\rm gd}$, there is a
unique $j_{\rm ent}\in\{0,\ldots,J_{\rm stat}\}$ such that
\[
    R_{j_{\rm ent}}
    \le
    \widehat\mu_{s_0}^\gamma
    <
    2R_{j_{\rm ent}},
\]
hence
$\widehat\theta_{s_0}^\gamma\in\mathcal S_{R_{j_{\rm ent}}}$.

For every deterministic shell $0\le j\le J_{\rm stat}$, let
$\mathcal E_j^{\rm dir}$ and $\mathcal E_j^{\rm sp}$ denote the events
from Theorem~\ref{thm:shellwise-directional-approximation} and
Proposition~\ref{prop:shellwise-speed-approximation}, respectively, each
with failure probability $p_j/2$. Proposition~\ref{prop:main-simple-pretraining-regime-gd}
verifies the required deterministic smallness conditions. Define
\[
    \mathcal E_{\rm shell}
    :=
    \bigcap_{j=0}^{J_{\rm stat}}
    \left(
        \mathcal E_j^{\rm dir}
        \cap
        \mathcal E_j^{\rm sp}
    \right),
    \qquad
    \mathcal E_{\rm train}
    :=
    \mathcal E_{\rm ent}\cap\mathcal E_{\rm shell}.
\]
Then the union bound gives
\[
    \Prob(\mathcal E_{\rm train})
    \ge
    1-p_{\rm train}^{\rm gd}(m,n)
    \ge
    1-Ce^{-cm}-C(mn)^{-c}.
\]

On $\mathcal E_{\rm train}$, all certified shells are available before
the data-dependent entry index $j_{\rm ent}$ is identified.
Proposition~\ref{prop:empirical-gd-one-shell} therefore applies at
$R_{j_{\rm ent}}$. Its handoff property places the landing iterate in
$\mathcal S_{R_{j_{\rm ent}+1}}$ whenever
$j_{\rm ent}<J_{\rm stat}$. Iterating this argument propagates the
empirical path through every subsequent certified shell while preserving
$\mathcal G$, up to the terminal radius
$M_{\rm stat}=2R_{J_{\rm stat}}$.

Iterating the one-shell fourth-power bounds from the entry radii gives, at every certified grid time $T$, $M_-^{\rm gd}(T)\le\widehat\mu^\gamma(T)\le M_+^{\rm gd}(T)$. By the definition of $T_{\rm stat}^{\rm gd}$, this holds simultaneously for every $T\in[T_0,T_{\rm stat}^{\rm gd}]\cap\mathbb T_\gamma$, proving \eqref{eq:main-empirical-horizon}.

It remains to identify the late statistical-scale window. Evaluating the
lower envelope at the deterministic endpoint $T_{\rm stat}^{\rm gd}$
and using $\lfloor x\rfloor\ge x-1$ gives
\[
 \{M_-^{\rm gd}(T_{\rm stat}^{\rm gd})\}^4
 \ge (r_-^{\rm gd})^4+\frac{c_-^{\rm gd}}{c_+^{\rm gd}}\{M_{\rm stat}^4-(r_+^{\rm gd})^4\}-c_-^{\rm gd}\gamma.
\]
The entry radii and $\gamma$ are bounded independently of $(m,n)$, so after increasing the fixed threshold $A_0$, the right-hand side is at least $\frac{c_-^{\rm gd}}{2c_+^{\rm gd}}M_{\rm stat}^4$. Thus $M_-^{\rm gd}(T_{\rm stat}^{\rm gd})\ge c_{\rm term}M_{\rm stat}$ with $c_{\rm term}:=(c_-^{\rm gd}/(2c_+^{\rm gd}))^{1/4}$. Choosing the main-text constant $c_{\rm rate}<c_{\rm term}$ makes $T_{\rm rate}^{\rm gd}$ well defined. For every $T\in\mathcal T_{\rm stat}(m,n,\gamma)$, monotonicity of the envelopes gives $c_{\rm rate}M_{\rm stat}\le\widehat\mu^\gamma(T)\le M_{\rm stat}$. Since $M_{\rm stat}\asymp A_{m,n}$, this is \eqref{eq:main-empirical-scale}; Lemma~\ref{lem:log-prediction-bound} gives the accompanying $C/A_{m,n}$ population prediction bound. Finally, the fourth-power envelopes imply $T_{\rm rate}^{\rm gd}-T_0\asymp T_{\rm stat}^{\rm gd}-T_0\asymp A_{m,n}^4$.
\end{proof}

\section{Fresh-Context Transfer from Frozen Checkpoints}
\label{sec:one-head-interior-transfer}

The empirical-GD theorem above controls every iterate in the complete deterministic post-capture training-time window. We keep the post-capture time $T_0$ exactly as it was fixed in Section~\ref{sec:empirical-pretraining}; in particular, none of the quantities $s_0$, $K_0$, $r_\pm^{\rm gd}$, the compact-entry constants, or the shellwise pretraining thresholds are redefined in this section.

We formulate fresh transfer for an arbitrary checkpoint selected from a later deterministic subwindow. The checkpoint is chosen using pretraining information only and is frozen before the fresh threshold, the fresh-context size, the confidence level, or the fresh observations are revealed.

Fix
\begin{equation}
\label{eq:one-head-transfer-margin}
    \eta
    \in
    (0,\Delta).
\end{equation}

Consider the safe one-head family
\begin{equation}
\label{eq:one-head-safe-interior}
    \alpha^{\rm fr}
    \in
    [
        \rho+\eta,
        \ell
    ],
    \qquad
    \ell:=1-\rho.
\end{equation}
Conditional on $\alpha^{\rm fr}$, draw an independent clean fresh context
\begin{equation}
\label{eq:one-head-fresh-context}
    X_i^{\rm fr}
    \stackrel{\mathrm{i.i.d.}}{\sim}
    \Unif(0,1),
    \qquad
    Y_i^{\rm fr}
    :=
    \mathbf 1
    \{
        X_i^{\rm fr}
        \le
        \alpha^{\rm fr}
    \},
    \qquad
    i=1,\ldots,N.
\end{equation}
For any frozen $\theta=(\lambda,\mu)$, define
\begin{equation}
\label{eq:one-head-fresh-predictor}
    F_N^{\rm fr}(\theta)
    :=
    \frac{
        \sum_{i=1}^N
        X_i^{\rm fr}
        e^{
            \lambda Y_i^{\rm fr}
            +
            \mu X_i^{\rm fr}
        }
    }{
        \sum_{i=1}^N
        e^{
            \lambda Y_i^{\rm fr}
            +
            \mu X_i^{\rm fr}
        }
    }.
\end{equation}

\subsection{Fixed-parameter fresh-context bounds}
\label{subsec:one-head-two-regimes}

The fresh-context proof uses two complementary arguments.

Below the self-normalized resolution scale, the fixed positive margin $\alpha^{\rm fr}-\rho\ge\eta$ implies that a positive observation sufficiently close to the threshold has a fixed normalized score advantage over every negative observation. Order statistics and deterministic softmax geometry then give an all-$N$ bound with no effective-sample-size assumption.

For sufficiently large contexts, the normalized weighted empirical moments resolve the frozen concentration scale. A self-normalized concentration inequality then gives the sharper approximation
\begin{equation*}
    F_N^{\rm fr}(\theta)
    =
    F(\theta;\alpha^{\rm fr})
    +
    O_{\Prob}
    \left(
        \sqrt{
            \frac1{\mu N}
        }
        +
        \frac1N
    \right).
\end{equation*}
The second argument removes the artificial growth of the deterministic $\log N/\mu$ softmax approximation when $N$ is very large.

\subsubsection{Deterministic softmax geometry}
\label{appsubsec:one-head-softmax-geometry}

\begin{lemma}[Softmax approximation to a maximum]
\label{lem:one-head-softmax-maximum}
Let $z_1,\ldots,z_M\in[0,1]$, $M\ge1$, and define
\begin{equation*}
    \overline z_\mu
    :=
    \frac{
        \sum_{i=1}^M
        z_i e^{\mu z_i}
    }{
        \sum_{i=1}^M
        e^{\mu z_i}
    }.
\end{equation*}
Then
\begin{equation}
\label{eq:one-head-softmax-maximum}
    0
    \le
    \max_{1\le i\le M}z_i
    -
    \overline z_\mu
    \le
    \frac{
        \log M
    }{
        \mu
    }.
\end{equation}
\end{lemma}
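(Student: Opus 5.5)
The lower bound is immediate: $\overline z_\mu$ is a convex combination of $z_1,\ldots,z_M$, so it cannot exceed their maximum. For the upper bound, set $z^\star:=\max_i z_i$ and $d_i:=z^\star-z_i\ge0$. Let $w_i:=e^{-\mu d_i}/\sum_j e^{-\mu d_j}$ be the softmax weights, which are unchanged by the shift. Then
\[
z^\star-\overline z_\mu=\sum_i w_i d_i .
\]
The plan is to bound this weighted gap by the log-partition function, using the Gibbs variational (entropy) identity.

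Define $\Phi(\mu):=\log\sum_i e^{-\mu d_i}$. Differentiating gives $\Phi'(\mu)=-\sum_i w_i d_i$, and $\Phi''(\mu)=\Var_w(d)\ge0$, so $\Phi$ is convex. Since at least one $d_i$ vanishes, $\Phi(\mu)\ge0$ for all $\mu$. Also $\Phi(0)=\log M$.

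Convexity gives the tangent-line inequality $\Phi(0)\ge\Phi(\mu)-\mu\Phi'(\mu)$. Rearranging and using $\Phi(\mu)\ge0$,
\[
\mu\sum_i w_i d_i=-\mu\Phi'(\mu)\le\Phi(0)-\Phi(\mu)\le\log M .
\]
Dividing by $\mu>0$ gives the claim.

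Equivalently, one can use the identity $\mu\sum_i w_id_i=H(w)-\Phi(\mu)\le H(w)\le\log M$, where $H$ is Shannon entropy. This follows from $\log w_i=-\mu d_i-\Phi(\mu)$.

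No step is a real obstacle. The only points to check are the sign conventions and that the argument tacitly assumes $\mu>0$, which is the regime in which the lemma is used. Note that the hypothesis $z_i\in[0,1]$ is not actually needed for this bound.
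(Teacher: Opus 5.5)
Your proof is correct and is essentially the paper's argument. The paper writes $H(w)=\log\sum_j e^{\mu z_j}-\mu\overline z_\mu$, bounds $H(w)\le\log M$, and uses $\log\sum_j e^{\mu z_j}\ge\mu\max_i z_i$; after your shift this is exactly your identity $\mu\sum_i w_i d_i=H(w)-\Phi(\mu)$ with $\Phi(\mu)\ge0$, and your tangent-line inequality for the convex log-partition function $\Phi$ is just a restatement of $H(w)\le\log M$.
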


\begin{proof}
Let $w_i := { e^{\mu z_i} }/{ \sum_{j=1}^M e^{\mu z_j} }$. The Shannon entropy $H(w):=-\sum_iw_i\log w_i$ satisfies $0\le H(w)\le\log M$, while
\begin{equation*}
    H(w)
    =
    \log\left(
        \sum_{j=1}^M
        e^{\mu z_j}
    \right)
    -
    \mu
    \sum_{i=1}^M
    w_i z_i.
\end{equation*}
Since $\log\left( \sum_{j=1}^M e^{\mu z_j} \right) \ge \mu\max_i z_i$, the upper bound follows. The lower bound holds because $\overline z_\mu$ is a convex combination of the $z_i$.
\end{proof}

\begin{lemma}[One-head class-margin bound]
\label{lem:one-head-class-margin}
Fix a clean labeled context
$\mathcal D = \{(x_i,y_i)\}_{i=1}^N$
with at least one positive-label observation, and define
$M_1 := \max_{y_i=1}x_i.$
For $\theta=(\lambda,\mu)$ with $\mu>0$, write $k=\lambda/\mu$. If the negative-label class is nonempty, define
$M_0 := \max_{y_i=0}x_i, \ \Gamma_+ := k+M_1-M_0.$
If the negative-label class is empty, set $\Gamma_+:=+\infty$. Then, for every threshold $\alpha$ consistent with the clean labels,
\begin{equation}
\label{eq:one-head-class-margin}
    \left|
        F_N(\theta;\mathcal D)
        -
        \alpha
    \right|
    \le
    (\alpha-M_1)
    +
    \frac{
        \log(N+1)
    }{
        \mu
    }
    +
    \min\left\{
        1,
        Ne^{-\mu\Gamma_+}
    \right\}.
\end{equation}
\end{lemma}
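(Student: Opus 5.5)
The plan is a triangle-inequality decomposition
\[
F_N-\alpha=(F_N-M_1)+(M_1-\alpha).
\]
Since every $x_i\le1$ and $\alpha\in[0,1]$, we always have $|F_N-\alpha|\le1$, so the term $\min\{1,\cdot\}$ only needs the bound $Ne^{-\mu\Gamma_+}$ in the nontrivial case. Clean labels give $M_1\le\alpha$, so $|M_1-\alpha|=\alpha-M_1$. It therefore suffices to show
\[
|F_N-M_1|\le \frac{\log(N+1)}{\mu}+Ne^{-\mu\Gamma_+}.
\]
If the right side is at least $1$, this holds trivially, because $F_N$ and $M_1$ both lie in $[0,1]$.

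Next I would split the softmax weights into the positive and negative classes. Let $P=\sum_{y_i=1}e^{\lambda+\mu x_i}$ and $Q=\sum_{y_i=0}e^{\mu x_i}$. Then
\[
F_N=\frac{P}{P+Q}\,\bar z^+ +\frac{Q}{P+Q}\,\bar z^-,
\]
where $\bar z^{\pm}$ are the within-class softmax averages. Hence
\[
F_N-M_1=(\bar z^+-M_1)+\frac{Q}{P+Q}\,(\bar z^--\bar z^+).
\]
For the first term, Lemma~\ref{lem:one-head-softmax-maximum} applied to the positive class, which has $M\le N$ points, gives $0\le M_1-\bar z^+\le \log N/\mu\le\log(N+1)/\mu$. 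For the second term, $|\bar z^--\bar z^+|\le1$. Bounding $P\ge e^{\lambda+\mu M_1}$ and $Q\le N e^{\mu M_0}$ gives
\[
\frac{Q}{P+Q}\le\frac QP\le Ne^{\mu(M_0-M_1)-\lambda}=Ne^{-\mu\Gamma_+}.
\]
If the negative class is empty, then $Q=0$ and the term vanishes, which is consistent with $\Gamma_+=+\infty$.

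Summing the two pieces gives $|F_N-M_1|\le\log(N+1)/\mu+Ne^{-\mu\Gamma_+}$, and the triangle inequality finishes the proof. The only delicate points are sign bookkeeping and the fact that the negative-class count is at most $N$. I do not expect a genuine obstacle. The mild point is that $\Gamma_+$ may be negative, but the $\min\{1,\cdot\}$ handles that case. An alternative is to apply Lemma~\ref{lem:one-head-softmax-maximum} to all $N$ scores jointly with a shifted score, which is perhaps where the $\log(N+1)$ comes from; the class-split route above is cleaner.
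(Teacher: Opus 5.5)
Your proposal is correct and follows the paper's proof essentially step for step. Like the paper, you split the weights by class and apply Lemma~\ref{lem:one-head-softmax-maximum} within the positive class, where $e^{\lambda}$ cancels, to get an error of at most $\log N/\mu\le\log(N+1)/\mu$. You then bound the negative-class mass by $W_0/W_1\le Ne^{-\mu\Gamma_+}$, capped at $1$, and use that features lie in $[0,1]$. Your identity $F_N-M_1=(\bar z^+-M_1)+\frac{Q}{P+Q}(\bar z^--\bar z^+)$ simply spells out the paper's remark that adding the negative class moves the output by at most its mass.
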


\begin{proof}
Let $W_1$ and $W_0$ denote the total unnormalized attention weights in the positive and negative classes, respectively. Within the positive class, the common factor $e^\lambda$ cancels after normalization. Hence Lemma~\ref{lem:one-head-softmax-maximum} gives
\begin{equation*}
    0
    \le
    M_1
    -
    \frac{
        \sum_{y_i=1}
        x_i e^{\mu x_i}
    }{
        \sum_{y_i=1}
        e^{\mu x_i}
    }
    \le
    \frac{
        \log N
    }{
        \mu
    }.
\end{equation*}
The positive-class softmax average therefore lies in
$\left[ M_1-\frac{\log N}{\mu}, \, M_1 \right] \subseteq (-\infty,\alpha].$

If the negative class is nonempty,
$W_1 \ge e^{\lambda+\mu M_1}, \ W_0 \le N e^{\mu M_0},$
so
$\frac{ W_0 }{ W_1 } \le Ne^{-\mu\Gamma_+}.$
Thus the total negative-class softmax mass is at most
$\min\left\{ 1, Ne^{-\mu\Gamma_+} \right\}.$
Since every feature value lies in $[0,1]$, adding the negative class can move the output by at most this mass. Combining this with
$\alpha-M_1$
and the within-positive-class softmax error proves \eqref{eq:one-head-class-margin}. If the negative class is empty, the last term vanishes and the same argument applies.
\end{proof}

\begin{lemma}[Absorption of the wrong-class term]
\label{lem:one-head-wrong-class-absorption}
For every fixed $c>0$, there exist constants $C_c,\mu_c>0$ such that, for every $N\ge1$ and every $\mu\ge\mu_c$,
\begin{equation}
\label{eq:one-head-wrong-class-absorption}
    \min\left\{
        1,
        Ne^{-c\mu}
    \right\}
    \le
    C_c
    \frac{
        1+\log(N+1)
    }{
        \mu
    }.
\end{equation}
\end{lemma}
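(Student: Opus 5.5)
The claim is elementary, so I would prove it by splitting into two cases according to the size of $N$ relative to $e^{c\mu/2}$. The left-hand side never exceeds $1$ and never exceeds $Ne^{-c\mu}$. In each case I will bound whichever of these two is convenient by a multiple of $(1+\log(N+1))/\mu$.

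\emph{Case 1: $\log(N+1)\ge c\mu/2$.} I will use the trivial bound $\min\{1,Ne^{-c\mu}\}\le 1$. In this case
\[
\frac{1+\log(N+1)}{\mu}\ge\frac{c}{2},
\]
so the right-hand side of \eqref{eq:one-head-wrong-class-absorption} is at least $C_c\,c/2$. Any $C_c\ge 2/c$ therefore suffices.

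\emph{Case 2: $\log(N+1)<c\mu/2$.} Then $N<e^{c\mu/2}$, which gives
\[
Ne^{-c\mu}\le e^{-c\mu/2}.
\]
I will choose $\mu_c$ so that $\mu e^{-c\mu/2}\le 1$ for all $\mu\ge\mu_c$. This is possible because $\mu e^{-c\mu/2}\to0$ as $\mu\to\infty$, and one can take $\mu_c$ of order $(4/c)\log(4/c)$ or simply larger. With this choice,
\[
e^{-c\mu/2}\le\frac1\mu\le\frac{1+\log(N+1)}{\mu},
\]
so $C_c\ge1$ suffices.

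Taking $C_c:=\max\{1,2/c\}$ and $\mu_c$ as above covers both cases uniformly in $N\ge1$. There is no real obstacle here. The only thing to check is that $\mu_c$ depends on $c$ alone and not on $N$, which holds because the Case 2 threshold involves only $\mu$.
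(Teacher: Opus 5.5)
Your proposal is correct and follows the paper's proof essentially verbatim: the same case split at $\log(N+1)\ge c\mu/2$, the trivial bound $1$ absorbed by $C_c\ge 2/c$ in the first case, and $Ne^{-c\mu}\le e^{-c\mu/2}\le C_c/\mu$ for large $\mu$ in the second. Your explicit choices $C_c=\max\{1,2/c\}$ and $\mu_c$ with $\mu e^{-c\mu/2}\le 1$ for $\mu\ge\mu_c$ just make concrete the paper's ``after increasing $\mu_c$ and $C_c$.''
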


\begin{proof}
If
$\log(N+1) \ge \frac{c\mu}{2},$
then
$\frac{ 1+\log(N+1) }{ \mu } \ge \frac c2,$
so the right-hand side dominates $1$ after choosing $C_c\ge2/c$. If instead
$\log(N+1) < \frac{c\mu}{2},$
then
$Ne^{-c\mu} \le e^{-c\mu/2} \le \frac{ C_c }{ \mu }$
after increasing $\mu_c$ and $C_c$. This proves the claim.
\end{proof}

\begin{proposition}[Order statistic one-head bound]
\label{prop:one-head-fixed-interior}
Fix $\eta\in(0,\Delta)$. There exist constants $C_\eta,\mu_\eta>0$, depending only on the fixed structural constants and $\eta$, such that the following holds. For every
\begin{equation*}
    \theta=(\lambda,\mu)
    \in
    \mathcal G,
    \qquad
    \mu\ge\mu_\eta,
    \qquad
    \alpha\in[\rho+\eta,\ell],
\end{equation*}
every integer $N\ge1$, and every $u\ge1$,
\begin{equation}
\label{eq:one-head-fixed-interior-bound}
    \Prob_{\rm fr}
    \left(
        \left|
            F_N^{\rm fr}(\theta)-\alpha
        \right|
        >
        C_\eta
        \left\{
            \frac{u}{N}
            +
            \frac{
                1+\log(N+1)
            }{
                \mu
            }
        \right\}
    \right)
    \le
    e^{-u}.
\end{equation}
The constants are uniform over the displayed parameter and threshold ranges, and no lower-bound condition on $N$ is imposed.
\end{proposition}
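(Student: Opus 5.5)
The plan is to apply the deterministic class-margin bound, Lemma~\ref{lem:one-head-class-margin}, on a high-probability event controlling the order statistic $\alpha-M_1$, and then absorb the wrong-class term using Lemma~\ref{lem:one-head-wrong-class-absorption}. Let $\mathcal A_u:=\{\alpha-M_1\le u/N\}$, where $M_1=M_N^+$ is the largest positive observation. Since $X_i\sim\Unif(0,1)$ and $\alpha\ge\rho+\eta$, we have $\Prob(\alpha-M_1>\varepsilon)=(1-\varepsilon)^N\le e^{-N\varepsilon}$ for $0\le\varepsilon<\alpha$. The event with no positive label is the case $\varepsilon\uparrow\alpha$, with probability $(1-\alpha)^N\le e^{-N\alpha}$. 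Taking $\varepsilon=u/N$ when $u/N<\alpha$ therefore gives $\Prob(\mathcal A_u^c)\le e^{-u}$. When $u/N\ge\alpha$, the claimed bound is trivial once $C_\eta\ge 1/(\rho+\eta)$, because $|F_N-\alpha|\le1\le C_\eta u/N$. So we may assume $u/N<\alpha$. Then on $\mathcal A_u$ the positive class is nonempty and the lemma applies.

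On $\mathcal A_u$ we need a lower bound on the margin $\Gamma_+=k+M_1-M_0\ge k+M_1-1$. Membership in $\mathcal G$ gives $k\ge \ell+(\log\mu+c_0)/\mu$. Hence $\Gamma_+\ge \ell-1+M_1+(\log\mu+c_0)/\mu = M_1-\rho+(\log\mu+c_0)/\mu$.

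\textbf{Main case.} If $u/N\le \eta/2$, then $M_1\ge\alpha-\eta/2\ge\rho+\eta/2$, so $\Gamma_+\ge \eta/2+(\log\mu+c_0)/\mu\ge\eta/4$ once $\mu\ge\mu_\eta$. Lemma~\ref{lem:one-head-wrong-class-absorption} with $c=\eta/4$ then bounds $\min\{1,Ne^{-\mu\Gamma_+}\}$ by $C(1+\log(N+1))/\mu$. Substituting this into \eqref{eq:one-head-class-margin} yields $|F_N-\alpha|\le u/N+C(1+\log(N+1))/\mu$.

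\textbf{Remaining case.} If $u/N>\eta/2$, the bound holds trivially: $|F_N-\alpha|\le1\le (2/\eta)\,u/N$. Choosing $C_\eta\ge\max\{2/\eta,1/(\rho+\eta),C\}$ covers both cases, and the total failure probability is at most $e^{-u}$.

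I expect the only delicate point to be that the margin $\Gamma_+$ must be bounded uniformly in $\alpha$. This is exactly where the fixed separation $\alpha\ge\rho+\eta$ enters. Near $\alpha=\rho$, the score advantage $k+M_1-1$ is only of order $\log\mu/\mu$, which does not give $Ne^{-\mu\Gamma_+}$ small uniformly in $N$. The case split on $u/N$ relative to $\eta$ handles this. All constants depend only on $\eta$, $\rho$, and $c_0$, and no lower bound on $N$ is needed.
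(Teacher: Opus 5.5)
Your proposal is correct and follows essentially the same route as the paper: the order-statistic event $\{\alpha-M_1\le u/N\}$ has failure probability at most $e^{-u}$, the margin satisfies $\Gamma_+\ge \alpha-\rho-u/N$ on it, and Lemma~\ref{lem:one-head-class-margin} plus Lemma~\ref{lem:one-head-wrong-class-absorption} finish the case $u/N\le\eta/2$, while the case $u/N>\eta/2$ is trivial. The only differences are cosmetic: the paper takes $\mu_\eta\ge e^{-c_0}$ to get $k\ge\ell$ and the margin $\eta/2$, whereas you keep the $(\log\mu+c_0)/\mu$ term and use $\eta/4$; and your separate case $u/N\ge\alpha$ is redundant because it is already contained in $u/N>\eta/2$.
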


\begin{proof}[Proof of Proposition~\ref{prop:one-head-fixed-interior}]
Fix
$\alpha \in [\rho+\eta,\ell].$
Let $\mu_{\eta/2}^{\rm wc}$ denote the threshold $\mu_c$ from Lemma~\ref{lem:one-head-wrong-class-absorption} with $c=\eta/2$. Increase $\mu_\eta$ once and for all so that
\begin{equation}
\label{eq:one-head-k-above-ell}
    \mu\ge\mu_\eta,
    \qquad
    \theta\in\mathcal G
    \quad\Longrightarrow\quad
    k
    \ge
    \ell,
\end{equation}
and
\begin{equation}
\label{eq:one-head-wrong-class-threshold-incorporated}
    \mu_\eta
    \ge
    \mu_{\eta/2}^{\rm wc}.
\end{equation}
For example, it is enough to impose
$\mu_\eta \ge \max \left\{ \mu_*, e^{-c_0}, \mu_{\eta/2}^{\rm wc} \right\},$
because
$k \ge b_{c_0}(\mu) = \ell + \frac{ \log\mu+c_0 }{ \mu }.$

First suppose
\begin{equation}
\label{eq:one-head-nontrivial-order-regime}
    \frac{u}{N}
    \le
    \frac\eta2.
\end{equation}
Define the threshold-neighborhood event
\begin{equation}
\label{eq:one-head-near-positive-event}
    \mathcal E_u
    :=
    \left\{
        \exists i\le N:
        X_i^{\rm fr}
        \in
        \left(
            \alpha-\frac{u}{N},
            \alpha
        \right]
    \right\}.
\end{equation}
The interval in \eqref{eq:one-head-near-positive-event} has length $u/N$ and is contained in $(0,\alpha]$ under \eqref{eq:one-head-nontrivial-order-regime}. Therefore
\begin{equation}
\label{eq:one-head-order-event-failure}
    \Prob_{\rm fr}(\mathcal E_u^c)
    =
    \left(
        1-\frac{u}{N}
    \right)^N
    \le
    e^{-u}.
\end{equation}

On $\mathcal E_u$, the positive-label class is nonempty. Only now define
$M_1 := \max\{ X_i^{\rm fr}: Y_i^{\rm fr}=1 \}.$
Then
\begin{equation}
\label{eq:one-head-M1-spacing}
    0
    \le
    \alpha-M_1
    \le
    \frac{u}{N}.
\end{equation}
If the negative class is nonempty, let
$M_0 := \max\{ X_i^{\rm fr}: Y_i^{\rm fr}=0 \}.$
Since $M_0\le1$, \eqref{eq:one-head-k-above-ell} and \eqref{eq:one-head-M1-spacing} imply
\begin{align}
\label{eq:one-head-positive-score-margin}
    \Gamma_+
    &=
    k+M_1-M_0
    \notag\\
    &\ge
    \ell
    +
    \alpha
    -
    \frac{u}{N}
    -
    1
    \notag\\
    &=
    \alpha-\rho-\frac{u}{N}
    \notag\\
    &\ge
    \frac\eta2.
\end{align}
If the negative class is empty, set $\Gamma_+=+\infty$ as in Lemma~\ref{lem:one-head-class-margin}.

Applying Lemmas \ref{lem:one-head-class-margin} and \ref{lem:one-head-wrong-class-absorption} with $c=\eta/2$ gives, on $\mathcal E_u$,
\begin{equation*}
    \left|
        F_N^{\rm fr}(\theta)-\alpha
    \right|
    \le
    C_\eta
    \left\{
        \frac{u}{N}
        +
        \frac{
            1+\log(N+1)
        }{
            \mu
        }
    \right\}.
\end{equation*}
Together with \eqref{eq:one-head-order-event-failure}, this proves the desired probability bound in the regime \eqref{eq:one-head-nontrivial-order-regime}.

It remains to consider ${u}/{N} > \eta/2.$ Since $F_N^{\rm fr}(\theta), \alpha \in (0,1),$ one always has $\left| F_N^{\rm fr}(\theta)-\alpha \right| \le 1.$
After taking $C_\eta\ge2/\eta$, $C_\eta \frac{u}{N} > 1,$ so the bad event in \eqref{eq:one-head-fixed-interior-bound} is empty. This completes the proof for every $N\ge1$.
\end{proof}

\begin{proposition}[Large-context self-normalized one-head bound]
\label{prop:one-head-large-context}
There exist constants $C_{\rm fr},\mu_{\rm fr}>0$, depending only on the fixed structural constants, such that the following holds. For every
\begin{equation*}
    \theta=(\lambda,\mu)
    \in
    \mathcal G,
    \qquad
    \mu\ge\mu_{\rm fr},
    \qquad
    \alpha\in[\rho,\ell],
\end{equation*}
every $u\ge1$, and every
\begin{equation}
\label{eq:one-head-large-context-condition}
    N
    \ge
    C_{\rm fr}\mu u,
\end{equation}
one has
\begin{equation}
\label{eq:one-head-large-context-bound}
    \Prob_{\rm fr}
    \left(
        \left|
            F_N^{\rm fr}(\theta)-\alpha
        \right|
        >
        C_{\rm fr}
        \left\{
            \frac1\mu
            +
            \sqrt{
                \frac{u}{\mu N}
            }
            +
            \frac{u}{N}
        \right\}
    \right)
    \le
    4e^{-u}.
\end{equation}
In particular, under \eqref{eq:one-head-large-context-condition},
\begin{equation}
\label{eq:one-head-large-context-simplified}
    \left|
        F_N^{\rm fr}(\theta)-\alpha
    \right|
    \le
    C_{\rm fr}
    \left\{
        \frac1\mu
        +
        \frac{u}{N}
    \right\}
\end{equation}
on the same event after increasing $C_{\rm fr}$. Unlike Proposition~\ref{prop:one-head-fixed-interior}, this proposition is valid on the full closed threshold interval $[\rho,\ell]$, but it requires the fresh context to resolve the scale $\mu$.
\end{proposition}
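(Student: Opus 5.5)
The approach is to write $F_N^{\rm fr}(\theta)-\alpha=\{F_N^{\rm fr}(\theta)-F(\theta;\alpha)\}+\{F(\theta;\alpha)-\alpha\}$. The second term is at most $C/\mu$ uniformly over $\alpha\in[\rho,\ell]$ for $\theta\in\mathcal G$, by Lemma~\ref{lem:log-prediction-bound} with $c=c_0$ (equivalently Lemma~\ref{lem:normalized-weight-geometry}). For the first term, use the self-normalized representation from Appendix~\ref{appsubsec:one-task-logarithmic-tube}. With $W_{\theta,\alpha}=W_\theta/\mathcal Z_{\theta,\alpha}$ and $\phi_F=W_{\theta,\alpha}(X-F)$, the identity \eqref{eq:dictionary-ratio-identities} applied to the fresh sample of size $N$ gives exactly
\[
F_N^{\rm fr}(\theta)-F(\theta;\alpha)=\frac{P_N\phi_F}{P_NW_{\theta,\alpha}},
\]
since the normalizing constant $\mathcal Z_{\theta,\alpha}$ cancels in the ratio. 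This is the quantity $e_N$ of Lemma~\ref{lem:self-normalized-ratio-high-probability}, with $A=W_{\theta,\alpha}$ and $\phi=\phi_F$.

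Next we check the hypotheses of Lemma~\ref{lem:self-normalized-ratio-high-probability} uniformly over $\theta\in\mathcal G$ and $\alpha\in[\rho,\ell]$. We have $P_\alpha A=1$ and $P_\alpha\phi_F=0$ by construction. Lemma~\ref{lem:normalized-weight-geometry} gives $\|A\|_\infty\lesssim\mu$: on the left interval $W_{\theta,\alpha}\lesssim\mu$, and on the right it is $\lesssim e^{-s}\le1$. The same lemma gives $P_\alpha(A-1)^2\le P_\alpha A^2\lesssim\mu$, so $B+\sigma_A^2+1\le C\mu$. Corollary~\ref{cor:normalized-influence-geometry} gives $b=\|\phi_F\|_\infty\lesssim1$ and $s_2=P_\alpha\phi_F^2\lesssim\mu^{-1}$. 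All of these bounds are uniform on the full closed interval $[\rho,\ell]$; this is the reason the endpoint restriction $\alpha\ge\rho+\eta$ is not needed here.

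The lemma applies when $1\le u\le cN/(B+\sigma_A^2+1)$. Choosing $C_{\rm fr}\ge C/c$, the condition \eqref{eq:one-head-large-context-condition} $N\ge C_{\rm fr}\mu u$ places $u$ in this range. Then, with probability at least $1-4e^{-u}$,
\[
|F_N^{\rm fr}(\theta)-F(\theta;\alpha)|\le C\Bigl\{\sqrt{u/(\mu N)}+u/N\Bigr\},
\]
and adding the $C/\mu$ bias gives \eqref{eq:one-head-large-context-bound}. For the simplified form \eqref{eq:one-head-large-context-simplified}, apply AM--GM: $\sqrt{u/(\mu N)}\le\tfrac12(1/\mu+u/N)$, and enlarge $C_{\rm fr}$.

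The main obstacle is the uniformity of the weight-geometry bounds near $\alpha=\rho$, in particular $\|W_{\theta,\alpha}\|_\infty\lesssim\mu$ and $P_\alpha W^2\lesssim\mu$ there. Everything else is direct. These bounds need $\mathcal Z_{\theta,\alpha}\gtrsim e^{\lambda+\mu\alpha}/\mu$ uniformly, and that holds because the positive-class mass alone is already of this order regardless of $\alpha\in[\rho,\ell]$. So the plan relies on the lower rail $k\ge b_{c_0}(\mu)$ only through the bias bound and through the right-side weight envelope, both of which Lemma~\ref{lem:normalized-weight-geometry} already provides. The threshold $\mu_{\rm fr}$ is taken to be at least $\mu_*$ and the thresholds of those lemmas.
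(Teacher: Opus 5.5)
Your proposal is correct and follows essentially the same route as the paper's proof. Both write the fluctuation as the self-normalized ratio $P_N\phi_F/P_NW_{\theta,\alpha}$, take $B,\sigma_A^2\lesssim\mu$, $b\lesssim1$, $s_2\lesssim\mu^{-1}$ from Lemma~\ref{lem:normalized-weight-geometry} and Corollary~\ref{cor:normalized-influence-geometry}, invoke Lemma~\ref{lem:self-normalized-ratio-high-probability}, and add the $C_{\rm bias}/\mu$ bias. The only difference is the simplified form: you use AM--GM, whereas the paper uses $N\ge C_{\rm fr}\mu u$ to get $\sqrt{u/(\mu N)}\le C/\mu$; either works.
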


\begin{proof}[Proof of Proposition~\ref{prop:one-head-large-context}]
Fix
$\theta=(\lambda,\mu) \in \mathcal G, \ \alpha\in[\rho,\ell].$
Let
\begin{equation*}
    W_{\theta,\alpha}(X,Y)
    :=
    \frac{
        e^{\lambda Y+\mu X}
    }{
        \E[
            e^{\lambda Y+\mu X}
            \mid
            \alpha
        ]
    },
\end{equation*}
and define
$\phi_F := W_{\theta,\alpha} \bigl( X-F(\theta;\alpha) \bigr).$
Then
$P_\alpha W_{\theta,\alpha} = 1, \ P_\alpha\phi_F = 0,$
and the fresh-context error has the exact self-normalized representation
\begin{equation}
\label{eq:one-head-self-normalized-representation}
    F_N^{\rm fr}(\theta)
    -
    F(\theta;\alpha)
    =
    \frac{
        P_N\phi_F
    }{
        P_NW_{\theta,\alpha}
    }.
\end{equation}

By Lemma~\ref{lem:normalized-weight-geometry}, uniformly over $\theta\in\mathcal G$ and $\alpha\in[\rho,\ell]$,
$\|W_{\theta,\alpha}\|_\infty \le C\mu,$
$P_\alpha \bigl( W_{\theta,\alpha}-1 \bigr)^2 \le C\mu,$
$\|\phi_F\|_\infty \le C, \ P_\alpha\phi_F^2 \le \frac{C}{\mu}.$
Thus, in Lemma~\ref{lem:self-normalized-ratio-high-probability},
we may take
$B\le C\mu$, $\sigma_A^2\le C\mu$, $b\le C$, and
$s_2\le C/\mu$. After increasing $C_{\rm fr}$, the condition
$N\ge C_{\rm fr}\mu u$ implies
\[
    1\le u
    \le
    c\,\frac{N}{B+\sigma_A^2+1},
\]
so Lemma~\ref{lem:self-normalized-ratio-high-probability} applies to
\eqref{eq:one-head-self-normalized-representation}. Hence, with
probability at least $1-4e^{-u}$,
\begin{equation}
\label{eq:one-head-large-context-fluctuation}
    \left|
        F_N^{\rm fr}(\theta)
        -
        F(\theta;\alpha)
    \right|
    \le
    C_{\rm fr}
    \left\{
        \sqrt{
            \frac{
                u
            }{
                \mu N
            }
        }
        +
        \frac{
            u
        }{
            N
        }
    \right\}.
\end{equation}
The population prediction bound in $\mathcal G$ gives
$\left| F(\theta;\alpha)-\alpha \right| \le \frac{ C_{\rm bias} }{ \mu }.$
The triangle inequality proves \eqref{eq:one-head-large-context-bound}.

Finally, $N\ge C_{\rm fr}\mu u$ implies
$\sqrt{ \frac{ u }{ \mu N } } \le \frac{ C }{ \mu },$
which gives \eqref{eq:one-head-large-context-simplified} after enlarging $C_{\rm fr}$.
\end{proof}

Define $L_\mu(u) := 1 + \log \left( 1+C_{\rm fr}\mu u \right)$ and $\Phi(\mu,u):={L_\mu(u)}/{\mu}$. 
For every fixed $u\ge1$, the map $\mu\mapsto\Phi(\mu,u)$ is strictly decreasing on $(0,\infty)$.

\subsubsection{Proof of the main-text fixed-estimator proposition}
\label{app:proof-fixed-estimator}
\begin{proof}[Proof of Proposition~\ref{prop:fixed-estimator-v3}]
Fix
$\alpha \in [\rho+\eta,\ell], \ u\ge1.$
There are two cases.

If
$N \ge C_{\rm fr}\mu u,$
Proposition~\ref{prop:one-head-large-context} gives, with failure probability at most $4e^{-u}$,
\begin{equation*}
    \left|
        F_N^{\rm fr}(\theta)-\alpha
    \right|
    \le
    C
    \left\{
        \frac1\mu
        +
        \frac{u}{N}
    \right\}
    \le
    C_\eta
    \left\{
        \frac{
            L_\mu(u)
        }{
            \mu
        }
        +
        \frac{u}{N}
    \right\}.
\end{equation*}

If instead
$N < C_{\rm fr}\mu u,$
Proposition~\ref{prop:one-head-fixed-interior} gives, with failure probability at most $e^{-u}$,
\begin{equation*}
    \left|
        F_N^{\rm fr}(\theta)-\alpha
    \right|
    \le
    C_\eta
    \left\{
        \frac{u}{N}
        +
        \frac{
            1+\log(N+1)
        }{
            \mu
        }
    \right\}.
\end{equation*}
Since
$N+1 \le 1+C_{\rm fr}\mu u,$
after a harmless enlargement of the constant,
$1+\log(N+1) \le L_\mu(u).$
Thus the bound in \eqref{eq:fixed-estimator-v3} holds in both regimes. In either case the failure probability is at most $4e^{-u}$.
\end{proof}

\begin{lemma}[Freeze-then-transfer conditioning]
\label{lem:freeze-conditioning}
Let $\mathcal F_{\rm pre}$ be the pretraining sigma-field and let $\widehat\theta$ be $\mathcal F_{\rm pre}$-measurable. Suppose that on a pretraining event $\mathcal E_{\rm train}$ one has $\widehat\theta\in\Theta_{\rm good}$, and suppose that for every deterministic $\theta\in\Theta_{\rm good}$ a fresh-context event $\mathcal A(\theta)$ satisfies $\Prob_{\rm fr}(\mathcal A(\theta))\ge1-\delta$, with constants uniform over $\Theta_{\rm good}$. Then, on $\mathcal E_{\rm train}$, $\Prob_{\rm fr}(\mathcal A(\widehat\theta)\mid\mathcal F_{\rm pre})\ge1-\delta$.
\end{lemma}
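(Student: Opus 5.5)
The argument is a standard independence/disintegration step; I will carry it out by conditioning on the pretraining data and using that the fresh context is independent of $\mathcal F_{\rm pre}$. Write the fresh event as $\mathcal A(\theta)=\{\omega_{\rm fr}:(\theta,\omega_{\rm fr})\in\mathsf A\}$ for a jointly measurable set $\mathsf A\subseteq\Theta_{\rm good}\times\Omega_{\rm fr}$. For the events used later this holds because $F_N(\theta;D_N^{\rm fr})$ is continuous in $\theta$ and measurable in the data, and the thresholds depend measurably on $\theta$ (here only through $\mu$). Define $\psi(\theta):=\Prob_{\rm fr}(\mathcal A(\theta))$. This is measurable in $\theta$ by Fubini--Tonelli applied to $\mathbf 1_{\mathsf A}$ under the product measure, and by hypothesis $\psi(\theta)\ge1-\delta$ for every deterministic $\theta\in\Theta_{\rm good}$.

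The key step is the freezing (or substitution) lemma for conditional expectations. Since the fresh sample $D_N^{\rm fr}$ (together with the fixed $\alpha^{\rm fr}$, $N$, and $u$) is independent of $\mathcal F_{\rm pre}$, and $\widehat\theta$ is $\mathcal F_{\rm pre}$-measurable, we have
\[
\Prob\bigl((\widehat\theta,D_N^{\rm fr})\in\mathsf A\mid\mathcal F_{\rm pre}\bigr)=\psi(\widehat\theta)\quad\text{a.s.}
\]
I will prove this by checking it first for rectangles $\mathsf A=B\times E$, where it reduces to $\mathbf 1_B(\widehat\theta)\Prob(E)$ by independence. The identity then extends to all measurable $\mathsf A$ by a $\pi$--$\lambda$ (monotone class) argument.

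On $\mathcal E_{\rm train}$ we have $\widehat\theta\in\Theta_{\rm good}$, so $\psi(\widehat\theta)\ge1-\delta$ there, which gives the claim.

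The only delicate point is measurability: ensuring $\mathsf A$ is jointly measurable and that $\Theta_{\rm good}$ is a measurable set. I will handle this by restricting $\psi$ to $\Theta_{\rm good}$ and extending it by $1$ outside. The uniformity of the constants over $\Theta_{\rm good}$ matters only in that a single $\delta$ must work for every $\theta$, which is exactly what Proposition~\ref{prop:fixed-estimator-v3} supplies with $\delta=4e^{-u}$.
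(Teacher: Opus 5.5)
Your proposal is correct and takes the same route as the paper, whose proof conditions on $\mathcal F_{\rm pre}$, treats $\widehat\theta$ as fixed, and applies the deterministic fixed-parameter bound on $\mathcal E_{\rm train}$. Your freezing identity $\Prob\bigl((\widehat\theta,D_N^{\rm fr})\in\mathsf A\mid\mathcal F_{\rm pre}\bigr)=\psi(\widehat\theta)$, obtained from independence on rectangles plus a $\pi$--$\lambda$ argument, is the justification that the paper's one-line step leaves implicit; extending $\psi$ by $1$ off $\Theta_{\rm good}$ is harmless but unnecessary, since the claim is only made on $\mathcal E_{\rm train}$.
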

\begin{proof}
Condition on $\mathcal F_{\rm pre}$. The value of $\widehat\theta$ is then fixed and belongs to $\Theta_{\rm good}$ on $\mathcal E_{\rm train}$, so the deterministic fixed-parameter bound applies directly.
\end{proof}

Lemma~\ref{lem:freeze-conditioning} is the only probabilistic step needed to pass from a fixed-parameter fresh-context theorem to a deterministic or pretraining-measurable checkpoint. This makes the quantifier order explicit and avoids claiming a single fresh-context event that is simultaneous over all checkpoints.

\subsection{Freeze-then-transfer theorem}
\label{appsubsec:freeze-then-transfer}

\begin{proof}[Proof of Theorem~\ref{thm:main-transfer}]
Work on the training event $\mathcal E_{\rm train}$ from Theorem~\ref{thm:main-empirical}. For every $T\in\mathcal T_{\rm stat}(m,n,\gamma)$, that theorem gives $\widehat\theta^\gamma(T)\in\mathcal G$ and $c_A A_{m,n}\le\widehat\mu^\gamma(T)\le C_A A_{m,n}$. Increasing the fixed threshold $q_\eta^{\rm fr}$ ensures that $c_A A_{m,n}$ satisfies the sufficiently-large-$\mu$ condition in Proposition~\ref{prop:fixed-estimator-v3}.

Let $\widehat T$ be deterministic or $\mathcal F_{\rm pre}$-measurable with values in $\mathcal T_{\rm stat}$ on $\mathcal E_{\rm train}$, and set $\widehat\theta:=\widehat\theta^\gamma(\widehat T)$. Conditional on $\mathcal F_{\rm pre}$, this checkpoint is fixed. Lemma~\ref{lem:freeze-conditioning} and Proposition~\ref{prop:fixed-estimator-v3} therefore give, for every fixed $\alpha^{\rm fr}\in[\rho+\eta,\ell]$, every $N\ge1$, and every $u\ge1$, conditional fresh probability at least $1-4e^{-u}$ and
\[
 |F_N^{\rm fr}(\widehat\theta)-\alpha^{\rm fr}|
 \le C_\eta\left[\frac{1+\log(1+C_{\rm fr}\widehat\mu^\gamma(\widehat T)u)}{\widehat\mu^\gamma(\widehat T)}+\frac{u}{N}\right].
\]
This is \eqref{eq:main-transfer-conditional}.
\end{proof}

\subsection{Full closed interval once the fresh context resolves the scale}
\label{subsec:one-head-full-boundary-large-N}

\begin{corollary}[Full-boundary transfer along the certified time window]
\label{cor:one-head-full-boundary-large-N}
Under the assumptions of Theorem~\ref{thm:main-transfer}, fix $T\in\mathcal T_{\rm stat}(m,n,\gamma)$ and $u\ge1$. If
\begin{equation}
\label{eq:one-head-full-boundary-window-condition}
    N
    \ge
    C_{\rm fr}
    M_+^{\rm gd}(T)
    u,
\end{equation}
then, for every fixed $\alpha^{\rm fr}\in[\rho,\ell]$, with joint probability at least
$1 - p_{\rm train}^{\rm gd}(m,n) - 4e^{-u},$
\begin{equation}
\label{eq:one-head-full-boundary-window-bound}
    \left|
        F_N^{\rm fr}
        \left(
            \widehat\theta^\gamma(T)
        \right)
        -
        \alpha^{\rm fr}
    \right|
    \le
    C
    \left[
        \frac1{
            M_-^{\rm gd}(T)
        }
        +
        \sqrt{
            \frac{
                u
            }{
                M_-^{\rm gd}(T)
                N
            }
        }
        +
        \frac{
            u
        }{
            N
        }
    \right].
\end{equation}

If additionally $T$ lies in the statistical-scale training-time window and
\begin{equation}
\label{eq:one-head-full-boundary-statistical-window-condition}
    N
    \ge
    C_{\rm fr}
    C_A
    A_{m,n}
    u,
\end{equation}
then
\begin{equation}
\label{eq:one-head-full-boundary-large-N-bound}
    \left|
        F_N^{\rm fr}
        \left(
            \widehat\theta^\gamma(T)
        \right)
        -
        \alpha^{\rm fr}
    \right|
    \le
    C
    \left[
        \frac1{
            A_{m,n}
        }
        +
        \sqrt{
            \frac{
                u
            }{
                A_{m,n}
                N
            }
        }
        +
        \frac{
            u
        }{
            N
        }
    \right].
\end{equation}
Thus the frozen estimator is consistent at the lower endpoint when the certified scale and the fresh-context size diverge under the stated relation. The remaining obstruction concerns a uniform all-$N$ vanishing theorem for one frozen raw head.
\end{corollary}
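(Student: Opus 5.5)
The plan is to reduce Corollary~\ref{cor:one-head-full-boundary-large-N} to Proposition~\ref{prop:one-head-large-context}, using the freeze-then-transfer conditioning of Lemma~\ref{lem:freeze-conditioning}. Fix $T\in\mathcal T_{\rm stat}(m,n,\gamma)$, which is deterministic, and work on the training event $\mathcal E_{\rm train}$ of Theorem~\ref{thm:main-empirical}. On that event, \eqref{eq:main-empirical-horizon} gives
\[
\widehat\theta^\gamma(T)\in\mathcal G,
\qquad
M_-^{\rm gd}(T)\le\widehat\mu^\gamma(T)\le M_+^{\rm gd}(T).
\]
The checkpoint $\widehat\theta^\gamma(T)$ is $\mathcal F_{\rm pre}$-measurable. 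I take $\Theta_{\rm good}$ to be the set of $\theta\in\mathcal G$ with $M_-^{\rm gd}(T)\le\mu\le M_+^{\rm gd}(T)$. Choosing the thresholds large enough (increase $q^{\rm fr}_\eta$ or $A_0$) gives $r_-^{\rm gd}\ge\mu_{\rm fr}$, so every $\theta\in\Theta_{\rm good}$ satisfies the largeness hypothesis $\mu\ge\mu_{\rm fr}$ of Proposition~\ref{prop:one-head-large-context}.

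For deterministic $\theta\in\Theta_{\rm good}$, the assumption $N\ge C_{\rm fr}M_+^{\rm gd}(T)u$ implies $N\ge C_{\rm fr}\mu u$, because $\mu\le M_+^{\rm gd}(T)$. Proposition~\ref{prop:one-head-large-context} is stated on the full interval $[\rho,\ell]$, so for every fixed $\alpha^{\rm fr}\in[\rho,\ell]$ it gives, with fresh probability at least $1-4e^{-u}$,
\[
|F_N^{\rm fr}(\theta)-\alpha^{\rm fr}|\le C_{\rm fr}\Bigl\{\frac1\mu+\sqrt{\frac{u}{\mu N}}+\frac uN\Bigr\}.
\]
The right-hand side is decreasing in $\mu$, so substituting $\mu\ge M_-^{\rm gd}(T)$ gives exactly \eqref{eq:one-head-full-boundary-window-bound}, with constants uniform over $\Theta_{\rm good}$. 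Lemma~\ref{lem:freeze-conditioning} then makes the bound hold conditionally on $\mathcal F_{\rm pre}$ on $\mathcal E_{\rm train}$. To get the joint probability, I integrate over $\mathcal F_{\rm pre}$ and add $\Prob(\mathcal E_{\rm train}^c)\le p_{\rm train}^{\rm gd}(m,n)$; the union bound gives failure probability at most $p_{\rm train}^{\rm gd}+4e^{-u}$.

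For the second statement, $T\in\mathcal T_{\rm stat}$ and \eqref{eq:main-empirical-scale} give $c_AA_{m,n}\le\widehat\mu^\gamma(T)\le C_AA_{m,n}$. I would therefore repeat the argument with $\Theta_{\rm good}$ redefined as $\mathcal G\cap\{c_AA_{m,n}\le\mu\le C_AA_{m,n}\}$. This is preferable to routing through $M_\pm^{\rm gd}$, since the hypothesis \eqref{eq:one-head-full-boundary-statistical-window-condition} is stated with $C_AA_{m,n}$. The condition $N\ge C_{\rm fr}C_AA_{m,n}u\ge C_{\rm fr}\mu u$ then holds, and substituting $\mu\ge c_AA_{m,n}$ and absorbing $c_A^{-1}$ and $c_A^{-1/2}$ into $C$ yields \eqref{eq:one-head-full-boundary-large-N-bound}.

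The only delicate point is an order-of-quantifiers issue, not a technical one. The fresh-context condition has to be checked against a deterministic upper bound on the frozen $\widehat\mu$, since $N$ may not depend on the realized checkpoint; this is why the statement uses $M_+^{\rm gd}(T)$, or $C_AA_{m,n}$, rather than $\widehat\mu$. Monotonicity of the error bound in $\mu$ then lets the lower envelope control the bias and fluctuation terms. The final remark about consistency at the lower endpoint follows because $A_{m,n}\to\infty$ and $N\to\infty$ make all three terms vanish.
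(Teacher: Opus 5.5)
Your proposal is correct and follows the paper's proof. You work on $\mathcal E_{\rm train}$, use $M_+^{\rm gd}(T)$ to certify $N\ge C_{\rm fr}\widehat\mu^\gamma(T)u$, invoke Proposition~\ref{prop:one-head-large-context} (valid on all of $[\rho,\ell]$), and replace $\widehat\mu^\gamma(T)$ by $M_-^{\rm gd}(T)$. Your direct use of \eqref{eq:main-empirical-scale} for the second part is equivalent to the paper's route through $M_-^{\rm gd}(T)\ge c_AA_{m,n}$ and $M_+^{\rm gd}(T)\le C_AA_{m,n}$.

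One small correction concerns the largeness condition $\mu\ge\mu_{\rm fr}$. You cannot obtain it from $r_-^{\rm gd}$, because $r_-^{\rm gd}$ is capped by the fixed population-GD entry radius $\underline\mu_0^{\rm gd}$ and does not grow with $A_0$. It should instead come from $M_-^{\rm gd}(T)\ge c_{\rm rate}M_{\rm stat}\asymp A_{m,n}$ for $T\in\mathcal T_{\rm stat}$, after enlarging $A_0$ or $q_\eta^{\rm fr}$.
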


\begin{proof}
Fix $T\in\mathcal T_{\rm stat}(m,n,\gamma)$ and work on $\mathcal E_{\rm train}$. Then $\widehat\theta^\gamma(T)\in\mathcal G$ and $M_-^{\rm gd}(T)\le\widehat\mu^\gamma(T)\le M_+^{\rm gd}(T)$. If $N\ge C_{\rm fr}M_+^{\rm gd}(T)u$, Proposition~\ref{prop:one-head-large-context} applies uniformly for every fixed $\alpha^{\rm fr}\in[\rho,\ell]$. Replacing $\widehat\mu^\gamma(T)$ by the lower envelope gives \eqref{eq:one-head-full-boundary-window-bound}. On the statistical-scale window, $M_-^{\rm gd}(T)\ge c_A A_{m,n}$ and $M_+^{\rm gd}(T)\le C_A A_{m,n}$, which gives \eqref{eq:one-head-full-boundary-large-N-bound} under \eqref{eq:one-head-full-boundary-statistical-window-condition}.
\end{proof}

\begin{remark}[Heuristic lower-endpoint obstruction for the all-context-length theorem]
\label{rem:one-head-interior-scope}
Fix a certified time $T$ and write
\begin{equation*}
    \widehat\theta^\gamma(T)
    =
    (
        \widehat\lambda^\gamma(T),
        \widehat\mu^\gamma(T)
    ),
    \qquad
    \widehat k^\gamma(T)
    :=
    \frac{
        \widehat\lambda^\gamma(T)
    }{
        \widehat\mu^\gamma(T)
    }.
\end{equation*}
Suppose both fresh label classes are present and define
\begin{equation*}
    M_1
    :=
    \max
    \{
        X_i^{\rm fr}:
        Y_i^{\rm fr}=1
    \},
    \qquad
    M_0
    :=
    \max
    \{
        X_i^{\rm fr}:
        Y_i^{\rm fr}=0
    \},
\end{equation*}
together with
$G_L := \alpha^{\rm fr}-M_1, \ G_R := 1-M_0.$
The normalized positive-minus-negative maximum-score gap is
\begin{equation}
\label{eq:one-head-score-gap-decomposition}
    \Delta_N(T)
    :=
    \widehat k^\gamma(T)
    +
    M_1
    -
    M_0
    =
    (
        \alpha^{\rm fr}-\rho
    )
    +
    (
        \widehat k^\gamma(T)-\ell
    )
    +
    G_R-G_L.
\end{equation}
On the safe interior, $\alpha^{\rm fr}-\rho\ge\eta$ provides a fixed deterministic margin.

At the lower endpoint $\alpha^{\rm fr}=\rho$, this fixed term vanishes. Inside the logarithmic tube,
\begin{equation*}
    \widehat k^\gamma(T)-\ell
    =
    O
    \left(
        \frac{
            \log
            \widehat\mu^\gamma(T)
        }{
            \widehat\mu^\gamma(T)
        }
    \right),
\end{equation*}
whereas fresh endpoint spacings fluctuate on the $N^{-1}$ scale. In the intermediate regime
$1 \ll N \ll { \widehat\mu^\gamma(T) }/{ \log \widehat\mu^\gamma(T) },$
the stochastic term $\widehat\mu^\gamma(T)(G_R-G_L)$ has scale $\widehat\mu^\gamma(T)/N$ and can dominate the learned logarithmic advantage. This comparison indicates that the negative class can win the maximum-score competition with nonvanishing probability and can therefore place order-one mass on the wrong class and potentially produce order-one prediction error. We use this observation only as a heuristic obstruction here; a formal impossibility statement would require a separate lower-bound proposition.

If instead $\alpha^{\rm fr}\sim\Unif([\rho,\ell])$ and probability is also taken over the fresh task, then
$\Prob \left( \alpha^{\rm fr}-\rho \le \eta \right) = { \eta }/{ \Delta }.$
Thus one may exclude a boundary layer in a joint-over-task theorem, but that boundary-layer probability must explicitly enter the final failure probability or risk. This obstruction also motivates studying reflected multi-expert models in future work. Reflection can repair physical-boundary coverage, but a routed construction must additionally control finite-context fluctuations.
\end{remark}
\section{Experimental Details and Additional Diagnostics}
\label{sec:experimental-details}
\subsection{Reproducibility Protocol}
Population calculations use analytic feature-coordinate integrals and 160-point Gauss-Legendre quadrature over the task threshold. Empirical experiments use exact full-batch gradients on independently sampled pretraining corpora. Every finite-pretraining iteration index is first computed from the deterministic population-GD reference path; it is therefore fixed before the corresponding empirical corpus is sampled. For transfer, one checkpoint is frozen per pretraining corpus and reused for every fresh threshold, context realization, and context size $N$.

Unless stated otherwise, a ``worst-grid pointwise 95th-percentile error'' is computed by taking the 95th percentile over independent fresh contexts at each tested threshold and then maximizing over the finite threshold grid. The metric provides pointwise coverage on that grid; simultaneous coverage over the continuum lies outside its definition. Code is available at \url{https://anonymous.4open.science/r/ICl_Decision_Stump-B5F3}.
\subsection{Detailed finite-step and transfer diagnostics}
The sufficient-stopping test uses $(128,128)$, $(256,256)$, $(512,512)$, $(128,512)$, $(512,128)$, $(256,512)$, and $(512,256)$, with 30 independent corpora per configuration. The aggregate radius and prediction-error outcomes are reported in Section~\ref{sec:experiments}. A stricter diagnostic based on one concrete finite-radius instantiation of the asymptotic logarithmic tube holds throughout the window in 104 of 210 runs. The theorem leaves its finite-radius threshold and comparison constants existential, so this operational tube serves as a descriptive diagnostic. An auxiliary axis sweep with six corpora per configuration remains reliable under the operational tracking criterion up to the largest tested reference radius, $\mu_{\rm ref}=60$, for all seven tested pairs, including values well beyond $A_{m,n}^{\rm proxy}$.

The separate end-to-end sweep used to generate Figure~\ref{fig:main-pretraining-transfer} uses 10, 8, 6, and 4 corpora for $m=n\in\{64,128,256,512\}$, respectively, and 6, 6, 5, and 5 corpora for $(m,n)\in\{(128,512),(512,128),(256,512),(512,256)\}$. These are the corpus counts for the displayed trajectory and frozen-transfer experiment; the sufficient-stopping diagnostic above uses 210 corpora.

Figure~\ref{fig:app-finite-pretraining} supplements the main continuation plot with the discrete-time clock and a common-shell heatmap comparing task-level and within-task sampling error.  The continuation panel in Figure~\ref{fig:main-pretraining-transfer} also records behavior beyond the certified proxy radius; failure-threshold estimation lies outside the scope of this sweep.

\begin{figure}[t]
    \centering
    \includegraphics[width=0.42\textwidth,trim={0 0 590bp 0},clip]{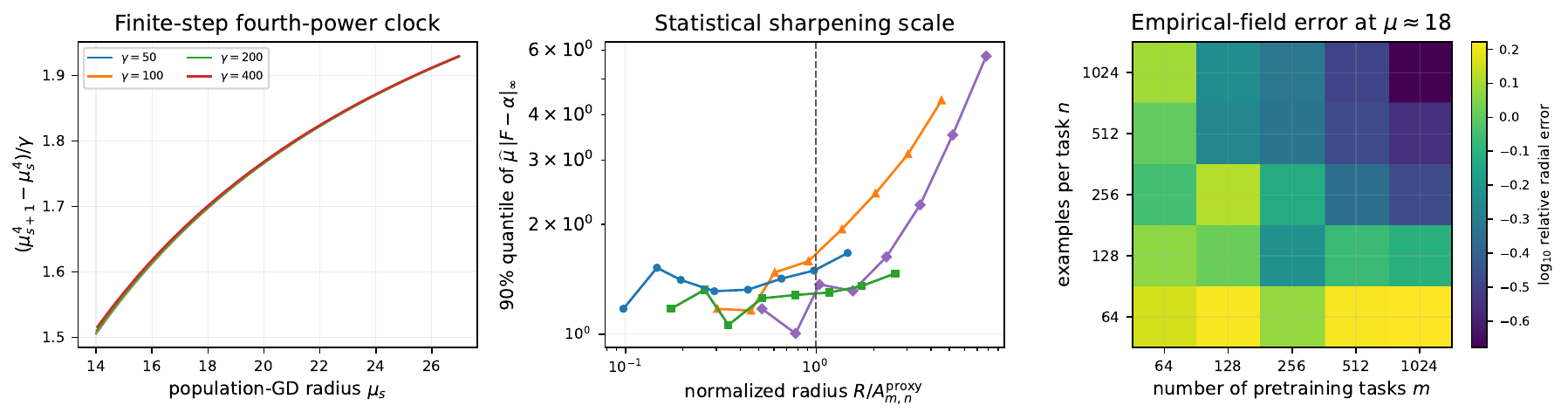}
    \hfill
    \includegraphics[width=0.42\textwidth,trim={575bp 0 0 0},clip]{figures/exp_finite_pretraining.pdf}
    \caption{Additional finite-step and finite-pretraining diagnostics. Population GD preserves the normalized fourth-power increment across step sizes (left), while a full $(m,n)$ grid separates task and within-task sampling effects at a common shell (right). The omitted continuation panel is reported in Figure~\ref{fig:main-pretraining-transfer}.}
    \label{fig:app-finite-pretraining}
\end{figure}

Figure~\ref{fig:app-frozen-transfer} expands the transfer summary.  Thin curves retain corpus-level pretraining variability, while each dotted line marks the population-resolution floor of the same frozen checkpoint.  The large-context panel isolates the inverse learned-resolution dependence.  At $N=4096$, the axis swaps $(128,512)$ and $(512,128)$ give errors $0.128$ and $0.124$, while $(256,512)$ and $(512,256)$ give $0.0592$ and $0.0586$.  After fitting an additive floor, the pre-floor excess-error slopes against $N$ range from $-0.87$ to $-0.94$ across the four balanced corpus sizes.

\begin{figure}[t]
    \centering
    \includegraphics[width=0.82\textwidth]{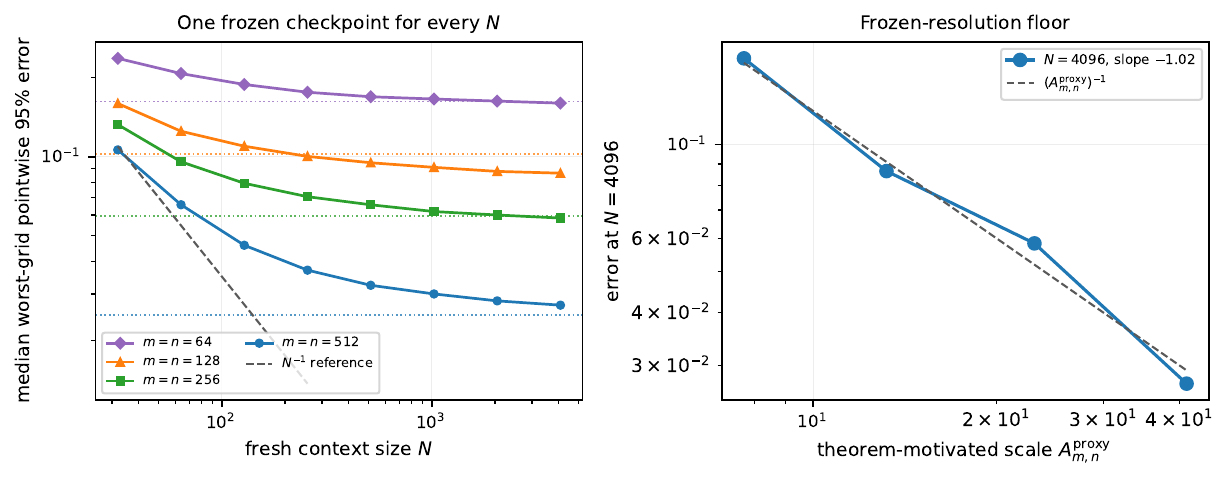}
    \caption{Detailed freeze-then-transfer results.  One checkpoint is selected without fresh-task information and reused for every $N$ (left).  At large $N$, the remaining error decreases approximately inversely with the theorem-motivated pretraining scale (right).}
    \label{fig:app-frozen-transfer}
\end{figure}

The initialization sweep uses 13 evenly spaced ratios $k_0\in[0.1,1.1]$ and 19 initial radii: zero and 18 positive values geometrically spaced from $10^{-3}$ to $30$. The design-and-noise scope check uses six independently sampled pretraining corpora per condition, eight threshold values, and 300 fresh contexts per threshold. The two nonuniform designs are $\operatorname{Beta}(2,2)$ and $\operatorname{Beta}(1/2,1/2)$ and are used for both pretraining and fresh-context evaluation. Global noise flips each affected label independently with probability $0.01$, while threshold-localized fresh noise flips with probability $1/2$ on $|X-\alpha|\le1/\widehat\mu$. The right panel of Figure~\ref{fig:app-scope-checks} displays these matched-design and label-corruption conditions. Bands retain variation across the six independently sampled pretraining corpora.

\begin{figure}[t]
    \centering
    \includegraphics[width=\textwidth]{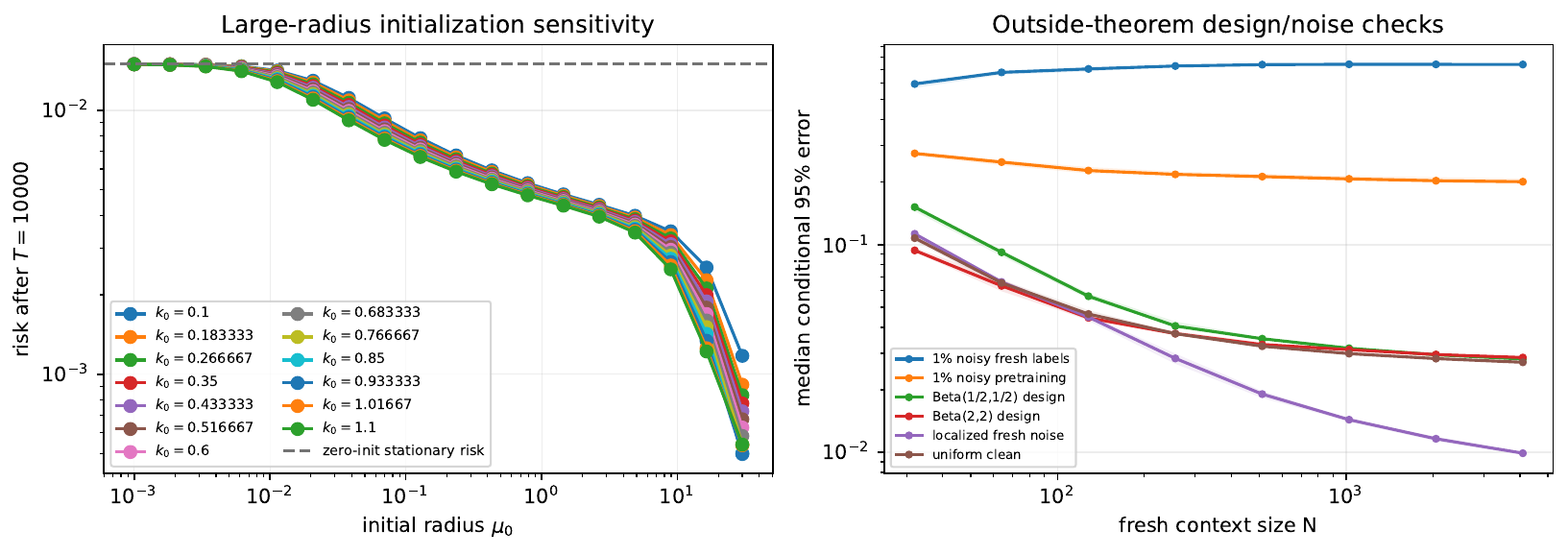}
    \caption{Scope checks.  The common-time initialization sweep exhibits the zero-gradient delay at small radius (left).  Two nonuniform designs remain close to the uniform baseline, while one-percent global label corruption changes the behavior substantially (right).  Noise and design experiments are exploratory deviations from the theorem.}
    \label{fig:app-scope-checks}
\end{figure}
\textbf{Additional population and initialization checks.}
The main population experiment follows one complete transient.  Figure~\ref{fig:app-population-robustness} repeats the post-capture diagnostic across several task interiors and initial states and separately checks numerical quadrature and ODE convergence.  The stabilization of $\mu\sup_\alpha|F-\alpha|$ is insensitive to these tested choices, while the numerical discrepancies are negligible at the quadrature order used in the main experiment.

\begin{figure}[t]
    \centering
    \includegraphics[width=0.94\textwidth]{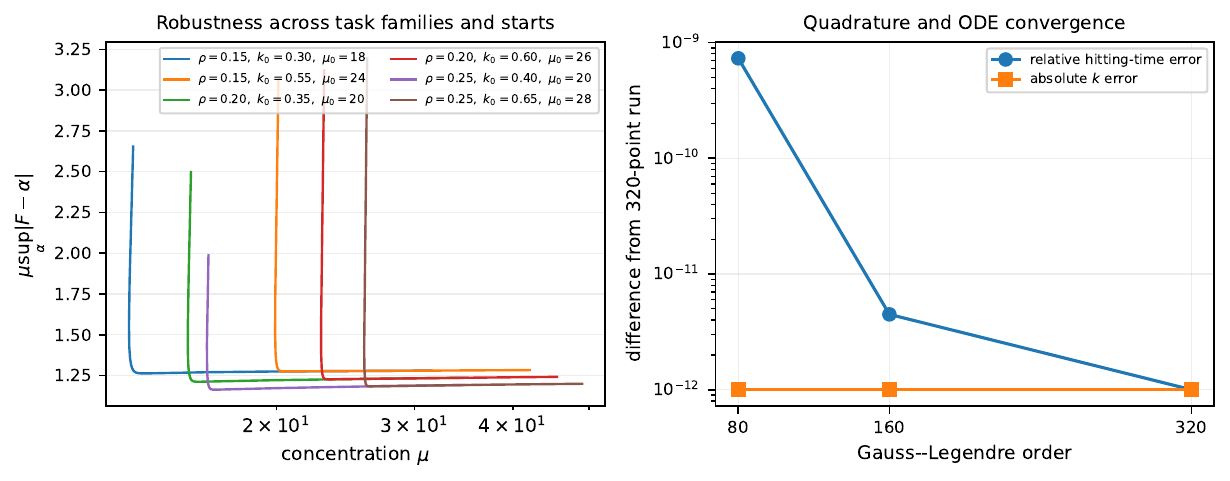}
    \caption{Population robustness and numerical convergence.  The scaled prediction error stabilizes across the tested task interiors and initial states (left).  Increasing the Gauss--Legendre order confirms convergence of the hitting time and ratio trajectory (right).}
    \label{fig:app-population-robustness}
\end{figure}
The large-radius assumption has a visible dynamical role.  At the origin the attention weights are uniform and symmetry makes the gradient vanish.  Figure~\ref{fig:app-initialization} shows linear vanishing of the gradient near this stationary point and delayed escape from nearby initializations.  Within the fixed training budget, performance is highly sensitive to the initial radius; the asymptotic behavior of every nonzero small initialization remains unresolved.

\begin{figure}[t]
    \centering
    \includegraphics[width=\textwidth]{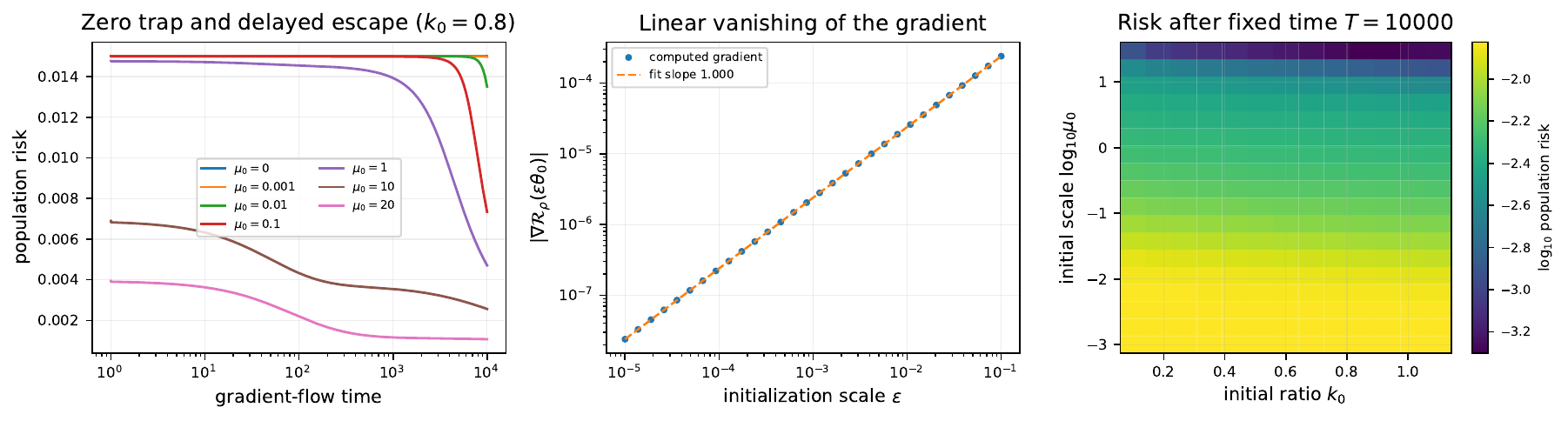}
    \caption{Initialization sensitivity.  Exactly zero initialization is stationary and nearby starts escape slowly (left); the gradient norm vanishes linearly with initialization scale (center); and the risk after a common time budget depends strongly on the initial radius (right).}
    \label{fig:app-initialization}
\end{figure}
\subsection{Exploratory label-noise}
\label{app:noise-robustness}
To test the robustness of a pretrained model, we use a frozen-checkpoint protocol: the checkpoint is selected before the noisy fresh context is observed and the same parameters are reused for every context size $N$. This is our primary robustness experiment. As an auxiliary mechanism-level diagnostic, we also evaluate a resolution-matched sequence with $\mu=N/8$ to ask whether corruption localized at the learned resolution preserves the predicted scaling. In this sequence, localized noise flips labels with probability $1/2$ on $|X-\alpha|\le1/\mu$, the tapered condition uses flip probability $\frac12\{1-(\mu|X-\alpha|)^2\}_+$, and global Massart noise flips labels independently with probability $0.01$. This sequence changes the parameters with $N$ and therefore tracks a family of models; the frozen-checkpoint protocol evaluates a single pretrained model.

For the frozen one-head test, we use $\widehat\mu=40$ and $\widehat k=0.86472$ for every context size.  Clean and threshold-localized conditions approach checkpoint-dependent population floors, whereas global Massart flips produce order-one error (Figure~\ref{fig:app-one-head-frozen-noise}).

\begin{figure}[t]
    \centering
    \includegraphics[width=0.72\textwidth]{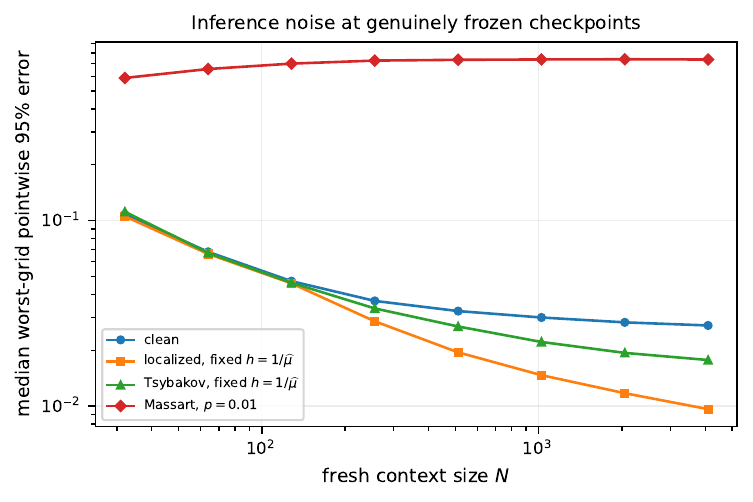}
    \caption{Inference-time noise at a genuinely frozen one-head checkpoint. Clean and threshold-localized conditions improve with fresh-context size and then approach finite-resolution floors.  Homogeneous one-percent Massart flips cause persistent order-one error.}
    \label{fig:app-one-head-frozen-noise}
\end{figure}

Along the resolution-matched sequence, clean, localized, and Tsybakov errors decrease approximately as $N^{-1}$ on the safe interior, while the fixed one-head orientation retains its physical-boundary obstruction.  We use this coupled scaling as a mechanism-level diagnostic of resolution-compatible corruption across the model sequence.

\begin{figure}[t]
    \centering
    \includegraphics[width=0.94\textwidth]{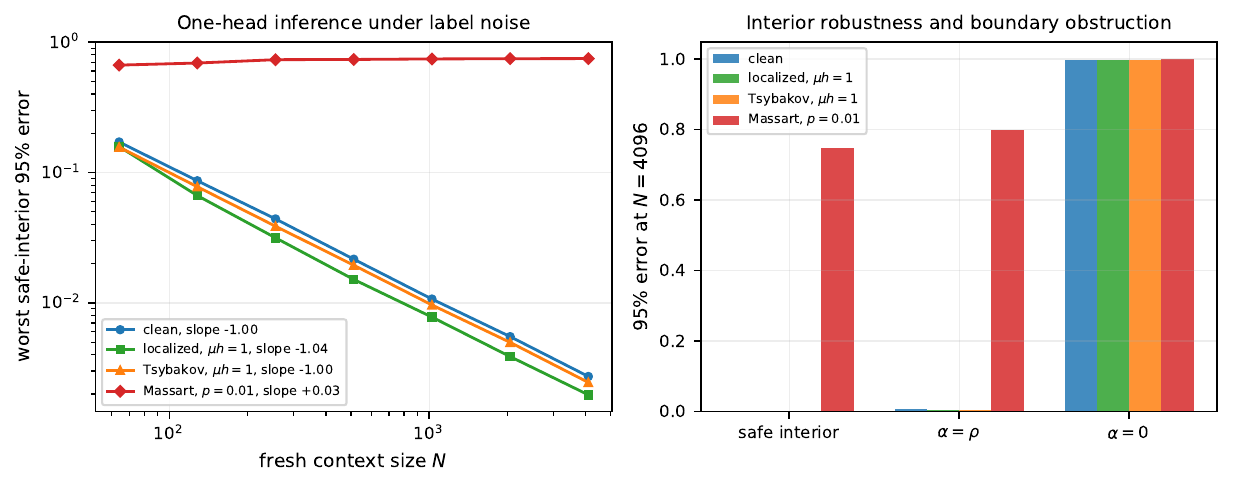}
    \caption{Resolution-matched one-head noise stress test.  Localized and Tsybakov corruption preserve the clean interior trend when $\mu=N/8$ (left), while the exact one-head boundary obstruction persists across context sizes and localized corruption (right).}
    \label{fig:app-one-head-noise}
\end{figure}
\end{document}